%% file: main.tex
\documentclass[11pt]{article}
\usepackage{graphicx} 
\def\withcolors{1}
\def\withnotes{1}
\usepackage{macros}
\usepackage{multicol}
\usepackage{multirow}
\usepackage{subcaption}
\usepackage{bbm}
\usepackage{graphicx}
\usepackage{wrapfig}
\usepackage{braket}
\usepackage{algorithm}
\usepackage{algpseudocode}
\usepackage{turnstile}
\allowdisplaybreaks
\usepackage[margin=1in]{geometry}

\graphicspath{{./figs/}}

\input{locdef}

\begin{document}

\title{High-Dimensional Robust Mean Estimation with Untrusted Batches}

\author{
Maryam Aliakbarpour\thanks{\texttt{maryama@rice.edu}. Department of Computer Science, Ken Kennedy Institute, Rice University.}
\and
Vladimir Braverman\thanks{\texttt{vova@cs.jhu.edu}. Johns Hopkins University.}
\and
Yuhan Liu\thanks{\texttt{yuhan-liu@rice.edu}. Rice University.}
\and
Junze Yin\thanks{\texttt{jy158@rice.edu}. Rice University.}
}
\date{}

\begin{titlepage}
  \maketitle
  \begin{abstract}
\input{abstract}

  \end{abstract}
  \thispagestyle{empty}
\end{titlepage}

{\hypersetup{linkcolor=black}
\small
\tableofcontents
}
\newpage





\input{introduction}

\input{technique_overview}

\appendix
\newpage

\input{preliminaries}

\input{1_collaborate_with_close_mean}

\input{2_collaborate_with_arbitrary_corruption}

\input{unknown_corruption}

\input{hardness}

\bibliography{refs}
\bibliographystyle{alpha}
\end{document}

%% file: locdef.tex
\newcommand{\dims}{d}

\newcommand{\cA}{\mathcal{A}}
\newcommand{\cB}{\mathcal{B}}
\newcommand{\cC}{\mathcal{C}}

\newcommand{\x}{\mathbf{x}}

\def\multiset#1#2{\ensuremath{\left(\kern-.3em\left(\genfrac{}{}{0pt}{}{#1}{#2}\right)\kern-.3em\right)}}

\newcommand{\Var}{\text{Var}}
\newcommand{\eye}{\mathbb{I}}



%% file: abstract.tex
We study high-dimensional mean estimation in a collaborative setting where data is contributed by $N$ users in batches of size $n$. In this environment, a learner seeks to recover the mean $\mu$ of a true distribution $P$ from a collection of sources that are both statistically heterogeneous and potentially malicious. We formalize this challenge through a \textit{double corruption} landscape: an $\eps$-fraction of users are entirely adversarial, while the remaining ``good'' users provide data from distributions that are related to $P$, but deviate by a proximity parameter $\alpha$.

Unlike existing work on the untrusted batch model, which typically measures this deviation via total variation distance in discrete settings, we address the continuous, high-dimensional regime under two natural variants for deviation: (1) good batches are drawn from distributions with a mean-shift of $\sqrt{\alpha}$, or (2) an $\alpha$-fraction of samples within each good batch are adversarially corrupted. In particular, the second model presents significant new challenges: in high dimensions, unlike discrete settings, even a small fraction of sample-level corruption can shift empirical means and covariances arbitrarily.

We provide two Sum-of-Squares (SoS) based algorithms to navigate this tiered corruption. Our algorithms achieve the minimax-optimal error rate $O(\sqrt{\eps/n} + \sqrt{d/nN} + \sqrt{\alpha})$, demonstrating that while heterogeneity $\alpha$ represents an inherent statistical difficulty, the influence of adversarial users is suppressed by a factor of $1/\sqrt{n}$ due to the internal averaging afforded by the batch structure.

%% file: introduction.tex
\section{Introduction}

In the modern data ecosystem, information is increasingly aggregated from decentralized and heterogeneous sources \citep{vhj22,kls21,yfd+23,lksj21}. Because these data are generated across diverse institutions and devices  (subject to shifting demographics, varying measurement procedures, and fluctuating contexts) the classical i.i.d.\ assumption is no longer a valid assumption \citep{fass21,shk+22,skk21}. While pooling such information allows entities to solve problems that are \textit{statistically impossible} in isolation, it requires a principled framework that can distinguish between two distinct forms of data deviation: natural statistical heterogeneity among legitimate participants and strategic interference from adversarial ones. 

We address these challenges through the lens of the \textit{untrusted batch model} \citep{QiaoV18untrusted}, providing a rigorous foundation for reliable inference by introducing a \textit{double corruption} landscape. This landscape captures the tiered complexity of real-world data: we consider a learner seeking to estimate the \textit{mean} $\mu$ of a target distribution $P$ over $\mathbb{R}^d$ using data from $N$ users, where each user contributes a \textit{batch} of $n$ samples. In this setting, an $\eps$-fraction of users are \textit{adversarially corrupted}, while the remaining ``good'' users provide data from distributions that are related to $P$ but exhibit diminishing quality, governed by a proximity parameter $\alpha$. We consider two natural variants for this relationship: either the good users' distributions have a mean shifted by $\sqrt{\alpha}$ from the truth, or an $\alpha$-fraction of the samples \textit{within} each good batch are themselves adversarially corrupted. 

This second variant marks a significant departure from existing results in the untrusted batch model. While prior research \citep{ChenLM2020untrusted, jain2020robustbatch} has primarily characterized the quality of good batches through total variation (TV) distance, such measures typically account for randomized replacement of $\alpha$ fraction of the data. In the continuous high-dimensional regime, this distinction is critical: unlike TV-bounded shifts, an adversary can inspect the samples within a batch and strategically replace an $\alpha$-fraction of them with new points. This allows the adversary to erase legitimate signals and induce shifts in the empirical mean and covariance by an arbitrary distance—an impact far more pronounced than the randomized replacements captured by distributional proximity.

By providing algorithms that navigate this dual-layer of adversarial corruption and mean-shifts, we provide the minimax-optimal error rate $O(\sqrt{\eps/n} + \sqrt{d/nN} + \sqrt{\alpha})$. This rate reveals a fundamental insight: while the heterogeneity $\alpha$ represents an inherent statistical difficulty, the adversary’s influence on fraction of ``bad'' users is suppressed by a factor of $1/\sqrt{n}$. This occurs because the $n$ samples within a batch provide a more stable statistical signature than individual points, allowing the learner to partially ``average away'' the effect of user-level corruption as the batch size increases.

\subsection{Problem Setup}

We study the  {\em untrusted batch model} introduced in~\cite{QiaoV18untrusted}. We consider an unknown distribution $P$ on $\mathbb{R}^d$ with mean $\mu$ and bounded covariance $\Sigma \preceq \mathbb{I}_d$. Both $\mu$ and $\Sigma$ are unknown to the algorithm. The goal is to estimate the mean by leveraging data contributed by $N$ other users (or collaborators) in $\ell_2$ norm. Each user $i \in [N]$ possesses a local dataset consisting of $\ns$ data points, referred to as a {\em batch}. The setting is particularly interesting in the case where $n \ll d$, where the data of each user is not sufficient to perform mean estimation on their own, but can be jointly leveraged to recover the mean accurately.

The users in our model are not homogeneous: some may provide statistically relevant data, while others may be unreliable or even adversarial. Our framework explicitly models both cases as {\em good} and {\em bad} users. We assume that at most an $\eps$-fraction of the users are \emph{bad}. Formally, there exists a subset $\mathcal{B} \subseteq [N]$ of size at most $\eps N$ such that for every $i \in \mathcal{B}$, the dataset of user~$i$ may be chosen arbitrarily by an adversary, with no assumptions on its distribution or quality. The remaining users, $\mathcal{G} = [N] \setminus \mathcal{B}$, are \emph{good users}. The informativeness of their data is governed by a parameter $\alpha$, which measures how statistically related their data is to the target user’s distribution. We consider two natural variants that capture distinct notions of relevance.

\paragraph{Mean Shift Model.} 

\begin{wrapfigure}{r}{0.55\textwidth}
    \centering
    \includegraphics[width=\linewidth]{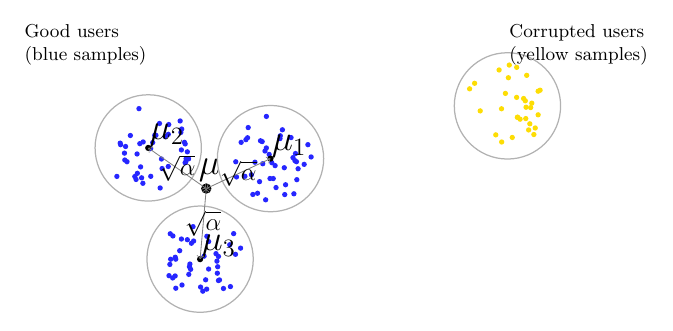}
    \caption{Mean Shift Model. Illustration of Problem~\ref{prob:bounded_mean}: each good user provides samples drawn from a distribution whose mean lies within a $\sqrt{\alpha}$-neighborhood of the target mean $\mu$, while an $\eps$-fraction of users (yellow) are fully adversarial and may provide arbitrary samples.}
    \label{fig:model_1}
\end{wrapfigure}

In the first variant, as a motivating example, we quantify heterogeneity as a mean-shift, where the distribution of each good user is centered within an $\sqrt{\alpha}$-neighborhood of the target mean $\mu$. Our problem setup is formally defined as follows:
\begin{restatable}[Mean shift]{problem}{ProbBoundedMean}\label{prob:bounded_mean} Consider parameters $\alpha, \eps \in (0,1)$, a target mean $\mu \in \mathbb{R}^d$, and $N$ users, each providing a batch of $n$ data points in $\mathbb{R}^d$. We are guaranteed that at least a $(1-\eps)$-fraction of these users are good; each good user $i \in \mathcal{G}$ provides $n$ i.i.d.\ samples drawn from an unknown distribution $P_i$ with mean $\mu_i$ and covariance $\Sigma_i \preceq \mathbb{I}_d$. The relevance of a good user's data is quantified by its proximity to the target: $\|\mu_i - \mu\|_2 \leq \sqrt{\alpha}$. The remaining $\eps$-fraction of users are adversarial and may contribute entirely arbitrary batches. Our goal is to robustly estimate the true mean $\mu$ in $\ell_2$ distance.
\end{restatable}

\paragraph{Adversarial Model.} 

\begin{wrapfigure}{r}{0.55\textwidth}
    \centering
    \includegraphics[width=\linewidth]{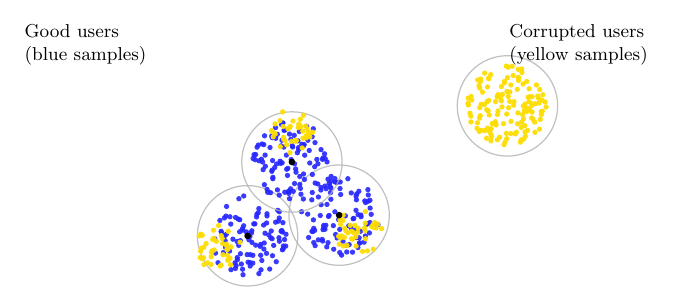}
    \caption{Adversarial Model. Illustration of Problem~\ref{prob:arbitrary_alpha}: beyond an $\eps$-fraction of entirely adversarial users (yellow clusters), each good user’s batch contains an $\alpha$-fraction of adversarially corrupted samples (yellow points within blue clusters), resulting in a two-level corruption model.}
    \label{fig:model_2}
\end{wrapfigure}

In the second variant, we consider a more malicious form of heterogeneity where even the good users' datasets contain corrupted samples. This introduces a \textit{double-layered contamination}: beyond the $\eps$-fraction of entirely adversarial users, an $\alpha$-fraction of samples within each good batch is also replaced by corrupted points. 

\begin{restatable}[Adversarial]{problem}{ProbArbitraryAlpha}\label{prob:arbitrary_alpha}
Consider parameters $\alpha, \eps \in (0,1)$, a target distribution $P$ over $\mathbb{R}^d$ with mean $\mu$ and covariance $\Sigma \preceq \mathbb{I}_d$, and $N$ users, each providing a batch of $n$ data points in $\mathbb{R}^d$. We are guaranteed that at least a $(1-\eps)$-fraction of these users are good; each good user $i \in \mathcal{G}$ draws $n$ i.i.d. samples from $P$. Then an $\alpha$-fraction of the $n$ samples are replaced by arbitrary, adversarially chosen points. This is in addition to the 
entirely adversarial datasets of the remaining $\eps$-fraction of users. Our goal is to robustly estimate the true mean $\mu$ in $\ell_2$ distance.
\end{restatable}

This framework captures a broad range of realistic challenges: some users may be entirely unreliable, while even honest users may possess partially compromised datasets due to sensor errors, distribution shifts, or localized interference.

\paragraph{Corruption Model.}In both variants, we adopt the \emph{strong contamination model} (see Definition~\ref{def:strong_contamination}). Conceptually, the data of all users are first generated according to their ideal distributions, after which a \textit{globally coordinated adversary} is allowed to inspect the entire dataset of batches and modify the samples subject to its corruption budgets. This ensures that the corruption is not merely independent across users, but can be strategically orchestrated to maximize the bias of the global estimate based on the specific realization of all uncorrupted data.

\begin{definition}[Strong contamination model~\citep{diakonikolas2023algorithmic}]\label{def:strong_contamination}
    Given $\eps\in(0, 1)$ and a distribution $\mathcal{D}$, $\eps$-strong contamination model works as follows: $\ns$ samples are drawn i.i.d. from $\mathcal{D}$. An adversary receives all samples and can change up to $\eps \ns$ of them arbitrarily. The algorithm takes the modified samples as input.
\end{definition}

\subsection{Related Work}
\label{sec:related}
\paragraph{Robust statistics} 

Robustly learning an underlying parameter from an unknown distribution is a long-standing field in statistics, dating back to~\cite{huber1964robust}. The Tukey median~\citep{tukey1960survey} achieves the optimal error for mean estimation but is inefficient in high-dimensions. The works \citep{dkk+19,lrv16} resolved the computation inefficiency and inspired many subsequent works to design efficient and robust learning algorithms in various settings of practical interest. \cite{CDG17,DiakonikolasKP20} further improve the computational efficiency to near-linear time. \cite{chaudhuri2025robust}~studied the heterogeneous corruption rates for each sample. \cite{dkk+22} studied sparse mean estimation. \cite{cdgs20} studied high-dimensional robust mean estimation using gradient descent.

The sum-of-squares method~\citep{nesterov2000squared, lasserre2001new} has proven to be a powerful tool for robust statistics~\citep{kothari2017outlier, hopkins2017SoS,kothari2018robust}. It provides a unified framework to design efficient algorithms as opposed to relying on problem-specific heuristics. For many robust estimation problems, optimal or best-known time and sample complexity can often be achieved using the sum-of-squares approach~\citep{BakshiDHKKK20robustSoS,KothariMZ22, Hopkins2020robustheavy,hopkins2025subGmean, DiakonikolasHPT25SoS}.

\paragraph{Robust estimation from corrupted batches}
\cite{QiaoV18untrusted} initiated the study of robust density estimation from corrupted batches for discrete distributions under the TV distance. Subsequent works \citep{ChenLM2020untrusted,jain2020robustbatch} further improved the computational and sample complexities. All of these prior works \citep{QiaoV18untrusted,ChenLM2020untrusted,jain2020robustbatch} typically assume that the data from each good user follow a mildly perturbed distribution—for example, each user’s samples are drawn from some distribution $P_i$ satisfying $\mathrm{TV}(P_i, P) \le \Theta(\alpha)$. 

In contrast, our work is the first to address high-dimensional mean estimation under a fully general, multi-level corruption model. We introduce and analyze a setting in which corruption can occur at both the user level and the sample level: beyond corrupting entire users, an adversary may arbitrarily corrupt up to an $\alpha$-fraction of samples within each good user. This strictly generalizes the settings considered in previous works.

This generalization is particularly challenging in the continuous, high-dimensional regime. In discrete distributions, corrupting an $\alpha$-fraction of samples in a good batch leads to at most an $\alpha$ increase in the $\ell_1$ distance of the empirical histogram. In contrast, for high-dimensional distributions, even a small $\alpha$-fraction of corrupted samples can arbitrarily shift the empirical mean of a batch.

In addition to density estimation of discrete distributions, \cite{jo20, jo21} studied robust density estimation of structured continuous distribution as well as $\mathcal{A}_k$ distance, which is always upper bounded by TV distance. \cite{jo20} also designed algorithms for piecewise interval classification. \cite{jsk+24} analyzed linear regression with the presence of heterogeneous data batches.

\paragraph{Other related robust learning models}
\cite{aeg+24} studied robust personalized federated learning under Byzantine attack, where each user has clean local data from some distribution and a personalized optimization objective, but a fraction of users may be untruthful.
\cite{nietert24localglobal} studied robust distribution learning under Wasserstein distance where an adversary may arbitrarily corrupt a fraction of samples and locally perturb the remainder with bounded average magnitude. Our adversarial corruption model (Problem~\ref{prob:arbitrary_alpha}), the main focus of our work, is stronger than both works in that our good users may suffer from adversarial corruption that could shift the local mean arbitrarily.

\paragraph{Organization} 

The remaining parts of this paper are organized as follows. In \Cref{sec:result}, we present the main results of our work. In \Cref{sec:preli_main}, we present the preliminaries and notation. In \Cref{sec:technique}, we provide a detailed overview of the techniques used to obtain our main results.

\section{Main Results}
\label{sec:result}
We first review standard results in high-dimensional statistics. In the standard setting without any corruption, estimating the mean of a $d$-dimensional distribution with bounded covariance from $n$ samples yields a minimax error bound of $\Theta(\sqrt{d/n})$ in $\ell_2$ norm.
When an $\eps$-fraction of the samples may be adversarially corrupted, there exists a polynomial-time algorithm~\citep{dkk+19,lrv16} that achieves an error of $O(\sqrt{\eps})$ using $n=\wt O\paren{\dims/\eps}$ samples. 

We then discuss simple solutions to some special cases of our problem. When $\alpha=0$ (for both corruption models), we can simply run the standard robust mean estimation algorithm~\citep{dkk+19} on the empirical mean of each user with corruption parameter $\eps$. Since the covariance shrinks by $1/\ns$, we can achieve an error of $O(\sqrt{\eps/\ns})$  as long as $N=\Omega(\dims/\eps)$. 
On the other hand, when $\eps=0$, we can run robust mean estimation with corruption parameter $\alpha$ over all $N\ns$ samples and obtain an estimate with accuracy $\sqrt{\alpha}$ as long as $N\ns=\Omega(\dims/\alpha)$. 

The main question is what happens when both $\alpha, \eps>0$. Ideally, we hope to achieve an error of $\sqrt{\eps/n}+\sqrt{\alpha}$, but the presence of both batch and sample corruptions could amplify the error. Our contribution is to show that this ideal performance can be achieved in this challenging scenario.
First, we state the result for mean shift corruption:
\begin{theorem}[Mean shift]\label{thm:bounded_mean}
    Let $\eps <0.1$, $\alpha <0.1$. Under the setup of Problem~\ref{prob:bounded_mean}, there exists a polynomial-time algorithm (Algorithm~\ref{alg:sos_bounded}) that outputs $\wh \mu \in \R^d$ such that
        $\norm{\mu - \wh \mu}_2 = O\paren{\sqrt{\frac{\eps}{n}} + \sqrt{\alpha}}$
    with probability at least $1 - \delta$, as long as $Nn=\min\cbracket{\Omega\paren{\frac{d}{\eps/n} \log \paren{d / \delta}}, \Omega\paren{\frac{d}{\alpha} \log \paren{d / \delta}}}$. The error rate is minimax optimal, and the sample complexity is nearly optimal.
\end{theorem}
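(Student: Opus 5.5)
We reduce to batch means. For each user $i$ let $\bar x_i = \frac1n\sum_{j} x_{ij}$. For a good user, $\bar x_i$ has mean $\mu_i$ and covariance $\Sigma_i/n \preceq \mathbb{I}_d/n$. Write $\bar x_i - \mu = (\bar x_i - \mu_i) + (\mu_i - \mu)$. The first term is a zero-mean, variance-$1/n$ fluctuation. The second is a deterministic shift of norm at most $\sqrt{\alpha}$. The plan is to show that the good batch means satisfy a \emph{resilience / bounded second moment} condition around $\mu$ whose strength interpolates between the two regimes, and then to run an SoS (or filtering) robust mean estimator on $\{\bar x_i\}$ with corruption $\eps$.

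\textbf{Step 1 (deterministic condition).} Consider the uniform distribution over good batch means. We aim to show that with probability $1-\delta$ the following holds simultaneously for all unit vectors $v$. For every subset $S\subseteq\mathcal G$ with $|S|\ge(1-2\eps)|\mathcal G|$, the quantity $\frac1{|S|}\sum_{i\in S}\langle v,\bar x_i-\mu\rangle$ is at most $O(\sqrt{\eps/n}+\sqrt{\alpha})$ in absolute value. Equivalently, we want a stability condition with a bounded-covariance certificate: $\frac1{|\mathcal G|}\sum_{i\in\mathcal G}(\bar x_i-\mu)(\bar x_i-\mu)^\top \preceq O(1/n+\alpha/\eps)\,\mathbb{I}_d$, possibly after discarding a small fraction of good batches.

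The shift part contributes $\frac1{|\mathcal G|}\sum(\mu_i-\mu)(\mu_i-\mu)^\top\preceq \alpha \mathbb{I}$. Its effect on any subset mean is at most $\sqrt\alpha$ directly, since every $\mu_i$ lies in the $\sqrt{\alpha}$-ball. The fluctuation part is handled by matrix concentration (matrix Bernstein after truncation, as in \cite{dkk+19}): the empirical covariance of the $\bar x_i-\mu_i$ is $\preceq O(1/n)\mathbb{I}$ once $N\gtrsim d\log(d/\delta)/\eps$. This is precisely $Nn \gtrsim \frac{d}{\eps/n}\log(d/\delta)$. The alternative sample condition $Nn\gtrsim d\log(d/\delta)/\alpha$ corresponds to the regime where we can only afford a covariance bound $O(1/n+\alpha/\eps)$ and target error $\sqrt{\alpha}$, so we would split into the two cases according to which term dominates.

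\textbf{Step 2 (SoS estimator).} Using the bounded-covariance structure, we set up the standard SoS program (Algorithm~\ref{alg:sos_bounded}). It has variables $w_i\in\{0,1\}$ with $\sum w_i\ge(1-\eps)N$ and points $x'_i$ with $w_i x'_i=w_i\bar x_i$, subject to the SoS-certified covariance constraint $\frac1N\sum (x'_i-\hat\mu')(x'_i-\hat\mu')^\top \preceq C\sigma^2 \mathbb{I}$, where $\sigma^2=1/n+\alpha/\eps$. The identifiability argument is the usual one. The overlap of the program's set with $\mathcal G$ has size at least $(1-2\eps)N$, and by Cauchy--Schwarz each side's mean on the overlap is within $O(\sigma\sqrt{\eps})$ of its full mean. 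This gives error $O(\sqrt{\eps/n}+\sqrt{\alpha})$, plus the fact that the mean of the good set is within $\sqrt\alpha+O(\sqrt{d/(nN)})$ of $\mu$. The degree-$O(1)$ SoS proof of this identifiability lifts to pseudo-expectations, and rounding by taking $\tilde{\mathbb E}[\hat\mu']$ gives the estimator.

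\textbf{Step 3 (optimality).} For the lower bound, two instances suffice. First, the $\eps$-user corruption with per-user variance $1/n$ gives $\Omega(\sqrt{\eps/n})$ by the classical robust mean lower bound applied to batch means, since batch means of Gaussians are sufficient statistics. Second, shifting all good $\mu_i$ by $\sqrt\alpha$ makes $\mu$ unidentifiable up to $\sqrt\alpha$.

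\textbf{Main obstacle.} I expect the main difficulty to be Step 1: obtaining the covariance bound with the sharp $1/n$ scaling and near-linear sample complexity despite heavy tails (only bounded covariance is assumed). This requires truncating/pruning a small fraction of good batch means before applying matrix concentration, and checking that the pruning costs only $O(\sqrt{\eps/n})$.
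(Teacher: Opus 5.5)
The genuine gap is in Step~3, in the $\Omega(\sqrt{\varepsilon/n})$ lower bound that the minimax-optimality claim requires. Gaussian batches cannot give it. With samples $N(\mu,\mathbb{I}_d)$ the batch means are $N(\mu,\frac1n\mathbb{I}_d)$ and sufficient. But the contamination-induced minimax error for a Gaussian with covariance $\sigma^2\mathbb{I}_d$ is $\Theta(\varepsilon\sigma)$, which here is only $\Omega(\varepsilon/\sqrt n)$. Indeed, with $\alpha=0$, error $O(\varepsilon/\sqrt n+\sqrt{d/(nN)})$ is information-theoretically attainable on that instance, so no Gaussian instance can witness $\sqrt{\varepsilon/n}$.

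The $\sqrt{\varepsilon}\,\sigma$ rate needs a heavy-tailed law. Here that law must arise as the average of $n$ i.i.d.\ draws with covariance $\preceq\mathbb{I}_d$, so you cannot simply posit a bad batch-mean distribution. The missing idea is the paper's sample-level construction. Let each sample equal $0$ with probability $1-\varepsilon/n$ and $\sqrt{n/\varepsilon}$ with probability $\varepsilon/n$; this has mean $\sqrt{\varepsilon/n}$ and variance at most $1$. A batch then contains a nonzero point with probability at most $1-(1-\varepsilon/n)^n\le\varepsilon$. After shrinking $\varepsilon$ by a constant factor to control the count, the user-level adversary can overwrite exactly those batches with zeros, making the data identical to the all-zero instance, whose mean is $0$. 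You also leave the ``nearly optimal sample complexity'' clause unaddressed; it follows from the uncorrupted $\Omega(\sqrt{d/(nN)})$ bound.

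Your upper bound is a valid alternative parametrization rather than the paper's. You keep the program's corruption level at $\varepsilon$ and relax the covariance certificate to $\frac1n+\frac{\alpha}{\varepsilon}$, which is the $\tau=\alpha/\varepsilon$ device the paper reserves for Problem~\ref{prob:arbitrary_alpha}. For this theorem the paper instead keeps the certificate at $2(\frac1n+\alpha)$ and inflates the corruption level to $\varepsilon'=\min\{\max\{\varepsilon,n\alpha\},\frac1{10}\}$. Both choices make (corruption)$\times$(covariance) of order $\varepsilon/n+\alpha$. Both also give a matrix-Chernoff exponent of order $(\varepsilon+n\alpha)N/d$ (for you, by taking $\Delta=\max\{1,n\alpha/\varepsilon\}$), which is where the $\min$ in the sample complexity comes from.

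The paper's choice does buy something you need in your second regime. Truncating at radius $2\sqrt{d/(n\varepsilon)}$ removes at most $\varepsilon N$ good batches only with probability $1-e^{-\varepsilon N/4}$. When $n\alpha\gg\varepsilon$ and only $nN\ge C\,d\log(d/\delta)/\alpha$ is assumed, $\varepsilon N$ can be far below $\log(1/\delta)$, so this step fails with probability much larger than $\delta$. Raising the working truncation and selection level to $\max\{\varepsilon,n\alpha\}$ fixes this, since then $\varepsilon'N$ is at least of order $\log(d/\delta)$. At that level the certificate $O(\frac1n+\alpha)$ already suffices, so you land on the paper's choice.
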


To see the optimality of sample complexity, we note that $\Omega(\min\{\dims\ns/\eps, \dims/\alpha\})$ is necessary to achieve an error of $\sqrt{\alpha}+\sqrt{\eps/n}$, even without corruption. 
The error upper bound is proved in Theorem~\ref{thm:bounded_mean_upper} and the lower bound part is proved in Theorem~\ref{thm:bounded_mean_lower}.

Next, we discuss the adversarial corruption model (Problem~\ref{prob:arbitrary_alpha}). A simple solution is to treat all samples as $\eps+\alpha$ corrupted version of the clean samples, and thus running robust mean estimation gives an error of $O(\sqrt{\eps+\alpha})$ when $N\ns =\Omega(\dims/(\eps+\alpha))$. This would not be ideal if the fraction of bad users is large. We prove that the effect of bad users can be essentially removed,
\begin{theorem}[Adversarial]\label{thm:arbitrary_alpha}
   Let $\eps, \alpha$ satisfy $\eps + 5\alpha < \frac{1}{18}$. Under the setup of Problem~\ref{prob:arbitrary_alpha}, there exists a polynomial-time algorithm (Algorithm~\ref{alg:sos_arbitrary_alpha}) that outputs $\wh \mu \in \R^d$ such that
    $\norm{\wh\mu - \mu}_2 = O\paren{\sqrt{\alpha}}$ with probability at least $1 - \delta$, as long as $nN =\Omega\paren{\frac{d}{\alpha} \log \paren{d / \delta}}$. 
The error rate is minimax optimal, and the sample complexity is nearly optimal.
\end{theorem}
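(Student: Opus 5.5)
Our plan is to prove Theorem~\ref{thm:arbitrary_alpha} with an SoS program that works at the level of individual samples but uses the batch structure to stay robust against the $\eps$-fraction of bad users. We introduce indicator variables $w_i\in\{0,1\}$ for users and $v_{ij}\in\{0,1\}$ for samples $j\in[n]$ in batch $i$. The constraints are:
\begin{itemize}
\item[(i)] $\sum_i w_i \ge (1-\eps)N$, so that most users are selected;
\item[(ii)] $v_{ij}\le w_i$ and $\sum_j v_{ij}\ge (1-\alpha)n\,w_i$, so that each selected user keeps most of its samples;
\item[(iii)] the kept samples satisfy a certified second-moment bound.
\end{itemize}
Constraint (iii) says that for a mean variable $\mu'$ we have $\frac{1}{nN}\sum_{i,j} v_{ij}\langle x_{ij}-\mu',u\rangle^2 \le C\|u\|^2$, expressed as an SoS PSD constraint $\frac{1}{nN}\sum v_{ij}(x_{ij}-\mu')(x_{ij}-\mu')^\top \preceq C\,\mathbb{I}_d$ via an auxiliary matrix square-root variable. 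The algorithm outputs $\wh\mu=\tilde{\mathbb{E}}[\mu']$ for a pseudo-expectation of constant degree. We will also add the batch-level constraint (iv): the averaged kept-sample means $\bar y_i=\frac1n\sum_j v_{ij}x_{ij}$ satisfy a bounded-covariance constraint $\frac1N\sum_i w_i(\bar y_i-\mu')(\bar y_i-\mu')^\top\preceq C(\frac1n+\alpha)\mathbb{I}_d$. This constraint is not needed for the $\sqrt\alpha$ rate, but it matches the structure of Algorithm~\ref{alg:sos_bounded}.

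The first step is feasibility, which also sets the sample complexity. We set $w_i=\mathbb{1}[i\in\mathcal{G}]$, $v_{ij}=\mathbb{1}[\text{sample uncorrupted}]$ and $\mu'=\mu$. We need that every subset consisting of a $(1-O(\eps+\alpha))$-fraction of the clean $nN$ samples has second moment $O(1)$ about $\mu$, and mean within $O(\sqrt{\alpha}+\sqrt{\eps/n})$ of $\mu$ in every direction. This is the standard stability/resilience lemma for bounded-covariance distributions. After the usual pruning of far points, it holds with $nN=\Omega(\frac{d}{\alpha}\log(d/\delta))$ samples by matrix Chernoff. The error term contributed by a deleted $\eps$ fraction is $\sqrt{\eps}$ at the sample level, so for the sample level analysis we instead treat the corruption as $\eps+\alpha$ on the good and bad users combined. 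That already gives $O(\sqrt{\eps+\alpha})$. The improvement to $O(\sqrt\alpha)$ must come from batch structure.

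The second step, and the main obstacle, is identifiability in SoS. We want an SoS proof, from the constraints together with the true assignment, that $\langle \mu'-\mu,u\rangle^2\le O(\alpha)\|u\|^2$. We decompose the difference $\mu'-\mu$ using the overlap $r_{ij}=v_{ij}\cdot\mathbb{1}[\text{clean}]$. On clean kept samples the difference is controlled by the stability of the clean data. Samples kept by the program but not clean come from two sources. Within good users they make up at most an $O(\alpha)$ fraction; using Cauchy–Schwarz with the second-moment constraint, their contribution is $O(\sqrt{\alpha})$. From bad users they make up at most an $\eps$ fraction of the mass. Naively this again gives $\sqrt\eps$.

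To kill the $\eps$ contribution we use constraint (iv) at the batch level. A bad user's batch mean $\bar y_i$ must lie among batch means whose covariance is $O(1/n+\alpha)$, so via Cauchy–Schwarz its contribution is $\sqrt{\eps(1/n+\alpha)}\le\sqrt{\eps/n}+\sqrt{\alpha}$. For this to work, constraint (iv) must be feasible for the good users. Their kept-batch means deviate from $\mu$ by the clean fluctuation, which is $1/n$ in variance, plus the effect of deleting an $\alpha$ fraction. By resilience, deleting an $\alpha$ fraction of a batch shifts its mean by $O(\sqrt\alpha)$ in each direction on average over batches, giving variance $O(\alpha)$. Proving this averaged per-batch resilience uniformly over directions, with only $nN\gtrsim d/\alpha$ samples, is where we expect the difficulty. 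We would first try bounding $\sum_i\sup$-free quantities by a single PSD inequality $\sum_i\frac1n\sum_j (x_{ij}-\mu)(x_{ij}-\mu)^\top$, which handles deletions via Cauchy–Schwarz inside each batch.

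The third step concludes. Since $\eps+5\alpha<1/18$ keeps the overlap between program and truth at least a constant, rounding by $\tilde{\mathbb{E}}[\mu']$ gives error $O(\sqrt{\eps/n}+\sqrt\alpha)$. Because the good users draw from $P$ itself, the $\sqrt{\eps/n}$ term can be dominated only when $\eps/n\lesssim\alpha$. To reach $O(\sqrt\alpha)$ in general, we observe that when $\eps/n>\alpha$ the batch-mean term gives $\sqrt{\eps/n}$, which is not $O(\sqrt\alpha)$. So we would refine constraint (iv) to a centered version, $\frac1N\sum_i w_i\big((\bar y_i-\mu')^{\otimes2}-\frac1n\widehat{\Sigma}_i\big)\preceq C\alpha\,\mathbb{I}_d$, which subtracts the within-batch variance and leaves only the $\alpha$ shift. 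This yields the bad-user contribution $\sqrt{\eps\alpha}\le\sqrt\alpha$. The matching lower bound is immediate: with $\eps=0$ it is the standard $\Omega(\sqrt\alpha)$ bound for $\alpha$-corrupted bounded-covariance data.
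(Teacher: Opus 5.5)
Your architecture matches the paper's system $\mathsf{B}$ (Definition~\ref{def:B}): a sample-level bounded-covariance refinement plus a bounded-covariance constraint on cleaned batch means. The error is split into uncertified users, controlled by the batch-level bound and weighted by $\varepsilon$, and certified good users, controlled at the sample level and weighted by $\alpha$. Two steps do not go through, and one of them is false.

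\textbf{The third step is unnecessary and its conclusion is impossible.} The centered constraint cannot give a bad-user contribution of $\sqrt{\varepsilon\alpha}$. Bad users choose their own within-batch scatter $\widehat\Sigma_i$, and (iii) only caps the total kept second moment. So each of the $\varepsilon N$ bad batches can have a mean offset $t$ in a direction $u$ together with $u^\top\widehat\Sigma_i u\approx nt^2$, where $t^2\approx 1/(\varepsilon n)$. The subtracted term $\frac1n\widehat\Sigma_i$ then cancels the offset exactly, (iii) still holds with $C=O(1)$, and the induced shift is $\varepsilon t=\sqrt{\varepsilon/n}$.

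No refinement can avoid this. If $\alpha n<1$, no sample of a good batch can be corrupted, and Lemma~\ref{lem:alpha0_lower_bound_sqrteps_over_n} gives an $\Omega(\sqrt{\varepsilon/n})$ lower bound, which exceeds $\sqrt\alpha$ whenever $\alpha<\varepsilon/n$. The missing observation is the paper's remark after Theorem~\ref{thm:arbitrary_alpha}. In Problem~\ref{prob:arbitrary_alpha} with $\alpha>0$, every good batch has at least one corrupted sample, so $\alpha\ge 1/n>\varepsilon/n$. Hence the $O(\sqrt{\varepsilon/n}+\sqrt\alpha)$ bound from your first two steps is already $O(\sqrt\alpha)$. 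Your early remark that (iv) is not needed for the $\sqrt\alpha$ rate is also wrong: without it you only get $O(\sqrt{\varepsilon+\alpha})$.

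\textbf{Feasibility of (iv) is unresolved, and your parameter choice breaks the natural route.} You flag feasibility with $nN=\Omega(\frac{d}{\alpha}\log\frac{d}{\delta})$ as the difficulty but do not settle it.
\begin{itemize}
\item Your within-batch Cauchy--Schwarz handles deleting the $\alpha$-fraction. Applied to the clean batch means, however, it only bounds their scatter by the pooled sample covariance, i.e.\ by $O(1)$ rather than $O(\frac1n+\alpha)$.
\item To get the $1/n$ scale you must truncate batch means at radius $\sqrt{d/(n\varepsilon)}$, so that only an $\varepsilon$-fraction of users is affected. Matrix Chernoff with $R=\Theta(d/(\varepsilon nN))$, $\mu_{\max}\le 1/n$ and $\Delta=\Theta(n\alpha)$ then needs $nN=\Omega(d/(\varepsilon\alpha))$ up to logarithms.
\item The paper's fix is to loosen the batch bound to $\frac1n+\tau$ with $\tau=\alpha/\varepsilon$ (Eq.~\eqref{eq:C_11}). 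Taking $\Delta=n\tau$ gives failure probability $d\exp(-\Omega(\alpha nN/d))$ (Lemma~\ref{lem:C_11}). Identifiability loses nothing, because this bound only enters multiplied by the bad-user fraction: $\varepsilon(\frac1n+\tau)=\frac{\varepsilon}{n}+\alpha$.
\end{itemize}

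\textbf{A normalization error in (iv).} Your $\bar y_i=\frac1n\sum_j v_{ij}x_{ij}$ is the kept mean scaled by the kept fraction, about $1-\alpha$. So (iv) is not translation invariant, and your feasibility witness violates it once $\|\mu\|_2\gg 1/\sqrt\alpha$. Normalize by $(1-\alpha)n$ under an equality constraint, or use replacement variables $Z_{i,j}$ as the paper does.
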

The upper and lower bounds are proved in Theorem~\ref{thm:arbitrary_alpha_upper} and Theorem~\ref{thm:arbitrary_alpha_lower} respectively.
\begin{remark}
    In Problem \ref{prob:arbitrary_alpha} when $\alpha>0$, at least one sample from each user is corrupted, so we implicitly require $\alpha\ge 1/n$. Thus, $\sqrt{\alpha}\ge \sqrt{\frac{\eps}{n}}$, and we do not have the extra $\sqrt{\frac{\eps}{n}}$ term in the error.
\end{remark}

\paragraph{Unknown corruption} In practice, $\eps, \alpha$ may not be known in advance. We can adapt to the unknown corruption level by performing binary search over the parameters, and stop when we reach beyond the ``true'' corruption level, leading to only an extra logarithmic factor in the sample complexity and running time. The stopping criteria could be determined using intersection of confidence balls~\citep{jor22} or tolerant testing~\citep{DiakonikolasKP23simple,CanonneGWY25truncate} (which requires $O(\sqrt{\dims})$ clean samples). We will elaborate on this in \Cref{sec:unknowns_collaborative}.

\section{Preliminaries}
\label{sec:preli_main}

In this section, we first introduce all notation used throughout the paper. Then, in \Cref{sec:prelim_sos}, we give the formal definition of the sum-of-squares proof; in \Cref{sec:prelim_pseudo}, we define pseudo-distributions and show that, for constant-degree pseudo-distributions, we can find an approximate solution that satisfies a given set of constraints in polynomial time.

\paragraph{Notation}
We define $[N] := \{1, 2, \dots, N\}$ and $\Z_+$ as the set of positive integers. 
For a vector $x \in \R^d$, we write $\|x\|_2 := \sqrt{\sum_{i \in [d]} x_i^2}$ for its $\ell_2$ norm. 
For all vectors $x, y \in \R^d$, we denote $\langle x, y \rangle := \sum_{i \in [d]} x_i \cdot y_i$ as the inner product of $x$ and $y$.
We let $\mathrm{tr}\bracket{A} := \sum_{i \in [d]} A_{i, i}$ be the trace of $A \in \R^{d \times d}$.
For matrices $A, B \in \R^{d \times d}$, we write $A \succeq B$ to denote that $A - B$ is positive semidefinite, namely for all $x \in \R^d$, we have $x^\top \paren{A - B} x \geq 0$. 
For a random variable $X \in \R^d$, $\E[X]$ denotes the expectation and $\mathrm{Cov}[X] := \E[(X - \E[X])(X - \E[X])^\top] \in \R^{d \times d}$ denotes the covariance matrix. 
The identity matrix in $\R^{d \times d}$ is denoted by $\mathbb{I}_d$.

\subsection{Sum-of-squares proof}
\label{sec:prelim_sos}
Our algorithm relies on the sum-of-squares algorithm~\citep{nesterov2000squared,lasserre2001new}. We describe the results needed for our algorithm and refer the readers to \cite{hopkins2017SoS, kothari2018robust, BakshiDHKKK20robustSoS} for more detailed exposition of the method. The formal definition of the sum-of-squares (SoS) proof is as follows:

\begin{definition}[Sum-of-squares (SoS) proof]\label{def:sos_proof}
    For polynomials $p,q$ in $x\in \R^\dims$, we say that $p\ge q$ has a degree-$k$ sum-of-squares proof if there exists polynomials $s_1, \ldots, s_t$ with degree at most $k/2$ such that
    \[
    p-q=\sum_{i=1}^ts_i^2.
    \]
    We denote this as $\sststile{k}{x}\{p\ge q\}$.
    Given a set of polynomial constraints $\cA=\{f_i=0\}_{i\in [m]}\cup \{g_j\ge0\}_{j\in[n]}$, we say that there exists a degree-$k$ proof of $p\ge q$ modulo $\cA$ if there exists polynomials $a_i,b_j,s_t$ such that $\deg(a_if_i)\le k$, $\deg(b_j^2g_j)\le k$, and $\deg(s_t^2)\le k$ for all $i,j,t$, and
    \[
    p-q=\sum_{t}s_t^2+\sum_{i=1}^m a_if_i+\sum_{j=1}^{n}b_j^2g_j.
    \]
    This can be denoted as $\cA\sststile{k}{x}\{p\ge q\}$.
\end{definition}

\subsection{Pseudo-distributions}
\label{sec:prelim_pseudo}
Pseudo-distribution is a generalization of probability distributions in that their ``probability'' weight can be negative and ``expectations'' are only guaranteed to be non-negative for square polynomials of finite degree.
\begin{definition}
Let $\cX\subseteq \R^\dims$ be a finite set of vectors. A level-$\ell$ (or degree-$\ell$) pseudo-distribution over $\cX$ has a ``mass function'' $D(x)$ such that for all polynomial $f$ such that $\deg(f)\le \ell/2$.
\[
\sum_{x\in \cX}D(x)=1,\quad \sum_{x\in \cX}D(x)f(x)^2\ge 0,
\]
and we define the pseudo-expectation for $f(x)$ as 
\[
\wt\E_D\bracket{f(x)}\eqdef\sum_{x}D(x)f(x).
\]
\end{definition}

This relaxation is crucial for the Sum-of-Squares framework: rather than searching over discrete or combinatorial objects directly, we optimize over pseudo-expectations that behave like expectations on all low-degree polynomials. In particular, any feasible SoS solution corresponds to a pseudo-distribution whose pseudo-expectation operator satisfies basic probabilistic inequalities (such as Cauchy–Schwarz) up to the prescribed degree.

\begin{definition}[Polynomial constraints and satisfiability]
    Given a set of polynomial constraints $\cA=\{f_1\ge0, \ldots f_m\ge 0\}$, we say that $D$ satisfies $\cA$ at degree $r$ if for every $S\subseteq[m]$ and every sum-of-squares polynomial $h$ with $\deg(h)=\sum_{i\in S}\max\{\deg(f_i),r\}$, 
\[
\wt\E_D\bracket{h(x)\prod_{i\in S}f_i(x)}\ge 0.
\]
We say that $D$ satisfies $\cA$ approximately at degree $r$ if the above inequalities are satisfied up to an error of $2^{-\dims \ell}\norm{h}_2\prod_{i\in S}\norm{f_i}_2$, where $\norm{\cdot}_2$ is the 2-norm of the coefficients of each monomial.
\end{definition}

Due to \cite{nesterov2000squared,lasserre2001new}, we can efficiently find a level-$\ell$ pseudo-distribution that approximately satisfies a given set of constraints $\cA$ in time $(d+m)^{O(\ell)}$. See also \cite[Fact 3.9]{BakshiDHKKK20robustSoS}.

%% file: technique_overview.tex
\section{Technical Overview}
\label{sec:technique}

In \Cref{sub:technique:background}, we present a brief overview of SoS for robust mean estimation. In \Cref{sub:technique:prob1}, we introduce how to adapt traditional SoS to mean shift corruption (Problem~\ref{prob:bounded_mean}). In \Cref{sub:technique:prob2}, we address the more challenging Problem~\ref{prob:arbitrary_alpha} by designing a novel two-level refinement polynomial system and provide the proof overview.

\subsection{Brief Review of SoS for Robust Mean Estimation}
\label{sub:technique:background}
The key structural insight for robust mean estimation is the following: if one can identify a subset of samples $S$ that contains at least a $(1 - \eps)$ fraction of the data and whose empirical covariance is small, then the empirical mean of $S$ must be close to the true mean. Intuitively, low empirical covariance ensures that no small group of samples can ``pull'' the mean in any particular direction. Thus, even if $S$ includes a small number of corrupted points, their collective influence on the mean is limited.

The main algorithmic challenge is to efficiently find such a subset $S$. A powerful framework for this task is provided by the \emph{sum-of-squares (SoS)}~\citep{nesterov2000squared, kothari2017outlier} relaxation. In this approach, the search for a large, low-covariance subset is formulated as a system of polynomial constraints over indicator variables representing sample membership to $S$. Formally, the polynomial constraints should at least include:
\begin{enumerate}
    \item \textbf{$S$ is sufficiently large:} with $W_i \in \cbracket{0, 1}$ denotes whether or not the $i$-th sample is selected in our subset $S$, we need the total number of selected sample $\sum_{i=1}^n W_i$ in $S$ should be equal to the number of good samples $(1 - \varepsilon) n$, and
    \item \textbf{Bounded covariance:} We use $Z_i$ to denote the selected data point, which should satisfy $\frac{1}{n} \sum_{i=1}^n (Z_i - \overline Z)(Z_i - \overline Z)^\top\preceq 2\eye_\dims$, where $\overline{Z}$ is the average of $Z_i$'s.
\end{enumerate}
The SoS hierarchy then yields a tractable convex relaxation of this polynomial system, from which one can extract an accurate mean estimate. The standard SoS analysis involves two main parts: 

\textbf{Satisfiability:} With high probability, a feasible solution to the SoS relaxation exists when the data satisfy the assumed corruption model. 
A natural idea is to prove that the set of clean samples satisfies the constraints with high probability. However, for bounded covariance random variables, we cannot directly obtain a high-probability bound on $\frac{1}{n} \sum_{i=1}^n (Z_i - \overline Z)(Z_i - \overline Z)^\top$. Thus, we must introduce a truncation step that modifies $O(\eps)$ of samples whose deviations from the mean are excessively large (see~\cite[Section 3.2.2]{diakonikolas2023algorithmic}). Samples from this truncated distribution satisfies bounded empirical covariance with high probability when $n=\tildeOmega{\dims/\eps}$, so we treat our samples as $O(\eps)$ corrupted versions of the this new distribution.

\textbf{Identifiability:} Any feasible (or approximately feasible) SoS solution corresponds to a mean estimate that is provably close to the true mean of the uncorrupted distribution.
For a review of the SoS preliminaries, see \Cref{sec:prelim_sos}.

\subsection{Leveraging Per-User Structure (Problem~\ref{prob:bounded_mean}).}
\label{sub:technique:prob1}

In this part, we present our techniques for analyzing Problem~\ref{prob:bounded_mean} and for deriving Theorem~\ref{thm:bounded_mean}.
Our approach goes beyond this naive aggregation by explicitly leveraging the \emph{per-user structure} of the data in order to obtain a sharper error bound than $O\paren{\sqrt{\eps + \alpha} + \sqrt{\frac{d}{Nn}}}$. A crucial observation is that, for each uncorrupted user, the empirical mean of their local dataset has variance smaller than that of a single sample by a factor of $1/n$. Therefore, the central question we study is how to exploit this variance reduction to improve the overall error bound.

\paragraph{Algorithm ideas.}
In this model, each good user’s data come from a neighboring distribution and contain no additional corruption, so we can view each user’s empirical mean as a \emph{low-variance point} whose expectation is slightly shifted from the target mean. Bad users, on the other hand, may contribute arbitrary empirical means. Therefore, we denote all the $Z_i$'s, which were used to represent the uncorrupted sample (\Cref{sub:technique:background}), as the empirical mean of $n$ samples from uncorrupted users. As each sample is from a distribution with covariance bounded by $\mathbb{I}_d$, the empirical mean over $n$ samples can be further bounded by $\frac{1}{n} \mathbb{I}_d$, giving us a much better approximation on the average of all empirical means across all good users. In addition, after taking the mean shift $\alpha$ into consideration, we can express the bound as $\paren{\frac{1}{n} + \alpha} \mathbb{I}_d$. We denote this set of constraints as a polynomial system $\mathsf{A}$ (see Definition~\ref{def:A} for a more formal and complete definition). Therefore, our SoS algorithm can return $\wh\mu \in \R^d$ satisfying:
\begin{align*}
    \norm{\ov z - \wh\mu}_2 < O\paren{\sqrt{\eps \paren{\frac{1}{n} + \alpha}}}.
\end{align*}

\begin{algorithm}[!ht]
\caption{SoS for solving mean shift (Problem~\ref{prob:bounded_mean}).}
\label{alg:sos_bounded}
\begin{algorithmic}[1]
\State \textbf{Input}: $n$ samples from each user, $\alpha \in(0, 0.1)$, $\eps \in(0, 0.1)$, polynomial system $\mathsf{A}$ (Definition~\ref{def:A}).
\State Compute a pseudoexpectation $\wt{\mathbb{E}}$ satisfying the constraints of the polynomial system $\mathsf{A}$.
\State \textbf{Output:} $\wh \mu = \wt{\mathbb{E}}\bracket{\ov Z} \in \R^d$.
\end{algorithmic}
\end{algorithm}

\paragraph{Upper bound error analysis.}
We further bound the deviation between each user’s mean and the population mean~$\mu$. As explained in \Cref{sub:technique:background}, we use truncation to ensure the satisfiabilty, but it also causes new challenges. The corrupted samples can be viewed as $\eps$ corrupted from the pure samples. We truncate the pure samples\footnote{Although the pure samples are unknown to us, the goal of the satisfiability argument is to show that such a solution exists. Since the pure samples must exist under our model assumptions, we can apply truncation to them in the analysis to establish existence.} by an additional $\eps$-fraction so that we can treat the corrupted samples we receive as a $2\eps$ corrupted version. In addition, since truncation inevitably shifts the mean, we must show that this change is small, namely, $\norm{\mu_{i'} - \mu_i}_2 \leq O\paren{\sqrt{\frac{\eps}{n}}}$.
Combining this with the assumption that uncorrupted users satisfy
$\|\mu_i - \mu\|_2 \le \sqrt{\alpha}$, we can eventually bound the distance between the truncated mean and the true mean $\mu$.

To get the error of the true mean $\mu$ and our SoS output approximation $\wh\mu$, we still need to bound each user’s true mean and their empirical mean $z_i$ over $n$ samples. By Markov's inequality, we get 
\begin{align*}
    \norm{ \ov z - \frac{1}{N} \sum_{i = 1}^N \mu_i }_2 < O\paren{\sqrt{\frac{d}{nN}}}
\end{align*}
so that $\norm{ \mu - \wh\mu }_2$ can be eventually bounded by $O\paren{\sqrt{\frac{\eps}{n}} + \sqrt{\frac{d}{nN}} + \sqrt{\alpha}}$.

Now we sketch the proof idea for satisfiability. The main challenge is that naively applying the standard truncation described in~\Cref{sub:technique:background} requires at least $nN \geq \Omega\paren{\frac{dn}{\eps}}$ total samples. To improve this sample complexity bound to the optimal rate, we set the cardinality of $S$ to $(1 - \varepsilon') N$, with $\eps' = \min \cbracket{\max \cbracket{\eps, n \alpha}, \frac{1}{10}}$, thereby allowing for a more careful selection of samples. With a careful choice of this relationship, the relaxed selection does not negatively affect the error guarantee; instead, it enables a tighter bound on the failure probability in the satisfiability argument, leading to an improved sample-complexity bound:
\begin{align}\label{eq:desired_bound}
    nN \geq \min\cbracket{\Omega\paren{\frac{d}{\eps/n} \log \paren{d / \delta}}, \Omega\paren{\frac{d}{\alpha} \log \paren{d / \delta}}}.
\end{align}
For more details, we kindly refer the readers to \Cref{sec:bounded_mean}. 

\paragraph{Techniques for lower bound.}
The $\Omega(\sqrt{\alpha})$ lower bound is straightforward: suppose the true mean is $\mu$, for all users, the adversary could simply change the mean to some $\mu'$ that is $\sqrt{\alpha}$ apart, and no algorithms could tell whether the true mean is $\mu$ or $\mu'$.

\begin{wrapfigure}{r}{0.5\textwidth}
    \centering
    \includegraphics[width=\linewidth]{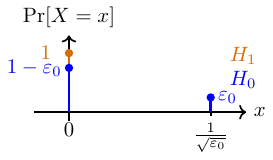}
    \caption{Lower Bound Construction for $\Omega(\sqrt{\eps/\ns})$. Distributions $H_0$ and $H_1$ differ in their means by $\sqrt{\eps/\ns}$ while maintaining bounded variance.}
    \label{fig:hardness_1}
\end{wrapfigure}

To prove the $\Omega(\sqrt{\eps/\ns})$ lower bound, we construct two hypotheses
$H_0$ and $H_1$ with global means $\mu_0$ and $\mu_1$, respectively.
Let $\varepsilon_0 := \varepsilon/n$.
Under $H_0$, each sample is drawn independently as
\[
\Pr[X = 0] = 1 - \varepsilon_0,
\qquad
\Pr\left[X = \frac{1}{\sqrt{\varepsilon_0}}\right] = \varepsilon_0.
\]
Under $H_1$, all samples satisfy $Y \equiv 0$ deterministically.
Both distributions have variance bounded by a constant, while their means differ by
\[
\mu_0 = \mathbb{E}[X] = \sqrt{\varepsilon_0} = \sqrt{\varepsilon/n},
\qquad
\mu_1 = 0.
\]
Under $H_0$, each user draws $n$ i.i.d.\ samples from $X$.
With constant probability, each user contains $\Theta(\varepsilon)$ nonzero samples, which are precisely the samples distinguishing $H_0$ from $H_1$.
An adversary may modify these $\varepsilon$ samples per user—replacing them by zero—without exceeding its corruption budget.
After this modification, the observed data under $H_0$ are identical to those under $H_1$.
Therefore, no algorithm can distinguish $H_0$ from $H_1$ with probability better than $2/3$.
Since the mean separation between the two hypotheses is $\Theta\paren{\sqrt{\varepsilon/n}}$, this establishes a minimax lower bound of $\Omega\paren{\sqrt{\varepsilon/n}}$ on the estimation error.

Finally, combining with the lower bound in the classical high-dimensional mean estimation $\Theta\paren{\sqrt{\frac{d}{Nn}}}$, we have shown that our result is \emph{minimax-optimal}.
For more information, we kindly refer the readers to \Cref{sec:hard}.

\subsection{Handling Small Corruption in Good Users (Problem~\ref{prob:arbitrary_alpha}).}
\label{sub:technique:prob2}
In this setting, the bad samples of good users can shift the local mean arbitrarily. Thus, the per-user empirical mean is no longer a low-variance quantity, which means the solution in~\cref{sub:technique:prob1} cannot be applied, calling for a new colution 

\paragraph{Simple solutions}

As discussed in~\cref{sec:result}, a naive solution is to leverage the observation that the total fraction of corrupted samples is $\eps + \alpha$, so running the standard robust mean estimation gives an error of $O\paren{\sqrt{\eps + \alpha} + \sqrt{\frac{d}{Nn}}}$. 
Another naive solution is to apply robust mean estimation for each user individually. However, achieving an accuracy of $\sqrt{\alpha}$ requires each user to have at least $n=\Omega(\dims/\alpha)$ samples. It leaves the question whether $\sqrt{\alpha}$ error can be achieved even when $n=o(\dims)$.

\paragraph{Algorithm ideas.}
We design a novel SoS formulation $\mathsf{B}$ formally defined in Definition~\ref{def:B}, which synergizes the sample constraint~\Cref{sub:technique:background} and the user-level constraint in~\Cref{sub:technique:prob1},
\begin{enumerate}
    \item \textbf{Crude refinement:} The purpose is to filter out gross outliers among bad users and corrupted samples from good users. These constraints are similar to the standard SoS constraints over all samples, with the key distinction that we additionally add an indicator $U_i \in \cbracket{0, 1}$ to select good users, and require that at least $(1-\alpha)n$ number of samples are kept if $U_i=1$. All the selected samples $\{Z_{i,j}\}_{i\in[N], j\in[n]}$ should have bounded empirical covariance $(\preceq 2\eye_\dims)$.
    \item \textbf{User-level refinement:} The second set of constraints performs an additional selection step for good users, which leverages the concentration of good users and enforces that the empirical covariance of their (cleaned) empirical means remains small, similar to \Cref{sub:technique:prob1}. More concretely, this user-level filtering requires the number of users $\sum_{i=1}^N U_i$ selected is at least $(1-\eps)N$. Furthermore, let $Y_i$ be the empirical mean of samples of user $i$, the set $\{Y_i:U_i=1\}$ should have covariance with spectral norm at most $\frac{1}{n} + \tau$, where $\tau = \frac{\alpha}{\eps}$ is used to reduce the failure probability of this largest singular value bound, without sacrificing the accuracy. 
\end{enumerate}
 
By coupling these two levels of constraints, our SoS algorithm enjoys the benefits of both the large sample size of all users as well as concentration within each user, thereby achieving optimal error while remaining computationally and sample efficient. 

\begin{algorithm}[!ht]
\caption{SoS for solving adversarial (Problem~\ref{prob:arbitrary_alpha}).}
\label{alg:sos_arbitrary_alpha}
\begin{algorithmic}[1]
\State \textbf{Input}: $n$ samples from each user, $\eps \in \paren{0, \frac{1}{18}}$, $\alpha \in \paren{0, \frac{1}{90}}$ satisfying $\eps + 5\alpha < \frac{1}{18}$, and polynomial system $\mathsf{B}$ (crude and user-level refinement, see the full definition in Definition~\ref{def:B}).
\State Compute a pseudoexpectation $\wt{\mathbb{E}}$ satisfying the constraints of the polynomial system $\mathsf{B}$.
\State \textbf{Output:} $\wh \mu := \wt{\mathbb{E}}\bracket{\ov Y} \in \R^d$.
\end{algorithmic}
\end{algorithm}

\paragraph{Upper bound error analysis.}
This change in polynomial constraint structure makes our proof of correctness significantly different from that of prior works. Our error bound can be expressed as two major components:
\begin{align*}
    \norm{\ov Y - \wt \mu}_2^4
    \leq 2\left( \frac{1}{N} \sum_{i=1}^N (1 - U_i \mathcal{I}_i) \langle Y_i - \wt \mu_i, \ov Y - \wt \mu \rangle \right)^2
    + 2\paren{\frac{1}{N} \sum_{i=1}^N U_i \mathcal{I}_i \langle Y_i - \wt \mu_i, \ov Y - \wt \mu \rangle}^2,
\end{align*}
where $\mathcal{I}_i \in \cbracket{0, 1}$ indicates whether user $i$ is an actual good user with at most $\alpha\ns$ corrupted samples,
$Y_i \in \R^d$ is the SoS variable representing the cleaned user-level mean, $\ov Y \eqdef \frac{1}{N} \sum_{i = 1}^N Y_i$, $\wt \mu_i \in \R^d$ is the empirical mean of clean samples of user $i$, and $\wt \mu \in \R^d$ is the empirical mean of all clean samples. 
The first term captures the total adversarial pull contributed by users that the SoS solution does not certify as good, 
while the second term represents the systematic bias introduced by users that the SoS solution believes to be good. 
In traditional SoS algorithms, since there is only a single layer of refinement, the second term is zero: one layer of “good” users is sufficient to ensure that this bias vanishes. 
Therefore, the first term can be bounded by $\eps / n$-scaled term using similar techniques used in traditional SoS. 
We mainly focus on presenting the bound on the second term.
Using the SoS Cauchy-Schwarz inequality, we can further express it as
\begin{align*}
    2 \cdot \Paren{\frac{1}{Nn} \sum_{i=1}^N\sum_{j = 1}^n U_i\mathcal{I}_i  \paren{1- W_{i, j} \indic{v_{i, j} = x_{i, j}}}} \cdot \Paren{\frac{1}{Nn} \sum_{i=1}^N\sum_{j = 1}^n\left \langle Z_{i, j} - v_{i, j}, \ov Y - \wt \mu \right \rangle^2},
\end{align*}
Here $v_{i,j}\in\R^d$ denotes the (unknown) clean sample and $Z_{i,j}\in\R^d$ is the SoS variable
representing the sample retained by the crude refinement.
The second factor measures the total ``surviving bad mass'' among users that are both selected and certified
(via $U_i\mathcal I_i$): by the crude-refinement constraints and the fact that each uncorrupted user contains
at most an $\alpha$-fraction corrupted samples, we can show this factor is $\le 2\alpha$.
For the third factor, letting $c:=\overline Y-\widetilde\mu$, we expand
\begin{align*}
    Z_{i,j} - \ov Y = (Z_{i,j}-\overline Z)+(\overline Z-\widetilde\mu)+(\widetilde\mu-\overline Y)
\end{align*}
so that $Z_{i, j} - v_{i, j} = \paren{Z_{i,j} - \ov Y} + c + \paren{\wt \mu - v_{i, j}}$. We can bound the inner product between each of them with $c$.
The term involving $\widetilde\mu-v_{i,j}$ is controlled by the empirical covariance of the clean samples
(since $\widetilde\mu$ is their empirical mean), while $\langle \overline Y-\widetilde\mu,c\rangle^2=\|c\|_2^4$ is immediate.
The remaining fluctuation term $Z_{i,j}-\overline Z$ is bounded using the bounded-covariance constraints from the crude refinement,
and the bridging terms $(\overline Z-\widetilde\mu)$ and $(\widetilde\mu-\overline Y)$ are the final terms we want to bound under $\ell_2$ distance in SoS indentifiability, where $\norm{\overline Z-\widetilde\mu}_2^2 < O\paren{\eps + \alpha}$ can be obtained via the standard SoS in \Cref{sub:technique:background}.
Combining these bounds yields an $\alpha$-scaled upper bound on the trusted contribution.

The above steps yield a constant-degree SoS proof,
which means that provided the constraints can be satisfied, the SoS algorithm can find a pseudoexpectation $\wt{\mathbb{E}}$ such that $\wh{\mu} = \wt{\mathbb{E}}\bracket{\ov Y}$ gives the desired error of $\bigO{\sqrt{\frac{\eps}{n}} + \sqrt{\alpha}}$ . 
We further combine it with the classical high-dimensional mean estimation bound $\Theta\paren{\sqrt{\frac{d}{Nn}}}$, which finishes the proof of identifiability.

\paragraph{Proof of satisfiability} It remains to show satisfiability, namely, that these constraints can be satisfied with high probability over the samples. This task is more challenging than in Problem~\ref{prob:bounded_mean}, since we require both refinements to hold simultaneously.
Recall that in SoS algorithm for standard robust mean estimation, the technique is to apply truncation.
In our setting, because we employ two refinements, we need to design a two-layer truncation procedure.

First, we construct a new random variable $X'_{i,j} \in \R^d$ via truncation by modifying a small fraction of the clean samples $v_{i,j} \in \R^d$ whose deviations from the mean are excessively large. Since $v_{i,j} \in \R^d$ differs from the corrupted samples we observe, $x_{i,j} \in \R^d$, on at most an $\eps + \alpha$-fraction of indices (due to the two levels of strong contamination in Problem~\ref{prob:arbitrary_alpha}), and since $X'_{i,j}$ differs from $v_{i,j}$ on at most an $\alpha$-fraction of indices, the truncated random variable $X'_{i,j}$ can be viewed as an $(\eps + 2\alpha)$-strong contamination of the observed samples $x_{i,j} \in \R^d$. The sample complexity required to ensure that this step holds with probability at least $1 - \delta$, for all $\delta \in (0, 0.1)$, is
$nN \geq \Omega\paren{\frac{d}{\alpha} \log\paren{d / \delta}}$.

However, the problematic part is the second truncation step, where we define
$Y_i' = \frac{1}{n}\sum_{j = 1}^n Y'_{i,j}$
by truncating $X'_{i,j}$. 
We note that $Y'_{i,j}$ can be viewed as an $\eps$-strong contamination of $X'_{i,j}$. Therefore, $Y'_{i,j}$ can be regarded as a $(2\eps + 2\alpha)$-strong contamination of the corrupted samples $x_{i,j} \in \R^d$ that we receive. We highlight that the second truncation operates by keeping or collapsing whole users, not by truncating individual samples independently. 
Therefore, the expected largest singular value of the empirical covariance of $Y_i'$ is more tightly bounded. 
As a consequence, putting this inside of the matrix Chernoff bound yields a worse failure probability. Therefore, to achieve a success probability of $1 - \delta$, the required number of samples satisfies
$nN \geq \Omega\paren{\frac{d}{\eps / n} \log\paren{d / \delta}}$.
Combining this with the first truncation step, we need at least
\begin{align*}
    nN \geq \max\cbracket{ \Omega\paren{\frac{d}{\paren{\eps / n}} \log\paren{d / \delta}}, \Omega\paren{\frac{d}{\alpha} \log\paren{d / \delta}}}
\end{align*}
samples. This is worse than our desired sample complexity (Eq.~\eqref{eq:desired_bound}), even when using the tighter choice $\eps' = \min \cbracket{\max \cbracket{\eps, n \alpha}, \frac{1}{18}}$ as in our first problem setting: since $\alpha$ denotes the fraction of corrupted samples, we have $\alpha n \geq 1 $, and thus $\eps'$.

To address this issue, we instead relax the bound on the largest singular value in the user-level refinement from $\frac{1}{n}$ to $\frac{1}{n} + \frac{\alpha}{\eps}$. This relaxation allows for a broader choice of parameters in the matrix Chernoff bound, thereby further reducing the failure probability. Since the tail probability decays exponentially, we are able to tighten the sample complexity bound to Eq.~\eqref{eq:desired_bound}.

Additional details are provided in~\Cref{sec:arbitrary_alpha}.

\paragraph{Techniques for lower bound.}

\begin{wrapfigure}{r}{0.5\textwidth}
    \centering
    \includegraphics[width=\linewidth]{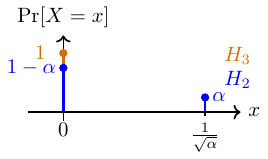}
    \caption{Lower Bound Construction for $\Omega(\sqrt{\alpha})$. Distributions $H_2$ and $H_3$ differ in their means by $\sqrt{\alpha}$ while maintaining bounded variance.}
    \label{fig:hardness_2}
\end{wrapfigure}

The lower bound $\Omega\paren{\sqrt{\frac{\eps}{n}}}$ and $\Omega\paren{\sqrt{\frac{d}{nN}}}$ follows with exactly the same technique from \Cref{sub:technique:prob1}. To show the $\Omega(\sqrt{\alpha})$ lower bound, we construct two hypotheses $H_2$ and $H_3$ where $H_3$ is the degenerate distribution at 0, and $H_2$ is defined as
\[
\Pr[X=0]=1-\alpha
\qquad\text{and}\qquad
\Pr\!\left[X=\frac{1}{\sqrt{\alpha}}\right]=\alpha.
\]
Both distributions have variance at most 1, and their means differ by $\sqrt{\alpha}$. 
Given samples from $H_2$, the adversary replaces each nonzero sample by $0$, making the samples exactly the same as $H_3$. The fraction of samples changed for each user is roughly $O(\alpha)$.
Therefore, no algorithm can distinguish $H_2$ from $H_3$ with probability better than $2/3$,
implying a minimax lower bound of $\Omega(\sqrt{\alpha})$ on the estimation error. We refer to \Cref{sec:hard} for more details.

\section*{Acknowledgment}

Maryam Aliakbarpour is affiliated with the Ken Kennedy Institute at Rice University.

%% file: preliminaries.tex
\begin{center}
    {\bf \LARGE Appendix}
\end{center}

\paragraph{Roadmap}

In \Cref{sec:additional_preli}, we present the mathematical background of SoS and basic probabilistic tools. 
In \Cref{sec:rme_SoS}, we introduce the SoS algorithm for the traditional robust mean estimation problem, where the corruption level $\eps \in (0,1)$ is given and $\alpha = 0$.
In \Cref{sec:bounded_mean}, we analyze our first problem (\Cref{prob:bounded_mean}), present our main upper bound result (\Cref{thm:bounded_mean_upper}), and provide its complete proof. In \Cref{sec:arbitrary_alpha}, we analyze our second problem (\Cref{prob:arbitrary_alpha}), present our main upper bound result (\Cref{thm:arbitrary_alpha_upper}), and provide its complete proof.
In \Cref{sec:unknowns_collaborative}, we further apply this technique to the settings of \Cref{prob:bounded_mean} and \Cref{prob:arbitrary_alpha}, thereby enabling our algorithms for these more complex problems to handle unknown corruption levels $\eps$ and $\alpha$.
Finally, in \Cref{sec:hard}, we present our lower bound results for \Cref{prob:bounded_mean} and \Cref{prob:arbitrary_alpha}, showing that our upper bounds are minimax optimal.

\section{Additional Preliminaries}
\label{sec:additional_preli}

In this section, we first present the notation that we use throughout the paper.
Then, in \Cref{sec:prelim_sos_app}, we present sum-of-squares properties. In \Cref{sub:preli:expectation}, we introduce the mathematical properties of pseudo-expectation. In \Cref{sub:preli:probability}, we present the probabilistic and algebraic tools used in this paper.

\paragraph{Notation.}
We define $[N] := \{1, 2, \dots, N\}$ and $\Z_+$ as the set of positive integers. For all set $A$, we denote $\ov A$ as its complement.
For a vector $x \in \R^d$, we write $\|x\|_2 := \sqrt{\sum_{i \in [d]} x_i^2}$ for its $\ell_2$ norm. For all vectors $x, y \in \R^d$, we denote $\langle x, y \rangle := \sum_{i \in [d]} x_i \cdot y_i$ as the inner product of $x$ and $y$.
For a matrix $A \in \R^{d \times d}$, $\|A\|_2 := \sup_{\|x\|_2 = 1} \|Ax\|_2$ denotes its spectral norm, and $\|A\|_F := \sqrt{\sum_{i,j} A_{i, j}^2}$ denotes its Frobenius norm.
For a symmetric matrix $A$, we let $\lambda_{\max}(A)$ denote its largest eigenvalue. 
We use $\mathbf{0}_{d \times d} \in \R^{d \times d}$ as a matrix with all 0 entries. 
We let $\mathrm{tr}\bracket{A} := \sum_{i \in [d]} A_{i, i}$ be the trace of $A$.
For matrices $A, B \in \R^{d \times d}$, we write $A \succeq B$ to denote that $A - B$ is positive semidefinite, namely for all $x \in \R^d$, we have $x^\top \paren{A - B} x \geq 0$. For an arbitrary matrix $C \in \R^{n \times d}$, we let $C^\top$ be its transpose. 
For a probability space $(\Omega, \mathcal{F}, \Pr)$, $\Omega$ is the sample space, $\mathcal{F}$ is the $\sigma$-algebra of events, and $\Pr: \mathcal{F} \to [0, 1]$ is the probability measure.
For a random variable $X: \Omega \to \R^d$, $\E[X] \in \R^d$ denotes the expectation and $\mathrm{Cov}[X] := \E[(X - \E[X])(X - \E[X])^\top] \in \R^{d \times d}$ denotes the covariance matrix. 
For a random variable $Y: \Omega \to \R$, we let $\mathrm{Var}[Y] := \E[(Y - \E[Y])^2] \in \R$ to denote its variance.
For all $A \in \mathcal{F}$, we define the indicator function $\indic{A}: \Omega \to \{0, 1\}$ as $\indic{A}(\omega) := 1$ if $\omega \in A$ and $\indic{A}(\omega) := 0$ if $\omega \notin A$, for all $\omega \in \Omega$.
The identity matrix in $\R^{d \times d}$ is denoted by $\mathbb{I}_d$. 
We use $\mathsf{B} \sststile{k}{x}\{p(x) \geq q(x)\}$ to represent that there exists a degree-$k$ sum-of-squares proof that the polynomial inequality $p(x) \geq q(x)$ holds, with variable $x$, by assuming the polynomial constraints in the set $\mathsf{B}$. We define $\mathbb{R}[x_1, \dots, x_d]$ as the polynomial ring over the real variables $x_1, \dots, x_d \in \R$.

\subsection{Sum-of-squares}
\label{sec:prelim_sos_app}
We introduce the sum-of-squares proof system in this section.
The following inference rules are helpful to establish new sum-of-square proofs.

\begin{fact}[SoS Cauchy-Schwarz, Claim 2.6 in \cite{Schramm2022ProofsToAlgs}]\label{fac:cauchy_schwarz}
Let $a, b$ be vector-valued polynomials of degree at most $k$. Then for any $\varepsilon > 0$,
\[
\vdash_{2k} \cbracket{\langle a, b \rangle \leq \frac{\varepsilon}{2} \norm{a}_2^2 + \frac{1}{2\varepsilon} \norm{b}_2^2}
\]
and
\[
\vdash_{4k} \cbracket{\langle a, b \rangle^2 \leq \norm{a}_2^2 \norm{b}_2^2}.
\]
\end{fact}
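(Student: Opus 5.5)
The plan is to exhibit explicit sum-of-squares certificates for both inequalities, so that each difference of the two sides is literally a sum of squares of polynomials of the required degree. No constraints $\cA$ are involved, so by Definition~\ref{def:sos_proof} it suffices to write $p-q=\sum_t s_t^2$ with $\deg(s_t)\le k$ for the first claim and $\deg(s_t)\le 2k$ for the second. Write $a=(a_1,\dots,a_m)$ and $b=(b_1,\dots,b_m)$, where each $a_i,b_i\in\R[x_1,\dots,x_d]$ has degree at most $k$.

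\textbf{First inequality.} Complete the square coordinatewise. Expanding the right-hand side gives
\[
\frac{\varepsilon}{2}\norm{a}_2^2+\frac{1}{2\varepsilon}\norm{b}_2^2-\langle a,b\rangle
=\frac12\sum_{i=1}^m\Paren{\sqrt{\varepsilon}\,a_i-\frac{1}{\sqrt{\varepsilon}}\,b_i}^2 .
\]
Each polynomial $s_i:=\frac{1}{\sqrt2}\bigl(\sqrt{\varepsilon}a_i-b_i/\sqrt{\varepsilon}\bigr)$ has degree at most $k$. Hence this is a degree-$2k$ SoS certificate, and it holds for every $\varepsilon>0$.

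\textbf{Second inequality.} Use Lagrange's identity:
\[
\norm{a}_2^2\norm{b}_2^2-\langle a,b\rangle^2=\sum_{i,j}a_i^2b_j^2-\sum_{i,j}a_ib_ia_jb_j=\frac12\sum_{i,j=1}^m\paren{a_ib_j-a_jb_i}^2 .
\]
The identity is checked by expanding $(a_ib_j-a_jb_i)^2=a_i^2b_j^2+a_j^2b_i^2-2a_ib_ia_jb_j$ and summing over all ordered pairs $(i,j)$. Each $a_ib_j-a_jb_i$ has degree at most $2k$, so every square has degree at most $4k$. This gives $\vdash_{4k}\{\langle a,b\rangle^2\le\norm{a}_2^2\norm{b}_2^2\}$.

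Neither step presents a real obstacle. The only point needing care is the degree bookkeeping: the squared polynomials must have degree at most half the stated proof degree, which the two certificates above satisfy exactly.
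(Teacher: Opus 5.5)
Your proof is correct. The paper cites this fact from the reference without proving it, and your two certificates, $\frac12\sum_i(\sqrt{\varepsilon}a_i-b_i/\sqrt{\varepsilon})^2$ and Lagrange's identity $\frac12\sum_{i,j}(a_ib_j-a_jb_i)^2$, are the standard ones. Your degree bookkeeping (squared polynomials of degree at most $k$ and $2k$ respectively) matches Definition~\ref{def:sos_proof}.
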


\begin{fact}[SoS operator norm, Claim 2.7 in \cite{Schramm2022ProofsToAlgs}]\label{fac:sos_operator_norm}
Let $y \in \mathbb{R}^n$, $M \in \mathbb{R}^{n \times n}$, and $B \in \mathbb{R}^{n \times k}$. Then
\[
\{ M = \lambda I - BB^\top \} \vdash_k \{y^\top M y \le \lambda \|y\|_2^2\},
\]
for
\[
k \ge \deg\bigl( y^\top M y + y^\top BB^\top y \bigr).
\]
\end{fact}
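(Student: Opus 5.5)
The plan is to exhibit an explicit certificate in the form of Definition~\ref{def:sos_proof}. That is, we write $\lambda\|y\|_2^2 - y^\top M y$ as a sum of squares plus polynomial multiples of the equality constraints $f_{ab} := (M - \lambda I + BB^\top)_{ab} = 0$ for $a,b\in[n]$. We allow the entries of $M$, $B$, $y$ (and possibly $\lambda$) to be polynomials in the underlying SoS variables, which is how the fact is used later. The whole argument is one algebraic identity followed by a degree count.

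First, use the equality constraints to replace $M$ by $\lambda I - BB^\top$ inside the quadratic form. Concretely,
\[
y^\top M y - y^\top(\lambda I - BB^\top)y \;=\; \sum_{a,b\in[n]} y_a y_b\, f_{ab},
\]
so the multipliers are $a_{ab} := y_a y_b$. Then expand
\[
y^\top(\lambda I - BB^\top)y = \lambda\|y\|_2^2 - \|B^\top y\|_2^2 = \lambda\|y\|_2^2 - \sum_{\ell=1}^k \bigl((B^\top y)_\ell\bigr)^2 .
\]
Rearranging gives
\[
\lambda\|y\|_2^2 - y^\top M y \;=\; \sum_{\ell=1}^k s_\ell^2 \;-\; \sum_{a,b} y_a y_b f_{ab}, \qquad s_\ell := (B^\top y)_\ell = \sum_{a} B_{a\ell}\, y_a .
\]
This is precisely a certificate of the required shape, with multipliers $-y_a y_b$ on the equalities and squares $s_\ell^2$.

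The only point needing care, and the closest thing to an obstacle, is the degree bookkeeping. We must check that every term in the certificate has degree at most $k$. Each square satisfies $\deg(s_\ell^2) \le \deg(y^\top BB^\top y)$, since $\sum_\ell s_\ell^2 = y^\top BB^\top y$ and no cancellation in leading terms can occur for a sum of squares. The multiplier terms satisfy $\deg(y_a y_b f_{ab}) \le \max\{\deg(y^\top M y), \deg(\lambda\|y\|_2^2), \deg(y^\top BB^\top y)\}$.

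Here one should interpret the hypothesis on $k$ as covering the degrees of $y^\top M y$ and $y^\top BB^\top y$ individually. The term $\lambda\|y\|_2^2$ is dominated by these degrees in the intended use, where $\lambda$ is a constant. Under that reading, every term has degree at most $k$, and $\{M=\lambda I - BB^\top\}\vdash_k \{y^\top M y \le \lambda\|y\|_2^2\}$ follows.
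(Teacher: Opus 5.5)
Your certificate $\lambda\|y\|_2^2 - y^\top M y = \|B^\top y\|_2^2 - \sum_{a,b} y_a y_b\,(M-\lambda I+BB^\top)_{ab}$ is the same argument as the source the paper cites, Claim~2.7 of \cite{Schramm2022ProofsToAlgs}; the paper gives no proof of its own, and your proposal is correct. One small caveat: your bound $\deg(y_ay_bf_{ab})\le\max\{\deg(y^\top My),\deg(\lambda\|y\|_2^2),\deg(y^\top BB^\top y)\}$ can fail when terms cancel inside $\sum_{a,b}y_ay_bM_{ab}$ or inside $B^\top y$, so the degree hypothesis must be read entrywise, which holds automatically when $y$, $M$, $B$ are indeterminates (the setting of the cited claim).
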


\begin{fact}[Basic rules]
Sum-of-squares operations satisfy the following rules:

\begin{description}
     \item[Addition] If $\cA\sststile{\ell}{x}\{f\ge 0, g\ge0\}$, then $\cA\sststile{\ell}{x}\{f+g\ge0\}$.

\item[Multiplication]  If $\cA\sststile{\ell}{x}\{f\ge 0\}$ and $\cA\sststile{\ell'}{x}\{g\ge 0\}$, then $\cA\sststile{\ell+\ell'}{x}\{fg\ge0\}$.

\item[Transitivity] If $\cA\sststile{\ell}{x}\cB$ and $\cB\sststile{\ell'}{x}\cC$, then $\cA\sststile{\ell\ell'}{x}\cC$.

\item[Substitution] Let $F:\R^{n}\to \R^{m}, G:\R^{n}\to \R^{k}, H:\R^p\to\R^n$ be vector-valued polynomials. If $\{F\ge 0\}\sststile{\ell}{x}\{G\ge 0\}$, then $\{F(H)\ge 0\}\sststile{\ell\deg(H)}{x}\{G(H)\ge 0\}$.
\end{description}
\end{fact}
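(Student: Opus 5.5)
Each rule is a statement about certificates, so I would work directly from Definition~\ref{def:sos_proof} and, in each case, build the new certificate explicitly from the given ones. Throughout, I would read ``$\cA \sststile{\ell}{x} \{f\ge 0\}$'' as the existence of an identity
\[
f=\sum_t s_t^2+\sum_i a_i f_i+\sum_S b_S^2 \prod_{j\in S} g_j
\]
with every term of degree at most $\ell$. I allow products of constraints here, as in the satisfiability definition for pseudo-distributions. The literal Definition~\ref{def:sos_proof} only allows single $g_j$, and it would not be closed under multiplication. So the first step is to fix this (standard) product-closed convention.

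\emph{Addition.} I would simply add the two certificates for $f$ and $g$. The sum of the sums of squares is a sum of squares, the ideal parts add, and the $b^2 g$ terms add. No degree grows, so we get degree $\ell$.

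\emph{Multiplication.} I would multiply the two certificates for $f$ (degree $\ell$) and $g$ (degree $\ell'$) and expand, checking each type of cross term.
\begin{itemize}
\item $s_t^2 s_u'^2=(s_ts_u')^2$ is a square.
\item Any product containing a factor $a_i f_i$ is again of the form (polynomial)$\cdot f_i$, so it lies in the ideal part.
\item $s_t^2 \cdot b_S^2\prod_{S} g_j=(s_tb_S)^2\prod_S g_j$ has the right shape.
\item $b_S^2\prod_S g_j \cdot b_T'^2\prod_T g_k=(b_Sb_T')^2\prod_{S}g_j\prod_T g_k$. This is exactly where product-closure is needed, with repeated $g_j$ absorbed into squares if necessary.
\end{itemize}
Each product has degree at most $\ell+\ell'$.

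\emph{Transitivity.} Each constraint of $\cB$, say $h\ge 0$, has a degree-$\ell$ certificate from $\cA$. I would take the degree-$\ell'$ certificate of $\cC$ from $\cB$ and replace every occurrence of a $\cB$-constraint $h$ (or a product of such) by its $\cA$-certificate. Products are handled by the multiplication rule above. A product of at most $\ell'$ constraints, each certified at degree $\ell$, multiplied by squares, gives total degree at most $\ell\ell'$.

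\emph{Substitution.} I would compose every polynomial in the certificate with $H$. Squares $s^2$ become $(s\circ H)^2$, and the constraint terms $b^2 F_k$ become $(b\circ H)^2 F_k(H)$. The identity $G=\dots$ is a polynomial identity in the original variables, so it persists after composition. Degrees multiply by at most $\deg H$.

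The only real obstacle I expect is the bookkeeping mismatch between Definition~\ref{def:sos_proof} (single constraints) and the multiplication and transitivity rules, which need products of constraints. I would resolve this by stating the product-closed convention up front. The degree accounting (e.g.\ the $\ell\ell'$ bound) is then routine.
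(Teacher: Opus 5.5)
The paper states these rules as a Fact without proof (they are quoted from the standard SoS literature), so there is no argument in the paper to compare with. Your certificate-composition proof is the standard one and is correct.

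Your remark about Definition~\ref{def:sos_proof} is a real issue, not pedantry: under the literal definition, Multiplication fails. Take $\cA=\{x\ge 0,\ y\ge 0\}$. Then $\cA\vdash_1\{x\ge 0\}$ and $\cA\vdash_1\{y\ge 0\}$. But a degree-$2$ identity $xy=\sigma+\beta_1x+\beta_2y$ is impossible, where $\sigma$ is a sum of squares and the degree bound forces $\beta_1,\beta_2\ge 0$ to be constants. The quadratic part of $\sigma$ would have to be the indefinite form $xy$. So adopting the product-closed convention is exactly the right move.

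Two further conventions should be made explicit.

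First, the multiplier of each product $\prod_{j\in S}g_j$ must be a sum of squares; equivalently, the sum over $S$ runs over an indexed family with repetitions. With a single $b_S^2$ per $S$, even Addition fails: $(1+t^2)x$ is not of the form $s+b^2x$ with $s$ a sum of squares, since that would force $b(0,t)^2=1+t^2$. Your phrase ``the $b^2g$ terms add'' implicitly uses this convention.

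Second, your Transitivity step only treats inequality constraints $h\ge 0$ of $\cB$. That is fine if equalities are encoded as pairs of inequalities throughout, so every multiplier in the $\cB$-certificate is a sum of squares. It is not fine if $\cB$ contains an equality $h=0$ that enters through an ideal term $a\,h$ with $a$ not a square. Then cone certificates for $h\ge 0$ and $-h\ge 0$ cannot simply be substituted, and polarizing $a=\left(\frac{a+1}{2}\right)^2-\left(\frac{a-1}{2}\right)^2$ can break the $\ell\ell'$ bound. For example, take $\cA=\{x\ge 0,\,-x\ge 0\}$, $\cB=\{x=0\}$, $\cC=\{xy\ge 0\}$. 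Then $\cA\vdash_1\cB$ and $\cB\vdash_2\cC$ (via $xy=y\cdot x$), yet $\cA\not\vdash_2\cC$ by the same quadratic-part argument. So either fix the inequality-only convention, or read $\cA\vdash\{h=0\}$ as an ideal-membership identity.

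Likewise, your count of at most $\ell'$ factors per product uses $\deg h\ge 1$. Constant constraints should be absorbed into the multipliers first.
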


\begin{fact}
    Let $p$ be a univariate polynomial in $x\in \R$ and $p(x)\ge 0$ for all $x\in \R$. Then $\sststile{\deg(p)}{x}\{p(x)\ge 0\}$.
\end{fact}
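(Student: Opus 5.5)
This fact (a univariate polynomial that is nonnegative on $\R$ is a sum of squares) will be proved directly. I plan to show that $p$ can be written as a sum of two squares of polynomials of degree at most $\deg(p)/2$. By Definition~\ref{def:sos_proof}, that representation is exactly a degree-$\deg(p)$ SoS certificate of $p(x)\ge 0$. If $p\equiv 0$ there is nothing to prove, so assume $p\not\equiv 0$.

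\textbf{Degree and leading coefficient.} Since $p(x)\ge 0$ for all real $x$, the leading coefficient $c$ of $p$ must be positive. Likewise $\deg(p)=2m$ must be even, since otherwise $p(x)\to-\infty$ as $x$ goes to one of $\pm\infty$.

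\textbf{Root structure.} Factor $p$ over $\mathbb{C}$. Because $p$ has real coefficients, its non-real roots come in conjugate pairs $z_k,\bar z_k$ with equal multiplicity.

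Every real root $r$ has even multiplicity. Indeed, write $p(x)=(x-r)^{k}q(x)$ with $q(r)\neq 0$. If $k$ were odd, $p$ would change sign at $r$ (by continuity, $q$ keeps a constant sign near $r$), contradicting nonnegativity.

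Hence
\[
p(x)=c\prod_{j}(x-r_j)^{2k_j}\prod_{k}(x-z_k)(x-\bar z_k).
\]

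\textbf{Constructing the squares.} Define the complex polynomial
\[
g(x)=\sqrt{c}\,\prod_j (x-r_j)^{k_j}\prod_k (x-z_k),
\]
which has degree $m$. Its conjugate polynomial is $\bar g(x)=\sqrt{c}\prod_j (x-r_j)^{k_j}\prod_k (x-\bar z_k)$, so $p=g\bar g$. Write $g=A+iB$ with $A,B\in\R[x]$ and $\deg A,\deg B\le m$. Then
\[
p=(A+iB)(A-iB)=A^2+B^2 .
\]
This is a sum of squares of polynomials of degree at most $m=\deg(p)/2$, giving $\sststile{\deg(p)}{x}\{p(x)\ge 0\}$.

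\textbf{Main obstacle.} The only delicate point is the even-multiplicity claim for real roots. I handle it with the local sign-change argument above. The rest is bookkeeping.
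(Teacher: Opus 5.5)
Your proof is correct and complete. The paper gives no proof of this statement; it lists it in the preliminaries as a standard fact.

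Your argument is the classical one: real roots have even multiplicity by the sign-change argument, non-real roots pair up with their conjugates, and then $p = g\bar g = A^2 + B^2$. It supplies the missing justification self-containedly. It also gives the slightly sharper two-squares certificate with $\deg A, \deg B \le \deg(p)/2$, which is exactly the degree bound required by Definition~\ref{def:sos_proof}.
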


\begin{fact}\label{fac:xyz_square}
    Let $x, y, z \in \R$. Then, we have
    \begin{align*}
        \vdash_2\left\{ 3 \paren{x^2 + y^2 + z^2} \geq \paren{x + y + z}^2 \right\}
    \end{align*}
\end{fact}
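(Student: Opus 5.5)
The statement is Fact~\ref{fac:xyz_square}: the inequality $3(x^2+y^2+z^2) \ge (x+y+z)^2$ for real $x,y,z$, certified by a degree-$2$ sum-of-squares proof. By Definition~\ref{def:sos_proof}, it suffices to write the difference $3(x^2+y^2+z^2) - (x+y+z)^2$ explicitly as a sum of squares of polynomials of degree at most $1$. Such a representation is exactly a degree-$2$ certificate in the variables $x,y,z$, with no constraints needed.

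First expand the difference. Since $(x+y+z)^2 = x^2+y^2+z^2+2xy+2yz+2zx$, we get
\begin{align*}
3(x^2+y^2+z^2) - (x+y+z)^2 = 2x^2+2y^2+2z^2-2xy-2yz-2zx .
\end{align*}
Next, recognize this as the pairwise-difference identity:
\begin{align*}
2x^2+2y^2+2z^2-2xy-2yz-2zx = (x-y)^2+(y-z)^2+(z-x)^2 .
\end{align*}
This holds because each of $x^2,y^2,z^2$ appears in exactly two of the three squares, and each cross term $-2xy$, $-2yz$, $-2zx$ appears in exactly one. Setting $s_1=x-y$, $s_2=y-z$, $s_3=z-x$ gives three linear polynomials whose squares have degree $2$. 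By Definition~\ref{def:sos_proof}, this yields $\vdash_2\{3(x^2+y^2+z^2)\ge (x+y+z)^2\}$.

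There is no real obstacle; the only step requiring care is the coefficient check in the expansion. Later the fact is applied with $x,y,z$ replaced by polynomials, for example inner products $\langle \cdot, c\rangle$ of SoS variables. For those uses I would invoke the Substitution rule, under which the same identity $(x-y)^2+(y-z)^2+(z-x)^2$ is a certificate of degree $2\max\deg$.
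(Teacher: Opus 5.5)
Your proof is correct and is the paper's own argument: expand the difference to $2x^2+2y^2+2z^2-2xy-2yz-2zx$, then write it as the sum of squares $(x-y)^2+(y-z)^2+(z-x)^2$ of linear polynomials, which is a degree-$2$ certificate by Definition~\ref{def:sos_proof}. Your remark that the Substitution rule covers the later uses with polynomial arguments is a reasonable addition that the paper leaves implicit.
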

\begin{proof}
    We have
    \begin{align*}
        3 \paren{x^2 + y^2 + z^2} - \paren{x + y + z}^2
        = & ~ 3x^2 + 3y^2 + 3z^2 - x^2 - y^2 - z^2 - 2xy - 2xz - 2yz\\
        = & ~ 2x^2 + 2y^2 + 2z^2 - 2xy - 2xz - 2yz\\
        = & ~ \paren{x - y}^2 + \paren{x - z}^2 + \paren{y - z}^2,
    \end{align*}
    which completes the proof by Definition~\ref{def:sos_proof}.
\end{proof}

\subsection{Pseudo-expectation properties}
\label{sub:preli:expectation}

In this section, we present pseudo-expectation properties. It satisfies the linearity property.

\begin{fact}[Linearity of Pseudoexpectation]\label{fac:linearity}
For any polynomials $f, g$ and scalars $\alpha, \beta \in \mathbb{R}$, the pseudoexpectation operator $\wt{\mathbb{E}}$ is linear:
\begin{align*}
    \wt{\mathbb{E}}[\alpha f + \beta g] = \alpha \wt{\mathbb{E}}[f] + \beta \wt{\mathbb{E}}[g].
\end{align*}
\end{fact}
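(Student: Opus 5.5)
The statement to prove is the linearity of the pseudoexpectation operator (Fact~\ref{fac:linearity}). The plan is to unwind the definition of $\wt{\mathbb{E}}_D$ given in \Cref{sec:prelim_pseudo} and use linearity of finite sums. A level-$\ell$ pseudo-distribution over a finite set $\cX$ is specified by its mass function $D$. The pseudoexpectation of a polynomial $f$ is defined as
\[
\wt{\mathbb{E}}_D[f] = \sum_{x \in \cX} D(x) f(x).
\]
This is a finite weighted sum of point evaluations. Positivity of $D$ plays no role here; only the defining formula is used.

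First, evaluation at a point is linear: for fixed $x$, $(\alpha f + \beta g)(x) = \alpha f(x) + \beta g(x)$, by the definition of addition and scalar multiplication in $\mathbb{R}[x_1,\dots,x_d]$. Second, substituting this into the definition and distributing the real weights $D(x)$ gives
\[
\sum_{x} D(x)\bigl(\alpha f(x) + \beta g(x)\bigr) = \alpha \sum_x D(x) f(x) + \beta \sum_x D(x) g(x),
\]
which is exactly $\alpha \wt{\mathbb{E}}[f] + \beta \wt{\mathbb{E}}[g]$. The sum can be split because $\cX$ is finite.

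The only point that needs care is the degree. The operator is used on polynomials of degree at most $\ell$, and $\deg(\alpha f + \beta g) \le \max\{\deg f, \deg g\}$. So if $f$ and $g$ lie in the domain where $\wt{\mathbb{E}}$ is used, then so does $\alpha f + \beta g$. Alternatively, one can take the equivalent view of $\wt{\mathbb{E}}$ as a linear functional on the space of polynomials of degree at most $\ell$, defined on the monomial basis and extended linearly. Under that view the statement holds by construction.

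I expect no real obstacle. The only subtlety is to state clearly which definition of pseudoexpectation is being used, so that linearity is not circular.
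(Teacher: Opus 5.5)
Your proposal is correct and matches the paper: the paper states this as a Fact without proof because it follows immediately from the definition $\wt{\mathbb{E}}_D[f] = \sum_{x} D(x) f(x)$ by linearity of finite sums, which is exactly your argument. Your degree remark is harmless but not needed, since the paper's defining sum makes sense for every polynomial and the degree restriction enters only through the positivity condition.
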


In addition, we can use Cauchy Schwarz inequality in pseudoexpectation.

\begin{fact}[Cauchy Schwarz for Pseudoexpectation, Fact 3.10 in \cite{bk20}]\label{fact:cs_pseudo}
Let $f, g$ be polynomials of degree at most $d$.
Then, we have
\begin{align*}
    \wt \E \bracket{f g} \leq \sqrt{\wt \E \bracket{f^2}} \sqrt{\wt \E\bracket{g^2}}.
\end{align*}
\end{fact}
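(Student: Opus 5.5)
The plan is to derive this from the single defining property of a pseudo-distribution: nonnegativity on squares of low-degree polynomials. We will assume, as is implicit in the statement, that $\wt\E$ is a pseudo-expectation of level at least $2d$. Then for any real scalars $a,b$ the polynomial $af-bg$ has degree at most $d$, so $(af-bg)^2$ is a square the pseudo-expectation is required to keep nonnegative. Expanding with linearity (Fact~\ref{fac:linearity}) gives, for all $a,b\in\R$,
\begin{align*}
    0 \le \wt\E\bracket{(af-bg)^2} = a^2\,\wt\E\bracket{f^2} - 2ab\,\wt\E\bracket{fg} + b^2\,\wt\E\bracket{g^2}.
\end{align*}
The same nonnegativity also gives $\wt\E[f^2]\ge 0$ and $\wt\E[g^2]\ge 0$, so the square roots in the claim are well defined.

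In the main case $\wt\E[f^2]>0$ and $\wt\E[g^2]>0$, we choose $a=\sqrt{\wt\E[g^2]}$ and $b=\sqrt{\wt\E[f^2]}$. The first and last terms then both equal $\wt\E[f^2]\,\wt\E[g^2]$. The inequality becomes
\begin{align*}
    2\sqrt{\wt\E\bracket{f^2}}\sqrt{\wt\E\bracket{g^2}}\;\wt\E\bracket{fg} \le 2\,\wt\E\bracket{f^2}\,\wt\E\bracket{g^2}.
\end{align*}
Dividing by the positive quantity $2\sqrt{\wt\E[f^2]}\sqrt{\wt\E[g^2]}$ yields the claim.

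The only delicate step is the degenerate case where one of the two pseudo-moments vanishes, since then we cannot divide. Say $\wt\E[f^2]=0$; the case $\wt\E[g^2]=0$ is symmetric. We take $b=1$ and $a=\lambda>0$, which gives $2\lambda\,\wt\E[fg]\le \wt\E[g^2]$, that is, $\wt\E[fg]\le \wt\E[g^2]/(2\lambda)$. Letting $\lambda\to\infty$ shows $\wt\E[fg]\le 0$, which equals the right-hand side. This completes the argument.

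The approximate-satisfiability error $2^{-d\ell}$ from the SoS solver plays no role here, since the fact concerns exact pseudo-expectations.
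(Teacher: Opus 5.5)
Your argument is correct. The paper gives no proof of its own and simply imports this fact from \cite{bk20}; your argument is the standard one behind it, namely nonnegativity of $\wt\E[(af-bg)^2]$ on squares of polynomials of degree at most $d$. Your explicit assumption that $\wt\E$ has level at least $2d$ is exactly the hypothesis needed, and the paper's statement leaves it implicit. Your limiting argument with $\lambda\to\infty$ correctly handles the degenerate case $\wt\E[f^2]=0$ or $\wt\E[g^2]=0$.
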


\begin{fact}\label{fac:pseudoexpectation}
    We have the following facts:
    \begin{itemize}
        \item \textbf{Part 1.} If $X, \wt \E \bracket{X} \in \R$, then we can get
        \begin{align*}
            0 \leq \wt \E\bracket{\paren{X - \wt \E\bracket{X}}^2} = \wt \E\bracket{X^2} - \wt \E\bracket{X}^2.
        \end{align*}
        \item \textbf{Part 2.} If $X = (x_1, \dots, x_d)$ denotes the tuple of indeterminates, where each $x_i \in R[x_1, \dots, x_d]$ and $\wt \E \bracket{X} = \paren{\wt \E \bracket{x_1}, \dots, \wt \E \bracket{x_d}}$, then we can get
        \begin{align*}
            \wt \E \bracket{\norm{X - \wt \E \bracket{X}}_2^2} = \wt \E \bracket{\norm{X}_2^2}- \norm{\wt \E \bracket{X}}_2^2.
        \end{align*}
        \item \textbf{Part 3.} Let $x \in \R^d$ be a given vector. If $X = (x_1, \dots, x_d)$ denotes the tuple of indeterminates, where each $x_i \in \mathbb{R}[x_1, \dots, x_d]$ and $\wt \E \bracket{X} = \paren{\wt \E \bracket{x_1}, \dots, \wt \E \bracket{x_d}}$, then we can get
        \begin{align*}
            \wt \E\bracket{\norm{x - X}_2^2} \geq \norm{x - \wt \E[X]}_2^2
        \end{align*}
    \end{itemize}
\end{fact}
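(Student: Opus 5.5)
The statement to prove is Fact~\ref{fac:pseudoexpectation}. All three parts follow from linearity of pseudoexpectation (Fact~\ref{fac:linearity}), the normalization $\wt\E[1]=1$, and nonnegativity of $\wt\E$ on squares of low-degree polynomials. Throughout, $\wt\E[X]$ is treated as a fixed real scalar or vector, so it passes through $\wt\E$ as a constant.

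\textbf{Part 1.} The polynomial $(X-\wt\E[X])^2$ is a square of a degree-one polynomial in the indeterminates, with a constant shift. Nonnegativity of a pseudo-distribution on squares therefore gives $\wt\E[(X-\wt\E[X])^2]\ge 0$. For the identity, expand $(X-c)^2 = X^2 - 2cX + c^2$ with $c=\wt\E[X]$. Linearity and $\wt\E[1]=1$ then give $\wt\E[X^2]-2c\,\wt\E[X]+c^2 = \wt\E[X^2]-c^2$.

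\textbf{Part 2.} Write $\norm{X-\wt\E[X]}_2^2=\sum_{i\in[d]}(x_i-\wt\E[x_i])^2$. Apply linearity to reduce to the coordinatewise identity from Part 1, then sum over $i$. This yields $\sum_i \wt\E[x_i^2]-\sum_i\wt\E[x_i]^2=\wt\E[\norm{X}_2^2]-\norm{\wt\E[X]}_2^2$.

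\textbf{Part 3.} Expand $\norm{x-X}_2^2=\norm{x}_2^2-2\langle x,X\rangle+\norm{X}_2^2$, with $x$ a constant vector. By linearity,
\begin{align*}
\wt\E\bracket{\norm{x-X}_2^2}=\norm{x}_2^2-2\langle x,\wt\E[X]\rangle+\wt\E\bracket{\norm{X}_2^2}.
\end{align*}
By Part 2, $\wt\E[\norm{X}_2^2]=\norm{\wt\E[X]}_2^2+\wt\E[\norm{X-\wt\E[X]}_2^2]$. Substituting, the right-hand side equals
\begin{align*}
\norm{x-\wt\E[X]}_2^2+\wt\E\bracket{\norm{X-\wt\E[X]}_2^2}.
\end{align*}
The second term is nonnegative by Part 1 applied coordinatewise, which proves the inequality.

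There is no real obstacle here. The only care needed is to require the pseudo-distribution to have degree at least $2$, so that squares of affine polynomials are covered by the nonnegativity condition.
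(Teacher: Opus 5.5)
Your proof is correct and follows the paper's argument essentially step for step: expansion plus linearity for Part 1, Part 1 applied coordinatewise and summed over $i$ for Part 2, and for Part 3 the same decomposition $\wt\E[\norm{x-X}_2^2]=\norm{x-\wt\E[X]}_2^2+\wt\E[\norm{X-\wt\E[X]}_2^2]$, with the $0\le$ step justified by nonnegativity on squares (a point the paper leaves implicit). One refinement to your degree remark: the paper later applies Part 1 with $X$ itself of degree two (e.g.\ $X=\norm{\ov Z-\wt\mu}_2^2$), and then nonnegativity of $\wt\E[(X-\wt\E[X])^2]$ needs a pseudo-distribution of degree at least $4$, in general at least $2\deg X$, rather than just $2$.
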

\begin{proof}
\textbf{Proof of Part 1.}

First, we consider the case where $X, \wt \E \bracket{X} \in \R$.

We have
\begin{align*}
    \wt \E\bracket{\paren{X - \wt \E\bracket{X}}^2}
    = & ~ \wt \E\bracket{X^2 + \wt \E\bracket{X}^2 - 2X \wt \E\bracket{X}}\\
    = & ~ \wt \E\bracket{X^2} + \wt \E\bracket{X}^2 - 2\wt \E\bracket{X} \wt \E\bracket{X}\\
    = & ~ \wt \E\bracket{X^2} - \wt \E\bracket{X}^2,
\end{align*}
where the second step follows from the linearity of pseudo-expectation (see \Cref{fac:linearity}).

\textbf{Proof of Part 2.}

Second, we consider the case where $X, \wt \E \bracket{X} \in \R^d$.

Note that 
\begin{align}\label{eq:variance_ii}
    \paren{\paren{X-\wt \E \bracket{X}} \paren{X-\wt \E \bracket{X}}^\top}_{i, i} = \paren{X-\wt \E \bracket{X}}_i \cdot \paren{X-\wt \E \bracket{X}}_i = \paren{X_i - \wt \E \bracket{X_i}}^2
\end{align}

Then, we have
\begin{align*}
\wt \E \bracket{\norm{X - \wt \E \bracket{X}}_2^2}
= & ~ \wt \E \bracket{\left \langle X-\wt \E \bracket{X}, X-\wt \E \bracket{X} \right \rangle} \\
= & ~ \mathrm{tr} \bracket{\wt \E \bracket{\paren{X-\wt \E \bracket{X}} \paren{X-\wt \E \bracket{X}}^\top}} \\
= & ~ \sum_{i=1}^d \paren{\wt \E\bracket{X_i^2} - \wt \E\bracket{X_i}^2} \\
= & ~ \wt \E\bracket{\sum_{i=1}^d X_i^2} - \sum_{i=1}^d \wt \E\bracket{X_i}^2 \\
= & ~ \wt \E \bracket{\norm{X}_2^2}- \norm{\wt \E \bracket{X}}_2^2,
\end{align*}
where third step follows from \textbf{Part 1} and Eq.~\eqref{eq:variance_ii}, the fourth step follows from linearity (see \Cref{fac:linearity}), and the last step follows from the definition of the $\ell_2$ norm.

\textbf{Proof of Part 3.}

Furthermore, we have
\begin{align*}
    \wt \E\bracket{\norm{x - X}_2^2}
    = & ~ \wt \E\bracket{\norm{X}_2^2 + \norm{x}_2^2 - 2\langle x, X \rangle} \\
    = & ~ \wt \E\bracket{\norm{X}_2^2} + \norm{x}_2^2 - 2\left\langle x, \wt \E\bracket{X} \right\rangle \\
    = & ~ \paren{\wt \E\bracket{\norm{X}_2^2} - \norm{\wt \E\bracket{X}}_2^2} + \paren{\norm{\wt \E\bracket{X}}_2^2 + \norm{x}_2^2 - 2\left\langle x, \wt \E\bracket{X} \right\rangle} \\
    = & ~ \wt \E\bracket{\norm{X - \wt \E\bracket{X}}_2^2} + \norm{x - \wt \E[X]}_2^2 \\
    \geq & ~ \norm{x - \wt \E[X]}_2^2,
\end{align*}
where the second step follows from the linearity of the pseudo-expectation (see \Cref{fac:linearity}), the fourth step follows from \textbf{Part 2} of \Cref{fac:pseudoexpectation}, and the last step follows from the definition of pseudo-expectation that $\wt \E [X^2] \geq 0$.
\end{proof}

\subsection{Probabilistic and Algebraic Tools}
\label{sub:preli:probability}

In this section, we present basic probabilistic and algebraic tools.

\begin{fact}[Union Bound]\label{fac:union}
Let $A_1, A_2, \dots, A_m$ be events in a probability space. Then
\[
\Pr\left[\bigcup_{i=1}^m A_i\right] \leq \sum_{i=1}^m \Pr[A_i].
\]
\end{fact}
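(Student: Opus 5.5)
The statement is the union bound for finitely many events, so I will argue directly from the axioms of the probability measure $\Pr$ on $(\Omega,\mathcal{F})$: nonnegativity, normalization, and countable (in particular finite) additivity on pairwise disjoint events. The plan is to disjointify the events and then compare each disjoint piece with the original event.

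\textbf{Step 1 (disjointification).} Define $B_1 := A_1$ and, for $k = 2,\dots,m$,
\[
B_k := A_k \setminus \bigcup_{i=1}^{k-1} A_i .
\]
Each $B_k$ lies in $\mathcal{F}$, because a $\sigma$-algebra is closed under finite unions and complements. The $B_k$ are pairwise disjoint: if $j<k$, then $B_k$ excludes $A_j \supseteq B_j$. They also satisfy $\bigcup_{k=1}^m B_k = \bigcup_{k=1}^m A_k$. To see this, take $\omega$ in the union of the $A_i$ and let $k$ be the smallest index with $\omega \in A_k$; then $\omega \in B_k$. The reverse inclusion holds because $B_k \subseteq A_k$.

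\textbf{Step 2 (additivity and monotonicity).} Finite additivity gives $\Pr\!\left[\bigcup_{i=1}^m A_i\right] = \sum_{k=1}^m \Pr[B_k]$. For each $k$ we have $B_k \subseteq A_k$, so additivity and nonnegativity give
\[
\Pr[A_k] = \Pr[B_k] + \Pr[A_k \setminus B_k] \ge \Pr[B_k].
\]
Summing over $k$ yields the claim.

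There is no real obstacle here. The only point that needs care is the verification in Step 1 that the $B_k$ are disjoint and cover the union, and the minimal-index argument handles it. An alternative route is induction on $m$ using the two-event inclusion–exclusion identity $\Pr[A\cup B] = \Pr[A] + \Pr[B] - \Pr[A\cap B] \le \Pr[A] + \Pr[B]$, but the disjointification argument avoids induction.
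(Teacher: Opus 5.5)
Your disjointification argument is correct and complete: the $B_k$ are measurable, pairwise disjoint, and cover $\bigcup_i A_i$, and monotonicity $\Pr[B_k]\le\Pr[A_k]$ finishes the proof. The paper gives no proof of its own here; it lists the union bound as a standard probabilistic tool and uses it directly, so your argument is simply the textbook proof of that fact.
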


\begin{fact}[Markov's Inequality]\label{fac:markov}
Let $X$ be a non-negative random variable and let $a > 0$. Then
\[
\Pr[X \geq a] \leq \frac{\mathbb{E}[X]}{a}.
\]
\end{fact}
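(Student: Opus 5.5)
The statement is Markov's inequality, which is the last fact stated. I would prove it by comparing the nonnegative random variable $X$ with a scaled indicator.

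Fix $a>0$ and consider the event $A := \{X \geq a\}$. The first step is the pointwise inequality
\[
a \cdot \indic{A}(\omega) \leq X(\omega) \qquad \text{for all } \omega \in \Omega .
\]
It holds in both cases. If $\omega \in A$, the left side equals $a$, and $a \le X(\omega)$ by the definition of $A$. If $\omega \notin A$, the left side is $0$, and $0 \le X(\omega)$ because $X$ is nonnegative. The nonnegativity hypothesis is needed exactly in this second case.

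The second step is to take expectations. By monotonicity and linearity of expectation,
\[
a \Pr[A] = \E\bracket{a \indic{A}} \leq \E[X].
\]
Dividing by $a>0$ gives $\Pr[X \geq a] \leq \E[X]/a$. If $\E[X]=\infty$ the bound holds trivially, so no integrability assumption is needed.

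None of these steps is a real obstacle. The only point requiring care is to state the nonnegativity of $X$ explicitly at the place where it is used, namely the case $\omega\notin A$ of the pointwise inequality.
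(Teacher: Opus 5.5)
Your proof is correct and is the standard argument. It rests on the pointwise bound $a\cdot\mathbf{1}\{X\ge a\}\le X$, which is exactly where nonnegativity is used, followed by monotonicity of expectation. The paper gives no proof of this fact: it is quoted as a standard tool and used, for example, in Claim~\ref{cla:covariance-condition_part_1}. So your argument matches the textbook proof the paper implicitly relies on.
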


\begin{fact}[Multiplicative Chernoff Bound]\label{fac:chernoff}
Let $X_1, X_2, \dots, X_n$ be independent random variables taking values in $[0,1]$, and let $X = \sum_{i=1}^n X_i$ with $\mu = \mathbb{E}[X]$. Then for any $\delta > 0$, the following holds:
\begin{align*}
    \Pr[X \geq (1+\delta)\mu] &\leq \exp\left( -\frac{\delta^2 \mu}{3} \right) \quad \text{for } 0 < \delta \leq 1, \\
    \Pr[X \geq (1+\delta)\mu] &\leq \exp\left( -\frac{\delta \mu}{3} \right) \quad \text{for } \delta > 1.
\end{align*}
\end{fact}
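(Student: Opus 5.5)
The plan is the standard exponential-moment (Cramér--Chernoff) argument, using only Markov's inequality (\Cref{fac:markov}) and independence. Write $p_i := \E[X_i]$, so $\mu = \sum_{i=1}^n p_i$. For any $t > 0$, the map $x \mapsto e^{tx}$ is increasing. Applying Markov's inequality to the nonnegative random variable $e^{tX}$ gives
\begin{align*}
    \Pr[X \geq (1+\delta)\mu] = \Pr\bracket{e^{tX} \geq e^{t(1+\delta)\mu}} \leq e^{-t(1+\delta)\mu}\, \E\bracket{e^{tX}} = e^{-t(1+\delta)\mu} \prod_{i=1}^n \E\bracket{e^{tX_i}},
\end{align*}
where the last equality uses independence of the $X_i$.

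Next I would bound each factor. Since $X_i \in [0,1]$ and $x \mapsto e^{tx}$ is convex, we have $e^{tx} \leq 1 - x + x e^{t}$ on $[0,1]$. Taking expectations, $\E[e^{tX_i}] \leq 1 + p_i(e^t - 1) \leq \exp\paren{p_i(e^t-1)}$, using $1+y \leq e^y$. Multiplying over $i$ gives $\E[e^{tX}] \leq \exp\paren{\mu(e^t - 1)}$. Choosing $t = \ln(1+\delta) > 0$ then yields the intermediate bound
\begin{align*}
    \Pr[X \geq (1+\delta)\mu] \leq \paren{\frac{e^{\delta}}{(1+\delta)^{1+\delta}}}^{\mu} = \exp\paren{-\mu\, h(\delta)}, \qquad h(\delta) := (1+\delta)\ln(1+\delta) - \delta .
\end{align*}

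It remains to prove the two elementary inequalities $h(\delta) \geq \delta^2/3$ for $0 < \delta \leq 1$ and $h(\delta) \geq \delta/3$ for $\delta > 1$. I expect this calculus step to be the main (mild) obstacle.

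For the first inequality, set $g(\delta) := h(\delta) - \delta^2/3$. Then $g(0) = 0$, $g'(\delta) = \ln(1+\delta) - 2\delta/3$ with $g'(0) = 0$, and $g''(\delta) = \frac{1}{1+\delta} - \frac23$. So $g''$ is positive on $[0, 1/2)$ and negative on $(1/2, 1]$. Hence $g'$ first increases from $0$ and then decreases. Since $g'(1) = \ln 2 - 2/3 > 0$, we get $g' \geq 0$ on $[0,1]$, so $g \geq 0$ there.

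For the second inequality, set $k(\delta) := h(\delta) - \delta/3$. At $\delta = 1$, $k(1) = 2\ln 2 - 4/3 > 0$. Also $k'(\delta) = \ln(1+\delta) - 1/3 > 0$ for $\delta \geq 1$, so $k > 0$ for all $\delta > 1$.

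Substituting these lower bounds on $h(\delta)$ into the intermediate bound gives both tails claimed in \Cref{fac:chernoff}.
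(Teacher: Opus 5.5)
The paper gives no proof of this fact; it is quoted as a standard result. Your argument is the standard Cram\'er--Chernoff proof, and it is correct and complete. The pieces all check out:
\begin{itemize}
\item the convexity bound $\mathbb{E}[e^{tX_i}]\le \exp(p_i(e^t-1))$;
\item the choice $t=\ln(1+\delta)$;
\item the first calculus check, where $g''$ changes sign at $\delta=\tfrac12$ and $g'(1)=\ln 2-\tfrac23>0$;
\item the second calculus check, where $k(1)=2\ln 2-\tfrac43>0$ and $k'>0$ for $\delta\ge 1$.
\end{itemize}
The degenerate case $\mu=0$ also causes no trouble.

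One addition is worth making. Since $e^t-1>0$, your bound $\mathbb{E}[e^{tX}]\le \exp(\mu(e^t-1))$ stays valid with $\mu$ replaced by any $\mu_H\ge \mathbb{E}[X]$. The same proof therefore gives both tails at threshold $(1+\delta)\mu_H$. This is the form the paper actually uses in Claim~\ref{cla:covariance-condition_part_1}. There only $\mathbb{E}\bigl[\sum_i \mathbf{1}\{X_i\neq X_i'\}\bigr]\le \varepsilon N/4$ is known, and the paper applies the bound with $\mu_H=\varepsilon N/4$ and $\delta=3$.
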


\begin{fact}\label{fac:cov_X'}
    Let $X, X'$ be two distributions with means $\E\bracket{X} = \mu_X, \E\bracket{X'} = \mu_{X'} \in \R^d$, respectively. 
    Then, we have
    \begin{align*}
        \mathrm{Cov}\bracket{X'} \preceq \E\bracket{\paren{X' - \mu_X} \cdot \paren{X' - \mu_X}^\top}.
    \end{align*}
\end{fact}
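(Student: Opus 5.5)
The claim follows by comparing the two second-moment matrices through the standard bias--variance decomposition around an arbitrary point. I will expand $\E[(X'-\mu_X)(X'-\mu_X)^\top]$ by inserting and subtracting the true mean of $X'$: write $X'-\mu_X = (X'-\mu_{X'}) + (\mu_{X'}-\mu_X)$ and expand the outer product into four terms.

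Concretely, I aim to establish the identity
\begin{align*}
    \E\bracket{\paren{X' - \mu_X}\paren{X' - \mu_X}^\top}
    = \mathrm{Cov}\bracket{X'} + \paren{\mu_{X'} - \mu_X}\paren{\mu_{X'} - \mu_X}^\top .
\end{align*}
The expansion produces $\E[(X'-\mu_{X'})(X'-\mu_{X'})^\top]$, which is $\mathrm{Cov}[X']$ by definition. It also produces two cross terms $\E[X'-\mu_{X'}](\mu_{X'}-\mu_X)^\top$ and its transpose. Both vanish because $\mu_{X'}-\mu_X$ is a deterministic vector and $\E[X'-\mu_{X'}]=0$ by linearity of expectation. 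Finally, the deterministic rank-one term $(\mu_{X'}-\mu_X)(\mu_{X'}-\mu_X)^\top$ survives.

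It remains to observe that this rank-one matrix is positive semidefinite. For every $v\in\R^d$ we have $v^\top (\mu_{X'}-\mu_X)(\mu_{X'}-\mu_X)^\top v = \langle v, \mu_{X'}-\mu_X\rangle^2 \ge 0$. Hence the difference between the right- and left-hand sides of the claimed inequality is PSD, which is exactly the Loewner ordering in the statement.

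There is no real obstacle here. The only point requiring care is that the expectations exist, which holds since the second moments of $X'$ are assumed finite whenever $\mathrm{Cov}[X']$ is defined. Then linearity justifies distributing $\E$ over the four terms.
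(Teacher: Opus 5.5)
Your proof is correct and is essentially the paper's argument. The paper expands $\mathrm{Cov}[X']$ around $\mu_X$ rather than expanding $\mathbb{E}[(X'-\mu_X)(X'-\mu_X)^\top]$ around $\mu_{X'}$, but both routes reach the same identity $\mathbb{E}[(X'-\mu_X)(X'-\mu_X)^\top] = \mathrm{Cov}[X'] + (\mu_{X'}-\mu_X)(\mu_{X'}-\mu_X)^\top$ and conclude by discarding the PSD rank-one term; your direction is slightly cleaner, since the cross terms vanish outright instead of having to be combined with the rank-one term.
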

\begin{proof}
    Since for any arbitrary vector $x \in \R^d \setminus \cbracket{0}^d$, we can get
    \begin{align}\label{eq:psd_mu_X_mu_X'}
        x^\top \paren{\mu_X - \mu_{X'}} \cdot \paren{\mu_X - \mu_{X'}}^\top x
        = & ~ \left \langle x, \mu_X - \mu_{X'} \right \rangle^2 \notag\\
        \geq & ~ 0.
    \end{align}

    Therefore, we can see
    \begin{align*}
        \mathrm{Cov}\bracket{X'}
        = & ~ \E\bracket{\paren{X' - \mu_{X'}} \cdot \paren{X' - \mu_{X'}}^\top}\\
        = & ~ \E\bracket{\paren{X' - \mu_X + \mu_X - \mu_{X'}} \cdot \paren{X' - \mu_X + \mu_X - \mu_{X'}}^\top}\\
        = & ~ \E\bracket{\paren{X' - \mu_X} \cdot \paren{X' - \mu_X}^\top} + \E\bracket{\paren{X' - \mu_X} \cdot \paren{\mu_X - \mu_{X'}}^\top} \\
        & ~ + \E\bracket{\paren{\mu_X - \mu_{X'}} \cdot \paren{X' - \mu_X}^\top} + \paren{\mu_X - \mu_{X'}} \cdot \paren{\mu_X - \mu_{X'}}^\top\\
        = & ~ \E\bracket{\paren{X' - \mu_X} \cdot \paren{X' - \mu_X}^\top} + \paren{\E\bracket{X'} - \mu_X} \cdot \paren{\mu_X - \mu_{X'}}^\top \\
        & ~ + \paren{\mu_X - \mu_{X'}} \cdot \paren{\E\bracket{X'} - \mu_X}^\top + \paren{\mu_X - \mu_{X'}} \cdot \paren{\mu_X - \mu_{X'}}^\top\\
        = & ~ \E\bracket{\paren{X' - \mu_X} \cdot \paren{X' - \mu_X}^\top} -2 \paren{\mu_{X} - \mu_X'} \cdot \paren{\mu_X - \mu_{X'}}^\top + \paren{\mu_X - \mu_{X'}} \cdot \paren{\mu_X - \mu_{X'}}^\top\\
        = & ~ \E\bracket{\paren{X' - \mu_X} \cdot \paren{X' - \mu_X}^\top} - \paren{\mu_X - \mu_{X'}} \cdot \paren{\mu_X - \mu_{X'}}^\top\\
        \preceq & ~ \E\bracket{\paren{X' - \mu_X} \cdot \paren{X' - \mu_X}^\top},
    \end{align*}
    where the first step follows from the definition of $\mathrm{Cov}\bracket{\cdot}$, the third and the fourth steps follow from the linearity property of the expectation, the fifth step follows from $\E\bracket{X'} = \mu_{X'}$, and the last step follows from the fact that $\paren{\mu_X - \mu_{X'}} \cdot \paren{\mu_X - \mu_{X'}}^\top$ is PSD (see Eq.~\eqref{eq:psd_mu_X_mu_X'}).
\end{proof}

\begin{fact}\label{fac:spectral_ell_2}
    Let $u \in \R^d$. 
    Then, we can get
    \begin{align*}
        \norm{u u^\top}_2 = \norm{u}_2^2.
    \end{align*}
\end{fact}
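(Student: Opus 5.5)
The claim is that for $u \in \R^d$, the spectral norm of the rank-one matrix $uu^\top$ equals $\norm{u}_2^2$. We use the definition $\norm{A}_2 = \sup_{\norm{x}_2=1}\norm{Ax}_2$ from the notation paragraph and prove matching upper and lower bounds. The case $u=0$ is trivial, since both sides vanish, so assume $u\neq 0$.

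For the upper bound, take any unit vector $x$. Then $(uu^\top)x = \langle u, x\rangle u$, so
\begin{align*}
\norm{uu^\top x}_2 = |\langle u,x\rangle|\,\norm{u}_2 \le \norm{u}_2^2
\end{align*}
by the (ordinary) Cauchy--Schwarz inequality. Taking the supremum over unit $x$ gives $\norm{uu^\top}_2 \le \norm{u}_2^2$.

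For the lower bound, choose the unit vector $x = u/\norm{u}_2$. Then
\begin{align*}
uu^\top x = \frac{\norm{u}_2^2}{\norm{u}_2}\, u = \norm{u}_2\, u,
\end{align*}
whose norm is $\norm{u}_2^2$. Hence $\norm{uu^\top}_2 \ge \norm{u}_2^2$, and together with the upper bound this gives equality.

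Alternatively, one can note that $uu^\top$ is symmetric PSD with eigenvalue $\norm{u}_2^2$ on $u$ and eigenvalue $0$ on $u^\perp$, so its spectral norm is $\lambda_{\max}(uu^\top)=\norm{u}_2^2$. There is no real obstacle; the only care needed is the separate handling of $u=0$.
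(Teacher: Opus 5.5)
Your proof is correct and follows the same route as the paper: you write $\norm{uu^\top}_2$ as a supremum over unit vectors, note that $uu^\top x = \langle u, x\rangle u$, and bound the result with Cauchy--Schwarz. Your version is also slightly more complete. The paper only derives the upper bound $\norm{uu^\top v}_2 \le \norm{u}_2^2\norm{v}_2$ and then asserts equality, whereas you also exhibit the maximizer $x = u/\norm{u}_2$ that attains the bound and treat $u=0$ separately.
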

\begin{proof}
    Note that by the definition of the spectral norm, we have
    \begin{align}\label{eq:spectral}
        \norm{u u^\top}_2 = \sup_{v \in \R^d, \norm{v}_2 = 1} \norm{u u^\top v}_2.
    \end{align}

    By the definition of the $\ell_2$ norm, we have
    \begin{align}\label{eq:u_utop_v}
        \norm{u u^\top v}_2
        = & ~ \sqrt{\sum_{i = 1}^d \paren{u u^\top v}_i^2} \notag\\
        = & ~ \sqrt{\sum_{i = 1}^d \paren{u \paren{u^\top v}}_i^2} \notag\\
        = & ~ \sqrt{\paren{u^\top v}^2 \sum_{i = 1}^d u_i^2} \notag\\
        = & ~ |u^\top v| \cdot \norm{u}_2 \notag\\
        \leq & ~ \norm{u}_2^2 \cdot \norm{v}_2,
    \end{align}
    where the second step follows from the associative law, the third step follows from $u^\top v \in \R$, and the last step follows from the Cauchy-Schwarz inequality.

    Combining Eq.~\eqref{eq:spectral} and Eq.~\eqref{eq:u_utop_v}, we have
    \begin{align*}
        \norm{u u^\top}_2 = \norm{u}_2^2.
    \end{align*}
\end{proof}

\begin{fact}[Vector Bernstein inequality, Theorem 12 in \cite{g11}]\label{fac:bernstein}
Let $X_1, \ldots, X_m$ be independent zero-mean vector-valued random variables. 
Let
\[
    N = \norm{ \sum_{i=1}^m X_i }_2 .
\]
Then
\[
    \Pr\left[ N \geq \sqrt{V} + t \right] 
    \leq \exp\left( - \frac{t^2}{4V} \right),
\]
where $V = \sum_i \E[\norm{X_i}_2^2]$ and $t \leq V / \paren{ \max \norm{X_i}_2 }$.
\end{fact}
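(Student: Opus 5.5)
The plan is to split the deviation $N - \sqrt{V}$ into two parts: a bound on the mean, $\E[N] \le \sqrt{V}$, and concentration of $N$ around $\E[N]$. For the concentration I would use a Bernstein-type martingale inequality whose variance proxy is $V$ and whose range parameter is $M := \max_i \norm{X_i}_2$. Throughout, $M$ is read as an almost-sure bound on each $\norm{X_i}_2$. With that reading, the restriction $t \le V/M$ is exactly what lets the Bernstein denominator be absorbed into $4V$.

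\textbf{Step 1 (mean bound).} By Jensen, $\E[N] \le \sqrt{\E[N^2]}$. Expanding,
\begin{align*}
\E[N^2] = \sum_{i} \E\bracket{\norm{X_i}_2^2} + \sum_{i \neq k} \E\bracket{\langle X_i, X_k\rangle}.
\end{align*}
Each cross term vanishes by independence and zero mean, so $\E[N^2] = V$ and hence $\E[N] \le \sqrt{V}$. It therefore suffices to show $\Pr[N - \E[N] \ge t] \le \exp(-t^2/(4V))$.

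\textbf{Step 2 (martingale decomposition).} Let $\mathcal{F}_k = \sigma(X_1,\dots,X_k)$ and $D_k = \E[N\mid\mathcal{F}_k] - \E[N\mid\mathcal{F}_{k-1}]$, so that $N - \E N = \sum_k D_k$. Write $S = \sum_i X_i$ and $S^{(k)} = S - X_k$. Since $X_k$ has mean zero and is independent of $S^{(k)}$, we have $\E[N\mid\mathcal{F}_{k-1}] = \E[\norm{S^{(k)}}_2 \mid \mathcal{F}_{k-1}]$ (via $\E_{X_k}\norm{S^{(k)} + X_k}_2 \ge \norm{S^{(k)}}_2$ one direction; I would instead directly use the representation below). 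This gives
\begin{align*}
D_k = \E\bracket{\norm{S}_2 - \norm{S^{(k)}}_2 \,\middle|\, \mathcal{F}_k} - \E\bracket{\norm{S}_2 - \norm{S^{(k)}}_2 \,\middle|\, \mathcal{F}_{k-1}}.
\end{align*}
By the triangle inequality, $\abs{\norm{S}_2 - \norm{S^{(k)}}_2} \le \norm{X_k}_2$. Consequently $\abs{D_k} \le \norm{X_k}_2 + \E\norm{X_k}_2 \le 2M$, and $\E[D_k^2 \mid \mathcal{F}_{k-1}] \le \E\norm{X_k}_2^2$ (conditional variance is at most the conditional second moment of the bracketed quantity). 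The conditional variances therefore sum to at most $V$.

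\textbf{Step 3 (Bernstein for martingales).} Freedman's inequality now gives
\begin{align*}
\Pr\bracket{\textstyle\sum_k D_k \ge t} \le \exp\paren{-\frac{t^2}{2V + \frac{2}{3}\cdot 2M t}}.
\end{align*}
When $t \le V/M$, we have $Mt \le V$, so the denominator is at most $2V + \tfrac{4}{3}V \le 4V$. Combining this with Step 1 yields $\Pr[N \ge \sqrt{V} + t] \le \exp(-t^2/(4V))$.

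The main obstacle is Step 2. The increments must be controlled with variance $\E\norm{X_k}_2^2$ rather than the cruder $4M^2$, which is what plain bounded differences (McDiarmid) would give and would lose the Bernstein form. The fix is to compare against the leave-one-out sum $S^{(k)}$, because $\norm{S^{(k)}}_2$ does not depend on $X_k$. If the constant falls slightly short of $4$, I would tighten the range bound to $\abs{D_k} \le M + \E\norm{X_k}_2$ or use the one-sided version of Freedman's inequality.
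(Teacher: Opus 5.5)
Your proof is correct. The paper gives no proof of this fact; it imports it from \cite{g11}. The cited argument has the same two-part structure as yours. First, $\E[N]\le\sqrt{\E[N^2]}=\sqrt V$, by Jensen and the orthogonality of independent mean-zero summands. Second, a Bernstein/Bennett-type deviation bound for $N-\E[N]$, which \cite{g11} takes from Ledoux--Talagrand's inequalities for norms of sums of independent Banach-space-valued vectors (as best I recall that source).

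What you do differently is prove the second ingredient from scratch, via the Doob martingale of $N$ and a leave-one-out comparison (Yurinskii's method). This gives $|D_k|\le \norm{X_k}_2+\E\norm{X_k}_2\le 2M$ and $\E[D_k^2\mid\mathcal F_{k-1}]\le \E\norm{X_k}_2^2$. Freedman's inequality then gives $\exp\bigl(-t^2/(2V+\tfrac43 Mt)\bigr)\le \exp\bigl(-3t^2/(10V)\bigr)$ once $Mt\le V$. That already beats the claimed constant $1/4$, so your fallback is unnecessary.

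Your route buys self-containment and makes the hypotheses explicit. In particular, it shows that $\max_i\norm{X_i}_2$ must be read as an almost-sure bound. That is how the paper uses the fact in Claim~\ref{cla:covariance-condition_part_2_6}, where truncation supplies a deterministic bound on $\max_i\norm{Y_i}_2$.

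One sentence should be deleted. The parenthetical claim $\E[N\mid\mathcal F_{k-1}]=\E[\norm{S^{(k)}}_2\mid\mathcal F_{k-1}]$ is false in general: conditional Jensen with $\E[X_k]=0$ gives only $\ge$. Nothing depends on it. Your representation of $D_k$ uses only $\E[\norm{S^{(k)}}_2\mid\mathcal F_k]=\E[\norm{S^{(k)}}_2\mid\mathcal F_{k-1}]$. That holds because $S^{(k)}$ does not involve $X_k$, and $X_k$ is independent of the remaining summands.
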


\begin{fact}[Hoeffding's Inequality]\label{fac:Hoeffding}
Let $X_1, X_2, \dots, X_n$ be independent random variables such that $a_i \leq X_i \leq b_i$ almost surely. 
Define the empirical mean $\hat{X} = \frac{1}{N}\sum_{i=1}^N X_i$ and the true mean $\mu = \E[\hat{X}]$. 
Then for any $\varepsilon > 0$,
\[
    \Pr\left[|\hat{X} - \mu| \geq \varepsilon\right] 
    \leq 2 \exp\left(- \frac{2n^2 \varepsilon^2}{\sum_{i=1}^N (b_i - a_i)^2}\right).
\]
\end{fact}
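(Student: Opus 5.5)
We prove the Hoeffding statement by the standard exponential-moment (Chernoff) method. The statement mixes $n$ and $N$; we read it consistently as having $N$ independent variables $X_1,\dots,X_N$ with $a_i\le X_i\le b_i$. The target is then $\Pr[|\hat X-\mu|\ge\varepsilon]\le 2\exp\bigl(-2N^2\varepsilon^2/\sum_{i}(b_i-a_i)^2\bigr)$. Set $S=\sum_{i=1}^N (X_i-\E[X_i])$, so that $\{\hat X-\mu\ge \varepsilon\}=\{S\ge N\varepsilon\}$. We bound the upper tail first; the lower tail follows by applying the same argument to $-X_i\in[-b_i,-a_i]$, and the union bound (\Cref{fac:union}) then gives the factor $2$.

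For the upper tail, fix $\lambda>0$. Apply Markov's inequality (\Cref{fac:markov}) to the nonnegative variable $e^{\lambda S}$:
\[
\Pr[S\ge N\varepsilon]\le e^{-\lambda N\varepsilon}\,\E\bigl[e^{\lambda S}\bigr]=e^{-\lambda N\varepsilon}\prod_{i=1}^N \E\bigl[e^{\lambda (X_i-\E X_i)}\bigr].
\]
The factorization uses independence of the $X_i$.

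The key ingredient is \emph{Hoeffding's lemma}. It states that if $Y\in[a,b]$ almost surely and $\E[Y]=0$, then $\E[e^{\lambda Y}]\le e^{\lambda^2(b-a)^2/8}$. I expect this lemma to be the only nontrivial step, and I would prove it in two steps.
\begin{itemize}
\item By convexity of $y\mapsto e^{\lambda y}$ on $[a,b]$ we have $e^{\lambda y}\le \frac{b-y}{b-a}e^{\lambda a}+\frac{y-a}{b-a}e^{\lambda b}$. Taking expectations and using $\E Y=0$ gives $\E[e^{\lambda Y}]\le e^{\varphi(u)}$, where $u=\lambda(b-a)$, $p=-a/(b-a)\in[0,1]$, and $\varphi(u)=-pu+\log(1-p+pe^{u})$.
\item One checks directly that $\varphi(0)=0$ and $\varphi'(0)=0$. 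Moreover $\varphi''(u)=q(1-q)\le 1/4$ with $q=pe^u/(1-p+pe^u)$. Taylor's theorem with remainder then yields $\varphi(u)\le u^2/8$.
\end{itemize}

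Applying the lemma to each $Y_i=X_i-\E X_i$, which lies in an interval of length $b_i-a_i$, we obtain
\[
\Pr[S\ge N\varepsilon]\le \exp\Bigl(-\lambda N\varepsilon+\tfrac{\lambda^2}{8}\sum_i(b_i-a_i)^2\Bigr).
\]
We minimize the exponent over $\lambda$ by choosing $\lambda=4N\varepsilon/\sum_i(b_i-a_i)^2$. This gives the bound $\exp\bigl(-2N^2\varepsilon^2/\sum_i(b_i-a_i)^2\bigr)$. Combining with the symmetric lower-tail bound completes the proof.
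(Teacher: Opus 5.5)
Your proof is correct. It is the standard Chernoff-method argument via Hoeffding's lemma, which is the classical proof behind this Fact; the paper only cites it and gives no proof. Your reading of the mixed $n$/$N$ indices as a single count $N$ is the right one, since it matches how the paper later applies the bound to $k$ Bernoulli trials to get $\exp(-2k(p-1/2)^2)$. The only unstated detail is the degenerate case $a_i=b_i$: there $p$ is undefined, but $Y_i\equiv 0$, so the lemma holds trivially.
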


\begin{fact}\label{fac:low_covariance}
    Let $X_1,\dots,X_n \in \mathbb{R}^d$ be i.i.d. random variables satisfying $\mathbb{E}[X_i]=\mu \in \mathbb{R}^d$ and $\mathrm{Cov}\bracket{X_i}=\Sigma \in \mathbb{R}^{d\times d}$, for all $i \in [n]$. Let $\ov X := \frac{1}{n}\sum_{i=1}^n X_i$.

    Then, we can get
    \begin{align*}
        \mathrm{Cov}\bracket{\ov X} = \frac{1}{n} \Sigma.
    \end{align*}
\end{fact}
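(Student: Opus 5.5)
The plan is to expand the definition of the covariance and use bilinearity of expectation together with independence. Set $\ov X - \mu = \frac{1}{n}\sum_{i=1}^n (X_i - \mu)$; this holds because $\E[\ov X] = \frac{1}{n}\sum_{i=1}^n \E[X_i] = \mu$ by linearity of expectation. Then
\begin{align*}
\mathrm{Cov}\bracket{\ov X} = \E\bracket{\paren{\ov X - \mu}\paren{\ov X - \mu}^\top} = \frac{1}{n^2}\sum_{i=1}^n\sum_{k=1}^n \E\bracket{\paren{X_i-\mu}\paren{X_k-\mu}^\top}.
\end{align*}

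Next we split the double sum into diagonal terms ($i=k$) and off-diagonal terms ($i\neq k$). Each diagonal term equals $\mathrm{Cov}\bracket{X_i} = \Sigma$, so the diagonal terms together contribute $n\Sigma$.

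For an off-diagonal term we use the independence of $X_i$ and $X_k$. Independence lets the expectation of each entry of the outer product factor as a product of expectations:
\begin{align*}
\E\bracket{\paren{X_i-\mu}\paren{X_k-\mu}^\top} = \E\bracket{X_i-\mu}\,\E\bracket{X_k-\mu}^\top = \mathbf{0}_{d\times d}.
\end{align*}
This factorization is the only step needing care, and we justify it entrywise, since each entry is a product of a function of $X_i$ and a function of $X_k$. The statement only assumes $\Sigma$ exists, so finite second moments also make these expectations well defined.

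Combining the two cases gives $\mathrm{Cov}\bracket{\ov X} = \frac{1}{n^2}\cdot n\Sigma = \frac{1}{n}\Sigma$. The argument uses only pairwise independence, which the i.i.d. assumption supplies. There is no real obstacle in this proof.
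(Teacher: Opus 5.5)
Your proof is correct and follows essentially the same route as the paper. The paper pulls out the factor $1/n^2$ and invokes $\mathrm{Cov}[\sum_{i=1}^n X_i] = \sum_{i=1}^n \mathrm{Cov}[X_i]$ without comment; your diagonal/off-diagonal split, with the cross terms killed by independence, is exactly the justification that step needs.
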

\begin{proof}
    We can get
    \begin{align*}
        \mathrm{Cov}\bracket{\ov X}
        = & ~ \mathrm{Cov}\bracket{\frac{1}{n}\sum_{i=1}^n X_i} \\
        = & ~ \frac{1}{n^2} \mathrm{Cov}\bracket{\sum_{i=1}^n X_i} \\
        = & ~ \frac{1}{n^2} \sum_{i=1}^n \mathrm{Cov}\bracket{X_i} \\
        = & ~ \frac{1}{n^2} \sum_{i=1}^n \Sigma \\
        = & ~ \frac{n}{n^2} \Sigma \\
        = & ~ \frac{1}{n} \Sigma.
    \end{align*}
\end{proof}

\section{Robust Mean Estimation Via SoS}
\label{sec:rme_SoS}

In \Cref{sub:rme_SoS:empirical}, we extend the traditional SoS approach on arbitrary bound on the empirical covariance. In \Cref{sub:rme_SoS:true}, we improve the result of \cite[Proposition 3.9]{diakonikolas2023algorithmic} showing that using truncation is sufficient to obtain a stable subset with high probability.

\subsection{Accurate Estimation of the Empirical Mean}
\label{sub:rme_SoS:empirical}

Any large subset of points with a sufficiently low empirical covariance will have an empirical mean close to the true mean $\mu$. We know that the set of all clean samples satisfies this property with high probability. The crucial implication, however, is that a valid subset $S$ does not need to be perfectly clean. It can include corrupted points, provided they are not {\em harmful} in a way that significantly pulls the mean and consequently increases the set's empirical covariance. This gives us a \emph{verifiable criterion}: instead of needing to know which samples are clean, we only need to find a large subset that satisfies this geometric condition.

To identify such a large subset, we employ the SoS framework from \cite{Schramm2022ProofsToAlgs,hopkins2018mixture}. This framework reformulates algorithmic and statistical problems into an algebraic representation described by a collection of polynomial constraints.
In the context of robust mean estimation, these polynomial constraints can be formulated as:
\begin{align}
    W_i^2 &= W_i \label{eq:constraint_W2_W}\\
    \sum_{i=1}^n W_i &= (1 - \varepsilon)n \label{eq:constraint_sum_W}\\
    W_i(Z_i - v_i) &= 0 \label{eq:constraint_wzv}\\
    \overline Z &= \frac{1}{n} \sum_{i=1}^n Z_i \label{eq:constraint_ov_Z}\\
    \frac{1}{n} \sum_{i=1}^n (Z_i - \overline Z)(Z_i - \overline Z)^\top &= 2 \xi \mathbb{I}_d - BB^\top \label{eq:constraint_cov},
\end{align}
where \(Z_1, \ldots, Z_n \in \mathbb{R}^d\) are variables representing approximations of the clean samples \(z_1, \ldots, z_n\); \(W_1, \ldots, W_n\) are real-valued variables such that, for each \(i \in [n]\), \(W_i\) serves as an indicator of whether \(z_i = v_i\); \(B \in \mathbb{R}^{d \times d}\) is a matrix of auxiliary (``slack'') variables; $\xi > 0$ is an arbitrary positive real number.

\begin{lemma}\label{lem:sos_mean}
    Let $v_1, \dots , v_n \in \R^d$ be $\eps^*$-corrupted sample drawn from a distribution $D$ of mean $\mu \in \R^d$ and covariance $\Sigma \preceq \chi \mathbb{I}_d$ for some arbitrary $\chi > 0$, where $\eps^* \in (0, 1)$ is unknown. Let $\eps \in (0, 1)$ be the input of the SoS algorithm and $\delta \in (0, 1)$ be the failure probability. If the degree-$k$ pseudoexpectation $\wt{\mathbb{E}}$ satisfies the system of polynomial constraints $\mathsf{C}$ (Eq.~\eqref{eq:constraint_W2_W}, Eq.~\eqref{eq:constraint_sum_W}, Eq.~\eqref{eq:constraint_wzv}, Eq.~\eqref{eq:constraint_ov_Z}, and Eq.~\eqref{eq:constraint_cov}), then there exists a polynomial time algorithm that outputs $\wt{\mathbb{E}}[\overline{Z}] \in \R^d$ satisfying
    \[
    \norm{\wt{\mathbb{E}}[\overline{Z}]- \ov z}_2=O(\sqrt{\paren{\eps+\eps^*} \upsilon}),
    \]
    with probability at least $1- \delta$ and $\upsilon := \max \{\chi, \xi\}$.
\end{lemma}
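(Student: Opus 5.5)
Throughout, $z_1,\dots,z_n$ are the clean (uncorrupted) points, $\ov z$ is their empirical mean, and the $v_i$ are the observed, $\eps^*$-corrupted points. I read the assumption as: the clean points have empirical covariance at most $\chi\eye_\dims$ about $\ov z$, i.e.
\[
\frac1n\sum_i (z_i-\ov z)(z_i-\ov z)^\top\preceq \chi\eye_\dims .
\]
This is the stable-set condition the truncation results supply. Set $\mathcal I_i=\indic{v_i=z_i}$, so that $\sum_i(1-\mathcal I_i)\le \eps^* n$.

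\textbf{Identifiability.} I will give a constant-degree SoS proof, modulo $\mathsf C$, of
\[
\norm{\ov Z-\ov z}_2^4\le O\big((\eps+\eps^*)\upsilon\big)\norm{\ov Z-\ov z}_2^2 ,
\]
which gives $\norm{\ov Z-\ov z}_2^2 \le O((\eps+\eps^*)\upsilon)$ in SoS.

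Let $c=\ov Z-\ov z$. Since $\frac1n\sum_i(Z_i-z_i)=c$, we have $\|c\|_2^2=\frac1n\sum_i\langle Z_i-z_i,c\rangle$. Each term equals $(1-W_i\mathcal I_i)\langle Z_i-z_i,c\rangle$, because when $W_i\mathcal I_i=1$ the constraint $W_i(Z_i-v_i)=0$ forces $Z_i=v_i=z_i$. Note $(1-W_i\mathcal I_i)^2=1-W_i\mathcal I_i$ by $W_i^2=W_i$. Squaring and applying SoS Cauchy--Schwarz (Fact~\ref{fac:cauchy_schwarz}) gives
\[
\|c\|_2^4\le \Big(\frac1n\sum_i(1-W_i\mathcal I_i)\Big)\Big(\frac1n\sum_i\langle Z_i-z_i,c\rangle^2\Big).
\]
The first factor is at most $\eps+\eps^*$, using $\sum_i W_i=(1-\eps)n$ and $1-W_i\mathcal I_i\le(1-W_i)+(1-\mathcal I_i)$.

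For the second factor, split $Z_i-z_i=(Z_i-\ov Z)+c+(\ov z-z_i)$ and use Fact~\ref{fac:xyz_square}:
\[
\frac1n\sum_i\langle Z_i-z_i,c\rangle^2\le 3\Big(\frac1n\sum_i\langle Z_i-\ov Z,c\rangle^2+\|c\|_2^4+\frac1n\sum_i\langle z_i-\ov z,c\rangle^2\Big).
\]
The first term is at most $2\xi\|c\|_2^2$ by constraint~\eqref{eq:constraint_cov} and Fact~\ref{fac:sos_operator_norm}, since $c^\top BB^\top c$ is a sum of squares. The third term is at most $\chi\|c\|_2^2$ by the stability of the clean points; this is a genuine PSD quadratic form in $c$ with fixed coefficients, so it holds in SoS.

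The middle $\|c\|_2^4$ term threatens circularity. The plan is to avoid it: apply Cauchy--Schwarz only to the first and third pieces, and handle the $c$-piece directly, since $\frac1n\sum_i(1-W_i\mathcal I_i)\langle c,c\rangle\le(\eps+\eps^*)\|c\|_2^2$. This yields
\[
\|c\|_2^2\le(\eps+\eps^*)\|c\|_2^2+\sqrt{\eps+\eps^*}\cdot O\big(\sqrt{\upsilon}\big)\|c\|_2 .
\]
Because $\eps+\eps^*<1/2$, this rearranges (in the squared form above) to $\|c\|_2^4\le O((\eps+\eps^*)\upsilon)\|c\|_2^2$.

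\textbf{Rounding.} Take pseudo-expectations. SoS division of the form $p^2\le Cp\Rightarrow p\le C$ at the pseudo-expectation level, or Fact~\ref{fact:cs_pseudo}, gives $\wt\E\|c\|_2^2\le O((\eps+\eps^*)\upsilon)$. Then Fact~\ref{fac:pseudoexpectation} Part~3 gives
\[
\norm{\wt\E[\ov Z]-\ov z}_2^2\le\wt\E\|\ov Z-\ov z\|_2^2 .
\]
The pseudo-distribution is found in polynomial time by the cited result of \cite{nesterov2000squared,lasserre2001new}. The probability $1-\delta$ is only the probability that the clean-set stability holds; I will take this as given here (it is established in \Cref{sub:rme_SoS:true}).

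\textbf{Main obstacle.} The hard step is the degree bookkeeping that divides out $\|c\|_2^2$ inside SoS. I would first try the standard trick: prove $\wt\E\|c\|_2^4\le C\,\wt\E\|c\|_2^2$, then use $\wt\E\|c\|_2^2\le(\wt\E\|c\|_2^4)^{1/2}$ from pseudo-expectation Cauchy--Schwarz, and combine the two.
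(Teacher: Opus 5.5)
Your proposal is correct and follows essentially the same route as the paper. Both arguments insert $(1-W_i\mathcal I_i)$ via the constraint $W_i(Z_i-v_i)=0$, apply SoS Cauchy--Schwarz, split $Z_i-z_i$ three ways and bound the pieces using the covariance constraint and the clean-sample covariance, then round via pseudo-expectation Cauchy--Schwarz and Part~3 of Fact~\ref{fac:pseudoexpectation}. The only deviation is minor: the paper simply moves the $3(\eps+\eps^*)\|c\|_2^4$ term to the left side, which implicitly requires $3(\eps+\eps^*)<1$, whereas your linear treatment of the $c$-piece before squaring only needs $\eps+\eps^*$ bounded away from $1$; it is equally valid, since $(1-\eps-\eps^*)\|c\|_2^2$ is bounded by the remaining two pieces and both sides of that inequality are derivably nonnegative, so it can be squared in SoS.
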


\begin{proof}
We use a similar proof structure compared with \cite[Lemma 2.11]{Schramm2022ProofsToAlgs}, but our analysis focuses on more general settings: 
\begin{enumerate}
    \item $\eps^*$ is unknown,
    \item the empirical covariance bound is not fixed (for arbitrary $\xi > 0$ in Eq.~\eqref{eq:constraint_cov}), and
    \item the covariance bound of the distribution $\mathcal D$ is not fixed (for arbitrary $\chi > 0$, $\Sigma \preceq \chi \mathbb{I}_d$).
\end{enumerate}

Let $z_1, \ldots, z_n$ denote the uncorrupted samples drawn from the distribution $D$, where $v_i = z_i$ holds for a $(1 - \eps)$ fraction of indices $i \in [n]$. Recall that the empirical mean is defined as $\overline{Z} = \frac{1}{n} \sum_{i=1}^n Z_i$, and let the empirical covariance be 
\[
\Sigma_Z = \frac{1}{n} \sum_{i=1}^n (Z_i - \overline{Z})(Z_i - \overline{Z})^\top.
\]
Define $\overline{z} = \frac{1}{n} \sum_{i=1}^n z_i$. Then, we have
\begin{align*}
    \E\bracket{\Sigma_z}
    = & ~ \E\bracket{\frac{1}{n} \sum_{i = 1}^n \paren{z_i - \ov z} \paren{z_i - \ov z}^\top}\\
    = & ~ \frac{1}{n} \sum_{i = 1}^n \E\bracket{\paren{z_i - \ov z} \paren{z_i - \ov z}^\top}\\
    = & ~ \frac{1}{n} \sum_{i = 1}^n \mathrm{Cov}\bracket{z_i}\\
    \preceq & ~ \chi \mathbb{I}_d,
\end{align*}
which implies that with high probability, for all arbitrary vector $b \in \R^d$, we have
\begin{align}\label{eq:bound_sigma_z}
    b^\top \Sigma_z b \leq 2 \chi \norm{b}_2^2.
\end{align}

We can get
\begin{align*}
\mathsf{C}\sststile{4}{\ov Z}\Bigg\{
    & ~ \norm{\ov{z} - \ov Z}_2^4 \\
    = & ~ \langle \ov{z} - \ov Z, \ov{z} - \ov Z \rangle^2 \\
    = & ~ \left( \frac{1}{n} \sum_{i=1}^n (1 - W_i \indic{z_i = v_i}) \langle z_i - Z_i, \ov{z} - \ov Z \rangle + \frac{1}{n} \sum_{i=1}^n W_i \indic{z_i = v_i} \langle z_i - Z_i, \ov{z} - \ov Z \rangle \right)^2\\
    = & ~ \left( \frac{1}{n} \sum_{i=1}^n (1 - W_i \indic{z_i = v_i}) \langle z_i - Z_i, \ov{z} - \ov Z \rangle \right)^2\\
    \le & ~ \left( \frac{1}{n} \sum_{i=1}^n (1 - W_i \indic{z_i = v_i})^2 \right) \left( \frac{1}{n} \sum_{i=1}^n \langle z_i - Z_i, \ov{z} - \ov Z \rangle^2 \right)
\Bigg\},
\end{align*}
where the first step follows from $\langle a, a\rangle = \norm{a}_2^2$, the second step follows from the constraint $W_i (v_i - Z_i) = 0$ (Eq.~\eqref{eq:constraint_wzv}), the third step follows from the $\vdash \langle p, q \rangle^2 \le \norm{p}_2^2 \norm{q}_2^2$ version of degree-6 SoS Cauchy--Schwarz (Fact~\ref{fac:cauchy_schwarz}).

Considering the first term, we have
\begin{align*}
\mathsf{C}\sststile{2}{\ov Z}\Bigg\{
    \frac{1}{n} \sum_{i=1}^n (1 - W_i \indic{z_i = v_i})^2
    &=\frac{1}{n} \sum_{i=1}^n (1 - 2 W_i \indic{z_i = v_i} + W_i^2 \indic{z_i = v_i}^2)\\
    &= \frac{1}{n} \sum_{i=1}^n (1 - W_i \indic{z_i = v_i})\\
    &\le \frac{1}{n} \sum_{i=1}^n (1 - W_i)+ \frac{1}{n}\sum_{i=1}^n(1-\indic{z_i = v_i})\\
    &\leq  \eps+\eps^*
\Bigg\},
\end{align*}
where the second step follows from $W_i^2 = W_i$ (see Eq.~\eqref{eq:constraint_W2_W}) and the third step follows from Eq.~\eqref{eq:constraint_sum_W} and $\frac{1}{n}\sum_{i=1}^n \indic{z_i \ne  v_i} \le \eps^*$.

Considering $\frac{1}{n} \sum_{i=1}^n \langle z_i - Z_i, \ov{z} -\overline{Z} \rangle^2$, we define $b = \ov{z} - \overline{Z}$. Therefore, we have
\begin{align*}
\mathsf{C}\sststile{4}{\ov Z}\Bigg\{
    \frac{1}{n} \sum_{i=1}^n \langle z_i - Z_i, b \rangle^2
    = & ~ \frac{1}{n} \sum_{i=1}^n \langle z_i - Z_i + b - b, b \rangle^2 \\
    = & ~ \frac{1}{n} \sum_{i=1}^n (\langle z_i - \ov{z}, b \rangle - \langle Z_i - Z, b \rangle + \norm{b}_2^2)^2\\
    \leq & ~ \frac{3}{n} \sum_{i=1}^n \langle z_i - \ov{z}, b \rangle^2 + \langle Z_i - Z, b \rangle^2 + \norm{b}_2^4\\
    = & ~ 3 \left( b^\top \Sigma_z b + b^\top \Sigma_Z b + \norm{b}_2^4 \right)
\Bigg\},
\end{align*}
where the second step follows from the linearity, the third step follows from Fact~\ref{fac:xyz_square}, and the last step follows from the definition of the covariance matrix.

Note that $\overline{Z} = \frac{1}{n} \sum_{i=1}^n Z_i$ and $\{\frac{1}{n} \sum_{i=1}^n (Z_i - \overline{Z})(Z_i - \overline{Z})^\top = 2\xi\mathbb{I}_d - BB^\top\} \vdash \{b^\top \Sigma_Z b \le 2 \xi \norm{b}_2^2\}$, so combining with Eq.~\eqref{eq:bound_sigma_z}, we conclude that
\begin{align*}
\mathsf{C}\sststile{4}{\ov Z}\Bigg\{
    \frac{1}{n} \sum_{i=1}^n \langle z_i - Z_i, b \rangle^2 \leq & ~ 3 \left( 2 \xi \norm{b}_2^2 + 2 \chi \norm{b}_2^2 + \norm{b}_2^4 \right) \\
    \leq & ~ 3 \left( 4 \upsilon \norm{b}_2^2 + \norm{b}_2^4 \right)
\Bigg\},
\end{align*}
where the second step follows from $\upsilon : = \max \{\xi, \chi\}$ (see from the lemma statement).

Therefore, putting everything together, we conclude that $\norm{\ov{z} - \overline{Z}}_2^4 \le O(\eps + \eps^*) \cdot (4 \upsilon \norm{\ov{z} - \overline{Z}}_2^2 + \norm{\ov{z} - \overline{Z}}_2^4)$ has a degree-6 sum-of-squares proof, as desired.

Until now, we have shown the following upper bound for the 4-th moment: 
\begin{align}\label{eq:upper_bound_4th_moment}
    \wt \E\bracket{\norm{\ov z - \ov Z}_2^4} \leq \wt \E\bracket{\norm{\ov z - \ov Z}_2^2} O\paren{\paren{\eps + \eps^*} \upsilon}.
\end{align}

On the other hand, by using the Cauchy-Schwarz inequality, the 4-th moment can also be lower bounded:
\begin{align}\label{eq:lower_bound_4th_moment}
     \wt \E\bracket{\norm{\ov z - \ov Z}_2^2}^2
     = & ~ \wt \E\bracket{\norm{\ov z - \ov Z}_2^2 \cdot 1}^2 \notag\\
     \leq & ~ \wt \E\bracket{\paren{\norm{\ov z - \ov Z}_2^2}^2} \cdot \wt \E\bracket{1^2} \notag\\
     = & ~ \wt \E\bracket{\norm{\ov z - \ov Z}_2^4},
\end{align}
where the last step follows from the definition of pseudo-expectation that $\wt \E[1] = 1$.

Combining Eq.~\eqref{eq:upper_bound_4th_moment} and Eq.~\eqref{eq:lower_bound_4th_moment}, we can get
\begin{align*}
    \wt \E\bracket{\norm{\ov z - \ov Z}_2^2}^2 \leq \wt \E\bracket{\norm{\ov z - \ov Z}_2^4} \leq \wt \E\bracket{\norm{\ov z - \ov Z}_2^2} O\paren{\paren{\eps + \eps^*} \upsilon},
\end{align*}
which implies
\begin{align}\label{eq:expectation_z_Z}
    \wt \E\bracket{\norm{\ov z - \ov Z}_2^2} \leq O\paren{\paren{\eps + \eps^*} \upsilon}.
\end{align}

Furthermore, by \textbf{Part 3} of Fact~\ref{fac:pseudoexpectation}, we have
\begin{align}\label{eq:expectation_z_Z_z_expectation_Z}
    \wt \E\bracket{\norm{\ov z - \ov Z}_2^2}
    \geq \norm{\ov z - \wt \E[\ov Z]}_2^2.
\end{align}

Then, combining Eq.~\eqref{eq:expectation_z_Z} and Eq.~\eqref{eq:expectation_z_Z_z_expectation_Z}, we can get
\begin{align*}
    \norm{\ov z - \wt \E[Z]}_2^2 \leq O\paren{\paren{\eps + \eps^*} \upsilon},
\end{align*} 
which completes the proof.
\end{proof}

\subsection{From the Empirical Mean to the True Mean}
\label{sub:rme_SoS:true}

In this section, our goal is to prove Lemma~\ref{lem:covariance-condition}. It states that given a sufficiently large $\eps$-corrupted set, each of its elements is sampled from a distribution $X$, we can get that its empirical mean is close to the mean of the distribution $X$ with high probability. However,  \cite{diakonikolas2023algorithmic} claims that this statement holds with probability at least $0.9$, which is not sufficient for our use. Therefore, we show that this statement holds with probability at least $1 - d \exp(-\Omega(n\eps / d))$.

\subsubsection{Background}

In this section, we list basic definitions and mathematical properties from \cite{diakonikolas2023algorithmic}. We define a stable set as follows:

\begin{definition}[Stability Condition, Definition 2.1 in \cite{diakonikolas2023algorithmic}]
Fix $0 < \eps < 1/2$ and $\delta \geq \eps$. A finite set $S \subset \mathbb{R}^d$ is $(\eps, \delta)$-stable (with respect to a vector $\mu$ or a distribution $X$ with $\mu_X := \mathbb{E}[X] = \mu$) if for every unit vector $v \in \mathbb{R}^d$ and every $S' \subseteq S$ with $|S'| \geq (1 - \eps)|S|$, the following conditions hold:
\begin{enumerate}
    \item 
    $\left| \frac{1}{|S'|} \sum_{x \in S'} v \cdot (x - \mu) \right| \leq \delta$,
    \item 
    $\left| \frac{1}{|S'|} \sum_{x \in S'} \left( v \cdot (x - \mu) \right)^2 - 1 \right| \leq \delta^2 / \eps$.
\end{enumerate}
\end{definition}

If we have a stable set with respect to a distribution, then we can easily approximate the mean of this distribution.

\begin{theorem}[Theorem 2.11 in \cite{diakonikolas2023algorithmic}]
\label{thm:stable_recovery}
Let $S$ be a $(3\eps, \delta)$-stable set with respect to a distribution $X$ for some $\eps > 0$ sufficiently small. Let $T$ be an $\eps$-corrupted version of $S$. There exists a polynomial time algorithm which given $\eps$, $\delta$, and $T$ returns $\hat{\mu}$ such that
\[
\norm{\hat{\mu} - \mu_X}_2 = O(\delta).
\]
\end{theorem}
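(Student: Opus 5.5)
We prove \Cref{thm:stable_recovery} by reducing it to the SoS guarantee of Lemma~\ref{lem:sos_mean}, with stability supplying the bounded-covariance input. The plan has three steps.

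\textbf{Step 1: extract the good subset.} Since $T$ is an $\eps$-corruption of $S$, the set $S\cap T$ contains at least $(1-\eps)|S|$ points. Stability is stated for subsets of size at least $(1-3\eps)|S|$, so it applies to $S\cap T$ and to every large subset of it. Condition~1 then says the empirical mean of $S\cap T$ is within $\delta$ of $\mu_X$.

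\textbf{Step 2: bound the second moment in the estimator's norm.} Condition~2 says that for every unit $v$, the second moment of $v\cdot(x-\mu_X)$ over $S\cap T$ is at most $1+\delta^2/\eps$. Thus the clean part has empirical covariance, measured around $\mu_X$, at most $(1+\delta^2/\eps)\mathbb{I}_d$.

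\textbf{Step 3: run the SoS program.} Solve the system $\mathsf{C}$ of Lemma~\ref{lem:sos_mean} with the $\eps$ given in the theorem and $\xi=1+\delta^2/\eps$. Satisfiability: setting $W_i=\indic{x_i\in S\cap T}$ (trimmed, if needed, to exactly $(1-\eps)|T|$ points, which stability tolerates) and $Z_i=x_i$ on selected points gives a feasible solution. Identifiability: the proof of Lemma~\ref{lem:sos_mean} runs deterministically if we replace the step that bounds $\Sigma_z$ with high probability by the stability bound from Step~2. It yields
\[
\norm{\wt{\mathbb{E}}[\ov Z]-\ov z}_2 = O\bigl(\sqrt{\eps(1+\delta^2/\eps)}\bigr)=O(\sqrt{\eps}+\delta)=O(\delta),
\]
since $\delta\ge\eps$ and the standard convention is $\delta\gtrsim\sqrt{\eps}$ for meaningful stability. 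Here $\ov z$ is the mean of the clean points. Combining with the $\delta$-closeness of $\ov z$ to $\mu_X$ from Step~1 gives the claim.

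\textbf{Main obstacle.} The delicate point is the covariance bound. Lemma~\ref{lem:sos_mean} bounds $\Sigma_z$ only with high probability. Here we must instead feed in the deterministic stability bound, and it has to hold with the empirical mean $\ov z$ rather than $\mu_X$ as the center. Re-centering costs $\norm{\ov z-\mu_X}_2^2\le\delta^2$, which is harmless by Fact~\ref{fac:cov_X'}.

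A second issue is the regime $\sqrt{\eps}\gg\delta$, where the $\sqrt{\eps}$ term would dominate. If $\delta<\sqrt{\eps}$ is allowed, I would instead use the sharper covariance estimate $1+\delta^2/\eps$ around $1$. Concretely, I would redo the SoS Cauchy--Schwarz step centered at $\mu_X$ and subtract the identity contribution, following the filter analysis of \cite{diakonikolas2023algorithmic}. That analysis compares the weighted covariance to $\mathbb{I}_d$ rather than bounding it absolutely.
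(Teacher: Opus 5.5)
\textbf{There is a genuine gap.} The paper gives no proof of this statement; it imports it from \cite{diakonikolas2023algorithmic}. Your argument actually proves $\|\hat\mu-\mu_X\|_2=O(\sqrt{\eps}+\delta)$, which is $O(\delta)$ only when $\delta=\Omega(\sqrt{\eps})$.

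Stability only requires $\delta\ge\eps$, and $\delta\ge\eps$ does not give $\delta=\Omega(\sqrt{\eps})$. Your ``standard convention'' claim is wrong, and the regime $\eps\le\delta\ll\sqrt{\eps}$ is exactly where the theorem has content. For example, samples from $\mathcal{N}(\mu_X,\mathbb{I}_d)$ are stable with $\delta=O(\eps\sqrt{\log(1/\eps)})$.

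In that regime your route fails in the program itself, not just in the analysis. Let the adversary replace an $\eps$-fraction of such Gaussian points by copies of $\mu_X+c\,\eps^{-1/2}e_1$ with $c<1$. Take $Z_i=x_i$ for all $i$, and let $W$ select all outliers plus $(1-2\eps)n$ clean points. The covariance in direction $e_1$ is about $(1+c^2)(1-\eps)<2\le 2\xi$, so this is an honest feasible point of $\mathsf{C}$. Its mean $\overline{Z}$ is about $c\sqrt{\eps}$ away from $\mu_X$, and the point-mass pseudo-distribution on it is a valid output of the SoS step.

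The missing idea is a certificate that uses the \emph{lower} half of stability condition~2. The clean points keep variance at least $1-O(\delta^2/\eps)$ in every direction, so only the excess $\lambda=\lambda_{\max}(\Sigma_w)-1$ can be charged to outliers. This gives $\|\mu_w-\mu_X\|_2=O(\delta+\sqrt{\eps\lambda})$ for fractional weights $w$ putting at most $1/((1-\eps)|T|)$ on each point. You then need an algorithm that drives $\lambda$ down to $O(\delta^2/\eps)$, such as filtering or the ellipsoid method using this as a separation oracle.

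The Cauchy--Schwarz step of Lemma~\ref{lem:sos_mean} instead charges the outliers with the full second moment $b^\top\Sigma_z b+b^\top\Sigma_Z b\approx 2\|b\|_2^2$. So even if you replace $2\xi$ by $1+O(\delta^2/\eps)$, it still returns $\sqrt{\eps}$.

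Your last paragraph gestures at this fix, but that fix is the whole content of the theorem, and inside SoS it is not routine. Stability is a statement about actual subsets, and by convexity it extends to actual fractional weights. It does not extend to pseudo-weights paired with the variable direction $b=\overline{z}-\overline{Z}$. You would need a low-degree SoS certificate of the variance lower bound (equivalently, of the $\eps$-tail bound), and the combinatorial stability condition does not supply one.

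In the regime the paper actually uses, $\delta=\Theta(\sqrt{\eps})$ via Lemma~\ref{lem:stable_S}, your reduction does work. You must also specify $Z_i$ on the unselected indices, for instance as the original points of $S$ so that $\{Z_i\}=S$. This is needed because the covariance constraint ranges over all $n$ variables, and setting $Z_i=x_i$ on corrupted indices would violate it.
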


To prove a set is stable with respect to a distribution, we need to show that 1). the empirical mean of this set is close to the mean of the distribution, and 2). the covariance of this set satisfies $\mathrm{Cov}\bracket{S} \preceq O(\mathbb{I}_d)$.

\begin{lemma}[Lemma 3.11 from \cite{diakonikolas2023algorithmic}]\label{lem:stable_S}
A set $S$ is $(\eps, O(\sqrt{\eps}))$-stable with respect to a distribution $X$ if and only if the following conditions hold:
\begin{enumerate}
    \item $\norm{\mu_S - \mu_X}_2 = O\paren{\sqrt{\eps}}$.
    \item $\mathrm{Cov}\bracket{S} \preceq O\paren{\mathbb{I}_d}$.
\end{enumerate}
\end{lemma}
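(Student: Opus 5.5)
The plan is to prove both directions of \Cref{lem:stable_S} directly from the stability definition, with $\delta = C\sqrt{\eps}$ for a suitably large constant $C$. With this choice $\delta^2/\eps = C^2$ is a constant, so the second stability condition says that every large subset has second moment about $\mu_X$ of order $1$ in every direction. Throughout, fix a unit vector $v$ and write $\mu := \mu_X$ and $\mu_S$ for the empirical mean of $S$.

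\emph{($\Rightarrow$)} Apply stability with $S' = S$.
\begin{itemize}
\item Condition 1 gives $|v\cdot(\mu_S-\mu)|\le \delta$ for every unit $v$. Taking $v$ in the direction of $\mu_S-\mu$ yields $\norm{\mu_S-\mu}_2 \le C\sqrt{\eps} = O(\sqrt{\eps})$.
\item Condition 2 gives $\frac{1}{|S|}\sum_{x\in S}(v\cdot(x-\mu))^2 \le 1 + C^2$.
\item Apply \Cref{fac:cov_X'} to the uniform distribution on $S$, with reference point $\mu$. It says $\mathrm{Cov}[S]$ is dominated by the second-moment matrix about $\mu$. Hence $v^\top \mathrm{Cov}[S] v \le 1+C^2$ for all unit $v$, i.e.\ $\mathrm{Cov}[S]\preceq O(\mathbb{I}_d)$.
\end{itemize}

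\emph{($\Leftarrow$)} Assume $\mathrm{Cov}[S]\preceq c\,\mathbb{I}_d$ and $\norm{\mu_S-\mu}_2\le c\sqrt{\eps}$. Fix $S'\subseteq S$ with $|S'|\ge(1-\eps)|S|$, and let $R = S\setminus S'$, so $|R|\le \eps|S|$.

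\textbf{Condition 1.} Split $v\cdot(\mu_{S'}-\mu) = v\cdot(\mu_{S'}-\mu_S) + v\cdot(\mu_S-\mu)$. The second term is at most $c\sqrt{\eps}$ in absolute value. For the first, use the identity $|S|\mu_S = |S'|\mu_{S'} + |R|\mu_R$, which gives
\[
|S'|\,(\mu_{S'}-\mu_S) = -\sum_{x\in R}(x-\mu_S).
\]
Cauchy--Schwarz over $R$ then bounds $\bigl|\sum_{x\in R} v\cdot(x-\mu_S)\bigr|$ by
\[
\sqrt{|R|\sum_{x\in S}(v\cdot(x-\mu_S))^2} \le \sqrt{\eps|S|\cdot c|S|}.
\]
Dividing by $|S'|\ge(1-\eps)|S|$ gives $|v\cdot(\mu_{S'}-\mu_S)|\le \sqrt{c\eps}/(1-\eps)$. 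Altogether the deviation is $O(\sqrt{\eps})$, which is at most $\delta$ once $C$ is large.

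\textbf{Condition 2, upper side.} The summands are nonnegative, so enlarging the set only increases the sum:
\[
\frac{1}{|S'|}\sum_{x\in S'}(v\cdot(x-\mu))^2 \le \frac{1}{1-\eps}\cdot\frac{1}{|S|}\sum_{x\in S}(v\cdot(x-\mu))^2.
\]
The right-hand average equals $v^\top\mathrm{Cov}[S]v + (v\cdot(\mu_S-\mu))^2 \le c + c^2\eps$. So the subset average is $O(1)$, hence at most $1+C^2$ for large $C$.

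\textbf{Condition 2, lower side.} The average is nonnegative, so it is automatically at least $1 - C^2$ once $C\ge 1$.

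There is no real obstacle here. The only step needing care is the Cauchy--Schwarz bound on the mean shift caused by deleting an $\eps$-fraction of points, which is what produces the $\sqrt{\eps}$ rate. The rest is bookkeeping on constants: the hidden constant in $O(\sqrt{\eps})$ must be chosen large enough relative to $c$, which is permitted because the statement is only up to constants.
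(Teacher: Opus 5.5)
Your proof is correct. The paper does not prove this lemma itself but imports it from \cite{diakonikolas2023algorithmic}, and your argument is the standard proof of that cited result: for the forward direction, take $S'=S$ and apply Fact~\ref{fac:cov_X'}; for the converse, bound the mean shift from deleting at most an $\eps$-fraction by Cauchy--Schwarz against $\mathrm{Cov}[S]\preceq c\,\mathbb{I}_d$, bound the subset second moment by monotonicity of nonnegative sums, and note that the lower side of condition~2 is vacuous once $\delta^2/\eps\ge 1$.
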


\subsubsection{The Existence of a Stable Set}
\label{sub:sub:stable}

In this section, our goal is to show the following lemma. With high probability, the empirical mean of a sufficiently large subset of $\eps$-corrupted samples from a distribution is close to the true mean of this distribution.

\begin{lemma}[An improved version of Proposition 3.9 in \cite{diakonikolas2023algorithmic}]
\label{lem:covariance-condition}

    With $\eps \in (0, 1)$, let $X_1, \dots, X_n \in \R^d$ be the $\eps$-corrupted samples from the distribution $X$ with $\mathrm{Cov}\bracket{X} = \Sigma \preceq \mathbb{I}_d$ and $\E\bracket{X} = \mu_X$. Let $n$ be at least a sufficiently large constant $d\log(\dims)/\eps$. 

    Then, with probability at least $1 - \delta$, where $\delta = d \exp(-\Omega(n\eps / d))$, there exists a set $S \subseteq \cbracket{X_1, \dots, X_n}$ with $|S| = (1 - 2\eps) n$ such that 
    \begin{align*}
        \norm{\mu_S - \mu_X}_2 \leq O(\sqrt{\eps}),
    \end{align*}
    where $\mu_S := \frac{1}{|S|} \sum_{X_i \in S} X_i$.
\end{lemma}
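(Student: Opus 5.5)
The plan is to follow the truncation argument of \cite[Proposition 3.9]{diakonikolas2023algorithmic}, replacing the Markov/constant-probability steps with exponential tail bounds. Let $v_1,\dots,v_n$ be the clean i.i.d.\ samples from $X$ before corruption, and write $\mu=\mu_X$. We will take
\[
S=\{X_i : X_i=v_i \text{ and } \|v_i-\mu\|_2\le R\},\qquad R:=\sqrt{d/\eps}
\]
(a constant factor in $R$ may be needed), trimmed if necessary to size exactly $(1-2\eps)n$. Define the truncated distribution $X'$ as $X$ conditioned on $\|X-\mu\|_2\le R$. Then $\|X'-\mu\|_2\le R$ almost surely.

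\textbf{Step 1: few samples are truncated.} By Markov's inequality (Fact~\ref{fac:markov}) applied to $\|X-\mu\|_2^2$, whose expectation is $\mathrm{tr}\,\Sigma\le d$, we get $\Pr[\|X-\mu\|_2>R]\le \eps$. By the multiplicative Chernoff bound (Fact~\ref{fac:chernoff}), with probability $1-\exp(-\Omega(\eps n))$ at most $2\eps n$ clean samples are truncated. Up to constants, which can be absorbed by adjusting $R$ or the $2\eps$ budget, this leaves at least $(1-2\eps)n$ indices that are both uncorrupted and untruncated. These indices form an i.i.d.\ sample from $X'$.

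\textbf{Step 2: bias of truncation.} For a unit vector $u$, Cauchy--Schwarz gives
\[
|\langle \mu_{X'}-\mu,u\rangle| \le \frac{\E\bigl[|\langle X-\mu,u\rangle|\,\indic{\|X-\mu\|_2>R}\bigr]}{1-\eps}\le O(\sqrt{\eps}),
\]
so $\|\mu_{X'}-\mu\|_2=O(\sqrt\eps)$. By Fact~\ref{fac:cov_X'}, $\mathrm{Cov}[X']\preceq O(\mathbb{I}_d)$.

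\textbf{Step 3: high-probability concentration.} Apply the vector Bernstein inequality (Fact~\ref{fac:bernstein}) to the centered variables $(X'_i-\mu_{X'})/m$ with $m=|S|$. Here $V\le d/m$ and $\max\|\cdot\|_2\le 2R/m$. Taking $t=\sqrt{\eps}$ requires $t\le V/\max\|\cdot\|_2$, which is roughly $\sqrt{d/\eps}/2$ with the variance computed honestly, and this holds. The resulting failure probability is $\exp(-\Omega(\eps m/d))$, and $\sqrt V=\sqrt{d/m}\le\sqrt\eps$ once $m\gtrsim d/\eps$. This yields
\[
\|\mu_S-\mu_{X'}\|_2\le O(\sqrt\eps)
\]
with probability $1-\exp(-\Omega(n\eps/d))$.

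If the stronger covariance statement is wanted, so that Lemma~\ref{lem:stable_S} applies, we would use matrix Chernoff on the bounded rank-one terms $(X'_i-\mu)(X'_i-\mu)^\top$. These have norm at most $R^2=d/\eps$ and mean $\preceq O(\mathbb{I}_d)$, so matrix Chernoff gives $\mathrm{Cov}[S]\preceq O(\mathbb{I}_d)$ with failure probability $d\exp(-\Omega(n\eps/d))$. This matrix step is where the factor $d$ and the requirement $n\gtrsim d\log d/\eps$ in the statement come from.

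\textbf{Combining.} The triangle inequality together with a union bound over Steps 1--3 gives $\|\mu_S-\mu\|_2=O(\sqrt\eps)$ with probability at least $1-d\exp(-\Omega(n\eps/d))$.

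The main obstacle is Step 3. The vector-Bernstein side condition $t\le V/\max\|X_i\|$ and the matrix-Chernoff norm bound both depend on the truncation radius, and the exponent $\Omega(n\eps/d)$ must come out rather than something weaker. If the naive Bernstein range condition fails, the fallback is to use matrix Chernoff alone: bound $\sup_u \frac1m\sum\langle X'_i-\mu,u\rangle^2$, and then pass from the covariance bound to the mean bound by a stability-type argument.
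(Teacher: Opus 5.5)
There is a genuine gap in Step~3. It traces back to your claim at the end of Step~1 that the indices which are both uncorrupted and untruncated ``form an i.i.d.\ sample from $X'$.'' In the strong contamination model (Definition~\ref{def:strong_contamination}), the adversary chooses which $\eps n$ points to replace \emph{after} seeing all of $v_1,\dots,v_n$. Your trimming of $S$ to size exactly $(1-2\eps)n$ is also data dependent. So $S$ is selected as a function of the sample values, and the vector Bernstein inequality (Fact~\ref{fac:bernstein}), which controls a sum over a \emph{fixed} collection of independent vectors, says nothing about $\mu_S$. Concretely, the adversary can corrupt exactly the clean points with the largest projections $\langle v_i-\mu,u\rangle$ in some direction $u$. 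This pushes the survivors' mean in direction $-u$. Bernstein applied to $S$ does not see this shift; it can only be controlled through a second-moment bound on the \emph{entire} clean truncated sample.

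So the matrix Chernoff covariance step is not optional, as you present it; it is the heart of the argument. The fix is what the paper does in Claim~\ref{cla:covariance-condition_part_2_6}: apply both concentration bounds to the full truncated clean sample, which is a genuinely i.i.d.\ collection fixed before the adversary acts. Vector Bernstein shows its mean is within $O(\sqrt{\eps})$ of $\mu_{X'}$. Matrix Chernoff (Fact~\ref{fac:matrix_Chernoff}) shows its empirical second moment is $\preceq O(\mathbb{I}_d)$, with failure probability $d\exp(-\Omega(n\eps/d))$. Together these give stability (Lemma~\ref{lem:stable_S}). Equivalently, by Cauchy--Schwarz, removing any $k\le O(\eps)m$ points from a set of size $m$ whose second moment is $\preceq\lambda\mathbb{I}_d$ moves the mean by at most $O(\sqrt{\lambda k/m})=O(\sqrt{\eps})$. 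This bound holds simultaneously for \emph{every} large subset, so in particular for your adversarially determined $S$.

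Your fallback of using ``matrix Chernoff alone'' also does not suffice. A second-moment bound of $O(1)$ around $\mu$ only gives $\|\mu_S-\mu\|_2=O(1)$; you need the first-moment concentration of the full sample and the covariance bound together.

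A minor slip as well: $V/\max_i\|Y_i\|_2\approx d/(2R)=\sqrt{d\eps}/2$, not $\sqrt{d/\eps}/2$. The choice $t=\sqrt{\eps}$ still meets the side condition once $d\ge 4$.
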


\paragraph{The advantage of Lemma~\ref{lem:covariance-condition} over the standard confidence amplification.}

To boost the success probability from constant to arbitrary $\delta$, one way is to use a standard
confidence amplification argument.
Let $E_i$ denote the event that there exists a stable set in the $i$-th independent trial,
which holds with probability at least $p = 0.9$.
Define indicator variables $b_i = \indic{E_i}$, so that $b_i$ are i.i.d. Bernoulli
random variables with $\mathbb{E}[b_i] = p$. By Hoeffding’s inequality, the empirical average $\frac{1}{k}\sum_{i=1}^k b_i$
concentrates around its expectation $p$:
\begin{align*}
    \Pr\left[p - 1/k\sum_{i=1}^k b_i \geq (p - 1/2)\right] \leq \exp\left(- \frac{2k^2 (p - 1/2)^2}{k}\right)
\end{align*}
so that
\begin{align*}
    \Pr\left[\sum_{i=1}^k b_i \le \frac{k}{2}\right] \le \exp(-2k(p-\frac{1}{2})^2).
\end{align*}

Setting this probability to be at most $\delta$ gives
\[
k \ge \frac{1}{2(p-\tfrac{1}{2})^2}\log(1/\delta) = \Omega(\log(1/\delta)).
\]
Hence, by repeating the procedure $k = \Omega(\log(1/\delta))$ times and taking a majority vote, we can ensure that a stable subset exists with probability at least $1-\delta$. Combining this amplification with the per-trial sample requirement
$N = \Theta\big(\frac{d}{\varepsilon}\log(\frac{d}{\delta})\big)$ yields the total sample complexity
\begin{align}\label{eq:standard}
    \Omega\left(\frac{d}{\varepsilon}\log\paren{\frac{d}{\delta}}   \right),
\end{align}
which guarantees that the estimator succeeds with overall confidence $1 - \delta$.

On the other hand, our Lemma~\ref{lem:covariance-condition} only requires
\begin{align*}
    d \exp(-\Omega(n\eps / d)) \leq \delta, 
\end{align*}
which implies
\begin{align*}
    N 
    \geq & ~ \frac{d}{\varepsilon} \log\paren{d /\delta} \\
    = & ~ \frac{d}{\varepsilon} \paren{\log(d) + \log(1/\delta)}
\end{align*}
number of samples which is better than the standard confidence amplification approach (Eq.~\eqref{eq:standard}). 

\paragraph{Proof overview.}

To prove this lemma, we first establish three claims (Claims~\ref{cla:covariance-condition_part_1}, \ref{cla:covariance-condition_part_3}, and \ref{cla:covariance-condition_part_2_6}), which we then combine to prove Lemma~\ref{lem:covariance-condition}.

The high-level idea is to show that there exists a subset of $\eps$-corrupted samples whose empirical mean is close to the mean of $X$. Unfortunately, a sufficiently large subset $S$ of samples from $X$ is generally not stable with respect to $X$, since the second condition of Lemma~\ref{lem:stable_S} is not satisfied. Therefore, we cannot directly apply Theorem~\ref{thm:stable_recovery} to conclude that $\mu_S$ is close to $\mu_X$. To address this issue, we construct a distribution $X'$ which, with high probability, differs from $X$ on at most an $\eps$ fraction of outputs (Claim~\ref{cla:covariance-condition_part_1}), and whose mean $\mu_{X'}$ is close to $\mu_X$ (Claim~\ref{cla:covariance-condition_part_3}). Thus, we obtain a set $S'$ consisting of i.i.d. samples from $X'$. By Claim~\ref{cla:covariance-condition_part_1}, we may view $S'$ as a set of $\eps$-corrupted samples from $X$. If $S'$ is stable with respect to $X'$, then Theorem~\ref{thm:stable_recovery} implies that $\mu_{S'}$ is close to $\mu_{X'}$. Finally, by Claim~\ref{cla:covariance-condition_part_3}, we conclude that $\mu_{S'}$ is close to $\mu_X$.

It therefore suffices to verify that $S'$ is stable with respect to $X'$. This is shown in our third claim (Claim~\ref{cla:covariance-condition_part_2_6}), where we prove that the required stability condition holds with high probability:
\begin{enumerate}
    \item $\norm{\mu_{S'} - \mu_{X'}}_2 = O(\sqrt{\eps})$, and
    \item $\mathrm{Cov}\bracket{S'} \preceq O\paren{\mathbb{I}_d}$
\end{enumerate}
so that applying Lemma~\ref{lem:stable_S}, we can validate this if condition.

Now, we present the first claim: with probability at least $1 - \exp \paren{- \frac{\eps n}{4}}$, we can get $\frac{1}{n} \sum_i \indic{X_i \neq X_i'} \leq \eps$:

\begin{claim}\label{cla:covariance-condition_part_1}
Let $\eps \in \paren{0, 0.1}$ denote the level of corruption and $\gamma > 0$ be an arbitrary positive real number. For all $i \in [N]$, we let $X_i \in \R^d$ be a random variable from the distribution $\mathcal{D}_i$ with mean $\mu_{i} = \E\bracket{X_i}$ and covariance $\mathrm{Cov}\bracket{X_i} \preceq \gamma \mathbb{I}_d$.
We define the truncated random variable
\begin{align*}
    X_i' = 
\begin{cases}
X_i, & \norm{X_i - \mu_{i}}_2 \le 2\sqrt{\gamma d / \varepsilon},\\
\mu_{i}, & \text{otherwise.}
\end{cases}
\end{align*}
Let $\mathcal{D}_i'$ denote the distribution of $X_i'$.
Then, we have
    \begin{align*}
            \Pr\bracket{\frac{1}{N} \sum_{i = 1}^N \indic{X_i \neq X_i'} \geq \eps} \leq \exp \paren{- \frac{\eps N}{4}}.
        \end{align*}
\end{claim}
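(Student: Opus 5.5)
The plan is to reduce the claim to a scalar Chernoff bound on independent indicators. For each $i \in [N]$ set $B_i := \indic{X_i \neq X_i'}$. By the definition of the truncation, $X_i' \neq X_i$ can only happen when $\norm{X_i - \mu_i}_2 > 2\sqrt{\gamma d/\eps}$. Hence
\[
B_i \le \indic{\norm{X_i - \mu_i}_2 > 2\sqrt{\gamma d/\eps}}.
\]
I will assume, as intended in the model, that the $X_i$ are independent. Then the $B_i$ are independent $\{0,1\}$-valued random variables.

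\textbf{Step 1 (per-sample tail).} I bound $p_i := \Pr[B_i = 1]$ by Markov's inequality (\Cref{fac:markov}) applied to $\norm{X_i-\mu_i}_2^2$. We have $\E\norm{X_i-\mu_i}_2^2 = \mathrm{tr}(\mathrm{Cov}[X_i]) \le \gamma d$, because $\mathrm{Cov}[X_i] \preceq \gamma \mathbb{I}_d$. Therefore
\[
p_i \le \Pr\bracket{\norm{X_i-\mu_i}_2^2 > 4\gamma d/\eps} \le \frac{\gamma d}{4\gamma d/\eps} = \frac{\eps}{4}.
\]

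\textbf{Step 2 (domination).} Since the $p_i$ may be strictly smaller than $\eps/4$, I will stochastically dominate each $B_i$ by an independent $B_i' \sim \mathrm{Bernoulli}(\eps/4)$ with $B_i \le B_i'$. This can be done by the standard monotone coupling: draw $U_i$ uniform on $[0,1]$, and realize $B_i$ and $B_i'$ as indicators of $U_i \le p_i$ and $U_i \le \eps/4$. This gives
\[
\Pr\bracket{\textstyle\sum_i B_i \ge \eps N} \le \Pr\bracket{\textstyle\sum_i B_i' \ge \eps N}.
\]
Now $\sum_i B_i'$ has mean exactly $\mu = \eps N/4$.

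\textbf{Step 3 (Chernoff).} I apply the multiplicative Chernoff bound (\Cref{fac:chernoff}) in the regime $\delta > 1$, taking $1+\delta = 4$, so $\delta = 3$. This yields
\[
\Pr\bracket{\textstyle\sum_i B_i' \ge \eps N} \le \exp\paren{-\frac{3 \cdot \eps N/4}{3}} = \exp\paren{-\frac{\eps N}{4}}.
\]
Dividing by $N$ inside the event gives the stated bound.

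The only mildly delicate point is Step 2. \Cref{fac:chernoff} is phrased in terms of the exact mean, so handling $p_i < \eps/4$ needs either the coupling above or the standard remark that the Chernoff upper tail remains valid with $\mu$ replaced by any upper bound on the mean. Everything else is routine, and the constant $2$ in the truncation radius is exactly what makes the Markov bound produce $\eps/4$.
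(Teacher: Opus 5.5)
Your proof is correct and takes essentially the same approach as the paper. The paper also uses Markov on $\norm{X_i-\mu_i}_2^2$, with $\mathbb{E}\norm{X_i-\mu_i}_2^2 \le \gamma d$, to get $\Pr[X_i' \neq X_i] \le \eps/4$, and then the multiplicative Chernoff bound with $1+\delta = 4$ to get $\exp(-\eps N/4)$.

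Your Step 2 is more careful than the paper's proof, which treats the indicators as i.i.d.\ (even though the $\mathcal{D}_i$ may differ) and writes $\Pr\bigl[\sum_i \indic{X_i \neq X_i'} \ge \eps N\bigr] = \Pr\bigl[\sum_i \indic{X_i \neq X_i'} \ge 4\,\mathbb{E}\bigl[\sum_i \indic{X_i \neq X_i'}\bigr]\bigr]$, although the mean is only known to be at most $\eps N/4$. The stochastic domination you use, or equivalently Chernoff with an upper bound on the mean, is exactly what makes that step valid.
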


\begin{proof}
Note that since we have 
\begin{align}\label{eq:cov_X_4}
    \mathrm{Cov}\bracket{X_i} = \E\bracket{\paren{X_i - \mu_i} \paren{X_i - \mu_i}^\top} \preceq \gamma\mathbb{I}_d,
\end{align}
we can get
\begin{align}\label{eq:expectation_x_mu_4}
    \E\bracket{\norm{X_i - \mu_i}_2^2}
    = & ~ \E\bracket{\left \langle X_i - \mu_i, X_i - \mu_i \right \rangle} \notag\\
    = & ~ \mathrm{tr}\bracket{\E\bracket{\paren{X_i - \mu_i} \paren{X_i - \mu_i}^\top}} \notag\\
    \leq & ~ \mathrm{tr}\bracket{\gamma \mathbb{I}_d} \notag\\
    = & ~ \gamma d,
\end{align}
where the first step follows from the definition of the $\ell_2$ norm, the second step follows from $\paren{\paren{X_i - \mu_i} \paren{X_i - \mu_i}^\top}_{i, i} = \paren{X_i - \mu_i}_i^2$, and the third step follows from Eq.~\eqref{eq:cov_X_4}.
Therefore, we can see that
\begin{align*}
    \Pr\bracket{X_i' \neq X_i} 
    = & ~ \Pr\bracket{\norm{X_i - \mu_i}_2^2 > 4 \frac{\gamma d}{\eps}}\\
    \leq & ~ \frac{\E\bracket{\norm{X_i - \mu_i}_2^2}}{4 \gamma d / \eps} \\
    \leq & ~ \frac{\eps}{4},
\end{align*}
where the first step follows from the definition of $X_i'$, the second step follows from the Markov inequality (see Fact~\ref{fac:markov}), and the third step follows from Eq.~\eqref{eq:expectation_x_mu_4}.
Now, we consider drawing $N$ numbers of i.i.d. samples from $\mathcal{D}_i$ and $\mathcal{D}_i'$. We note that
\begin{align*}
    \sum_{i=1}^N \indic{X_i \neq X_i'} = \sum_{i=1}^N \indic{\norm{X_i-\mu_i}_2^2>4\gamma d/\eps}.
\end{align*}
These $\indic{X_i \neq X_i'}$ are i.i.d. Bernoulli with $\E\bracket{\indic{X_i \neq X_i'}} \leq \frac{\eps}{4}$. Therefore, we have
\begin{align}\label{eq:bound_np_4}
    \E\bracket{\sum_{i=1}^N \indic{X_i \neq X_i'}} = N p \leq \frac{\eps}{4} N.
\end{align}

Using the multiplicative Chernoff bound (see Fact~\ref{fac:chernoff}), we have
\begin{align*}
    \Pr\bracket{\frac{1}{N} \sum_{i = 1}^N \indic{X_i \neq X_i'} \geq \eps}
    = & ~ \Pr\bracket{\sum_{i = 1}^N \indic{X_i \neq X_i'} \geq N\eps}\\
    = & ~ \Pr\bracket{\sum_{i = 1}^N \indic{X_i \neq X_i'} \geq \paren{1 + 3}\E\bracket{\sum_{i = 1}^N \indic{X_i \neq X_i'}}}\\
    \leq & ~ \exp \paren{- \frac{\eps N}{4}},
\end{align*}
where the second step follows from Eq.~\eqref{eq:bound_np_4}.
\end{proof}

Now, we present our second claim: the mean of the distribution $X'$ is close to that of the distribution $X$.

\begin{claim}\label{cla:covariance-condition_part_3}
    Let $\mu_{i'} := \E\bracket{X_i'}$ and $\mu_i = \E\bracket{X_i}$. Then, we can get
    \begin{align*}
        \norm{\mu_{i'} - \mu_i}_2 \leq O\paren{\sqrt{\gamma\eps}}.
    \end{align*}
\end{claim}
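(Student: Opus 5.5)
The plan is to write the mean shift caused by truncation as the expectation of the removed part and to bound it in every direction. By definition of $X_i'$, we have $X_i' - \mu_i = (X_i - \mu_i)\,\indic{\norm{X_i-\mu_i}_2 \le R}$ with $R := 2\sqrt{\gamma d/\eps}$. Writing $A$ for the event $\norm{X_i - \mu_i}_2 > R$, this gives
\begin{align*}
    \mu_{i'} - \mu_i = \E\bracket{X_i' - \mu_i} = -\E\bracket{(X_i - \mu_i)\indic{A}},
\end{align*}
using $\E\bracket{X_i - \mu_i} = 0$. So it suffices to show $\abs{\langle v, \E[(X_i-\mu_i)\indic{A}]\rangle} \le O(\sqrt{\gamma\eps})$ for every unit vector $v \in \R^d$.

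For a fixed unit $v$, apply Cauchy--Schwarz to the scalar random variable $\langle v, X_i - \mu_i\rangle \indic{A}$:
\begin{align*}
    \abs{\E\bracket{\langle v, X_i-\mu_i\rangle \indic{A}}} \le \sqrt{\E\bracket{\langle v, X_i - \mu_i\rangle^2}}\cdot\sqrt{\Pr[A]} \le \sqrt{v^\top \mathrm{Cov}\bracket{X_i} v}\cdot \sqrt{\Pr[A]} \le \sqrt{\gamma}\cdot\sqrt{\Pr[A]}.
\end{align*}
In the proof of Claim~\ref{cla:covariance-condition_part_1}, Markov's inequality (Fact~\ref{fac:markov}) together with Eq.~\eqref{eq:expectation_x_mu_4} already gives $\Pr[A] \le \eps/4$. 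Substituting, we get $\abs{\langle v, \mu_{i'}-\mu_i\rangle} \le \sqrt{\gamma\eps}/2$. Taking the supremum over unit $v$ (for instance $v = (\mu_{i'}-\mu_i)/\norm{\mu_{i'}-\mu_i}_2$) yields $\norm{\mu_{i'} - \mu_i}_2 \le \tfrac{1}{2}\sqrt{\gamma\eps} = O(\sqrt{\gamma\eps})$.

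The only real subtlety is to avoid bounding $\E[\norm{X_i-\mu_i}_2\indic{A}]$ directly. Doing so would produce $\sqrt{\gamma d}\sqrt{\eps}$ and lose a factor of $\sqrt d$. Projecting onto a single direction first uses only the operator-norm bound $\mathrm{Cov}\bracket{X_i} \preceq \gamma\mathbb{I}_d$ in that direction, which avoids the trace bound. The probability of the event, by contrast, is still controlled by the trace bound through Markov's inequality, and the truncation radius $R$ was chosen exactly so that this probability is $O(\eps)$.
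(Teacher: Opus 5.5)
Your proof is correct, and it is more direct than the paper's. The paper sets $b := \mu_{i'}-\mu_i$, writes $\norm{b}_2^4 = \langle \E[(X_i'-X_i)\indic{\ov F}], b\rangle^2$, and bounds this by $\Pr[\ov F]\cdot\E[\langle X_i'-X_i, b\rangle^2]$ using Cauchy--Schwarz against the indicator of the truncation event. It then splits $\langle X_i'-X_i,b\rangle = \langle X_i'-\mu_{i'},b\rangle - \langle X_i-\mu_i,b\rangle + \norm{b}_2^2$. This needs a covariance bound for the truncated variable $X_i'$ as well as for $X_i$. It also produces a self-bounding inequality of the form $\norm{b}_2^4 \le O(\eps)\paren{\gamma\norm{b}_2^2 + \norm{b}_2^4}$, which must be rearranged to isolate $\norm{b}_2^2$.

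You instead use the exact identity $\mu_{i'}-\mu_i = -\E[(X_i-\mu_i)\indic{A}]$, which holds because the truncation sends outliers to $\mu_i$ itself. A single directional Cauchy--Schwarz, using only $\mathrm{Cov}[X_i]\preceq\gamma\mathbb{I}_d$, then finishes the proof with the explicit constant $\frac12$ and no rearrangement.

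Your inequality $\lvert\E[Y\indic{A}]\rvert^2 \le \Pr[A]\,\E[Y^2]$ for a scalar $Y$ is exactly what the paper's key step needs. The paper's written chain (with $Y=\langle X_i'-X_i,b\rangle$) passes through the intermediate equality $(\E[\indic{\ov F}Y])^2 = \E[\indic{\ov F}Y^2]$, which is false in general, although the end-to-end inequality is right.

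The paper's route does buy two things. First, robustness to how $X_i'$ is defined: it only uses that $X_i'$ agrees with $X_i$ off an $O(\eps)$-probability event and that both have covariance $O(\gamma)\mathbb{I}_d$, whatever value the truncation assigns. Second, it follows the same ``pair with $b$, square, self-bound'' template as the SoS identifiability argument (Lemma~\ref{lem:sos_mean}). In that setting, normalizing by $\norm{b}_2$ or taking a supremum over unit directions is not available. For this claim, which concerns genuine distributions rather than pseudo-expectations, your argument suffices and is cleaner.
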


\begin{proof}
We first define
\begin{align}\label{eq:F_4}
    F := \cbracket{x \mid X_i = X_i'}.
\end{align}

We can get
\begin{align}\label{eq:mu_X'_mu_X_4}
    \norm{\mu_{i'} - \mu_i}_2^4 
    = & ~ \left\langle \mu_{i'} - \mu_i, \mu_{i'} - \mu_i \right\rangle^2 \notag\\
    = & ~ \left\langle \E\bracket{X_i' - X_i}, \mu_{i'} - \mu_i \right\rangle^2 \notag\\
    = & ~ \left\langle \E\bracket{\paren{X_i' - X_i}\paren{\indic{F} + \indic{\ov F}}}, \mu_{i'} - \mu_i \right\rangle^2 \notag\\
    = & ~ \left\langle \E\bracket{\paren{X_i' - X_i} \indic{\ov F}}, \mu_{i'} - \mu_i \right\rangle^2 \notag\\
    = & ~ \E\bracket{\indic{\ov F} \left\langle \paren{X_i' - X_i}, \mu_{i'} - \mu_i \right\rangle^2} \notag\\
    \leq & ~ \eps \E\bracket{ \left\langle \paren{X_i' - X_i}, \mu_{i'} - \mu_i \right\rangle^2},
\end{align}
where the second step follows from the definitions of $\mu_{i'}$ and $\mu_i$, the third step follows from the definition of the indicator function, the fourth step follows from Eq.~\eqref{eq:F_4}, the fifth step follows from the linearity property, and the last step follows from the Cauchy-Schwarz inequality.
Defining $b := \mu_{i'} - \mu_i$, we can get
\begin{align}\label{eq:X'X_b_4}
    \left\langle X_i' - X_i, b \right\rangle^2
    = & ~ \left\langle X_i' - X_i - b + b, b \right\rangle^2 \notag\\
    = & ~ \left\langle X_i' - X_i - \mu_{i'} + \mu_i + b, b \right\rangle^2 \notag\\
    = & ~ \paren{\left\langle X_i' - \mu_{i'}, b \right\rangle - \left\langle X_i - \mu_i, b \right\rangle + \norm{b}_2^2}^2 \notag\\
    \leq & ~ 3\paren{\left\langle X_i' - \mu_{i'}, b \right\rangle^2 + \left\langle X_i - \mu_i, b \right\rangle^2 + \norm{b}_2^4} \notag\\
    \leq & ~ 3\paren{b^\top \mathrm{Cov}\bracket{X_i'} b + b^\top \mathrm{Cov}\bracket{X_i} b + \norm{b}_2^4} \notag\\
    \leq & ~ 3\gamma \paren{2\norm{b}_2^2 + \norm{b}_2^4},
\end{align}
where the third step follows from the linearity property of the inner product, the fourth step follows from Fact~\ref{fac:xyz_square}, the fifth step follows from the definition of $\mathrm{Cov}\bracket{\cdot}$, and the last step follows from $\mathrm{Cov}\bracket{X_i'} \preceq \mathrm{Cov}\bracket{X_i} \preceq \gamma\mathbb{I}_d$.
Combining Eq.~\eqref{eq:mu_X'_mu_X_4} and Eq.~\eqref{eq:X'X_b_4} together, we have
\begin{align*}
    \norm{b}_2^4 \leq 3 \gamma \eps \paren{2\norm{b}_2^2 + \norm{b}_2^4}.
\end{align*}
which implies
\begin{align*}
    \norm{\mu_{i'} - \mu_i}_2^2 \leq O\paren{\gamma \eps}.
\end{align*}
\end{proof}

Now, we present our third claim: the set of $n$ i.i.d. samples from the distribution $X'$ is stable.

\begin{claim}\label{cla:covariance-condition_part_2_6}
    We define $S' := \cbracket{X_1', X_2', \dots, X_n'}$ to be a set of $n$ i.i.d. samples from the distribution $X'$.
    We define $\mu_{S'} : = \frac{1}{n}\sum_{i \in S'} X_i'$. Let $\mu_{X'} := \E\bracket{X'}$ be the true mean of the distribution $X'$ and $\mathrm{Cov}\bracket{X} = \Sigma \preceq \mathbb{I}_d$. Let $\eps \in \paren{0, 1}$.
    
    Then, we can get that $S'$ is $\paren{\eps, O\paren{\sqrt{\eps}}}$-stable with respect to the distribution $X'$, with probability at least $1 - d \exp(-\Omega(n\eps / d))$.
\end{claim}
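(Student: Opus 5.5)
By Lemma~\ref{lem:stable_S}, it suffices to verify two conditions, each with failure probability at most $d\exp(-\Omega(n\eps/d))$, and then take a union bound (Fact~\ref{fac:union}). The conditions are (i) $\norm{\mu_{S'}-\mu_{X'}}_2 = O(\sqrt{\eps})$ and (ii) $\mathrm{Cov}\bracket{S'} \preceq O(\mathbb{I}_d)$. Throughout, $X'$ is the truncation from Claim~\ref{cla:covariance-condition_part_1} with $\gamma=1$. Two properties of $X'$ drive everything:
\begin{itemize}
\item \emph{Boundedness.} $\norm{X'-\mu_X}_2 \le 2\sqrt{d/\eps}$ almost surely, and by Claim~\ref{cla:covariance-condition_part_3} also $\norm{X'-\mu_{X'}}_2 \le 2\sqrt{d/\eps}+O(\sqrt{\eps}) = O(\sqrt{d/\eps})$.
\item \emph{Bounded covariance.} By Fact~\ref{fac:cov_X'}, $\mathrm{Cov}\bracket{X'} \preceq \E\bracket{(X'-\mu_X)(X'-\mu_X)^\top} \preceq \E\bracket{(X-\mu_X)(X-\mu_X)^\top} \preceq \mathbb{I}_d$. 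The middle inequality holds because truncation only replaces large deviations by $0$.
\end{itemize}

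\textbf{Condition (ii).} Work with the second-moment matrix about $\mu_{X'}$. Set $A_i := (X_i'-\mu_{X'})(X_i'-\mu_{X'})^\top$. These are i.i.d. PSD matrices with $\norm{A_i}_2 = \norm{X_i'-\mu_{X'}}_2^2 \le R := O(d/\eps)$ (Fact~\ref{fac:spectral_ell_2}) and $\E\bracket{A_i} = \mathrm{Cov}\bracket{X'} \preceq \mathbb{I}_d$. Apply the matrix Chernoff bound to the rescaled matrices $A_i/R$, after padding the mean up to $\mathbb{I}_d$ if needed (e.g.\ add a deterministic PSD matrix $\mathbb{I}_d-\E A_i$). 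This gives
\[
\Pr\Bigl[\lambda_{\max}\Bigl(\tfrac1n\textstyle\sum_i A_i\Bigr)\ge e\Bigr] \le d\,(e/e)^{\,\cdots} \le d\exp\bigl(-\Omega(n/R)\bigr) = d\exp\bigl(-\Omega(n\eps/d)\bigr).
\]
Since the empirical covariance is dominated by the second moment about any fixed center, $\mathrm{Cov}\bracket{S'} \preceq \frac1n\sum_i A_i \preceq O(\mathbb{I}_d)$.

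\textbf{Condition (i).} Apply the vector Bernstein inequality (Fact~\ref{fac:bernstein}) to the zero-mean vectors $Y_i := X_i'-\mu_{X'}$. Then $V = \sum_i \E\norm{Y_i}_2^2 = n\,\mathrm{tr}\,\mathrm{Cov}\bracket{X'} \le nd$ and $\max_i\norm{Y_i}_2 \le O(\sqrt{d/\eps})$. Take $t = \Theta(n\sqrt{\eps})$. The admissibility requirement $t \le V/\max\norm{Y_i}$ becomes $n\sqrt{\eps} \lesssim nd/\sqrt{d/\eps} = n\sqrt{d\eps}$, which holds. The bound then reads
\[
\Pr\Bigl[\norm{\textstyle\sum_i Y_i}_2 \ge \sqrt{nd}+t\Bigr] \le \exp\bigl(-t^2/(4nd)\bigr) = \exp\bigl(-\Omega(n\eps/d)\bigr).
\]
Dividing by $n$ gives $\norm{\mu_{S'}-\mu_{X'}}_2 \le \sqrt{d/n}+O(\sqrt{\eps}) = O(\sqrt{\eps})$, using $n \gtrsim d\log d/\eps \ge d/\eps$.

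\textbf{Main obstacle.} The delicate step is the matrix concentration in condition (ii). It needs a matrix Chernoff bound that the preliminaries do not yet state, and the constants must be arranged so the tail is exactly $d\exp(-\Omega(n\eps/d))$. To absorb the $+1$ shift of the covariance relative to the identity required by stability, aim for $\lambda_{\max}\le C$ for a constant $C$ instead of $1+o(1)$; this keeps the exponent linear in $n/R$. If the paper's stability definition demands $|v^\top\mathrm{Cov}\,v - 1|\le\delta^2/\eps = O(1)$, a constant upper bound together with the trivial lower bound $\ge 0$ already suffices, with $O(1)$ as the constant.
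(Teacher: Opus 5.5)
Your proposal is correct and follows the paper's proof essentially step for step. You reduce to the two conditions of Lemma~\ref{lem:stable_S}, bound the mean via the vector Bernstein inequality (your $t=\Theta(n\sqrt{\eps})$ for the unnormalized $X_i'-\mu_{X'}$ is the paper's $t=O(\sqrt{\eps})$ for $Y_i=\frac{1}{n}(X_i'-\mu_{X'})$), and bound the covariance via matrix Chernoff with $R=O(d/(\eps n))$ after rescaling.

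The only difference is how the two arguments cope with a possibly small $\mu_{\max}$. You pad the expectation up to $\mathbb{I}_d$ and take $1+\Delta=e$, where the Chernoff base is $e^{\Delta}/(1+\Delta)^{1+\Delta}=e^{-1}$; your displayed $(e/e)^{\cdots}$ is garbled, since that factor equals $1$ and gives no decay. The paper instead chooses $\Delta=2/\mu_{\max}$, so the threshold is $2+\mu_{\max}\le 3$ and the exponent is $2/R$.
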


\begin{proof}
    To show that $S'$ is $(\eps, O(\sqrt{\eps}))$-stable with respect to the distribution $X'$ with probability at least $1 - d \exp(-\Omega(n\eps / d))$, by Lemma~\ref{lem:stable_S}, we need to show that
    \begin{align}\label{eq:part_2_mean}
        \Pr\bracket{\norm{\mu_{S'} - \mu_{X'}}_2 = O(\sqrt{\eps})} \geq 1 - \exp\left( - \Omega\paren{\frac{n \eps}{d}} \right)
    \end{align}
    and
    \begin{align}\label{eq:part_2_cov}
        \Pr\bracket{\mathrm{Cov}\bracket{S'} \preceq O(\mathbb{I}_d)} \geq 1 - d \exp(-\Omega(n\eps / d))
    \end{align}

\textbf{Proof of Eq.~\eqref{eq:part_2_mean}.}

We define $Y_i := \frac{1}{n} \paren{X_i' - \mu_{X'}}$.

To show Eq.~\eqref{eq:part_2_mean}, we first note that
\begin{align}\label{eq:bound_Cov_X'}
    \mathrm{Cov}\bracket{X'} 
    \preceq & ~ \E\bracket{(X' - \mu_X)(X' - \mu_X)^\top} \notag\\
    \preceq & ~ \E\bracket{(X - \mu_X)(X - \mu_X)^\top} \notag\\
    = & ~ \mathrm{Cov}\bracket{X} \notag\\
    \preceq & ~ \mathbb{I}_d,
\end{align}
where the first step follows from Fact~\ref{fac:cov_X'}, the second step follows from the definition of $X'$ (see Claim~\ref{cla:covariance-condition_part_1}), the third step follows from the definition of $\mathrm{Cov}\bracket{\cdot}$, and the last step follows from the claim statement $\mathrm{Cov}\bracket{X} = \Sigma \preceq \mathbb{I}_d$.

Then, we can get
\begin{align*}
    \norm{\mu_{S'} - \mu_{X'}}_2
    = & ~ \norm{\frac{1}{n} \sum_{i = 1}^n X_i' - \mu_{X'}}_2\\
    = & ~ \norm{\frac{1}{n} \paren{\sum_{i = 1}^n \paren{X_i' - \mu_{X'}}}}_2\\
    = & ~ \norm{\sum_{i = 1}^n Y_i}_2.
\end{align*}

As in the definition of the Vector Bernstein inequality (see Fact~\ref{fac:bernstein}), we also need
\begin{align}\label{eq:V}
    V 
    = & ~ \sum_{i = 1}^n \E\bracket{\norm{Y_i}_2^2} \notag\\
    = & ~ \sum_{i = 1}^n \E\bracket{\norm{\frac{1}{n} \paren{X_i' - \mu_{X'}}}_2^2} \notag\\
    = & ~ \frac{1}{n^2} \sum_{i = 1}^n \E\bracket{\norm{X_i' - \mu_{X'}}_2^2} \notag\\
    = & ~ \frac{1}{n} \E\bracket{\norm{X' - \mu_{X'}}_2^2} \notag\\
    \leq & ~ \frac{d}{n},
\end{align}
where the first step follows from the definition of $V$ (see Fact~\ref{fac:bernstein}), the second step follows from the definition of $Y_i$, the third step follows from the linearity of expectation, and the last step follows from $\mathrm{Cov}\bracket{X'} \preceq \mathbb{I}_d$ (Eq.~\eqref{eq:bound_Cov_X'}).

Additionally, using Claim~\ref{cla:covariance-condition_part_3} and the definition of $X_i'$ from Claim~\ref{cla:covariance-condition_part_1}, we can get 
\begin{align*}
    \max_{i \in [n]} \norm{Y_i}_2
    = & ~ \max_{i \in [n]} \norm{\frac{1}{n} \paren{X_i' - \mu_{X'}}}_2 \\
    = & ~ \frac{1}{n} \max_{i \in [n]} \norm{X_i' - \mu_{X'}}_2 \\
    = & ~ \frac{1}{n} \max_{i \in [n]} \norm{X_i' - \mu_X + \mu_X - \mu_{X'}}_2 \\
    \leq & ~ \frac{1}{n} \max_{i \in [n]} \paren{ \norm{X_i' - \mu_X}_2 + \norm{\mu_X - \mu_{X'}}_2} \\
    \leq & ~ \frac{1}{n} \paren{ 2\sqrt{\frac{d}{\eps}} + O\paren{\sqrt{\eps}}},
\end{align*}
which implies
\begin{align}\label{eq:max_Yi}
    \max_{i \in [n]} \norm{Y_i}_2 = O\paren{\frac{\sqrt{d}}{n\sqrt{\eps}}}.
\end{align}

By the Vector Bernstein inequality (see Fact~\ref{fac:bernstein}), for all $N = \norm{\sum_{i = 1}^n Y_i}_2$, $V \leq d / n$, and $t \leq V / \max \norm{Y_i}_2$, we have
\begin{align*}
    \Pr\left[ N \geq \sqrt{V} + t \right] 
    \leq \exp\left( - \frac{t^2}{4V} \right).
\end{align*}

By Eq.~\eqref{eq:V} and Eq.~\eqref{eq:max_Yi}, we can get
\begin{align*}
    t 
    \leq & ~ V / \max \norm{Y_i}_2\\
    \leq & ~ \frac{d / n}{O\paren{\frac{\sqrt{d}}{n\sqrt{\eps}}}}\\
    = & ~ o\paren{\sqrt{d \eps}}.
\end{align*}

We further note that if $n = \Omega\paren{d / \eps}$, we can get
\begin{align*}
    \sqrt{V} \leq \sqrt{d / n} = O\paren{\sqrt{\eps}}.
\end{align*}

We choose $t := O\paren{\sqrt{\eps}} < o\paren{\sqrt{d \eps}}$.

Therefore, we can get
\begin{align*}
    \Pr\left[ \norm{\mu_{S'} - \mu_{X'}}_2 \geq O\paren{\sqrt{\eps}} \right] 
    \leq \exp\left( - \Omega\paren{\frac{\eps}{d / n}} \right).
\end{align*}

\textbf{Proof of Eq.~\eqref{eq:part_2_cov}.}

We define
\begin{align}\label{eq:ov_X_k}
    \ov X_k := \frac{1}{n} (X_k' - \mu_{X'})(X_k' - \mu_{X'})^\top.
\end{align}

We first analyze $\norm{X_k' - \mu_{X'}}_2$:
\begin{align}\label{eq:norm_Xk'_mu_X'}
    \norm{X_k' - \mu_{X'}}_2 
    = & ~ \norm{X_k' - \mu_X + \mu_X - \mu_{X'}}_2 \notag\\
    \leq & ~ \norm{X_k' - \mu_X}_2 + \norm{\mu_X - \mu_{X'}}_2 \notag\\ 
    \leq & ~ O\paren{\sqrt{\frac{d}{\eps}}} + \norm{\mu_X - \mu_{X'}}_2 \notag\\
    \leq & ~ O\paren{\sqrt{\frac{d}{\eps}}},
\end{align}
where the second step follows from the triangle inequality, the third step follows from the definition of $X'$ (see Claim~\ref{cla:covariance-condition_part_1}), and the last step follows from $\norm{\mu_X - \mu_{X'}}_2 \leq O\paren{\sqrt{\eps}}$ (see Claim~\ref{cla:covariance-condition_part_3}). 

Therefore, we have
\begin{align*}
    \max_{k \in [n]} \norm{\ov X_k}
    = & ~ \max_{k \in [n]} \norm{\frac{1}{n} (X_k' - \mu_{X'})(X_k' - \mu_{X'})^\top}_2 \notag\\
    = & ~ \frac{1}{n} \max_{k \in [n]} \norm{(X_k' - \mu_{X'})(X_k' - \mu_{X'})^\top}_2 \notag\\
    = & ~ \frac{1}{n} \max_{k \in [n]} \norm{X_k' - \mu_{X'}}_2^2 \notag\\
    \leq & ~ O\paren{\frac{d}{\eps n}},
\end{align*}
where the first step follows from Eq.~\eqref{eq:ov_X_k}, the third step follows from Fact~\ref{fac:spectral_ell_2}, and the last step follows from Eq.~\eqref{eq:norm_Xk'_mu_X'}. 

This implies that $R =  O\paren{\frac{d}{\eps n}}$ with 
\begin{align*}
    \mathbf{0}_{d \times d} \preceq \ov X_k \preceq R \cdot \mathbb{I}_d
\end{align*}

Additionally, we can get
\begin{align*}
    \E\bracket{\frac{1}{n}\sum_{k = 1}^{n} \paren{X_k' - \mu_{X'}} \paren{X_k' - \mu_{X'}}^\top}
    = & ~ \frac{1}{n}\sum_{k = 1}^{n} \E\bracket{\paren{X_k' - \mu_{X'}} \paren{X_k' - \mu_{X'}}^\top}\\
    = & ~ \mathrm{Cov}\bracket{X'} \\
    \preceq & ~ \mathbb{I}_d,
\end{align*}
where the first step follows from the linearity, the second step follows from the fact that $X_k'$s are i.i.d. samples from $X'$, and the third step follows from Eq.~\eqref{eq:bound_Cov_X'}. 

This implies the largest eigenvalue of $\E\bracket{\sum_{k = 1}^{n} \ov X_k}$ is $\mu_{\max} \leq 1$.

Choosing $\Delta = 2/\mu_{\max} \geq 2$ and including these into the matrix Chernoff inequality (Fact~\ref{fac:matrix_Chernoff}), we can get
\begin{align}\label{eq:cov_smaller_2_mu}
    \Pr\bracket{ \left\lVert \sum_k \ov X_k \right\rVert_2 > (1+\Delta)\mu_{\max}}
    = & ~ \Pr\bracket{\norm{\frac{1}{n}\sum_{k = 1}^{n} \paren{X_k' - \mu_{X'}} \paren{X_k' - \mu_{X'}}^\top}_2 > 2 + \mu_{\max}} \notag\\
    \leq & ~ d \paren{ \frac{e^\Delta}{(1+\Delta)^{1+\Delta}} }^{\mu_{\max}/R} \notag\\
    \leq & ~ d(e/(1 + \Delta))^{\Delta \mu_{\max} /R} \notag\\
    \leq & ~ d \exp(-\Omega(n\eps / d)),
\end{align}
where the third step follows from $\mu_{\max} \leq 1$ and $R =  O\paren{\frac{d}{\eps n}}$.

Finally, we can get that
\begin{align*}
    \mathrm{Cov}\bracket{S} 
    \preceq & ~ \frac{1}{n}\sum_{k = 1}^{n} \paren{X_k' - \mu_{X'}} \paren{X_k' - \mu_{X'}}^\top\\
    \preceq & ~ \paren{2 + \mu_{\max}} \mathbb{I}_d\\
    \preceq & ~ 3 \mathbb{I}_d,
\end{align*}
where the second step follows from Eq.~\eqref{eq:cov_smaller_2_mu} and the last step follows from $\mu_{\max} \leq 1$.
\end{proof}

Combining Claim~\ref{cla:covariance-condition_part_1}, \ref{cla:covariance-condition_part_2_6}, \ref{cla:covariance-condition_part_3}, we can prove Lemma~\ref{lem:covariance-condition}:

\begin{proof}[Proof of Lemma~\ref{lem:covariance-condition}]

    As $S'$ is $\paren{\eps, O\paren{\sqrt{\eps}}}$-stable with respect to the distribution $X'$ (Claim~\ref{cla:covariance-condition_part_2_6}), we can get for all $T \subseteq S$, with $|T| \geq \paren{1 - 2\eps} n$, we have
    \begin{align}\label{eq:muT_X'}
        \norm{\mu_T - \mu_{X'}}_2 \leq O(\sqrt{\eps}).
    \end{align}

    By Claim~\ref{cla:covariance-condition_part_3}, we can get 
    \begin{align}\label{eq:X'X}
        \norm{\mu_{X'} - \mu_X}_2 \leq O\paren{\sqrt{\eps}}.
    \end{align}    

    Combining Eq.~\eqref{eq:muT_X'} and Eq.~\eqref{eq:X'X}, we can get
    \begin{align*}
        \norm{\mu_T - \mu_{X}}_2 \leq O(\sqrt{\eps})
    \end{align*}
    by the triangle inequality.

    Regarding the failure probability $\delta$, in our proof, we have used the probabilistic statement in three places:
\begin{itemize}
    \item $\frac{1}{n} \sum_i \indic{X_i \neq X_i'} \geq \eps$ holds with probability at least $1 - \exp \paren{- \frac{\eps n}{4}}$ (Claim~\ref{cla:covariance-condition_part_1}),
    \item $\norm{\mu_{S'} - \mu_{X'}}_2 \leq O(\sqrt{\eps})$ holds with probability at least $1 - \exp(-\Omega(n\eps / d))$, and
    \item $\mathrm{Cov}\bracket{S'} \preceq O(\mathbb{I}_d)$ with probability at least $1 - d \exp(-\Omega(n\eps / d))$.
\end{itemize}
Using the union bound to combine these failure probabilities together, we can get $\norm{\mu_T - \mu_{X}}_2 \leq O(\sqrt{\eps})$ with probability at least $1 - \delta$, where
    $\delta = d \exp(-\Omega(n\eps / d))$.
\end{proof}

%% file: 1_collaborate_with_close_mean.tex
\section{Learning From Corrupted Batches With Bounded User Means}
\label{sec:bounded_mean}

In this section, our goal is to analyze Problem~\ref{prob:bounded_mean}, where an $\eps$-fraction of users may be adversarially contaminated, and the $\ell_2$ distance between each user’s mean $\mu_i$ and the true mean $\mu$ is bounded by $\sqrt{\alpha}$.
Specifically, in \Cref{sub:bounded_mean:result}, we restate the problem setting of Problem~\ref{prob:bounded_mean} and present our upper-bound result. In \Cref{sub:bounded_mean:alg}, we formally define the polynomial system $\mathsf{A}$ and describe our SoS-based algorithmic framework for achieving this upper bound; we also provide an informal proof sketch explaining why this design attains the optimal error rate. In \Cref{sub:bounded_mean:satisfy}, we establish the satisfiability of the polynomial system, namely the existence of a feasible solution with high probability. Finally, in \Cref{sub:bounded_mean:identify}, we give a formal proof of correctness and derive the sample-complexity guarantee for our main result.

\subsection{Main Result}
\label{sub:bounded_mean:result}

Now, we restate the formal problem setup for learning from corrupted batches with bounded user means.

\ProbBoundedMean*

Our main result of this section, which constitutes the upper-bound part of our main theorem (Theorem~\ref{thm:bounded_mean}), is as follows:
\begin{theorem}[Upper bound result of Theorem~\ref{thm:bounded_mean}]\label{thm:bounded_mean_upper}
    Given $\eps \in \paren{0, 0.1}$, $\alpha \in \paren{0, 0.1}$, and $n = o(d)$ samples from each of $N$ users with at least $nN \geq \min\cbracket{\Omega\paren{\frac{d}{\eps/n} \log \paren{d / \delta}}, \Omega\paren{\frac{d}{\alpha} \log \paren{d / \delta}}}$ total samples,
    there exists a polynomial-time algorithm (Algorithm~\ref{alg:sos_bounded}) solving Problem~\ref{prob:bounded_mean} that outputs $\wh \mu \in \R^d$ and satisfies
        $\norm{\mu - \wh \mu}_2 = O\paren{\sqrt{\frac{\eps}{n}} + \sqrt{\alpha}}$,
    with probability at least $1 - \delta$. 
\end{theorem}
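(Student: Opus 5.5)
The plan is to reduce Theorem~\ref{thm:bounded_mean_upper} to Lemma~\ref{lem:sos_mean} applied at the \emph{user level}. Set $z_i := \frac1n\sum_{j=1}^n x_{i,j}$ for the observed empirical mean of user $i$, and feed the $N$ points $\{z_i\}$ into the polynomial system $\mathsf{A}$. This is system $\mathsf{C}$ with $n$ replaced by $N$, selection size $(1-\eps')N$ where $\eps'=\min\{\max\{\eps,n\alpha\},1/10\}$, and covariance slack $\xi = \Theta(\frac1n+\alpha)$. For a good user, Fact~\ref{fac:low_covariance} gives $\mathrm{Cov}[z_i]=\Sigma_i/n\preceq \frac1n\mathbb{I}_d$. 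Centering at $\mu$ instead of $\mu_i$ adds the rank-one term $(\mu_i-\mu)(\mu_i-\mu)^\top\preceq \alpha\mathbb{I}_d$. So the clean user means behave like independent (non-identical) samples with second moment about $\mu$ bounded by $\chi=\frac1n+\alpha$.

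\textbf{Satisfiability.} I will redo Claims~\ref{cla:covariance-condition_part_1}, \ref{cla:covariance-condition_part_3} and \ref{cla:covariance-condition_part_2_6} for the user means, with $\gamma=\chi$ and corruption level $\eps'$.
\begin{itemize}
\item Truncate each clean user mean at radius $2\sqrt{\chi d/\eps'}$ around its center. By Claim~\ref{cla:covariance-condition_part_1}, this changes at most an $\eps'$ fraction of users, except with probability $e^{-\eps' N/4}$.
\item Together with the $\eps N$ adversarial users, at most $2\eps'N$ users differ from the truncated clean ones. The truncated set then satisfies the size constraint after discarding these users, and the slack variables $W_i$ can be chosen accordingly.
\item For the covariance constraint, run the matrix Chernoff argument of Claim~\ref{cla:covariance-condition_part_2_6}. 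Here $R=O(\chi d/(\eps' N))$ and $\mu_{\max}\le\chi$, so the failure probability is $d\exp(-\Omega(\eps' N/d))$. This yields $\frac1N\sum(Z_i-\bar Z)(Z_i-\bar Z)^\top\preceq O(\chi)\mathbb{I}_d$, which is feasible for $\xi=\Theta(\chi)$.
\item The success condition becomes $N\gtrsim \frac{d}{\eps'}\log(d/\delta)$, i.e.\ $nN\gtrsim \frac{dn}{\max\{\eps,n\alpha\}}\log(d/\delta)=\min\{\frac{d}{\eps/n},\frac{d}{\alpha}\}\log(d/\delta)$. This is exactly the assumed sample complexity, and it is the reason for the choice of $\eps'$.
\end{itemize}

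\textbf{Identifiability.} Lemma~\ref{lem:sos_mean} then gives $\|\wt\E[\bar Z]-\bar z'\|_2 = O(\sqrt{\eps'\chi})$, where $\bar z'$ is the average of the truncated clean user means. Now $\eps'\chi\le \max\{\eps,n\alpha\}(\frac1n+\alpha)\lesssim \frac{\eps}{n}+\alpha+\eps\alpha+n\alpha^2$. The term $n\alpha^2$ is dangerous. However, when $n\alpha>\eps'$ the cap $1/10$ applies, giving $\eps'\chi\le \frac1{10}(\frac1n+\alpha)\lesssim\alpha$, since $\frac1n<\alpha$ in that regime. In all cases $\eps'\chi=O(\frac{\eps}{n}+\alpha)$. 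I will do this bookkeeping carefully.

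It remains to connect $\bar z'$ to $\mu$ via the triangle inequality through three pieces:
\begin{itemize}
\item The truncation bias is $\|\mu_i'-\mu_i\|_2=O(\sqrt{\chi\eps'})$, by Claim~\ref{cla:covariance-condition_part_3} with $\gamma=\chi$.
\item The sampling fluctuation is $\|\bar z'-\frac1N\sum\mu_i'\|_2=O(\sqrt{d/(nN)}+\sqrt{\eps'\chi})$, by the vector Bernstein step of Claim~\ref{cla:covariance-condition_part_2_6}.
\item The heterogeneity term is $\|\frac1N\sum\mu_i-\mu\|_2\le\sqrt\alpha$.
\end{itemize}
Under the sample-size hypothesis, $\sqrt{d/(nN)}\lesssim\sqrt{\eps/n}+\sqrt\alpha$, so the fluctuation term is absorbed.

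\textbf{Main obstacle.} The hardest step is the satisfiability/concentration step with non-identically distributed user means centered at different $\mu_i$. For the vector Bernstein bound I will center each user at its own $\mu_i'$. For matrix Chernoff I will instead use second moments about a common point $\mu$, which costs only the $\alpha$ in $\chi$. I must also check that the matrix Chernoff exponent $\mu_{\max}/R$ really scales as $\eps'N/d$ uniformly in $\chi$, since $\chi$ cancels between $\mu_{\max}$ and $R$, so that the sample complexity is not degraded.
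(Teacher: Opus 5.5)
Your architecture matches the paper's: system $\mathsf{A}$ on user means, Lemma~\ref{lem:sos_mean} with slack $\Theta(\frac1n+\alpha)$, truncation plus matrix Chernoff for satisfiability, and a triangle inequality through the truncated and heterogeneous means. Your identifiability bookkeeping $\varepsilon'\chi=O(\varepsilon/n+\alpha)$ is also fine.

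\textbf{The gap: sample complexity when the cap in $\varepsilon'$ binds.} Your identity $\frac{dn}{\varepsilon'}\asymp\min\{\frac{dn}{\varepsilon},\frac{d}{\alpha}\}$ silently drops the $\min\{\cdot,\frac1{10}\}$ in the definition of $\varepsilon'$.
\begin{itemize}
\item If $n\alpha>\frac1{10}$, then $\varepsilon'=\frac1{10}$.
\item Your requirement $N=\Omega(\frac{d}{\varepsilon'}\log(d/\delta))$ then becomes $nN=\Omega(dn\log(d/\delta))$.
\item This is a factor $\Theta(n\alpha)$ more than the hypothesis $nN=\Omega(\frac{d}{\alpha}\log(d/\delta))$ provides.
\item Only $\alpha<0.1$ and $n=o(d)$ are assumed, so $n\alpha$ can be arbitrarily large.
\end{itemize}
The loss comes from exactly the feature you flag as desirable. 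Truncating at radius $2\sqrt{\chi d/\varepsilon'}$ makes $\chi$ cancel between $\mu_{\max}$ and $R$. So the Chernoff exponent your argument produces is only $\Theta(\varepsilon' N/d)=O(N/d)$, however large $\alpha$ is.

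\textbf{The fix, which is what the paper does.} Truncate at the $1/n$ scale instead: radius $2\sqrt{d/(n\varepsilon')}$ around $\mu_i$, i.e.\ Claim~\ref{cla:covariance-condition_part_1} with $\gamma=1/n$. This still removes at most an $\varepsilon'$ fraction of users, because $\mathrm{Cov}[z_i]\preceq\frac1n\mathbb{I}_d$.
\begin{itemize}
\item The mean shift now enters $R$ only additively, $R=O\bigl(\frac{d}{\varepsilon' nN}+\frac{\alpha}{N}\bigr)$, while it still enters $\mu_{\max}\le\frac1n+\alpha$.
\item Hence $\mu_{\max}/R=\Omega\bigl(\frac{N(1+n\alpha)}{d/\varepsilon'+n\alpha}\bigr)$.
\item This is $\Omega(\varepsilon' N/d)$ for $n\alpha\le1$, and $\Omega(n\alpha N/d)$ for $1<n\alpha\le d/\varepsilon'$. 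The latter range is the only one that occurs, since $n=o(d)$.
\item This is the case split of Claim~\ref{cla:covariance-condition_part_2}. In the capped regime it yields exactly $nN=\Omega(\frac{d}{\alpha}\log(d/\delta))$.
\end{itemize}
With this radius the truncation bias becomes $O(\sqrt{\varepsilon'/n})=O(\sqrt{\varepsilon/n}+\sqrt{\alpha})$, and the rest of your argument goes through unchanged.

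A minor point, shared with the paper: up to $2\varepsilon'N$ users may disagree with the truncated clean means. Your assignment therefore cannot meet the exact constraint $\sum_iW_i=(1-\varepsilon')N$. The selection size should be lowered by a constant factor, e.g.\ to $(1-2\varepsilon')N$, which affects only constants.
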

\begin{proof}
    The proof of this theorem follows from the satisfiability (Lemma~\ref{lem:bounded_covariance}) and identifiability (Lemma~\ref{lem:mean_close}).
\end{proof}

\subsection{Algorithmic Design and Proof Overview}
\label{sub:bounded_mean:alg}

We give a detailed proof for the satisfiability (Lemma~\ref{lem:bounded_covariance}) and identifiability (Lemma~\ref{lem:mean_close}) in \Cref{sub:bounded_mean:satisfy} and \Cref{sub:bounded_mean:identify}, respectively. The goal of this section is to state these lemmas, present a high level proof overview for them, and explain how we can design our algorithm (Algorithm~\ref{alg:sos_bounded}) stated in the theorem.

\paragraph{Algorithmic design.}

Our polynomial time SoS algorithm (Algorithm~\ref{alg:sos_bounded}) is based on the polynomial system $\mathsf{A}$, which is defined as follows:
\begin{definition}[Polynomial system $\mathsf{A}$]\label{def:A}
Let $\cbracket{x_i}_{i = 1}^N \subset \R^d$ be the set of empirical means of each user, each of which is computed via averaging $n$ samples. Let $\cbracket{z_i}_{i = 1}^N \subset \R^d$ set of pure empirical means of each user.
    Let $\cbracket{Z_i}_{i = 1}^N \subset \R^d$ be the variables representing the empirical means of (imaginary) uncorrupted $n$ samples $\cbracket{z_i}_{i = 1}^N$. Let $B \in \R^{d \times d}$ be a matrix of ``slack'' variables. 
For all $i \in [N]$, we let $W_i \in \{0, 1\}$. 
Given $\eps \in \paren{0, 0.1}$, $\alpha \in \paren{0, 0.1}$, and $n$ samples from each of $N$ users, we define the polynomial system $\mathsf{A}$ with the following constraints:
\begin{align*}
    \eps' &= \min \cbracket{\max \cbracket{\eps, n \alpha}, \frac{1}{10}} \\
    \ov Z &= \frac{1}{N} \sum_{i = 1}^N Z_i, \\
    W_i\paren{Z_i - x_i} &= 0,\\
    W_i^2 &= W_i, \\
    \sum_{i = 1}^N W_i &= \paren{1 - \eps'}N, \\
    \frac{1}{N} \sum_{i = 1}^N \paren{Z_i - \ov Z} \paren{Z_i - \ov Z}^\top &= 2\paren{\frac{1}{n} + \alpha}\mathbb{I}_d - BB^\top. 
\end{align*}
\end{definition}

We first compute the empirical mean of each user as $x_i$. Since each user provides only $n = o(d)$ samples, we cannot rely on any single user’s data to estimate the true mean. Our goal is to find a subset with empirical covariance bounded by $\Paren{\frac{1}{\ns}+\alpha}\eye_{\dims}$.
The intuition is that the empirical mean of $n$ samples from a distribution with covariance bounded by $\mathbb{I}_d$ has its covariance shrunk by a factor of $1/n$, while the mean deviation $\norm{\mu_i - \mu}_2 \leq \sqrt{\alpha}$ of each good user contributes at most an additional $\alpha \mathbb{I}_d$.
Hence, if we can select the subset of users that collectively satisfies this bounded covariance condition, the empirical mean over this subset will closely approximate the true mean $\mu$. Finding such a subset exactly is computationally intractable; therefore, we design a SoS relaxation that enforces these covariance constraints over pseudo-expectations, yielding a polynomial-time algorithm that provably achieves the same statistical guarantee.

\paragraph{Proof overview.}

The proof proceeds in two main steps: \emph{satisfiability} and \emph{identifiability}. 
We say a polynomial system (the set of SoS constraints) is satisfiable if it actually has a feasible solution. When the fraction of corrupted users and samples obeys the bounds $\eps$ and $\alpha$, and the good users’ distributions satisfy the bounded-covariance.

In the first step, we show that the polynomial system is satisfiable with high probability. Intuitively, when all users are uncorrupted, setting $W_i = 1$ for the good users and letting $Z_i$ be the variable representing the empirical mean of user $i$, it yields a valid solution that satisfies the bounded covariance condition
\begin{align}\label{eq:bounded_covariance_condition}
    \frac{1}{N}\sum_{i=1}^N \paren{Z_i - \mu}\paren{Z_i - \mu}^\top \preceq 2\paren{\frac{1}{n} + \alpha} \mathbb{I}_d.
\end{align}
This holds because the empirical mean of $n$ i.i.d. samples has covariance bounded by $\frac{1}{n}\mathbb{I}_d$, and the user-level deviation $\|\mu_i - \mu\|_2 \le \sqrt{\alpha}$ adds at most $\alpha \mathbb{I}_d$ to the covariance. Therefore, the SoS constraints are feasible with high probability when instantiated on clean data.

\begin{lemma}[Satisfiability of $\mathsf{A}$]\label{lem:bounded_covariance}
With probability at least $1 - \delta$, the $N$ clean empirical means $\{Z_i\}_{i=1}^N$ satisfy the bounded covariance condition (Eq.~\eqref{eq:bounded_covariance_condition}) as long as 
\begin{align*}
    nN \geq \min\cbracket{\Omega\paren{\frac{d}{\eps/n} \log \paren{d / \delta}}, \Omega\paren{\frac{d}{\alpha} \log \paren{d / \delta}}}.
\end{align*}
\end{lemma}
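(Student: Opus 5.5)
We will prove the lemma by following the three-claim template behind Lemma~\ref{lem:covariance-condition}, applied at the level of users rather than samples. For each good user $i\in\mathcal{G}$, let $z_i=\frac1n\sum_j v_{i,j}$ be its clean empirical mean. By Fact~\ref{fac:low_covariance}, $\mathbb{E}[z_i]=\mu_i$ and $\mathrm{Cov}[z_i]\preceq \frac1n\mathbb{I}_d$. Since $\|\mu_i-\mu\|_2\le\sqrt\alpha$, we also get
\[
\mathbb{E}\bigl[(z_i-\mu)(z_i-\mu)^\top\bigr]\preceq \bigl(\tfrac1n+\alpha\bigr)\mathbb{I}_d=:\gamma\,\mathbb{I}_d .
\]
We set $\eps'=\min\{\max\{\eps,n\alpha\},\frac1{10}\}$ as in Definition~\ref{def:A}. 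The $z_i$ are independent but not identically distributed, so every concentration step must be applied to independent, non-identical summands. The tools we use (Chernoff, vector Bernstein, matrix Chernoff) all allow this.

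\textbf{Step 1 (truncation).} Following Claim~\ref{cla:covariance-condition_part_1} with this $\gamma$ and corruption parameter $\eps'$, define
\[
z_i'=\begin{cases} z_i, & \|z_i-\mu\|_2\le 2\sqrt{\gamma d/\eps'},\\ \mu, & \text{otherwise}.\end{cases}
\]
We center at $\mu$ rather than $\mu_i$ so that every truncated point lies in a single ball. Markov's inequality gives $\Pr[z_i'\neq z_i]\le \eps'/4$, and the multiplicative Chernoff bound then shows that at most $\eps' N$ users are truncated, except with probability $e^{-\eps' N/4}$. Together with the $\eps N\le\eps'N$ adversarial users, the truncated clean means agree with the observed $x_i$ on at least $(1-2\eps')N$ indices. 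We select those indices with $W_i=1$; if the cardinality constraint must hold with equality, we adjust it by a constant factor. Claim~\ref{cla:covariance-condition_part_3} gives $\|\mathbb{E}z_i'-\mu\|_2=O(\sqrt{\gamma\eps'})$, which is used only later, in identifiability.

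\textbf{Step 2 (covariance via matrix Chernoff).} Set $M_i=\frac1N(z_i'-\mu)(z_i'-\mu)^\top$. These matrices are independent and PSD, with $\|M_i\|_2\le R:=4\gamma d/(\eps' N)$, and $\|\sum_i\mathbb{E}M_i\|_2\le\gamma$. Applying Fact~\ref{fac:matrix_Chernoff} with deviation parameter $\Delta$ a constant, as in Eq.~\eqref{eq:cov_smaller_2_mu}, bounds the failure probability by
\[
d\exp\bigl(-\Omega(\gamma/R)\bigr)=d\exp\bigl(-\Omega(\eps' N/d)\bigr).
\]
On success, $\frac1N\sum_i(z_i'-\mu)(z_i'-\mu)^\top\preceq 2\gamma\,\mathbb{I}_d$ after rescaling the constant in $\Delta$. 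Centering at the empirical mean instead of $\mu$ only decreases this matrix, by the argument of Fact~\ref{fac:cov_X'}. It follows that $Z_i:=z_i'$, the matching $W_i$, and $BB^\top:=2\gamma\mathbb{I}_d-\widehat\Sigma_Z$ (which is PSD, so $B$ exists) form a feasible solution of $\mathsf{A}$.

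\textbf{Step 3 (sample complexity) — the main obstacle.} We must check that $d\exp(-\Omega(\eps'N/d))\le\delta$, i.e.\ $N\gtrsim \frac{d}{\eps'}\log(d/\delta)$, follows from the hypothesis. We split into cases on $\eps'$.
\begin{itemize}
\item If $\eps'=\eps$, the requirement reads $nN\gtrsim \frac{d}{\eps/n}\log(d/\delta)$.
\item If $\eps'=n\alpha$, the requirement is $N\gtrsim \frac{d}{n\alpha}\log$, i.e.\ $nN\gtrsim\frac d\alpha\log$.
\item If $\eps'=1/10$, the requirement is $N\gtrsim d\log$, which is implied by either branch because $\eps/n,\alpha<1$.
\end{itemize}
Since $\eps'\ge\max\{\eps,n\alpha\}$ in the first two cases, the weaker of the two sample bounds suffices, which yields the stated $\min$. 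The delicate point is that this relies on enlarging the trimming budget to $\eps'$ rather than $\eps$, which is exactly what the choice of $\eps'$ in $\mathsf{A}$ allows. A union bound over Steps 1 and 2 completes the proof.
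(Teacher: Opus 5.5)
Your overall route (truncate the per-user clean means, apply matrix Chernoff, then split into cases on $\varepsilon'$) is the same as the paper's. However, the third bullet of Step~3 is false, and it fails exactly in the regime where the lemma's $\min$ matters. Since $\varepsilon<0.1$, the case $\varepsilon'=1/10$ occurs precisely when $n\alpha\ge 1/10>\varepsilon$. There the weaker branch of the hypothesis is $nN\ge\Omega\big(\frac{d}{\alpha}\log(d/\delta)\big)$, which only gives $N\ge\Omega\big(\frac{d}{n\alpha}\log(d/\delta)\big)$. Your Step~2 needs $N\ge\Omega(d\log(d/\delta))$ in this case, and that does not follow when $n\alpha\gg 1$. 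For example, $\alpha=0.05$ and $n=\sqrt d$ (still $n=o(d)$) leaves a gap of order $\sqrt d$. Your justification ``because $\alpha<1$'' confuses $\alpha$ with $n\alpha$: the $\alpha$-branch lower-bounds $N$ by $d/(n\alpha)$, not by $d/\alpha$. The first two bullets are fine.

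The loss comes from your truncation radius. You bounded $\mathbb{E}\|z_i-\mu\|_2^2$ by $\mathrm{tr}(\gamma\mathbb{I}_d)=\frac dn+\alpha d$, which overcounts the mean shift. The shift contributes only $\|\mu_i-\mu\|_2^2\le\alpha$, so in fact $\mathbb{E}\|z_i-\mu\|_2^2\le\frac dn+\alpha$. With your radius $2\sqrt{\gamma d/\varepsilon'}$ you get $R=4\gamma d/(\varepsilon' N)$ and $\gamma/R=\varepsilon' N/(4d)$, with no gain from $n\alpha$. The paper instead truncates at radius $2\sqrt{d/(n\varepsilon')}$ around $\mu_i$. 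This gives $R=O\big(\frac{d}{\varepsilon' nN}+\frac{\alpha}{N}\big)$ while $\mu_{\max}\le\frac1n+\alpha$, so $\mu_{\max}/R=\Omega\big(\frac{N(1+\alpha n)}{d/\varepsilon'+\alpha n}\big)$. For $1<\alpha n\le d/\varepsilon'$ this is $\Omega(\alpha n N/d)$, which converts exactly into $nN\ge\Omega\big(\frac{d}{\alpha}\log(d/\delta)\big)$. You can repair your argument by truncating around $\mu$ at radius $2\sqrt{(d/n+\alpha)/\varepsilon'}$; Markov still gives truncation probability at most $\varepsilon'/4$. Then $\gamma/R=\Omega\big(\frac{\varepsilon' N(1+n\alpha)}{d+n\alpha}\big)$, and the third case goes through whenever $n\alpha\le d$.
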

The proof of Lemma~\ref{lem:bounded_covariance} is presented in \Cref{sub:bounded_mean:satisfy}.
In the second step, we establish \emph{identifiability}: if a pseudoexpectation $\wt{\E}$ satisfies all the SoS constraints, then the mean estimate $\wh{\mu} = \wt{\E}[\overline{Z}]$ must be close to the true mean $\mu$. The SoS proof guarantees that any feasible solution must have bounded covariance, and hence its pseudoexpectation cannot deviate significantly from $\mu$.

\begin{lemma}[Identifiability of $\mathsf{A}$]\label{lem:mean_close}
If $\wt{\E}$ satisfies the SoS constraints $\mathsf{A}$, then with probability at least $1 - \delta$, there exists a polynomial-time algorithm (Algorithm~\ref{alg:sos_bounded}) solving Problem~\ref{prob:bounded_mean} that outputs $\wh \mu \in \R^d$ and satisfies
        $\norm{\mu - \wh \mu}_2 = O\paren{\sqrt{\frac{\eps}{n}} + \sqrt{\alpha}}$.
\end{lemma}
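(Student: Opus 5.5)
The proof splits the error into three pieces via the triangle inequality:
\[
\norm{\mu-\wh\mu}_2 \le \norm{\wt\E[\ov Z]-\ov z'}_2+\norm{\ov z'-\ov\mu'}_2+\norm{\ov\mu'-\mu}_2 .
\]
Here $\ov z'$ is the average of the (imaginary) truncated clean user means $z_i'$. Each $z_i'$ is obtained from the true empirical mean $z_i$ of user $i$ by the truncation of Claim~\ref{cla:covariance-condition_part_1}, applied at the user level with $\gamma = \frac1n+\alpha$. We write $\mu_i'=\E[z_i']$ and $\ov\mu'=\frac1N\sum_i\mu_i'$.

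\textbf{Step 1: SoS closeness to the truncated empirical mean.} By Lemma~\ref{lem:bounded_covariance}, the truncated clean means satisfy the covariance bound with $\gamma=\frac1n+\alpha$. The observed means $x_i$ differ from the $z_i'$ on at most an $(\eps+\eps')$-fraction of users: an $\eps$ fraction are bad users, and at most an $\eps'$ fraction were changed by truncation (Claim~\ref{cla:covariance-condition_part_1}). We then invoke Lemma~\ref{lem:sos_mean} with $\chi=\xi=\frac1n+\alpha$, input parameter $\eps'$ and $\eps^*=\eps+\eps'$. This gives
\[
\norm{\wt\E[\ov Z]-\ov z'}_2 = O\Big(\sqrt{(\eps+\eps')(\tfrac1n+\alpha)}\Big).
\]

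\textbf{Step 2: absorbing $\eps'$.} We expand $\sqrt{(\eps+\eps')(\frac1n+\alpha)}\le \sqrt{\eps/n}+\sqrt{\eps\alpha}+\sqrt{\eps'/n}+\sqrt{\eps'\alpha}$. Since $\eps,\eps'<1$, we have $\sqrt{\eps\alpha},\sqrt{\eps'\alpha}\le\sqrt\alpha$. Also $\eps'\le\max\{\eps,n\alpha\}$, so $\sqrt{\eps'/n}\le\sqrt{\eps/n}+\sqrt\alpha$. This is the reason for choosing $\eps'$ as in Definition~\ref{def:A}: the enlarged trimming budget costs nothing in the final rate.

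\textbf{Step 3: truncation bias and concentration.} Claim~\ref{cla:covariance-condition_part_3} gives $\norm{\mu_i'-\mu_i}_2=O(\sqrt{\gamma\eps'})$, which Step 2 bounds by $O(\sqrt{\eps/n}+\sqrt\alpha)$. Combined with $\norm{\mu_i-\mu}_2\le\sqrt\alpha$ and averaging over users, this gives $\norm{\ov\mu'-\mu}_2=O(\sqrt{\eps/n}+\sqrt\alpha)$. For the middle term we use the Vector Bernstein argument of Claim~\ref{cla:covariance-condition_part_2_6}, applied to the independent, bounded vectors $z_i'-\mu_i'$ with covariances $\preceq \gamma\mathbb I_d$. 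This yields
\[
\norm{\ov z'-\ov\mu'}_2 = O\Big(\sqrt{d\gamma/N}\Big)=O\Big(\sqrt{\tfrac{d}{nN}}+\sqrt{\tfrac{d\alpha}{N}}\Big),
\]
holding with probability $1-\delta$. Under the sample-complexity assumption this is $O(\sqrt{\eps/n}+\sqrt\alpha)$. The failure probabilities of Step 3, Claim~\ref{cla:covariance-condition_part_1}, and Lemma~\ref{lem:bounded_covariance} are combined by a union bound.

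\textbf{Main obstacle.} The delicate step is Step 3's concentration. The term $\sqrt{d\alpha/N}$ is only $O(\sqrt\alpha)$ when $N\gtrsim d$, so the sample-size regime must be handled carefully. The per-user mean shifts $\mu_i-\mu$ are deterministic, so they contribute no variance and only the $\frac1n\mathbb I_d$ part of the covariance matters. If the concentration is run with covariance $\frac1n\mathbb I_d$, the bound becomes $O(\sqrt{d/(nN)})$, which the stated sample complexity controls by $\sqrt{\eps/n}+\sqrt{\alpha}$. This is the version I would carry out.
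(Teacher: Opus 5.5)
Your proposal matches the paper's proof of Lemma~\ref{lem:mean_close} step for step: Lemma~\ref{lem:sos_mean} gives the SoS-to-clean-mean bound, a triangle inequality passes through the clean user means using the vector-Bernstein concentration of Claim~\ref{cla:covariance-condition_part_2_6}, Claim~\ref{cla:covariance-condition_part_3} supplies the truncation bias on top of the $\sqrt{\alpha}$ shift, and $\eps'$ is absorbed via $\eps'\le\max\{\eps,n\alpha\}$. It is in fact more careful than the paper in centering at the truncated mean $\ov z'$, tracking $\eps^*=\eps+\eps'$, and bounding the $\sqrt{d/(nN)}$ term by the $\min$ sample-complexity condition.

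The one thing to change is the truncation scale. Truncate each user mean around $\mu_i$ with $\gamma=\frac1n$, as in Claim~\ref{cla:covariance-condition_part_2}, since that is the truncation for which Lemma~\ref{lem:bounded_covariance} is proved, rather than $\gamma=\frac1n+\alpha$. The larger radius inflates the matrix-Chernoff parameter $R$, so the covariance bound would need $N=\Omega(d\log(d/\delta))$ when $n\alpha\gg 1$ (where $\eps'=\frac1{10}$), and $nN=\Omega(\frac{d}{\alpha}\log(d/\delta))$ does not provide that. With $\gamma=\frac1n$ the truncation bias is simply $O(\sqrt{\eps'/n})$, which your Step~2 already absorbs.
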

The proof of Lemma~\ref{lem:mean_close} is presented in \Cref{sub:bounded_mean:identify}.
Combining the satisfiability and identifiability steps completes the proof of Theorem~\ref{thm:bounded_mean_upper}, showing that our SoS-based algorithm finds a valid and accurate estimate of the true mean in polynomial time.

\subsection{Satisfiability}
\label{sub:bounded_mean:satisfy}

The goal of this section is to prove that the polynomial system defined in Definition~\ref{def:A} is satisfiable with high probability. The key observation is that for each good user $i$, since its mean $\mu_i$ lies within $\sqrt{\alpha}$ distance from the global mean $\mu$, we have
\begin{align*}
    \mathbb{E}\bracket{\paren{z_i - \mu}\paren{z_i - \mu}^\top} \preceq \paren{\frac{1}{n} + \alpha} \mathbb{I}_d.
\end{align*}
This is because of the following two reasons:
\begin{enumerate}
    \item Since $z_i$ is the random variable representing the empirical mean of $n$ samples from a distribution with covariance $\preceq \mathbb{I}_d$, we have that $\mathrm{Cov}\bracket{z_i} = \mathbb{E}\bracket{\paren{z_i - \mu_i}\paren{z_i - \mu_i}^\top}$, and
    \item since $\norm{\mu-\mu_i}_2^2 \leq \alpha$ for some $\mu \in \R^d$, we can get that $\paren{\mu_i - \mu}\paren{\mu_i - \mu}^\top$ is upper bounded by $\alpha \mathbb{I}_d$ as well.
\end{enumerate}

To prove satisfiability, we need to show that the aggregated empirical covariance across users is bounded with high probability:
\begin{align}\label{eq:B_bounded_covariance_mu}
    \frac{1}{N} \sum_{i=1}^N \paren{z_i - \mu}\paren{z_i - \mu}^\top \preceq 2\paren{\frac{1}{n} + \alpha} \mathbb{I}_d.
\end{align}
The main challenge is that the random vectors $z_i - \mu$ are unbounded, which prevents a direct application of standard concentration inequalities. To address this, we introduce a truncation step: each sample is clipped to a bounded region, producing truncated variables $z_i'$ that differ from $z_i$ on at most an $\eps$-fraction of users. We then apply the matrix Chernoff bound to the truncated variables to control their empirical covariance, and show that the truncation introduces only a small additional bias of order $\sqrt{\eps/n}$. 

\begin{fact}[Matrix Chernoff Inequality, Theorem 3.12 from \cite{diakonikolas2023algorithmic}]\label{fac:matrix_Chernoff}
For $d \in \mathbb{Z}_+$ and $R > 0$, let $\{X_k\}$ be a sequence of independent random $d \times d$ symmetric matrices with 
\[
0 \preceq X_k \preceq R \cdot \mathbb{I}_d
\]
almost surely. Let $\mu_{\max} = \norm{\mathbb{E}\bracket{\sum_k X_k}}_2$.
Then, for any $\Delta > 0$, we have that
\[
\Pr\bracket{ \norm{ \sum_k X_k }_2 > (1+\Delta)\mu_{\max}}
\leq d \paren{ \frac{e^\Delta}{(1+\Delta)^{1+\Delta}} }^{\mu_{\max}/R}.
\]
\end{fact}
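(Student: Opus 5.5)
The statement above is the cited Fact (Matrix Chernoff Inequality) from \cite{diakonikolas2023algorithmic}. I would prove it by the matrix Laplace-transform method, following Tropp's argument. Since each $X_k \succeq 0$, the sum $\sum_k X_k$ is positive semidefinite, so its spectral norm equals $\lambda_{\max}\paren{\sum_k X_k}$. It therefore suffices to bound $\Pr\bracket{\lambda_{\max}\paren{\sum_k X_k} \ge t}$ with $t := (1+\Delta)\mu_{\max}$.

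\textbf{Step 1 (matrix Laplace transform).} For any $\theta>0$, Markov's inequality (Fact~\ref{fac:markov}) gives
\begin{align*}
\Pr\bracket{\lambda_{\max}\paren{\textstyle\sum_k X_k}\ge t} \le e^{-\theta t}\,\E\bracket{\mathrm{tr}\, \exp\paren{\theta \textstyle\sum_k X_k}}.
\end{align*}
This uses $e^{\theta\lambda_{\max}(A)} = \lambda_{\max}(e^{\theta A}) \le \mathrm{tr}\, e^{\theta A}$, which holds because $e^{\theta A}$ is positive definite.

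\textbf{Step 2 (subadditivity of matrix cumulants).} Next I would show
\begin{align*}
\E\bracket{\mathrm{tr}\, \exp\paren{\textstyle\sum_k \theta X_k}} \le \mathrm{tr}\,\exp\paren{\textstyle\sum_k \log \E\bracket{e^{\theta X_k}}}.
\end{align*}
The tool is Lieb's concavity theorem: $A \mapsto \mathrm{tr}\exp(H + \log A)$ is concave on positive definite matrices. I would condition on $X_1,\dots,X_{k-1}$ and apply Jensen's inequality to $X_k$, then iterate over $k$ using independence. This is the step I expect to be the main obstacle, because the scalar identity $\E e^{\sum} = \prod \E e^{X_k}$ fails for non-commuting matrices. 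I would invoke Lieb's theorem as a black box rather than reprove it.

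\textbf{Step 3 (bounding each cumulant).} Since $0 \preceq X_k \preceq R\,\mathbb{I}_d$, scalar convexity of $x\mapsto e^{\theta x}$ on $[0,R]$ gives $e^{\theta x} \le 1 + g(\theta)x$ with $g(\theta) = (e^{\theta R}-1)/R$. By the spectral mapping theorem this lifts to $e^{\theta X_k} \preceq \mathbb{I}_d + g(\theta) X_k$. Taking expectations and using that $\log$ is operator monotone together with $\log(1+a)\le a$, I get $\log\E\bracket{e^{\theta X_k}} \preceq g(\theta)\E\bracket{X_k}$. Since $\mathrm{tr}\exp$ is monotone with respect to $\preceq$, I obtain
\begin{align*}
\mathrm{tr}\,\exp\paren{g(\theta)\textstyle\sum_k \E\bracket{X_k}} \le d\, e^{g(\theta)\mu_{\max}}.
\end{align*}

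\textbf{Step 4 (optimizing $\theta$).} The probability is now at most $d\exp\paren{-\theta(1+\Delta)\mu_{\max} + g(\theta)\mu_{\max}}$. I would choose $\theta = \log(1+\Delta)/R$, so that $g(\theta) = \Delta/R$. The exponent then becomes $\frac{\mu_{\max}}{R}\paren{\Delta - (1+\Delta)\log(1+\Delta)}$, which yields exactly $d\paren{e^\Delta/(1+\Delta)^{1+\Delta}}^{\mu_{\max}/R}$, as claimed.
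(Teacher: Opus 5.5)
Your proof is correct and complete. It is Tropp's matrix Laplace-transform argument: Lieb's theorem gives subadditivity of the matrix cumulant generating functions, the chord bound gives $e^{\theta X_k}\preceq \mathbb{I}_d+g(\theta)X_k$, and the choice $\theta=\log(1+\Delta)/R$ finishes it. The paper gives no proof of its own here; it imports the Fact from \cite{diakonikolas2023algorithmic}, whose Theorem 3.12 is Tropp's result, so your argument is the standard proof behind that citation.
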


Thus, we can view our samples as a $1-2\eps$ corrupted version of the truncated samples $z_1',\ldots z_n'$. 
The mean would be shifted by an additional $\sqrt{\eps/\ns}$.
Also, the bounded covariance condition in Eq.~\eqref{eq:B_bounded_covariance_mu} holds with high probability, establishing the satisfiability of the SoS constraints.

\begin{claim}\label{cla:covariance-condition_part_1_4}
    For all levels of corruption $\eps \in \paren{0, 1}$, for all distributions $X$ with $\mathrm{Cov}\bracket{X} = \Sigma \preceq \mathbb{I}_d$ and $\E\bracket{X} = \mu_X$, there exists a distribution $X'$ defined as 
    \begin{align*}
        X' := \begin{cases}
            X & \norm{X - \mu_X}_2 \leq 2 \sqrt{\frac{d}{\eps}}\\
            \mu_X & \text{otherwise.}
        \end{cases}
    \end{align*}
    such that 
    \begin{align*}
            \Pr\bracket{\frac{1}{n} \sum_i \indic{X_i \neq X_i'} \geq \eps} \leq \exp \paren{- \frac{\eps n}{4}},
        \end{align*}
    where $\cbracket{X_i'}_{i = 1}^n \subset \R^d$ are the samples from $X'$.
\end{claim}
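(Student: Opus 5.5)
This is essentially Claim~\ref{cla:covariance-condition_part_1} with $\gamma = 1$ and $N$ replaced by $n$, so the plan is to follow that argument. First we bound the probability that a single sample is truncated, and then we apply the multiplicative Chernoff bound (Fact~\ref{fac:chernoff}) to the count of truncated samples. We couple $X_i$ and $X_i'$ on the same probability space by setting $X_i' = X_i$ when $\norm{X_i-\mu_X}_2 \le 2\sqrt{d/\eps}$ and $X_i' = \mu_X$ otherwise. The $X_i$ are i.i.d.\ draws from $X$, so the pairs $(X_i, X_i')$ are i.i.d.\ as well.

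\textbf{Step 1: per-sample truncation probability.} Since $\mathrm{Cov}\bracket{X} \preceq \mathbb{I}_d$, taking the trace gives
\[
\E\bracket{\norm{X-\mu_X}_2^2} = \mathrm{tr}\bracket{\mathrm{Cov}\bracket{X}} \le d .
\]
By Markov's inequality (Fact~\ref{fac:markov}),
\[
p := \Pr\bracket{X_i \neq X_i'} \le \Pr\bracket{\norm{X_i-\mu_X}_2^2 > 4d/\eps} \le \frac{d}{4d/\eps} = \frac{\eps}{4}.
\]

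\textbf{Step 2: concentration.} The indicators $\indic{X_i\neq X_i'}$ are i.i.d.\ Bernoulli$(p)$ variables whose sum has mean $np \le \eps n/4$. We need to bound the probability that this sum exceeds $\eps n$. Since the Chernoff tail is monotone in the mean, we may dominate by Bernoulli$(\eps/4)$ variables, so the mean is exactly $\eps n/4$. The threshold $\eps n$ is then $(1+3)$ times the mean, and the $\delta>1$ branch of Fact~\ref{fac:chernoff} with $\delta = 3$ gives
\[
\exp\paren{-\frac{3\cdot \eps n/4}{3}} = \exp\paren{-\frac{\eps n}{4}}.
\]

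\textbf{Main obstacle.} The only delicate point is Step 2 in the case $p < \eps/4$ strictly. Plugging $\mu = np$ directly into the Chernoff bound would give a weaker exponent. The paper's earlier proof glosses over this, and we handle it by the stochastic domination above. An alternative is the form $\Pr\bracket{X \ge t} \le 2^{-t}$ for $t \ge 6\mu$, but the domination argument is cleaner. Everything else is routine.
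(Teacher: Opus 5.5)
Your proposal is correct and follows the paper's proof: the paper reduces this claim to Claim~\ref{cla:covariance-condition_part_1} with $\gamma = 1$, which applies Markov's inequality to $\norm{X-\mu_X}_2^2$ to get a per-sample truncation probability of at most $\eps/4$ and then applies the multiplicative Chernoff bound with $\delta = 3$. Your stochastic-domination step (comparing against Bernoulli$(\eps/4)$) is a valid patch for a point the paper glosses over, namely that it implicitly takes the mean of the count to be exactly $\eps n/4$ rather than at most $\eps n/4$.
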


\begin{proof}
This directly follows from Claim~\ref{cla:covariance-condition_part_1} by choosing $\gamma = 1$.
\end{proof}

To proceed, we need to ensure that the truncation step—used to bound the covariance—does not alter the mean by too much. The following claim quantifies this effect, showing that the difference between the original and truncated means is small and scales as $O\paren{\sqrt{\gamma\eps}}$.

\begin{claim}\label{cla:covariance-condition_part_3_4}
    Let $\mu_{X'} := \E\bracket{X'}$ and $\mu_X = \E\bracket{X}$. Then, we can get
    \begin{align*}
        \norm{\mu_{X'} - \mu_X}_2 \leq O\paren{\sqrt{\eps}}.
    \end{align*}
\end{claim}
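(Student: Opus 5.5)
This claim is Claim~\ref{cla:covariance-condition_part_3} with $\gamma = 1$, the same way Claim~\ref{cla:covariance-condition_part_1_4} was obtained from Claim~\ref{cla:covariance-condition_part_1}. The truncation defining $X'$ in Claim~\ref{cla:covariance-condition_part_1_4} is exactly the one in Claim~\ref{cla:covariance-condition_part_1} with $\gamma = 1$, threshold $2\sqrt{d/\eps}$ and replacement value $\mu_X$. The hypothesis $\mathrm{Cov}\bracket{X} \preceq \mathbb{I}_d$ is the $\gamma=1$ case of $\mathrm{Cov}\bracket{X_i}\preceq\gamma\mathbb{I}_d$. Invoking Claim~\ref{cla:covariance-condition_part_3} therefore gives $\norm{\mu_{X'}-\mu_X}_2 \le O(\sqrt{\eps})$ directly.

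To keep the argument self-contained, I would also recall the direct route. Let $F$ be the event $\{X = X'\}$ and set $b := \mu_{X'} - \mu_X$.

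First, by Markov's inequality applied to $\E\norm{X-\mu_X}_2^2 \le d$, we get $\Pr[\ov F] \le \eps/4$.

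Second, write $\norm{b}_2^2 = \E\bracket{\langle X'-X, b\rangle \indic{\ov F}}$. Squaring and applying Cauchy--Schwarz gives
\begin{align*}
\norm{b}_2^4 \le \Pr[\ov F]\cdot \E\bracket{\langle X'-X,b\rangle^2}.
\end{align*}

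Third, decompose $X'-X = (X'-\mu_{X'}) - (X-\mu_X) + b$ and use Fact~\ref{fac:xyz_square}. This bounds $\E\langle X'-X,b\rangle^2$ by $3\paren{b^\top\mathrm{Cov}\bracket{X'}b + b^\top\mathrm{Cov}\bracket{X}b + \norm{b}_2^4}$.

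The only point requiring care is $\mathrm{Cov}\bracket{X'}\preceq \mathbb{I}_d$. This follows from Fact~\ref{fac:cov_X'}: $\mathrm{Cov}\bracket{X'} \preceq \E\bracket{(X'-\mu_X)(X'-\mu_X)^\top}$. The right-hand side is at most $\mathrm{Cov}\bracket{X}$, because $X'-\mu_X$ equals either $X-\mu_X$ or $0$.

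Putting these together gives $\norm{b}_2^4 \le \tfrac{3\eps}{4}\paren{2\norm{b}_2^2+\norm{b}_2^4}$. Since $\eps<1$, we have $1-\tfrac{3\eps}{4}\ge \tfrac14$, so we can rearrange to $\norm{b}_2^2 \le O(\eps)$, which proves the claim. The case $b=0$ is trivial, and otherwise we divide by $\norm{b}_2^2$.
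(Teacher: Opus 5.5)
Your proposal is correct and takes the same approach as the paper, whose proof of this claim is exactly the one-line specialization of Claim~\ref{cla:covariance-condition_part_3} to $\gamma = 1$. Your self-contained direct route also follows the paper's proof of Claim~\ref{cla:covariance-condition_part_3}, and it handles two steps more carefully than the paper does: the Cauchy--Schwarz step $\norm{b}_2^4 \le \Pr[\ov F]\cdot \E[\langle X'-X,b\rangle^2]$, where the paper writes an incorrect intermediate equality, and the justification of $\mathrm{Cov}[X']\preceq\mathbb{I}_d$ via Fact~\ref{fac:cov_X'}, which the paper only supplies later.
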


\begin{proof}
This directly follows from Claim~\ref{cla:covariance-condition_part_3} by choosing $\gamma = 1$.
\end{proof}

Having established in Claim~\ref{cla:covariance-condition_part_3_4} that the means of the truncated and original variables are close, we now extend this result to show that the empirical covariance of the truncated variables remains well-behaved. The following claim applies the matrix Chernoff bound to demonstrate that the bounded covariance condition continues to hold with high probability.

\begin{claim}\label{cla:covariance-condition_part_2}
    We define $S' := \cbracket{X_1', X_2', \dots, X_N'} \subset \R^d$ to be a set of $N$ independent truncated random variables, where $X_i' \sim \mathcal{D}_i'$ are defined as in Claim~\ref{cla:covariance-condition_part_1_4}.
    Let $\mu_{i}' := \E\bracket{X_i'}$ and $\mathrm{Cov}\bracket{X_i'} \preceq \frac{1}{n} \mathbb{I}_d$ for some $\frac{1}{n} > 0$. Let $\eps \in \paren{0, 1}$. 
    Assume that for all $i \in [N]$, we have $\norm{X_i' - \mu_i'}_2 < 2\sqrt{\frac{ d}{n\eps}}$ (see Claim~\ref{cla:covariance-condition_part_1_4} with $\gamma = \frac{1}{n}$). Then, for any failure probability $\delta \in \paren{0, 0.1}$, we can get
    \begin{align*}
        \Pr\bracket{\frac{1}{N}\sum_{k = 1}^{N} \paren{X_k - \mu} \paren{X_k - \mu}^\top \preceq 2\paren{\frac{1}{n} + \alpha}\mathbb{I}_d} \geq 1 - \delta.
    \end{align*}
    In addition, 
    \begin{itemize}
        \item if $\alpha n \leq 1$, then $\delta = d \exp\paren{-\Omega\paren{\frac{\eps N}{d}}}$;
        \item if $1 < \alpha n \leq \frac{d}{\eps}$, then $\delta = d \exp\paren{-\Omega\paren{\frac{\alpha n \eps N}{d}}}$.
    \end{itemize}
\end{claim}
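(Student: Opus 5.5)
The plan is to apply the matrix Chernoff inequality (Fact~\ref{fac:matrix_Chernoff}) to the rank-one PSD matrices
\[
\ov X_k := \frac{1}{N}\paren{X_k' - \mu}\paren{X_k' - \mu}^\top ,
\]
reading the $X_k$ in the statement as the truncated variables $X_k'$. These are independent because users are independent.

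\textbf{Step 1: expectation.} I will bound the spectral norm of the expectation. Writing $X_k' - \mu = (X_k' - \mu_k') + (\mu_k' - \mu)$ and using independence gives
\[
\E\bracket{\paren{X_k'-\mu}\paren{X_k'-\mu}^\top} = \mathrm{Cov}\bracket{X_k'} + \paren{\mu_k'-\mu}\paren{\mu_k'-\mu}^\top .
\]
The first term is $\preceq \frac1n\mathbb{I}_d$ by hypothesis. For the second, the triangle inequality with Claim~\ref{cla:covariance-condition_part_3} at $\gamma=1/n$ gives $\norm{\mu_k'-\mu}_2 \le \sqrt{\alpha}+O(\sqrt{\eps/n})$. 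Hence
\[
\mu_{\max} \le \tfrac1n + O(\alpha) + O(\eps/n) \le c\paren{\tfrac1n+\alpha}
\]
for a constant $c$. This step is slightly lossy, since the theorem wants the constant $2$; the slack can be absorbed by adjusting constants, or by centering at $\mu_k'$ and treating the mean-shift part deterministically. Concretely, I would bound
\[
\frac1N\sum_k (X_k'-\mu)(X_k'-\mu)^\top \preceq (1+\eta)\,\frac1N\sum_k (X_k'-\mu_k')(X_k'-\mu_k')^\top + (1+\eta^{-1})\paren{\alpha+O(\eps/n)}\,\mathbb{I}_d ,
\]
using the SoS/AM--GM inequality $(a+b)(a+b)^\top \preceq (1+\eta)aa^\top+(1+\eta^{-1})bb^\top$. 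With this, only the centered sum needs concentration.

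\textbf{Step 2: almost-sure bound.} I will bound $R$. By the truncation assumption $\norm{X_k'-\mu_k'}_2 < 2\sqrt{d/(n\eps)}$ and Fact~\ref{fac:spectral_ell_2}, the centered matrices satisfy $\ov X_k \preceq R\,\mathbb{I}_d$ with $R = O\paren{\frac{d}{nN\eps}}$. If I do not center, an extra $O(\alpha/N)$ appears, which is harmless in the regime considered.

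\textbf{Step 3: apply Fact~\ref{fac:matrix_Chernoff}.} This is the main obstacle, because the exponent depends on how large the target threshold is relative to $\mu_{\max}$.

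\emph{Case $\alpha n\le1$.} The target $2(\frac1n+\alpha)$ is a constant multiple of $\mu_{\max}\approx \frac1n$. I choose $\Delta$ a constant and get the exponent $\Omega(\mu_{\max}/R) = \Omega\paren{\frac{1/n}{d/(nN\eps)}} = \Omega\paren{\frac{\eps N}{d}}$, as in Claim~\ref{cla:covariance-condition_part_2_6}.

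\emph{Case $1<\alpha n\le d/\eps$.} The covariance part is still only $\frac1n$, but the allowed threshold is about $\alpha \gg \frac1n$. So I take $1+\Delta \approx \alpha n$, i.e. $\Delta = \Theta(\alpha n)\ge 1$. Then $\paren{e^\Delta/(1+\Delta)^{1+\Delta}}^{\mu_{\max}/R} \le \exp\paren{-\Omega(\Delta\,\mu_{\max}/R)}$, where I bound $\log(1+\Delta)-1$ below by a constant, valid once $\Delta$ exceeds a constant, with small $\Delta$ falling into the first case. This yields the exponent $\Omega\paren{\alpha n \cdot \frac{\eps N}{d}}$, giving $\delta = d\exp\paren{-\Omega\paren{\frac{\alpha n\eps N}{d}}}$. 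The condition $\alpha n\le d/\eps$ only ensures the parameters stay in the regime where $R$ is dominated by the truncation radius.

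\textbf{Step 4: assemble.} Combining the concentrated centered sum (at most about $\frac{2}{n}$ or $\alpha$, depending on the case) with the deterministic shift term $\alpha+O(\eps/n)$, and choosing $\eta$ and the constants in $\Delta$, gives the bound $\preceq 2(\frac1n+\alpha)\mathbb{I}_d$ up to absolute constants, with the stated failure probabilities.
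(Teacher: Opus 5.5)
Your argument is correct up to absolute constants, but it gets the two failure probabilities by a different mechanism than the paper.

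\textbf{How the paper does it.} The paper never centers at $\mu_k'$. It applies Fact~\ref{fac:matrix_Chernoff} directly to the uncentered matrices $\frac1N(X_k'-\mu)(X_k'-\mu)^\top$. The mean shift therefore sits inside the expectation, giving $\mu_{\max}\le\frac1n+\alpha$, and costs only an extra $\alpha/N$ in $R=O\bigl(\frac{d}{\eps nN}+\frac{\alpha}{N}\bigr)$. With the single choice $\Delta=1$, both cases are read off from $\mu_{\max}/R\gtrsim N(1+\alpha n)/(d/\eps+\alpha n)$. So in the second case the factor $\alpha n$ comes from $\mu_{\max}$ being inflated to order $\alpha$ while $R$ stays of order $d/(\eps nN)$; this is where $\alpha n\le d/\eps$ enters. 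You instead isolate the bias deterministically and extract $\alpha n$ from the large-deviation tail with $\Delta=\Theta(\alpha n)$.

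\textbf{What each route buys.} Your route is more careful about the bias: it tracks the $O(\sqrt{\eps/n})$ truncation shift of $\mu_k'$, which the paper drops when it asserts $\|\mu_k'-\mu\|_2^2<\alpha$. It pays in the constant. The empirical cross term does not vanish, so after the split $(a+b)(a+b)^\top\preceq(1+\eta)aa^\top+(1+\eta^{-1})bb^\top$ the best bound you can certify, even granting $\|\mu_k'-\mu\|_2^2\le\alpha$, is $\bigl(\sqrt{(1+\Delta)/n}+\sqrt{\alpha}\bigr)^2$. At $\alpha=1/n$ this already exceeds $2(\frac1n+\alpha)$ for every $\Delta>0$. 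The paper's $\Delta=1$ step lands exactly on $2\mu_{\max}$, which matches the constant hard-coded in Definition~\ref{def:A}. An honest accounting of the truncation shift degrades that constant too, so in either route the clean fix is to enlarge the constant in the SoS constraint, which the identifiability argument tolerates.

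\textbf{Two smaller points.} First, both you and the paper plug an upper bound on $\mu_{\max}$ into the exponent. This is valid only because the matrix Chernoff bound remains true with $\mu_{\max}$ replaced by any upper bound; Fact~\ref{fac:matrix_Chernoff} as stated uses the exact value. Second, your reading of $X_k$ as $X_k'$ is the right one. The paper's final display asserting that the untruncated sum is dominated by the truncated one is unjustified, and only the truncated points are needed as the satisfiability witness.
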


\begin{proof}
We define
\begin{align}\label{eq:ov_z_k}
    \ov X_k := \frac{1}{N} (X_k' - \mu)(X_k' - \mu)^\top.
\end{align}
By the definition of $X_k'$, we have
\begin{align}\label{eq:norm_zk'_mu_D'}
    \norm{X_k' - \mu}_2 
    = & ~ \norm{X_k' - \mu_k' + \mu_k' - \mu}_2 \notag\\
    \leq & ~ \norm{X_k' - \mu_k'}_2 + \norm{\mu_k' - \mu}_2 \notag\\
    \leq & ~ O\paren{\sqrt{\frac{d}{n\eps}} + \sqrt{\alpha}}.
\end{align}
Therefore, we have
\begin{align*}
    \max_{k \in [N]} \norm{\ov X_k}_2
    = & ~ \max_{k \in [N]} \norm{\frac{1}{N} (X_k' - \mu)(X_k' - \mu)^\top}_2 \notag\\
    = & ~ \frac{1}{N} \max_{k \in [N]} \norm{(X_k' - \mu)(X_k' - \mu)^\top}_2 \notag\\
    = & ~ \frac{1}{N} \max_{k \in [N]} \norm{X_k' - \mu}_2^2 \notag\\
    \leq & ~ O\paren{\frac{d}{\eps nN} + \frac{\alpha}{N}},
\end{align*}
where the first step follows from Eq.~\eqref{eq:ov_z_k}, the third step follows from Fact~\ref{fac:spectral_ell_2}, and the last step follows from Eq.~\eqref{eq:norm_zk'_mu_D'}. 
This implies that $R = O\paren{\frac{d}{\eps nN} + \frac{\alpha}{N}}$ with 
\begin{align*}
    \mathbf{0}_{d \times d} \preceq \ov X_k \preceq R \cdot \mathbb{I}_d
\end{align*}
Additionally, by Fact~\ref{fac:low_covariance}, we can get
\begin{align*}
    \mathrm{Cov}\bracket{X_k'} \preceq \frac{1}{n} \mathbb{I}_d.
\end{align*}
Also, since
\begin{align*}
    \norm{\mu_{k}' - \mu}_2^2 < \alpha,
\end{align*}
we have
\begin{align*}
    \paren{\mu_{k}' - \mu} \paren{\mu_{k}' - \mu}^\top \preceq \alpha \mathbb{I}_d
\end{align*}
Therefore, we can get
\begin{align*}
    & ~ \E\bracket{\frac{1}{N}\sum_{k = 1}^{N} \paren{X_k' - \mu} \paren{X_k' - \mu}^\top} \\
    = & ~ \frac{1}{N}\sum_{k = 1}^{N} \E\bracket{\paren{X_k' - \mu} \paren{X_k' - \mu}^\top}\\
    = & ~ \frac{1}{N}\sum_{k = 1}^{N} \paren{\E\bracket{\paren{X_k' - \mu_{k}'} \paren{X_k' - \mu_{k}'}^\top} + \E\bracket{\paren{\mu_{k}' - \mu} \paren{\mu_{k}' - \mu}^\top}}\\
    = & ~ \mathrm{Cov}\bracket{X_k'} + \alpha \mathbb{I}_d \\
    \preceq & ~ \paren{\frac{1}{n} + \alpha}\mathbb{I}_d,
\end{align*}
This implies the largest eigenvalue of $\E\bracket{\sum_{k = 1}^{n} \ov X_k}$ is $\mu_{\max} \leq \frac{1}{n} + \alpha$.
Choosing $\Delta = 1$ and including these into the matrix Chernoff inequality (Fact~\ref{fac:matrix_Chernoff}), we can get
\begin{align}\label{eq:failure}
    \Pr\bracket{ \left\lVert \sum_k \ov X_k \right\rVert_2 > 2\mu_{\max}}
    = & ~ \Pr\bracket{\left \|\frac{1}{N}\sum_{k = 1}^{N} \paren{X_k' - \mu} \paren{X_k' - \mu}^\top \right \|_2 > 2 \mu_{\max}} \notag\\
    \leq & ~ d \paren{\frac{e}{4}}^{\mu_{\max}/R}\notag\\
    \leq & ~ d \exp\paren{-\Omega \paren{\frac{N \paren{1 + \alpha n}}{\frac{d}{\eps} + \alpha n}}},
\end{align}
which follows from $\mu_{\max} \leq 1$ and $R = O\paren{\frac{d}{\eps nN} + \frac{\alpha}{N}}$.
In particular, in our setting, it is interesting to consider the situation when each user does not have sufficient sample to perform per-user mean estimation. Therefore, we consider $n = o(d)$, or $\alpha n \leq \frac{d}{\eps}$. Thus, it suffices to consider two cases: $\alpha n \leq 1$ and $1 < \alpha n \leq \frac{d}{\eps}$.

\textbf{Case 1 ($\alpha n \leq 1$).}

Suppose $\alpha n \leq 1$. Then, we can see that our failure probability in Eq.~\eqref{eq:failure} is:
\begin{align*}
    d \exp\paren{-\Omega \paren{\frac{N \paren{1 + \alpha n}}{\frac{d}{\eps} + \alpha n}}}
    \leq & ~ d \exp\paren{-\Omega \paren{\frac{N}{\frac{d}{\eps}}}} \\
    = & ~ d \exp\paren{-\Omega \paren{\frac{\eps N}{d}}}.
\end{align*}

\textbf{Case 2 ($1 < \alpha n \leq \frac{d}{\eps}$).}

Suppose $1 < \alpha n \leq \frac{d}{\eps}$. Then, we can see that our failure probability\footnote{At the point of applying this claim, We will set $\eps$ to $\eps'$ from Definition~\ref{def:A}.} in Eq.~\eqref{eq:failure} is:
\begin{align*}
    d \exp\paren{-\Omega \paren{\frac{N \paren{1 + \alpha n}}{\frac{d}{\eps} + \alpha n}}}
    \leq & ~ d \exp\paren{-\Omega \paren{\frac{\alpha n N}{\frac{d}{\eps}}}} \\
    = & ~ d \exp\paren{-\Omega \paren{\frac{\alpha \eps n N}{d}}}.
\end{align*}

Finally, we can get that
\begin{align*}
    \frac{1}{N}\sum_{k = 1}^{N} \paren{X_k - \mu} \paren{X_k - \mu}^\top
    \preceq & ~ \frac{1}{N}\sum_{k = 1}^{N} \paren{X_k' - \mu} \paren{X_k' - \mu}^\top\\
    \preceq & ~ 2 \mu_{\max} \mathbb{I}_d\\
    \preceq & ~ 2\paren{\frac{1}{n} + \alpha} \mathbb{I}_d,
\end{align*}
with probability $1 - \delta$, where $\delta < d \exp\paren{-\Omega \paren{\frac{\eps N}{d}}}$ or $\delta < d \exp\paren{-\Omega \paren{\frac{\alpha \eps n N}{d}}}$, depending on which case applies.
\end{proof}

\begin{fact}\label{fac:from_mu_to_zbar}
    Let $z_1, \dots, z_N \in \R^d$ with empirical mean $\ov z = \frac{1}{N} \sum_{i=1}^N z_i$ and fix any $\mu \in \R^d$. Then, we have
    \begin{align*}
        \frac{1}{N} \sum_{i = 1}^N \paren{z_i - \ov z} \paren{z_i - \ov z}^\top \preceq \frac{1}{N} \sum_{i = 1}^N \paren{z_i - \mu} \paren{z_i - \mu}^\top.
    \end{align*}
\end{fact}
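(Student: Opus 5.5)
The approach is to expand the right-hand side around $\ov z$ and show that it exceeds the left-hand side by a rank-one positive semidefinite matrix. For every $i$ write $z_i - \mu = (z_i - \ov z) + (\ov z - \mu)$ and expand the outer product:
\begin{align*}
    (z_i - \mu)(z_i - \mu)^\top = (z_i - \ov z)(z_i - \ov z)^\top + (z_i - \ov z)(\ov z - \mu)^\top + (\ov z - \mu)(z_i - \ov z)^\top + (\ov z - \mu)(\ov z - \mu)^\top .
\end{align*}
Then I average this identity over $i \in [N]$.

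The key step is that the cross terms cancel after averaging. By the definition of the empirical mean, $\frac{1}{N}\sum_{i=1}^N (z_i - \ov z) = 0$. Hence
\begin{align*}
    \frac{1}{N}\sum_{i=1}^N (z_i - \ov z)(\ov z - \mu)^\top = \Big(\frac{1}{N}\sum_{i=1}^N (z_i - \ov z)\Big)(\ov z - \mu)^\top = \mathbf{0}_{d \times d},
\end{align*}
and the other cross term vanishes in the same way, since it is the transpose. This gives the exact identity
\begin{align*}
    \frac{1}{N}\sum_{i=1}^N (z_i - \mu)(z_i - \mu)^\top = \frac{1}{N}\sum_{i=1}^N (z_i - \ov z)(z_i - \ov z)^\top + (\ov z - \mu)(\ov z - \mu)^\top .
\end{align*}
This is the deterministic, empirical analogue of the computation in Fact~\ref{fac:cov_X'}.

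It remains to check that $(\ov z - \mu)(\ov z - \mu)^\top \succeq 0$. For any $x \in \R^d$ we have $x^\top (\ov z - \mu)(\ov z - \mu)^\top x = \langle x, \ov z - \mu\rangle^2 \ge 0$, exactly as in Eq.~\eqref{eq:psd_mu_X_mu_X'}. Subtracting this PSD matrix from the right-hand side of the identity yields the claimed Loewner inequality. No step here is a real obstacle; the only point needing care is the cancellation of the cross terms, and it holds precisely because $\ov z$ is the empirical mean of the same points.
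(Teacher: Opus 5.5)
Your proof is correct and is essentially the paper's argument: both derive the exact identity $\frac{1}{N}\sum_{i=1}^N (z_i-\mu)(z_i-\mu)^\top = \frac{1}{N}\sum_{i=1}^N (z_i-\ov z)(z_i-\ov z)^\top + (\ov z-\mu)(\ov z-\mu)^\top$ by expanding the outer product and averaging over $i$, and then note that the rank-one remainder is PSD. The only difference is that you expand around $\ov z$, so the cross terms vanish outright and the difference comes out with the correct orientation; the paper expands around $\mu$, and its final display writes the left-hand side minus the right-hand side as $(\ov z-\mu)(\ov z-\mu)^\top$, a sign slip that your ordering avoids.
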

\begin{proof}
Our goal is to show that 
\begin{align*}
    \frac{1}{N} \sum_{i = 1}^N \paren{z_i - \ov z} \paren{z_i - \ov z}^\top - \frac{1}{N} \sum_{i = 1}^N \paren{z_i - \mu} \paren{z_i - \mu}^\top
\end{align*}
is positive semidefinite.
We have
\begin{align*}
(z_i-\ov z)(z_i-\ov z)^\top
&= (z_i-\mu)(z_i-\mu)^\top
- (z_i-\mu)(\ov z-\mu)^\top
- (\ov z-\mu)(z_i-\mu)^\top
+ (\ov z-\mu)(\ov z-\mu)^\top.
\end{align*}
Averaging over $i=1,\dots,N$ and using $\frac{1}{N}\sum_{i=1}^N (z_i-\mu)=\ov z-\mu$, we get
\begin{align*}
& ~ \frac{1}{N} \sum_{i = 1}^N \paren{z_i - \ov z} \paren{z_i - \ov z}^\top \\
= & ~ \frac{1}{N} \sum_{i = 1}^N \paren{z_i - \mu} \paren{z_i - \mu}^\top
- (\ov z-\mu)(\ov z-\mu)^\top
- (\ov z-\mu)(\ov z-\mu)^\top
+ (\ov z-\mu)(\ov z-\mu)^\top \\
= & ~ \frac{1}{N} \sum_{i = 1}^N \paren{z_i - \mu} \paren{z_i - \mu}^\top - (\ov z-\mu)(\ov z-\mu)^\top.
\end{align*}
Therefore, we have
\begin{align*}
    \frac{1}{N} \sum_{i = 1}^N \paren{z_i - \ov z} \paren{z_i - \ov z}^\top - \frac{1}{N} \sum_{i = 1}^N \paren{z_i - \mu} \paren{z_i - \mu}^\top 
= & ~ (\ov z-\mu)(\ov z-\mu)^\top \\
\succeq & ~ {\bf 0}_{d \times d},
\end{align*}
which implies
\[
\frac{1}{N} \sum_{i = 1}^N (z_i - \ov z)(z_i - \ov z)^\top
\preceq
\frac{1}{N} \sum_{i = 1}^N (z_i - \mu)(z_i - \mu)^\top.
\]
\end{proof}

Now, we combine everything together to prove \Cref{lem:bounded_covariance}.

\begin{proof}[Proof of \Cref{lem:bounded_covariance}]
By Claim~\ref{cla:covariance-condition_part_2}, we have shown Eq.~\eqref{eq:B_bounded_covariance_mu} hold with high probability. Combining with Fact~\ref{fac:from_mu_to_zbar}, we can justify that our polynomial constraints from Definition~\ref{def:A} holds with high probability. 

    \textbf{Proof of sample complexity.}
    We apply Claims~\ref{cla:covariance-condition_part_1_4} and \ref{cla:covariance-condition_part_2} with truncation parameter set equal to $\eps'$ from Definition~\ref{def:B}, and the covariance and mean‑shift bounds are to be interpreted under that choice.

    Recall that from Claim~\ref{cla:covariance-condition_part_2}, we know if $\alpha n \leq 1$,
    then we have
        $\delta = d \exp\paren{-\Omega\paren{\frac{\eps' N}{d}}}$\footnote{We want to highlight that in Claim~\ref{cla:covariance-condition_part_2}, we prove that this statement hold for all arbitrary $\eps \in \paren{0, 0.1}$. Since what we truly have in our polynomial constraints (Definition~\ref{def:A}) is $\eps'$, we insert this $\eps'$ in our failure probability.}, which implies that the number of collaborator $N$ satisfies:
        \begin{align*}
            N 
            \geq & ~ \Omega\paren{\frac{d}{\eps'} \log \paren{d / \delta}} \\
            = & ~ \Omega\paren{\frac{d}{\min \cbracket{\max \cbracket{\eps, n \alpha}, \frac{1}{10}}} \log \paren{d / \delta}}\\
            \geq & ~ \Omega\paren{\frac{d}{\max \cbracket{\eps, n \alpha}} \log \paren{d / \delta}},
        \end{align*}
        where the second step follows from the definition of $\eps'$ (see Definition~\ref{def:A}) and the third step follows from $\frac{1}{10}$ is a constant. If $\eps \geq \alpha n$, then we have 
        \begin{align*}
            nN 
            \geq & ~ \Omega\paren{\frac{d}{\eps/n} \log \paren{d / \delta}}.
        \end{align*}
        If $\eps < \alpha n$, then we have
        \begin{align*}
            nN 
            \geq & ~ \Omega\paren{\frac{dn}{\max \cbracket{\eps, n \alpha}} \log \paren{d / \delta}}\\
            = & ~ \Omega\paren{\frac{dn}{ n \alpha} \log \paren{d / \delta}}\\
            = & ~ \Omega\paren{\frac{d}{\alpha} \log \paren{d / \delta}}.
        \end{align*}

        Therefore, we have
        \begin{align*}
            nN \geq \min\cbracket{\Omega\paren{\frac{d}{\eps/n} \log \paren{d / \delta}}, \Omega\paren{\frac{d}{\alpha} \log \paren{d / \delta}}}.
        \end{align*}

        On the other hand, if $1 < \alpha n \leq \frac{d}{\eps}$, then we have $\eps' = \min \cbracket{\max \cbracket{\eps, n \alpha}, \frac{1}{10}} = \min \cbracket{n \alpha, \frac{1}{10}} = \frac{1}{10}$ (see Definition~\ref{def:A}) and $\delta = d \exp\paren{-\Omega\paren{\frac{\alpha n \eps' N}{d}}} = d \exp\paren{-\Omega\paren{\frac{\alpha n N}{d}}}$, which implies that the total number of samples $nN$ satisfies:
        \begin{align*}
            nN 
            \geq & ~ \Omega\paren{\frac{d}{\alpha} \log \paren{d / \delta}}.
        \end{align*}
Therefore, we have shown the satisfiability of our polynomial system $\mathsf{A}$.
\end{proof}

\subsection{Identifiability}

\label{sub:bounded_mean:identify}

After showing the satisfiability our polynomial system $\mathsf{A}$ (Definition~\ref{def:A}), we present how we can use the polynomial system $\mathsf{A}$ to get a SoS algorithm solving Problem~\ref{prob:bounded_mean}.
In this section, we establish the identifiability of the true mean under the polynomial system~$\mathsf{A}$.
After proving in \Cref{sub:bounded_mean:satisfy} that the system is satisfiable with high probability, we now show that any feasible
pseudoexpectation satisfying the same constraints must correspond to an accurate estimate of the true mean~$\mu$.

The argument proceeds in two steps.
First, leveraging the bounded-covariance condition enforced by the SoS constraints,
we show that the pseudoexpectation output $\wh\mu = \wt{\mathbb{E}}[\ov Z]$ cannot deviate significantly from the mean of the clean users. This follows from concentration bounds on the user-level empirical means, whose covariances shrink by a factor of $1/n$.
Second, we bound the deviation between each user’s mean and the population mean~$\mu$
by combining this concentration property with the assumption that uncorrupted users satisfy
$\|\mu_i - \mu\|_2 \le \sqrt{\alpha}$ and with Claim~\ref{cla:covariance-condition_part_3_4}, which shows that the truncation step may slightly shift the mean, i.e., $\norm{\mu_{i'} - \mu_i}_2 \leq O\paren{\sqrt{\gamma\eps}}$. In particular, we set $\gamma = \frac{1}{n}$ since each $z_i$ is the empirical mean of $n$ samples from a distribution with covariance $\preceq \mathbb{I}_d$, allowing us to bound $\mathrm{Cov}[z_i] \preceq \frac{1}{n}\mathbb{I}_d$.

Together, these steps yield the final guarantee:
\begin{align*}
    \norm{\mu - \wh \mu}_2 < O\paren{\sqrt{\frac{\eps}{n}} + \sqrt{\alpha}}
\end{align*}
showing that the SoS-based estimator is both statistically consistent and computationally efficient.
In essence, identifiability ensures that the polynomial constraints not only admit a feasible solution
(satisfiability) but also that every feasible solution must be close to the ground-truth mean~$\mu$,
thereby completing the proof of Theorem~\ref{thm:bounded_mean_upper}.

\begin{proof}[Proof of Lemma~\ref{lem:mean_close}]
    By Lemma~\ref{lem:sos_mean}, using the polynomial system $\mathsf{A}$ as defined in the lemma statement, we can get a SoS algorithm that outputs $\wt\E\bracket{\ov Z}$ in polynomial time satisfying:
    \begin{align}\label{eq:ovz-wtWovZ}
        \norm{\ov z - \wt\E\bracket{\ov Z}}_2 < O\paren{\sqrt{\eps'\paren{\frac{1}{n} + \alpha}}}.
    \end{align}
    By the triangle inequality, we have
    \begin{align}\label{eq:ovz-ovmu}
        \norm{\ov z - \mu}_2
        = & ~ \norm{\ov z - \frac{1}{N}\sum_{i = 1}^N \mu_i + \frac{1}{N}\sum_{i = 1}^N \mu_i - \mu}_2 \notag\\
        \leq & ~ \norm{\ov z - \frac{1}{N}\sum_{i = 1}^N \mu_i}_2 + \norm{\frac{1}{N}\sum_{i = 1}^N \mu_i - \mu}_2.
    \end{align}

    Considering the first term of Eq.~\eqref{eq:ovz-ovmu}, $\norm{\ov z - \frac{1}{N}\sum_{i = 1}^N \mu_i}_2$, we first note that since each $z_i$ is computed via $n$ samples from user $i$,
    \begin{align*}
        \mathrm{Cov}\bracket{z_i} \preceq \frac{1}{n}\mathbb{I}_d.
    \end{align*}

    Therefore, since $z_i$ is truncated and $\mathrm{Cov}\bracket{z_i} \preceq \frac{1}{n}\mathbb{I}_d$, putting this inside Claim~\ref{cla:covariance-condition_part_2_6}, we can get with probability $1 - \delta$,
    \begin{align}\label{eq:ovz-ovmu_1}
        \norm{ \frac{1}{N} \sum_{i = 1}^N z_i - \frac{1}{N} \sum_{i = 1}^N \mu_i }_2 < O\paren{\sqrt{\frac{\eps}{n}}}.
    \end{align}

    Considering the second term, by the assumption $\norm{\mu_i - \mu}_2 < \sqrt{\alpha}$ and Claim~\ref{cla:covariance-condition_part_3_4} with $\gamma = \frac{1}{n}$, we have
    \begin{align}\label{eq:ovz-ovmu_2}
        \norm{\frac{1}{N}\sum_{i = 1}^N \mu_i - \mu}_2 \leq \sqrt{\alpha} + \sqrt{\frac{\eps}{n}}.
    \end{align}

    Combining Eq.~\eqref{eq:ovz-ovmu}, Eq.~\eqref{eq:ovz-ovmu_1}, and Eq.~\eqref{eq:ovz-ovmu_2}, we have
    \begin{align}\label{eq:ovz-mu_final}
        \norm{\ov z - \mu}_2 < O\paren{\sqrt{\alpha} + \sqrt{\frac{\eps}{n}}}.
    \end{align}

    Combining Eq.~\eqref{eq:ovz-wtWovZ} and Eq.~\eqref{eq:ovz-mu_final}, we have
    \begin{align*}
        \norm{\mu - \wt\E\bracket{\ov Z}}_2
        = & ~ \norm{\mu - \ov z + \ov z - \wt\E\bracket{\ov Z}}_2\\
        \leq & ~ \norm{\mu - \ov z}_2 + \norm{\ov z - \wt\E\bracket{\ov Z}}_2\\
        < & ~ O\paren{\sqrt{\eps'\paren{\frac{1}{n} + \alpha}}  + \sqrt{\alpha} + \sqrt{\frac{\eps}{n}}}\\
        < & ~ O\paren{\sqrt{\frac{\eps'}{n}} + \sqrt{\alpha}},
    \end{align*}
    where the second step follows from the triangle inequality and the last step follows from $\eps \leq \eps' = \min \cbracket{\max \cbracket{\eps, n \alpha}, \frac{1}{10}}$ (see Definition~\ref{def:A}).

    By Definition~\ref{def:A}, we have $\eps' = \min \cbracket{\max \cbracket{\eps, n \alpha}, \frac{1}{10}}$, which implies
    \begin{align*}
        \norm{\mu - \wt\E\bracket{\ov Z}}_2
        < & ~ O\paren{\sqrt{\frac{\min \cbracket{\max \cbracket{\eps, n \alpha}, \frac{1}{10}}}{n}} + \sqrt{\alpha}}\\
        \leq & ~ O\paren{\sqrt{\frac{\max \cbracket{\eps, n \alpha}}{n}} + \sqrt{\alpha}},
    \end{align*}
    where the second step follows from $\min \cbracket{\max \cbracket{\eps, n \alpha}, \frac{1}{10}} \leq \max \cbracket{\eps, n \alpha}$.

    \textbf{Case 1.} If $\max \cbracket{\eps, n \alpha} = \eps$, then we get
    \begin{align*}
        \norm{\mu - \wt\E\bracket{\ov Z}}_2
        < O\paren{\sqrt{\frac{\eps}{n}} + \sqrt{\alpha}}
    \end{align*}

    \textbf{Case 2.} If $\max \cbracket{\eps, n \alpha} = n \alpha$, then we have
    \begin{align*}
        O\paren{\sqrt{\frac{\eps'}{n}} + \sqrt{\alpha}}
        = & ~ O\paren{\sqrt{\frac{n \alpha}{n}} + \sqrt{\alpha}} \\
        = & ~ O\paren{\sqrt{\alpha}},
    \end{align*}
    which completes the proof.
\end{proof}

%% file: 2_collaborate_with_arbitrary_corruption.tex
\section{Learning From Corrupted Batches With Fractional User Data Corruption}
\label{sec:arbitrary_alpha}

In this section, our goal is to analyze Problem~\ref{prob:arbitrary_alpha}, where an $\eps$-fraction of users may be adversarially strong contaminated and may
provide adversarially or arbitrarily chosen samples, and for all the $(1 - \eps)$-fraction of uncorrupted users, their $\alpha$-fraction of samples may be adversarially or arbitrarily contaminated. This hierarchical structure---with inter-user corruption parameter~$\eps$ and intra-user corruption
parameter~$\alpha$---makes the setting substantially more challenging and necessitates a refined SoS
formulation that simultaneously enforces consistency across both user and sample levels.
Specifically, in \Cref{sub:arbitrary_alpha:result}, we restate the problem setting of Problem~\ref{prob:arbitrary_alpha} and present our upper-bound result. In \Cref{sub:arbitrary_alpha:alg}, we formally define the polynomial system $\mathsf{B}$ and describe our SoS-based algorithmic framework for achieving this upper bound; we also provide an informal proof sketch explaining why this design attains the optimal error rate. In \Cref{sub:arbitrary_alpha:satisfy}, we establish the satisfiability of the polynomial system, namely the existence of a feasible solution with high probability. Finally, in \Cref{sub:arbitrary_alpha:identify}, we give a formal proof of correctness and derive the sample-complexity guarantee for our main result.

\subsection{Main Result}

\label{sub:arbitrary_alpha:result}

Now, we restate our formal problem setup and its corresponding main result.

\ProbArbitraryAlpha*

A straightforward approach to the problem is to ignore the two-level structure of corruption
and simply treat all the samples across all users as being jointly $(\eps + \alpha)$-corrupted.
Under this view, one can directly apply a standard robust mean estimation algorithm designed
for an $(\eps+\alpha)$-fraction of arbitrary corruptions, yielding an estimator with
accuracy of $O(\sqrt{\eps + \alpha})$.
However, this approach is suboptimal because it fails to exploit the fact that
most users are only \emph{partially} corrupted—each good user has at most an $\alpha$-fraction
of bad samples, while only an $\eps$-fraction of users are entirely unreliable.
By explicitly modeling this hierarchical corruption structure in our SoS formulation,
we are able to isolate the good users and leverage the concentration within their data,
thereby improving the estimation error bound beyond $O(\sqrt{\eps + \alpha})$.
Our main result of this section, which constitutes the upper-bound part of our main theorem (Theorem~\ref{thm:arbitrary_alpha}), is as follows:

\begin{theorem}[Upper bound result of Theorem~\ref{thm:arbitrary_alpha}]\label{thm:arbitrary_alpha_upper}
    Given $\eps \in \paren{0, \frac{1}{18}}$, $\alpha \in \paren{0, \frac{1}{90}}$ satisfying $\eps + 5\alpha < \frac{1}{18}$, and $n = o(d)$ samples from each of $N \in \Z_+$ users satisfying $nN \geq \Omega\paren{\frac{d}{\alpha} \log\paren{d / \delta}}$, there exists a polynomial-time algorithm (Algorithm~\ref{alg:sos_arbitrary_alpha}) solving Problem~\ref{prob:arbitrary_alpha} that outputs $\wh \mu \in \R^d$ and satisfies
    $\norm{\wh\mu - \mu}_2 = O\paren{\sqrt{\frac{\eps}{n}}+ \sqrt{\alpha}}$
with probability at least $1 - \delta$.
\end{theorem}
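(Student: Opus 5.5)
The plan mirrors the proof of Theorem~\ref{thm:bounded_mean_upper}. We split into a \emph{satisfiability} statement for the two-level polynomial system $\mathsf{B}$ (Definition~\ref{def:B}) and an \emph{identifiability} statement for any pseudoexpectation $\wt\E$ satisfying $\mathsf{B}$. The output is $\wh\mu=\wt\E[\ov Y]$ as in Algorithm~\ref{alg:sos_arbitrary_alpha}.

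Throughout, we let $v_{i,j}$ be the clean samples, $x_{i,j}$ the observed ones, $\mathcal{I}_i$ the indicator that user $i$ is good, $\wt\mu_i$ the empirical mean of user $i$'s clean samples, and $\wt\mu$ the mean of all clean samples. We use the remark that $\alpha\ge 1/n$, so $\sqrt{\eps/n}\le\sqrt{\alpha}$ and the target is really $O(\sqrt\alpha)$.

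\textbf{Satisfiability.} We exhibit a feasible point via a two-layer truncation.
\begin{itemize}
\item \emph{First layer.} Apply Claim~\ref{cla:covariance-condition_part_1} with $\gamma=1$ and corruption $\alpha$ to all $nN$ clean samples, giving $X'_{i,j}$. The Chernoff bound gives at most an $\alpha$-fraction changed with probability $1-e^{-\Omega(\alpha nN)}$. Claim~\ref{cla:covariance-condition_part_2_6} then gives empirical covariance $\preceq O(\eye_d)$ with failure $d\exp(-\Omega(\alpha nN/d))$, and Claim~\ref{cla:covariance-condition_part_3} gives a mean shift of $O(\sqrt\alpha)$. We set $W_{i,j}=1$ on samples that are uncorrupted and untruncated, so the crude-refinement constraints hold with at most $(\eps+2\alpha)$ of samples modified. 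A Markov/Chernoff step over users ensures that all but an $O(\eps)$ fraction of good users keep at least $(1-O(\alpha))n$ samples.
\item \emph{Second layer.} Truncate the user means $Y'_i=\frac1n\sum_j X'_{i,j}$ at radius $2\sqrt{d/(n\eps)}$, collapsing whole users, and apply Claim~\ref{cla:covariance-condition_part_2}-style matrix Chernoff. Here the per-user covariance is $\frac1n\eye_d$ and the bound target is $\frac1n+\tau$ with $\tau=\alpha/\eps$. We have $R=O(\frac{d}{\eps nN})$ and $\mu_{\max}\le \frac1n$. We choose $\Delta\approx \tau n$ rather than $\Delta=1$, so that the exponent becomes $\Omega(\Delta\mu_{\max}/R)=\Omega(\tau\eps N n/d)=\Omega(\alpha nN/d)$. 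This matches $nN=\Omega(\frac d\alpha\log(d/\delta))$. A union bound finishes satisfiability.
\end{itemize}

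\textbf{Identifiability.} We follow the template of Lemma~\ref{lem:sos_mean}, but with the split from the technical overview. Write $c=\ov Y-\wt\mu$ and
\[
\|c\|_2^4\le 2\Big(\tfrac1N\textstyle\sum_i(1-U_i\mathcal I_i)\langle Y_i-\wt\mu_i,c\rangle\Big)^2+2\Big(\tfrac1N\textstyle\sum_iU_i\mathcal I_i\langle Y_i-\wt\mu_i,c\rangle\Big)^2 .
\]
\begin{itemize}
\item \emph{First term.} SoS Cauchy--Schwarz bounds it by $O(\eps)\cdot\frac1N\sum_i\langle Y_i-\wt\mu_i,c\rangle^2$. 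The user-level covariance constraint $\preceq(\frac1n+\tau)\eye_d$, together with the clean per-user-mean covariance from satisfiability, gives $O(\eps(\frac1n+\frac\alpha\eps))(\|c\|_2^2+\|c\|_2^4)$, which is $O(\frac\eps n+\alpha)$ times these norms.
\item \emph{Second term.} Expand $Y_i-\wt\mu_i$ as an average over $j$ of $Z_{i,j}-v_{i,j}$ with weights $1-W_{i,j}\indic{v_{i,j}=x_{i,j}}$. Cauchy--Schwarz yields a factor of at most $2\alpha$ (using $W_{i,j}^2=W_{i,j}$, the per-user count constraint when $U_i=1$, and $\mathcal I_i$) times $\frac1{Nn}\sum\langle Z_{i,j}-v_{i,j},c\rangle^2$. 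We then decompose $Z_{i,j}-v_{i,j}=(Z_{i,j}-\ov Z)+(\ov Z-\wt\mu)+(\wt\mu-v_{i,j})$ plus the $c$-bridge. The pieces are controlled respectively by the crude covariance constraint ($\preceq2\eye_d$), by Lemma~\ref{lem:sos_mean} applied to the crude layer ($\|\ov Z-\wt\mu\|^2=O(\eps+\alpha)$, a bounded constant since $\eps+5\alpha<\frac1{18}$), and by the clean empirical covariance. This gives $O(\alpha)(\|c\|^2+\|c\|^4)$.
\end{itemize}
The constant-size coefficient on $\|c\|^4$ is absorbed using the smallness condition $\eps+5\alpha<1/18$. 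We then conclude $\wt\E\|c\|^2=O(\frac\eps n+\alpha)$ exactly as at the end of Lemma~\ref{lem:sos_mean}, via Fact~\ref{fac:pseudoexpectation} Part~3. Finally, $\|\wt\mu-\mu\|_2=O(\sqrt\alpha)$ follows from Lemma~\ref{lem:covariance-condition} on the $nN$ clean samples, and the triangle inequality completes the proof.

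\textbf{Main obstacle.} I expect the hardest step to be the second identifiability term. Every cross term must be bounded by $O(\alpha)$ rather than $O(\eps+\alpha)$, and $\ov Z-\wt\mu$ is only known to be $O(\sqrt{\eps+\alpha})$. The fix I would try first is to ensure this bridge appears only inside a product with the $2\alpha$ Cauchy--Schwarz factor, and to treat it as $O(1)$ there. A secondary risk is verifying that $\Delta\approx\alpha n/\eps$ in the matrix Chernoff step does not inflate the truncation-induced mean bias. That bias is governed by the truncation level $\eps$, not by $\tau$, so it should remain $O(\sqrt{\eps/n})$.
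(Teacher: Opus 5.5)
Your proposal follows the paper's proof essentially step for step: the same two-layer truncation for satisfiability (Lemmas~\ref{lem:C_7} and~\ref{lem:C_11}, including the key choice $\Delta=n\tau$ with $\tau=\alpha/\eps$, which makes the user-level failure probability $d\exp(-\Omega(\alpha nN/d))$), and the same identifiability argument (split along $U_i\mathcal{I}_i$, Cauchy--Schwarz factors $2\eps$ and $2\alpha$, decomposition of $Z_{i,j}-v_{i,j}$, Lemma~\ref{lem:sos_mean} for $\ov Z-\wt\mu$, absorption of the quartic term, final triangle inequality); the one route difference is that the paper handles the cross term $\wt\E[\|\ov Y-\wt\mu\|_2^2\|\ov Z-\wt\mu\|_2^2]$ by pseudo-expectation Cauchy--Schwarz plus a quadratic inequality, while you bound $\|\ov Z-\wt\mu\|_2^2$ inside the SoS proof, which also works via $\{p^2\le c^2\}\vdash\{p\le c\}$. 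One caution: your extra per-user Markov/Chernoff step is not in the paper (which only treats the truncated data as $(\eps+2\alpha)$- and then $(2\eps+2\alpha)$-contaminated), and it does not give an $O(\eps)$ fraction of affected good users when $\alpha n=O(1)$, because the probability that a given good user loses more than $O(\alpha n)$ samples to truncation is only bounded by $e^{-\Omega(\alpha n)}$, which is then a constant; that claim should be dropped or weakened.
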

\begin{proof}
    The proof of this theorem follows from the satisfiability (Lemma~\ref{lem:bounded_covariance_5}) and identifiability (Lemma~\ref{lem:mean_close_5}).
\end{proof}

\subsection{Algorithmic Design and Proof Overview}

We give a detailed proof for the satisfiability (Lemma~\ref{lem:bounded_covariance_5}) and identifiability (Lemma~\ref{lem:mean_close_5}) in \Cref{sub:arbitrary_alpha:satisfy} and \Cref{sub:arbitrary_alpha:identify}. The goal of this section is to state these lemmas, present a proof overview, and explain how we can design our algorithm (Algorithm~\ref{alg:sos_arbitrary_alpha}) stated in the theorem.

\label{sub:arbitrary_alpha:alg}

\paragraph{Algorithmic design.}

Our polynomial time SoS algorithm (Algorithm~\ref{alg:sos_arbitrary_alpha}) is based on the polynomial system $\mathsf{B}$ (defined below in Definition~\ref{def:B}). Since each user’s dataset may itself contain corrupted samples, our algorithm must jointly
handle both user-level and sample-level contamination. We therefore extend the SoS framework by introducing a two-layer polynomial system~$\mathsf{B}$ that captures this hierarchical corruption
structure. At the sample level, the system enforces that, within each user, at least a $(1-\alpha)$-fraction of samples are consistent and their empirical covariance remains bounded, thereby filtering out local outliers. At the user level, it enforces that at least a $(1-\eps)$-fraction of users are globally consistent and that the empirical covariance of their aggregated (cleaned) means is spectrally bounded.
Together, these constraints enable the algorithm to recover the true mean~$\mu$ with improved precision,
leveraging the fact that the clean samples within each user further reduce variance by a factor of~$1/n$.

\begin{definition}[Polynomial system $\mathsf{B}$]\label{def:B}
For each user $i \in [N]$ and sample index $j \in [n]$, the observed data point $x_{i,j} \in \mathbb{R}^d$ may be adversarially corrupted (with $\eps$ user level strong contamination and $\alpha$ sample level strong contamination), and $v_{i,j} \in \mathbb{R}^d$ denotes its latent clean counterpart. The binary decision variable $W_{i,j} \in \{0,1\}$ indicates whether the sample $(i,j)$ is treated as clean, and the binary variable $U_i \in \{0,1\}$ specifies whether user $i$ is globally uncorrupted. The variables $Z_{i,j}$, $W_{i,j}$, and $U_i$ are treated as indeterminates in our SoS polynomial system. Let $Y_i \in \mathbb{R}^d$ represent the cleaned user-level mean used in the higher-level aggregation. Let $B_1, B_2 \in \R^{d \times d}$ be ``slack'' variables.
Given $\eps \in \paren{0, \frac{1}{18}}$, $\alpha \in \paren{0, \frac{1}{90}}$ satisfying $\eps + 5\alpha < \frac{1}{18}$, and $n$ samples from each of $N \in \Z_+$ users, we define the polynomial system $\mathsf{B}$ with the following constraints:
\begin{align}
    \intertext{\textbf{Crude refinement:}}
    Z_i &= \frac{1}{n} \sum_{j=1}^n Z_{i,j} \in \mathbb{R}^d \label{eq:C_5}\\
    \overline Z &= \frac{1}{Nn} \sum_{i=1}^N \sum_{j=1}^n Z_{i,j} \label{eq:C_2}\\
    W_{i,j}^2 &= W_{i,j} \label{eq:C_1}\\
    \sum_{j=1}^n W_{i,j} &= (1-\alpha)n U_i \label{eq:C_8}\\
    W_{i,j}(Z_{i,j}-x_{i,j})&=0 \label{eq:C_3}\\
    \frac{1}{Nn} \sum_{i=1}^N \sum_{j=1}^n (Z_{i,j}-\overline Z)(Z_{i,j}-\overline Z)^\top &= 2 \mathbb{I}_d - B_1 B_1^\top \label{eq:C_7}\\
    \intertext{\textbf{User-level refinement:}}
    U_i^2 &= U_i \label{eq:C_4}\\
    \overline Y &= \frac{1}{N} \sum_{i=1}^N Y_i \label{eq:C_10}\\
    \tau &= \frac{\alpha}{\eps} \\
    \sum_{i=1}^N U_i &= (1-\eps)N \label{eq:C_9}\\
    U_i\paren{Y_i-Z_i}&=0 \label{eq:C_6}\\
    \frac{1}{N} \sum_{i=1}^N \paren{Y_i-\overline Y}\paren{Y_i-\overline Y}^\top &= \paren{\frac{1}{n} + \tau} \mathbb{I}_d - B_2 B_2^\top. \label{eq:C_11}
\end{align}
\end{definition}

For each user $i$, we define the empirical mean of its clean samples as
    $Z_i = \frac{1}{n} \sum_{j=1}^n Z_{i,j} \in \mathbb{R}^d$,
The global averages are defined as
    $\overline Z = \frac{1}{Nn} \sum_{i=1}^N \sum_{j=1}^n Z_{i,j}$
and
    $\overline Y = \frac{1}{N} \sum_{i=1}^N Y_i$,
which correspond, respectively, to the mean of all clean samples and the mean of the user-level aggregated means.
The model captures two levels of corruption. The parameter $\alpha \in \paren{0, \frac{1}{90}}$ denotes the intra-user corruption rate, meaning that for any uncorrupted user, at least a $(1-\alpha)$-fraction of its samples are clean. The parameter $\eps \in (0,\frac{1}{18})$ denotes the inter-user corruption rate, meaning that at least a $(1-\eps)$-fraction of users are globally uncorrupted. 
These assumptions are enforced via polynomial constraints ensuring that 
    $\sum_{j=1}^n W_{i,j} \ge (1-\alpha)n U_i$
for each user $i$, and 
    $\sum_{i=1}^N U_i \ge (1-\eps)N$.

Consistency constraints enforce that samples marked as clean must agree with the observations, namely 
    $W_{i,j}(Z_{i,j}-x_{i,j})=0$,
and that for uncorrupted users the auxiliary variable coincides with the empirical mean, i.e., 
    $U_i(Y_i-Z_i)=0$.
Boolean constraints 
    $W_{i,j}^2 = W_{i,j}$
and 
    $U_i^2 = U_i$
ensure that the corruption indicators are binary. Finally, bounded second-moment conditions are imposed at both the sample and user levels:
    $\frac{1}{Nn} \sum_{i=1}^N \sum_{j=1}^n (Z_{i,j}-\overline Z)(Z_{i,j}-\overline Z)^\top = 2 \mathbb{I}_d - B_1 B_1^\top$,
and
    $\frac{1}{N} \sum_{i=1}^N (Y_i-\overline Y)(Y_i-\overline Y)^\top = \paren{\frac{1}{n} + \tau} \mathbb{I}_d - B_2 B_2^\top$,
where $\tau = \frac{\alpha}{\eps}$ is chosen to alleviate the tightness of this empirical covariance bound while ensuring that it does not negatively influence the error bound.
Together, these constraints define a polynomial system that models hierarchical corruption and enables a Sum-of-Squares relaxation for robust mean estimation in the collaborative setting.

\paragraph{Proof overview.}

Formally, we compute a pseudoexpectation~$\wt{\E}[\ov Y]$ satisfying the constraints of the
polynomial system~$\mathsf{B}$ (Definition~\ref{def:B}) and output it as the estimated mean. This hierarchical
design allows the SoS relaxation to exploit both inter-user and intra-user structure, yielding the
tighter and optimal error bound $\norm{\wh\mu - \mu}_2 = O\paren{\sqrt{\frac{\eps}{n}}+\sqrt{\alpha}}$ (established in Theorem~\ref{thm:arbitrary_alpha_upper}).
The proof of Theorem~\ref{thm:arbitrary_alpha_upper} follows the same two-step structure as in \Cref{sec:bounded_mean},
consisting of satisfiability and identifiability. In the satisfiability step, we show that the
polynomial system~$\mathsf{B}$ admits a feasible solution with high probability under the assumed two-level
corruption model. Intuitively, when users are uncorrupted, setting $W_{i,j}=1$ for all clean samples and $U_i=1$ for all good users yields a valid assignment that satisfies the bounded covariance conditions in \textbf{Part 1} and \textbf{Part 2} of Lemma~\ref{lem:bounded_covariance_5}. 
\begin{lemma}[Satisfiability of $\mathsf{B}$]\label{lem:bounded_covariance_5}
Let the notation be defined as in Definition~\ref{def:B}. Let $\delta \in \paren{0, 0.1}$ be the failure probability. Then, with at least $nN \geq \Omega\paren{\frac{d}{\alpha} \log\paren{d / \delta}}$ total number of samples, we can show that the following equation hold with probability at least $1 - \delta$:
\begin{itemize}
    \item \textbf{Part 1.} 
    $\frac{1}{nN} \sum_{i = 1}^N \sum_{j = 1}^n \paren{Z_{i, j} - \ov Z} \paren{Z_{i, j} - \ov Z}^\top \preceq 2 \mathbb{I}_d$, and
    \item \textbf{Part 2.} 
    $\frac{1}{N} \sum_{i = 1}^N \paren{Y_i - \ov Y} \paren{Y_i - \ov Y}^\top \preceq \paren{\frac{1}{n} + \tau} \mathbb{I}_d$.
\end{itemize}
\end{lemma}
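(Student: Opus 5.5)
We prove satisfiability by exhibiting an explicit feasible assignment built from a two-layer truncation of the (unobserved) clean samples $v_{i,j}$, and then verifying the two covariance bounds by the matrix Chernoff inequality (Fact~\ref{fac:matrix_Chernoff}), as in Claim~\ref{cla:covariance-condition_part_2}.

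\textbf{First layer (sample level).} Apply Claim~\ref{cla:covariance-condition_part_1} with $\gamma=1$ and corruption parameter $\alpha$ to all $nN$ clean samples. This gives truncated variables $X'_{i,j}$ with $\|X'_{i,j}-\mu\|_2\le 2\sqrt{d/\alpha}$. With probability $1-\exp(-\alpha nN/4)$, at most an $\alpha$-fraction of them differ from $v_{i,j}$. Claim~\ref{cla:covariance-condition_part_3} shows the mean moves by $O(\sqrt\alpha)$, and $\mathrm{Cov}[X'_{i,j}]\preceq \mathbb{I}_d$ as in Eq.~\eqref{eq:bound_Cov_X'}. The proof of Eq.~\eqref{eq:part_2_cov} in Claim~\ref{cla:covariance-condition_part_2_6} then runs verbatim with $\eps$ replaced by $\alpha$ and $n$ by $nN$. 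It gives $R=O(d/(\alpha nN))$, $\mu_{\max}\le 1$, and $\Delta=1$, so
\[
\frac{1}{nN}\sum_{i,j}(X'_{i,j}-\mu')(X'_{i,j}-\mu')^\top\preceq 2\mathbb{I}_d
\]
except with probability $d\exp(-\Omega(\alpha nN/d))$. This is where $nN=\Omega(\frac d\alpha\log(d/\delta))$ enters. Set $Z_{i,j}=X'_{i,j}$ and center at $\ov Z$ using Fact~\ref{fac:from_mu_to_zbar}. This gives \textbf{Part 1}.

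Setting $W_{i,j}=\indic{Z_{i,j}=x_{i,j}}$ gives Eq.~\eqref{eq:C_3}. The per-user counts $\sum_j W_{i,j}=(1-\alpha)nU_i$ hold only in aggregate, since truncation is not uniform across users. To fix this, users with too many truncated samples are assigned $U_i=0$ and absorbed into the user-level budget; I would adjust the constants through the slack in $\eps+5\alpha<1/18$. We may also drop extra $W$'s to hit the count exactly.

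\textbf{Second layer (user level).} Let $Y_i'=\frac1n\sum_j X'_{i,j}$. These are independent across users with $\mathrm{Cov}[Y_i']\preceq \frac1n\mathbb{I}_d$. Truncate again with Claim~\ref{cla:covariance-condition_part_1} at $\gamma=1/n$ and parameter $\eps$: a user whose $\|Y_i'-\mu_i'\|_2>2\sqrt{d/(n\eps)}$ is collapsed to the mean. This sets $U_i=0$ for at most an $\eps$-fraction of users, w.h.p.\ $1-\exp(-\eps N/4)$, and $U_i=1$ for the rest, with $Y_i=Z_i$. Then apply matrix Chernoff to $\frac1N(Y_i-\mu)(Y_i-\mu)^\top$ with $\mu_{\max}\le \frac1n$ and $R=O(d/(\eps nN))$.

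\textbf{Main obstacle.} The main difficulty is this user-level step: with target $\frac1n$ alone, the tail is only $d\exp(-\Omega(\eps N/d))$, i.e.\ $nN=\Omega(dn/\eps)$. The fix is to target $\frac1n+\tau$ with $\tau=\alpha/\eps$. I would take $\Delta=\tau n$ in Fact~\ref{fac:matrix_Chernoff}, so $(1+\Delta)\mu_{\max}\le\frac1n+\tau$. When $\Delta\ge 1$ the exponent is $\Omega(\Delta\mu_{\max}/R)=\Omega(\tau\cdot \eps nN/d)=\Omega(\alpha nN/d)$, matching the first layer. When $\tau n<1$ we have $\alpha n<\eps$, and I would handle this case by choosing the truncation level $\eps'=\max\{\eps,\alpha n\}$ as in Definition~\ref{def:A}, which yields the same exponent.

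A union bound over the four events gives overall failure probability $\le\delta$. Re-centering at $\ov Y$ via Fact~\ref{fac:from_mu_to_zbar} then gives \textbf{Part 2}.
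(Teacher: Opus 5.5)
Your main line is the paper's proof. You truncate clean samples at radius $2\sqrt{d/\alpha}$ and apply matrix Chernoff with $\Delta=1$ (Lemma~\ref{lem:C_7}). You then average within users, truncate users at radius $2\sqrt{d/(n\eps)}$, and apply matrix Chernoff with $\mu_{\max}\le 1/n$, $R=O(d/(Nn\eps))$ and $\Delta=n\tau$ (Lemma~\ref{lem:C_11}). The one step that would fail is your treatment of the regime $n\tau<1$. There $\alpha n<\eps$, so $\eps'=\max\{\eps,\alpha n\}=\eps$, and switching the truncation level to $\eps'$ is a no-op. With $\Delta=n\tau<1$, Fact~\ref{fac:matrix_Chernoff} only gives an exponent of order $\Delta^2\mu_{\max}/R=n^2\alpha^2N/(\eps d)$. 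This is smaller than $\alpha nN/d$ by the factor $n\alpha/\eps<1$, so the stated sample complexity would not follow. The correct resolution is that this regime is empty. In Problem~\ref{prob:arbitrary_alpha}, $\alpha>0$ forces $\alpha n\ge 1$ (see the remark after Theorem~\ref{thm:arbitrary_alpha}), so $\Delta=n\alpha/\eps\ge 1/\eps>18$ and your $\Delta\ge 1$ branch always applies. This is also what makes the paper's simplification to $(e/(1+\Delta))^{\Delta\mu_{\max}/R}$ an actual decay.

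Two smaller points. First, in the user-level Chernoff step, center at $\mu_{Y'}=\mathbb{E}[Y_i']$ as the paper does, not at the target mean $\mu$. Otherwise the expectation picks up $(\mu_{Y'}-\mu)(\mu_{Y'}-\mu)^\top$, of norm up to $O(\alpha+\eps/n)$. Then $\mu_{\max}\le 1/n$ fails and $(1+n\tau)\mu_{\max}$ no longer lands at $\frac1n+\tau$. Fact~\ref{fac:from_mu_to_zbar} lets you recenter at $\overline{Y}$ from any point, so nothing is lost. Second, your bookkeeping for $\sum_j W_{i,j}=(1-\alpha)nU_i$ goes beyond what the paper proves here. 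The paper only verifies the two covariance bounds and notes that the truncated variables are an $(\eps+2\alpha)$-, respectively $(2\eps+2\alpha)$-, contamination of the observations. As sketched, your argument would not close anyway. When $\alpha n=O(1)$, a constant fraction of good users (not an $O(\eps)$ fraction) can contain a truncated sample. The slack in $\eps+5\alpha<\frac{1}{18}$ also does not relax equality constraints written in terms of $\eps$ and $\alpha$.
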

\begin{proof}
Apply Lemma~\ref{lem:C_7} with failure probability $\delta/2$ to obtain \textbf{Part 1} with probability at least $1-\delta/2$.
Apply Lemma~\ref{lem:C_11} with failure probability $\delta/2$ to obtain \textbf{Part 2} with probability at least $1-\delta/2$.
By the union bound, both parts hold simultaneously with probability at least $1-\delta$.
\end{proof}
\begin{lemma}[Identifiability of $\mathsf{B}$]\label{lem:mean_close_5}
If $\wt{\E}$ satisfies the SoS constraints $\mathsf{B}$, then with probability at least $1 - \delta$, $\wh{\mu} = \wt{\E}[\overline{Y}]$ satisfies 
    $\norm{\wh \mu - \mu}_2 < O\paren{\sqrt{\frac{\eps}{n}}+  \sqrt{\alpha}}$.
\end{lemma}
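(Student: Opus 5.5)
The identifiability statement will follow from a degree-$O(1)$ SoS inequality of the form $\norm{\ov Y - \wt\mu}_2^4 \le O(\eps/n + \alpha)\,\norm{\ov Y-\wt\mu}_2^2 + O(\eps+\alpha)\norm{\ov Y-\wt\mu}_2^4$ (plus lower-order cross terms), where $\wt\mu$ is the empirical mean of the (truncated) clean samples $v_{i,j}$ and $\wt\mu_i$ is the clean empirical mean of user $i$. Given this inequality, the pseudo-expectation argument used at the end of Lemma~\ref{lem:sos_mean} carries over: pseudo-Cauchy–Schwarz gives $\wt\E[\norm{\ov Y-\wt\mu}_2^2]^2\le \wt\E[\norm{\ov Y-\wt\mu}_2^4]$, and since $\eps+5\alpha<1/18$ the $\norm{\cdot}^4$ coefficient is below $1$, so we get $\wt\E[\norm{\ov Y-\wt\mu}_2^2]=O(\eps/n+\alpha)$. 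Part 3 of Fact~\ref{fac:pseudoexpectation} then yields $\norm{\wt\E[\ov Y]-\wt\mu}_2 = O(\sqrt{\eps/n}+\sqrt\alpha)$. Finally, $\norm{\wt\mu-\mu}_2=O(\sqrt\alpha)$ follows from the truncation machinery: Claims~\ref{cla:covariance-condition_part_1}, \ref{cla:covariance-condition_part_3} and \ref{cla:covariance-condition_part_2_6} with $nN=\Omega(\frac d\alpha\log(d/\delta))$ total clean samples and truncation level $\alpha$. The triangle inequality then finishes the proof.

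To derive the SoS inequality, set $c:=\ov Y-\wt\mu$ and write $\norm{c}_2^2=\frac1N\sum_i\langle Y_i-\wt\mu_i,c\rangle$, using that the average of the $\wt\mu_i$ is $\wt\mu$. Let $\mathcal I_i$ indicate that user $i$ is good. Split the sum with $U_i\mathcal I_i$ and $1-U_i\mathcal I_i$ and apply $(a+b)^2\le 2a^2+2b^2$, as in the overview.

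\textbf{Untrusted term.} Apply SoS Cauchy–Schwarz (Fact~\ref{fac:cauchy_schwarz}). The first factor is $\frac1N\sum_i(1-U_i\mathcal I_i)^2\le \frac1N\sum_i(1-U_i)+\frac1N\sum_i(1-\mathcal I_i)\le 2\eps$, using the Booleanity of $U_i$ and \eqref{eq:C_9}. The second factor is $\frac1N\sum_i\langle Y_i-\wt\mu_i,c\rangle^2$. Decompose $Y_i-\wt\mu_i=(Y_i-\ov Y)+c+(\wt\mu-\wt\mu_i)$ and apply Fact~\ref{fac:xyz_square}. The first piece is bounded by $(\frac1n+\tau)\norm c_2^2$ via \eqref{eq:C_11} and Fact~\ref{fac:sos_operator_norm}. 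The third piece is bounded by $O(\frac1n+\tau)\norm c_2^2$ via a high-probability bound on the covariance of the clean user means, which is the same matrix-Chernoff event as Lemma~\ref{lem:C_11}. This gives a contribution $O(\eps(\frac1n+\frac\alpha\eps))\norm c^2+O(\eps)\norm c^4=O(\eps/n+\alpha)\norm c^2+O(\eps)\norm c^4$. This is where the choice $\tau=\alpha/\eps$ is harmless.

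\textbf{Trusted term.} When $U_i\mathcal I_i=1$, we have $Y_i=Z_i$, so $\langle Y_i-\wt\mu_i,c\rangle=\frac1n\sum_j U_i\mathcal I_i(1-W_{i,j}\indic{v_{i,j}=x_{i,j}})\langle Z_{i,j}-v_{i,j},c\rangle$, because $W_{i,j}(Z_{i,j}-x_{i,j})=0$. Cauchy–Schwarz over all pairs $(i,j)$ gives a first factor of at most $\alpha+\alpha=2\alpha$, by \eqref{eq:C_8} and the fact that good users have at most $\alpha n$ corrupted samples. The second factor is $\frac1{Nn}\sum_{i,j}\langle Z_{i,j}-v_{i,j},c\rangle^2$. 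Decompose $Z_{i,j}-v_{i,j}=(Z_{i,j}-\ov Z)+(\ov Z-\ov Y)+c+(\wt\mu-v_{i,j})$. The pieces are controlled as follows:
\begin{itemize}
\item the crude covariance constraint \eqref{eq:C_7} gives $2\norm c^2$;
\item the clean sample covariance, by Lemma~\ref{lem:C_7}, gives $O(1)\norm c^2$;
\item the $c$ piece gives $\norm c^4$;
\item the bridge term $\langle\ov Z-\ov Y,c\rangle^2\le\norm{\ov Z-\ov Y}^2\norm c^2$ is the main obstacle.
\end{itemize}

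For the bridge term, my first attempt is to bound $\norm{\ov Z-\ov Y}_2^2\le 2\norm{\ov Z-\wt\mu}^2+2\norm c^2$. Then run the proof of Lemma~\ref{lem:sos_mean} inside SoS on the crude layer alone, treating it as an $(\eps+2\alpha)$-corrupted sample set with covariance $\preceq 2\eye_d$, to get $\norm{\ov Z-\wt\mu}^4\le O(\eps+\alpha)(\norm{\ov Z-\wt\mu}^2+\norm{\ov Z-\wt\mu}^4)$. This needs care, since that lemma produces a pseudo-expectation bound rather than a pointwise SoS bound. I would therefore carry $\norm{\ov Z-\wt\mu}^2$ as a second tracked quantity and combine the two inequalities at the level of $\wt\E$. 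Multiplying by the factor $2\alpha$ gives a trusted contribution of $O(\alpha)\norm c^2+O(\alpha)\norm c^4+O(\alpha(\eps+\alpha))$, which closes under $\eps+5\alpha<1/18$.
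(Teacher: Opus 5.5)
Your plan is essentially the paper's proof. It uses the same $U_i\mathcal I_i$ split of $\norm{\ov Y-\wt\mu}_2^4$, Cauchy--Schwarz with weight factors $2\eps$ and $2\alpha$, the same decomposition of $Z_{i,j}-v_{i,j}$ with the bridge term handled through the auxiliary inequality for $\norm{\ov Z-\wt\mu}_2^4$ from the proof of Lemma~\ref{lem:sos_mean} and combined at the level of $\wt\E$, and the same closing triangle inequality with $\norm{\wt\mu-\mu}_2=O(\sqrt\alpha)$. Two things differ, and both are refinements rather than a new route:
\begin{itemize}
\item You work with truncated clean samples and bound the clean user-mean covariance by $O(\frac1n+\tau)$ using the event of Lemma~\ref{lem:C_11}, where the paper asserts $\frac2n$ without justifying it at the stated sample complexity. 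This is sound, provided you set $\mathcal I_i=0$ for users collapsed by the user-level truncation; the weight bounds then become $3\eps$ and $3\alpha$, and the argument still closes under $\eps+5\alpha<\frac1{18}$.
\item Your cross term is written ambiguously. Keep it as $O(\alpha(\eps+\alpha))\sqrt{\wt\E[\norm{\ov Y-\wt\mu}_2^4]}$, linear in that quantity, and solve the resulting quadratic as the paper does. If you treat it as a free additive constant instead, you only get an extra $(\alpha\eps)^{1/4}$ in the error.
\end{itemize}
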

    The proof of this lemma is deferred to \Cref{sub:arbitrary_alpha:identify}.
Combining the satisfiability and identifiability steps completes the proof of
Theorem~\ref{thm:arbitrary_alpha_upper}, establishing that our SoS-based estimator achieves an accurate mean
recovery under multi-level corruption.

\subsection{Satisfiability}
\label{sub:arbitrary_alpha:satisfy}

In this section, we show that our polynomial system $\mathsf{B}$ is satisfiable (Lemma~\ref{lem:bounded_covariance_5}), namely it holds with high probability. We prove this under Lemma~\ref{lem:C_7} and Lemma~\ref{lem:C_11}.

To show the satisfiability of our polynomial system $\mathsf{B}$, we need to prove that Eq.~\eqref{eq:C_7} and Eq.~\eqref{eq:C_11} can be satisfied with high probability given the corrupted samples. 
To prove them, we can adapt the same high-probability
covariance concentration arguments as those used in Claims~\ref{cla:covariance-condition_part_1_4}, \ref{cla:covariance-condition_part_3_4}, and \ref{cla:covariance-condition_part_2}.
Regarding Eq.~\eqref{eq:C_7}, it claims that the empirical covariance of the aggregated clean samples is spectrally bounded by $2 \mathbb{I}_d$ with high probability.
To prove this, we use the truncation and concentration method introduced in Claims~\ref{cla:covariance-condition_part_1_4}.
As truncations only apply to the random variables, we define $X_{i, j} \in \R^d$ as a random variable representing the raw data point $x_{i, j} \in \R^d$. 
\begin{lemma}\label{lem:C_7}
    Let the notation be defined as in Definition~\ref{def:B}. Let $\delta \in \paren{0, 0.1}$ be the failure probability. Then, with at least $nN 
    \geq \Omega\paren{\frac{d}{\alpha} \log\paren{d / \delta}}$ total number of samples, the \textbf{Part 1} of Lemma~\ref{lem:bounded_covariance_5} holds with probability at least $1 - \delta$.
\end{lemma}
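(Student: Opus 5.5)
We prove Lemma~\ref{lem:C_7} by exhibiting a feasible assignment of the crude-refinement variables built from truncated clean samples. The approach is the one in Claims~\ref{cla:covariance-condition_part_1_4}, \ref{cla:covariance-condition_part_3_4} and \ref{cla:covariance-condition_part_2}, applied at the sample level with truncation parameter $\alpha$.

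\textbf{Construction.} For every $(i,j)$, let $v_{i,j}$ be the clean sample from $P$. Define the truncated variable $X'_{i,j}=v_{i,j}$ if $\norm{v_{i,j}-\mu}_2\le 2\sqrt{d/\alpha}$, and $X'_{i,j}=\mu$ otherwise. Set $Z_{i,j}:=X'_{i,j}$ for every pair.
\begin{itemize}
    \item By Claim~\ref{cla:covariance-condition_part_1} with $\gamma=1$ and truncation level $\alpha$ (applied to all $nN$ i.i.d.\ samples), at most an $\alpha$-fraction of all $nN$ samples are truncated, except with probability $\exp(-\alpha nN/4)$.
    \item If we need a per-user statement, namely at most a few truncations inside each good user, so that the counting constraints \eqref{eq:C_8} can be met with $W_{i,j}=1$ exactly on samples with $x_{i,j}=v_{i,j}=X'_{i,j}$, I would note this subtlety. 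Part 1 itself only concerns the covariance of $\{Z_{i,j}\}$, so the mismatch between which indices are selected is handled in the identifiability argument. The truncated points are simply treated as an additional $\alpha$-fraction of corruption, giving $\eps+2\alpha$ as in the overview.
\end{itemize}

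\textbf{Covariance bound.} We bound $\frac{1}{nN}\sum_{i,j}(X'_{i,j}-\mu)(X'_{i,j}-\mu)^\top$ with the matrix Chernoff inequality (Fact~\ref{fac:matrix_Chernoff}), using the summands $\ov X_{i,j}=\frac{1}{nN}(X'_{i,j}-\mu)(X'_{i,j}-\mu)^\top$.
\begin{itemize}
    \item Expected sum: by Fact~\ref{fac:cov_X'} and the fact that truncation to $\mu$ only shrinks second moments about $\mu$, we get $\E[(X'-\mu)(X'-\mu)^\top]\preceq \E[(X-\mu)(X-\mu)^\top]\preceq \mathbb{I}_d$. Hence $\mu_{\max}\le 1$.
    \item Almost-sure bound: each summand satisfies $0\preceq \ov X_{i,j}\preceq R\,\mathbb{I}_d$ with $R=4d/(\alpha nN)$.
    \item Tail: choosing $\Delta=1$ (or $\Delta=2/\mu_{\max}$ as in Claim~\ref{cla:covariance-condition_part_2_6} to handle small $\mu_{\max}$) gives
\[
\Pr\Bigl[\bigl\|\textstyle\sum \ov X_{i,j}\bigr\|_2>2\Bigr]\le d\exp\bigl(-\Omega(\alpha nN/d)\bigr).
\]
\end{itemize}
Then Fact~\ref{fac:from_mu_to_zbar} converts the centering at $\mu$ into centering at $\ov Z$, yielding Part 1. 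The slack matrix $B_1$ exists because $2\mathbb{I}_d$ minus the empirical covariance is PSD.

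\textbf{Sample complexity and main obstacle.} Setting $d\exp(-\Omega(\alpha nN/d))+\exp(-\alpha nN/4)\le\delta$ gives $nN\ge\Omega(\frac{d}{\alpha}\log(d/\delta))$. The main obstacle is the $\Delta$ choice when $\mu_{\max}$ is much smaller than $1$. There the bound $d(e/4)^{\mu_{\max}/R}$ degrades, so we must instead target the absolute threshold $2$. We do this as in Claim~\ref{cla:covariance-condition_part_2_6}: take $(1+\Delta)\mu_{\max}=2$, so that the exponent becomes $\Delta\mu_{\max}/R\ge \mu_{\max}... \gtrsim 1/R$. This keeps the exponent of order $\alpha nN/d$ uniformly.
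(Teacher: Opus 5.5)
Your proposal is correct and is essentially the paper's proof: you truncate the clean samples at radius $2\sqrt{d/\alpha}$, bound the truncated fraction by Chernoff, and apply matrix Chernoff with $R=O(d/(\alpha nN))$ and $\mu_{\max}\le 1$. You also add two genuine small improvements: centering at $\mu$, so that $R$ needs no mean-shift bound, then passing to $\overline{Z}$ via Fact~\ref{fac:from_mu_to_zbar}; and noticing that the paper's step of plugging $\mu_{\max}\le 1$ into $d(e/4)^{\mu_{\max}/R}$ goes in the wrong direction. One slip in your fix: with $(1+\Delta)\mu_{\max}=2$ the simplified form $d\,(e/(1+\Delta))^{\Delta\mu_{\max}/R}$ is vacuous once $\mu_{\max}>2/e$ (and $\Delta=2/\mu_{\max}$ as in Claim~\ref{cla:covariance-condition_part_2_6} only yields $3\mathbb{I}_d$), so instead use the exact exponent $\frac{1}{R}\bigl(2-\mu_{\max}-2\ln(2/\mu_{\max})\bigr)$, which is increasing in $\mu_{\max}$ and hence at most $-(2\ln 2-1)/R$ for $\mu_{\max}\le 1$.
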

\begin{proof}
We define $\mu := \E\bracket{v_{i,j}}$ truncated random variables
\begin{align}\label{eq:rv_C_7}
   X'_{i,j} :=
   \begin{cases}
   v_{i,j}, & \text{if } \norm{v_{i,j} - \mu}_2 \le 2\sqrt{\frac{d}{\alpha}},\\
   \mu, & \text{otherwise.}
   \end{cases}
\end{align}
The probability that $X'_{i,j}\neq v_{i,j}$ exceeds $\alpha$ is exponentially small by
   a Chernoff bound (by choosing $\gamma = 1$ and inserting our $\alpha$ into the place of $\eps$ in Claim~\ref{cla:covariance-condition_part_1_4}). In addition, by Claim~\ref{cla:covariance-condition_part_3_4}, we can get that the shift in mean is tightly bounded:
   \begin{align}\label{eq:tightly_bounded}
       \norm{\mu_{X'} - \mu}_2 \leq O\paren{\sqrt{\alpha}},
   \end{align}
   where $\mu_{X'} = \E\bracket{X'_{i,j}}$.
Since each truncated variable satisfies $\mathrm{Cov}\bracket{X'_{i,j}} \preceq \mathbb{I}_d$, the population covariance of the truncated variables is bounded by $\mathbb{I}_d$.
We intend to apply the matrix Chernoff inequality (see Fact~\ref{fac:matrix_Chernoff}). Thus, we define
\begin{align}\label{eq:Xij'}
    \ov X_{i, j}' \eqdef \frac{1}{nN} (X'_{i,j}-\mu_{X'})(X'_{i,j}-\mu_{X'})^{\top}.
\end{align}
By the definition of $X'_{i,j}$ (Eq.~\eqref{eq:rv_C_7}), we know that for all $i \in [N]$ and $j \in [n]$,
\begin{align}\label{eq:Xij'_muij'}
    \norm{X_{i, j}' - \mu_{X'}}_2 
    \leq & ~ \norm{X_{i, j}' - \mu}_2 + \norm{\mu - \mu_{X'}}_2 \notag\\
    \leq & ~ O\paren{\sqrt{\frac{d}{\alpha}} + \sqrt{\alpha}} \notag\\
    \leq & ~ O\paren{\sqrt{\frac{d}{\alpha}}},
\end{align}
where the first step follows from the triangle inequality, the second step follows from Eq.~\eqref{eq:rv_C_7} and Eq.~\eqref{eq:tightly_bounded}, and the last step follows from $\alpha < 1$ and $d > 1$.
Additionally, we can get
\begin{align}\label{eq:Xij'_muij'_2}
    \max_{i \in [N], j \in [n]} \norm{\ov X_{i, j}'}_2
    = & ~ \max_{i \in [N], j \in [n]} \norm{\frac{1}{nN} \paren{X'_{i,j}-\mu_{X'}}\paren{X'_{i,j}-\mu_{X'}}^{\top}}_2 \notag\\
    = & ~ \frac{1}{nN} \max_{i \in [N], j \in [n]} \norm{\paren{X'_{i,j}-\mu_{X'}}\paren{X'_{i,j}-\mu_{X'}}^{\top}}_2 \notag\\
    = & ~ \frac{1}{nN} \max_{i \in [N], j \in [n]} \norm{X'_{i,j}-\mu_{X'}}_2^2 \notag\\
    \leq & ~ O\paren{\frac{d}{nN\alpha}},
\end{align}
where the first step follows from Eq.~\eqref{eq:Xij'}, the third step follows from Fact~\ref{fac:spectral_ell_2}, and the last step follows from Eq.~\eqref{eq:Xij'_muij'}.
This implies that $R = O\paren{\frac{d}{nN\alpha}}$ with 
\begin{align*}
    \mathbf{0}_{d \times d} \preceq \ov X_{i, j}' \preceq R \cdot \mathbb{I}_d.
\end{align*}
Furthermore, we can get
\begin{align*}
    \E\bracket{\frac{1}{nN}\sum_{i = 1}^N \sum_{j = 1}^n\paren{X'_{i,j}-\mu_{X'}}\paren{X'_{i,j}-\mu_{X'}}^{\top}} 
    = & ~ \frac{1}{nN}\sum_{i = 1}^N \sum_{j = 1}^n \E\bracket{\paren{X'_{i,j}-\mu_{X'}}\paren{X'_{i,j}-\mu_{X'}}^{\top}} \\
    = & ~ \frac{1}{nN}\sum_{i = 1}^N \sum_{j = 1}^n \mathrm{Cov}\bracket{X'_{i,j}} \\
    \preceq & ~ \mathbb{I}_d,
\end{align*}
where the first step follows from the linearity of expectation, the second step follows from the definition of covariance, and the last step follows from $\mathrm{Cov}\bracket{X'_{i,j}} \preceq \mathbb{I}_d$.
This implies the largest eigenvalue of $\E\bracket{\sum_{k = 1}^{n} \ov X_{i, j}'}$ is $\mu_{\max} \leq 1$.
Choosing $\Delta = 1$ and including these into the matrix Chernoff inequality (Fact~\ref{fac:matrix_Chernoff}), we can get
\begin{align*}
    \Pr\bracket{
   \norm{\frac{1}{nN}\sum_{i = 1}^N \sum_{j = 1}^n\paren{X'_{i,j}-\mu_{X'}}\paren{X'_{i,j}-\mu_{X'}}^{\top}}_2 > 2}
   \leq & ~ d\paren{\frac{e}{4}}^{\mu_{\max} / R}\\
   \leq & ~ d\exp\paren{-\Omega\paren{\frac{Nn\alpha}{d}}},
\end{align*}
where the second step follows from plugging in $\mu_{\max} \leq 1$ and $R = O\paren{\frac{d}{nN\alpha}}$.
This implies that with probability at least $1 - d\exp\paren{-\Omega\paren{\frac{Nn\alpha}{d}}}$, we have
\begin{align*}
    \frac{1}{nN}\sum_{i = 1}^N \sum_{j = 1}^n\paren{X'_{i,j}-\mu_{X'}}\paren{X'_{i,j}-\mu_{X'}}^{\top} 
    \preceq 2 \mathbb{I}_d,
\end{align*}
which supports that \textbf{Part 1} of Lemma~\ref{lem:bounded_covariance_5} holds with probability at least $1 - d\exp\paren{-\Omega\paren{\frac{Nn\alpha}{d}}}$.
To ensure that our failure probability $d\exp\paren{-\Omega\paren{\frac{Nn\alpha}{d}}} \leq \delta$, for all arbitrary $\delta \in \paren{0, 0.1}$, we need our total sample $nN$ satisfies
\begin{align*}
    nN 
    \geq \Omega\paren{\frac{d}{\alpha} \log\paren{d / \delta}}.
\end{align*}
Finally, since $v_{i,j} \in \R^d$ (the clean samples) differs from $x_{i,j} \in \R^d$ (corrupted samples that we receive) on at most an $\eps + \alpha$-fraction of indices (due to the two levels of strong contamination by Problem~\ref{prob:arbitrary_alpha}), and $X'_{i,j}$ differs from $v_{i,j}$ on at most an $\alpha$-fraction of indices (due to Claim~\ref{cla:covariance-condition_part_1_4}), our truncated random variable $X'_{i,j}$ can be viewed as $\eps + 2\alpha$ strong contamination from corrupted samples that we receive $x_{i,j} \in \R^d$.
\end{proof}

Now, considering Eq.~\eqref{eq:C_11}, we use $Y_i \in \R^d$ to represent the cleaned user-level mean.
\begin{lemma}\label{lem:C_11}
    Let the notation be defined as in Definition~\ref{def:B}. Let $\delta \in \paren{0, 0.1}$ be the failure probability. Then, with at least $nN 
    \geq \Omega\paren{\frac{d}{\alpha} \log\paren{d / \delta}}$ total number of samples, the \textbf{Part 2} of Lemma~\ref{lem:bounded_covariance_5} holds with probability at least $1 - \delta$.
\end{lemma}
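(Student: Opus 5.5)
We exhibit a feasible assignment for the user-level constraints and show that its empirical covariance obeys Eq.~\eqref{eq:C_11} with the stated failure probability. The argument mirrors Claim~\ref{cla:covariance-condition_part_2}, applied at the user level. Work with the first-layer truncated samples $X'_{i,j}$ from the proof of Lemma~\ref{lem:C_7}. For each user set $M_i := \frac{1}{n}\sum_{j=1}^n X'_{i,j}$. These are independent across users, have common mean $\mu_{X'}$, and by Fact~\ref{fac:low_covariance} satisfy $\mathrm{Cov}[M_i] \preceq \frac{1}{n}\mathbb{I}_d$. Apply the second truncation to whole users, via Claim~\ref{cla:covariance-condition_part_1} with $\gamma = \frac1n$ and truncation level $\eps$:
\[
Y'_i := M_i \text{ if } \norm{M_i - \mu_{X'}}_2 \le 2\sqrt{d/(n\eps)}, \qquad Y'_i := \mu_{X'} \text{ otherwise}.
\]
With probability $1-\exp(-\eps N/4)$, at most an $\eps$-fraction of users are collapsed. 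Claim~\ref{cla:covariance-condition_part_3} gives $\norm{\E[Y'_i]-\mu_{X'}}_2 = O(\sqrt{\eps/n})$, and Fact~\ref{fac:cov_X'} keeps $\mathrm{Cov}[Y'_i] \preceq \frac1n \mathbb{I}_d$. The assignment is $Y_i = Y'_i$, with $U_i = 1$ on good, uncollapsed users (and padding so that $\sum_i U_i = (1-\eps)N$) and $W_{i,j}$ matching the first layer. The constraint $U_i(Y_i - Z_i) = 0$ then holds.

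Next, bound $\frac1N\sum_i (Y'_i-\mu_{X'})(Y'_i-\mu_{X'})^\top$ with the matrix Chernoff bound (Fact~\ref{fac:matrix_Chernoff}). Take summands $\ov Y_i := \frac1N (Y'_i-\mu_{X'})(Y'_i-\mu_{X'})^\top$. Then $R = O\paren{\frac{d}{\eps n N}}$, and the expectation has spectral norm at most $\frac1n + O(\frac{\eps}{n}) \le \frac{2}{n}$. The target threshold is $\frac1n+\tau$ with $\tau = \alpha/\eps$, not $2\mu_{\max}$. The covariance bound therefore needs a version of the constraint with a constant factor: either the constant is absorbed into the definition, or we bound the centered quantity with $\frac{1}{2n}$-scale slack. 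I will treat constants loosely as the paper does. Finally, Fact~\ref{fac:from_mu_to_zbar} passes from centering at $\mu_{X'}$ to centering at $\ov Y$.

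The main obstacle is the failure probability. With $\Delta = 1$ one gets $d\exp(-\Omega(\mu_{\max}/R)) = d\exp(-\Omega(\eps N/d))$, which would force $nN \gtrsim \frac{dn}{\eps}\log(d/\delta)$, not the desired $\frac{d}{\alpha}\log(d/\delta)$. The fix is to use the slack $\tau$: choose $\Delta$ with $(1+\Delta)\mu_{\max} = \frac1n + \tau$, so $\Delta \approx \tau n = \alpha n/\eps$. When $\Delta \ge 1$, the large-deviation form of the Chernoff bound applies and the exponent is $\Omega(\Delta\mu_{\max}/R)$, up to a $\log(1+\Delta)$ factor we can drop. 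This gives
\[
\Omega\paren{\frac{\alpha n}{\eps}\cdot\frac1n\cdot\frac{\eps n N}{d}} = \Omega\paren{\frac{\alpha n N}{d}}.
\]
The failure probability is then $d\exp(-\Omega(\alpha nN/d))$, which is at most $\delta/2$ once $nN \ge \Omega(\frac d\alpha\log(d/\delta))$. Since $\alpha \ge 1/n$, we have $\alpha n/\eps \ge 1$ whenever $\eps \le 1$, so the regime $\Delta \ge 1$ indeed holds.

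Finally, take a union bound over the three events: the first-layer event from Lemma~\ref{lem:C_7}, the truncation-count Chernoff bound $\exp(-\eps N/4)$, and the matrix Chernoff event. The truncation-count term also needs checking. It requires $N \gtrsim \log(1/\delta)/\eps$, which follows from $nN \ge \frac{d}{\alpha}\log(d/\delta)$ together with $\alpha \ge \eps/n$. This last implication may need the assumption $\alpha n \ge \eps$, which holds since $\alpha n\ge1$.
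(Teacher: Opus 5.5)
Your main line matches the paper's proof. You average the first-layer truncated samples within each user, collapse whole users at radius $2\sqrt{d/(n\eps)}$, and apply matrix Chernoff with $\Delta\approx n\tau=n\alpha/\eps$. Since $n\alpha/\eps\ge 1/\eps>18$, the large-deviation regime applies and the exponent becomes $\Omega(\alpha nN/d)$.

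Centering at $\mu_{X'}$ instead of the paper's $\mu_{Y'}$ is fine, and slightly cleaner. Collapsing to $\mu_{X'}$ only deletes mass, so $(Y'_i-\mu_{X'})(Y'_i-\mu_{X'})^\top\preceq(M_i-\mu_{X'})(M_i-\mu_{X'})^\top$. Hence the expected summand is $\preceq\frac{1}{nN}\mathbb{I}_d$ with no extra constant. Your constant-factor worry is moot anyway, since $\frac1n+\tau>\frac{19}{n}$.

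Your last step is wrong. From $nN\ge C\frac{d}{\alpha}\log(d/\delta)$ you only get $N\ge C\frac{d}{\alpha n}\log(d/\delta)$. Concluding $N\ge\frac{4}{\eps}\log(1/\delta)$ from this would need $\alpha n=O(\eps d)$, which is an \emph{upper} bound on $\alpha n$. The fact $\alpha n\ge 1\ge\eps$ points the wrong way. Concretely, the lemma's sample-size hypothesis does not involve $\eps$. With $d,n,\alpha,N$ fixed you may take $\eps=1/N$, and then $\exp(-\eps N/4)=e^{-1/4}$. So the user-level truncation count cannot be folded into $\delta$ under this hypothesis.

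It is also not needed for Part~2. Set $Y_i:=Y'_i$ for every $i\in[N]$; the pre-corruption samples of bad users are i.i.d.\ as well. The matrix Chernoff event alone then yields Part~2. This is exactly the paper's accounting: it mentions the count only as ``exponentially small'' and charges $\delta$ solely to matrix Chernoff.

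For the same reason, drop the claim that your padded assignment satisfies $U_i(Y_i-Z_i)=0$. Reaching $\sum_i U_i=(1-\eps)N$ can force $U_i=1$ on users that are collapsed or bad, whose $Z_i$ is pinned by observed data. For those users $Y_i=Z_i\neq Y'_i$, so the covariance bound you proved no longer applies. Full satisfiability of $\mathsf{B}$ is beyond what this lemma states.
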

\begin{proof}
This proof shares the same structure as Lemma~\ref{lem:C_7}. We further make use of the random variable $X_{i, j}'$ from Lemma~\ref{lem:C_7} (see Eq.~\eqref{eq:rv_C_7}). We define the random variable $X_i' \in \R^d$ as $X_i' := \frac{1}{n} \sum_{j = 1}^n X_{i, j}' \in \R^d$. Furthermore, for all $i \in [N]$, if $\norm{X'_{i} - \mu_{X'}}_2 \leq 2\sqrt{\frac{d}{n \eps}}$, then we define our $Y_{i, j}' \eqdef X'_{i,j}$; otherwise, we define $Y_{i, j}' = \mu_{X'}$.
This can make $Y_i' \eqdef \frac{1}{n}\sum_{j = 1}^n  Y_{i, j}'\in \R^d$ as follows:
\begin{align}\label{eq:rv_C_7_2}
    Y_i' := \begin{cases}
   X_{i}', & \text{if } \norm{X_{i}' - \mu_{X'}}_2 \le 2\sqrt{\frac{d}{n \eps}},\\
   \mu_{X'}, & \text{otherwise,}
   \end{cases}
\end{align}
where $\mu_{X'} = \E\bracket{X_{i}'}$ and $\mu_{Y'} = \E\bracket{Y_{i}'}$.
The probability that $X'_{i}\neq Y_{i}'$ exceeds $\eps$ is exponentially small by
   a Chernoff bound (by choosing $\gamma = 1/n$ in Claim~\ref{cla:covariance-condition_part_1_4}). In addition, by Claim~\ref{cla:covariance-condition_part_3_4}, we can get that the shift in mean is tightly bounded:
   \begin{align}\label{eq:tightly_bounded_2}
       \norm{\mu_{X'} - \mu_{Y'}}_2 \leq O\paren{\sqrt{\eps / n}}.
   \end{align}
Since each truncated variable satisfies $\mathrm{Cov}\bracket{Y_i'} \preceq \frac{1}{n}\mathbb{I}_d$, the population covariance of the truncated variables is bounded by $\frac{1}{n} \mathbb{I}_d$.
We intend to apply the matrix Chernoff inequality (see Fact~\ref{fac:matrix_Chernoff}). Thus, we define
\begin{align}\label{eq:Xij'_2}
    \ov Y_{i}' \eqdef \frac{1}{N}\paren{Y'_{i}-\mu_{Y'}}\paren{Y'_{i}-\mu_{Y'}}^{\top}.
\end{align}
By the definition of $Y'_{i}$ (Eq.~\eqref{eq:rv_C_7_2}), we know that for all $i \in [N]$,
\begin{align}\label{eq:Xij'_muij'_22}
    \norm{Y'_{i}-\mu_{Y'}}_2 
    \leq & ~ \norm{Y'_{i}-\mu_{X'}}_2 + \norm{\mu_{X'} -\mu_{Y'}}_2 \notag\\
    \leq & ~ O\paren{\sqrt{\frac{d}{n \eps}} + \sqrt{\frac{\eps}{n}}} \notag\\
    \leq & ~ O\paren{\sqrt{\frac{d}{n \eps}}},
\end{align}
where the first step follows from the triangle inequality, the second step follows from Eq.~\eqref{eq:rv_C_7_2} and Eq.~\eqref{eq:tightly_bounded_2}, and the last step follows from $\eps < 1$ and $d > 1$.
Additionally, we can get
\begin{align}\label{eq:Xij'_muij'_33}
    \max_{i \in [N], j \in [n]} \norm{\ov Y_{i}'}_2
    = & ~ \max_{i \in [N], j \in [n]} \norm{\frac{1}{N}\paren{Y'_{i}-\mu_{Y'}}\paren{Y'_{i}-\mu_{Y'}}^{\top}}_2 \notag\\
    = & ~ \frac{1}{N}\max_{i \in [N], j \in [n]} \norm{\paren{Y'_{i}-\mu_{Y'}}\paren{Y'_{i}-\mu_{Y'}}^{\top}}_2 \notag\\
    = & ~ \frac{1}{N}\max_{i \in [N], j \in [n]} \norm{Y'_{i}-\mu_{Y'}}_2^2 \notag\\
    \leq & ~ O\paren{\frac{d}{Nn\eps}},
\end{align}
where the first step follows from Eq.~\eqref{eq:Xij'_2}, the third step follows from Fact~\ref{fac:spectral_ell_2}, and the last step follows from Eq.~\eqref{eq:Xij'_muij'_22}.
This implies that $R = O\paren{\frac{d}{Nn\eps}}$ with 
\begin{align*}
    \mathbf{0}_{d \times d} \preceq \ov Y_{i}' \preceq R \cdot \mathbb{I}_d.
\end{align*}
Furthermore, we can get
\begin{align*}
    \E\bracket{\frac{1}{N}\sum_{i = 1}^N\paren{Y'_{i}-\mu_{Y'}}\paren{Y'_{i}-\mu_{Y'}}^{\top}} 
    = & ~ \frac{1}{N}\sum_{i = 1}^N \E\bracket{\paren{Y'_{i}-\mu_{Y'}}\paren{Y'_{i}-\mu_{Y'}}^{\top}}  \\
    = & ~ \frac{1}{N}\sum_{i = 1}^N \mathrm{Cov}\bracket{Y'_{i}} \\
    \preceq & ~ \frac{1}{n}\mathbb{I}_d,
\end{align*}
where the first step follows from the linearity of expectation, the second step follows from the definition of covariance, and the last step follows from our polynomial constraint (see Definition~\ref{def:B}).
This implies the largest eigenvalue of $\E\bracket{\sum_{k = 1}^{n} \ov Y_{i}'}$ is $\mu_{\max} \leq 1/n$.
Choosing $\Delta = n \tau$ and including these into the matrix Chernoff inequality (Fact~\ref{fac:matrix_Chernoff}), we can get
\begin{align*}
    \Pr\bracket{
   \norm{\frac{1}{N}\sum_{i = 1}^N(Y'_{i}-\mu_{Y'})(Y'_{i}-\mu_{Y'})^{\top}}_2 > \paren{1 + n \tau}\frac{1}{n}}
   \leq & ~ d\paren{\frac{e^{n \tau}}{\paren{1 + n \tau}^{1 + n \tau}}}^{\mu_{\max} / R}\\
   = & ~ \frac{d}{\paren{1 + n \tau}^{\mu_{\max} / R}}\paren{\frac{e}{1 + n \tau}}^{n \tau \mu_{\max} / R}\\
   \leq & ~ d\paren{\frac{e}{1 + n \tau}}^{n \tau \mu_{\max} / R}\\
   \leq & ~ d\exp\paren{-\Omega\paren{\frac{\eps n \tau N}{d}}},
\end{align*}
where the third step follows from $1 + n\tau > 1$ and the last step follows from plugging in $\mu_{\max} \leq 1/n$ and $O\paren{\frac{d}{n\eps}}$.
This implies that with probability at least $1 - d\exp\paren{-\Omega\paren{\frac{\eps n \tau N}{d}}}$, we have
\begin{align*}
    \frac{1}{N}\sum_{i = 1}^N(Y'_{i}-\mu_{Y'})(Y'_{i}-\mu_{Y'})^{\top} 
    \preceq & ~ \paren{1 + n \tau}\frac{1}{n} \mathbb{I}_d \\
    = & ~ \paren{\frac{1}{n} + \tau} \mathbb{I}_d,
\end{align*}
which supports that \textbf{Part 2} of Lemma~\ref{lem:bounded_covariance_5} holds with probability at least $1 - d\exp\paren{-\Omega\paren{\frac{\eps n \tau N}{d}}}$.
To ensure that our failure probability $d\exp\paren{-\Omega\paren{\frac{\eps n \tau N}{d}}} \leq \delta$, for all arbitrary $\delta \in \paren{0, 0.1}$, we need our total sample $nN$ satisfies
\begin{align*}
    nN 
    \geq & ~ \Omega\paren{\frac{d}{\eps \tau} \log\paren{d / \delta}} \\
    = & ~ \Omega\paren{\frac{d}{\alpha} \log\paren{d / \delta}},
\end{align*}
where the second step follows from $\tau = \frac{\alpha}{\eps}$ (see Definition~\ref{def:B}).
Finally, we have shown in Lemma~\ref{lem:C_7} that our initial truncated random variable $X'_{i,j}$ can be viewed as $\eps + 2\alpha$ strong contamination from corrupted samples that we receive $x_{i,j} \in \R^d$. By the definition of $Y_i'$ and $Y_{i, j}'$ (Eq.~\eqref{eq:rv_C_7_2}), we note that $Y_{i, j}'$ can be viewed as $\eps$ strong contamination from $X'_{i,j}$. Therefore, we can view $Y_{i, j}'$ as $2\eps + 2\alpha$ strong contamination from corrupted samples that we receive $x_{i,j} \in \R^d$.
\end{proof}

\subsection{Identifiability}
\label{sub:arbitrary_alpha:identify}

The goal of this section is to establish the identifiability of the true mean~$\mu$
under the two-level corruption model captured by the polynomial system~$\mathsf{B}$.
After showing in \Cref{sub:arbitrary_alpha:satisfy} that the system is satisfiable with high probability,
we now argue that any feasible pseudoexpectation consistent with the SoS constraints
must correspond to an accurate estimate of~$\mu$.

At a high level, our proof proceeds in two stages: First each good user may contain at most an~$\alpha$-fraction of corrupted samples. The SoS constraints ensure that the empirical mean of the remaining $(1-\alpha)n$ samples has covariance bounded by~$\frac{1}{n}\mathbb{I}_d$. Hence, the user-level cleaned means~$\{Y_i\}$ concentrate tightly around
    their expectations.

Second, an~$\eps$-fraction of users may be entirely corrupted,
    but the remaining users have bounded deviations.
    Using the bounded-covariance conditions enforced by the SoS system,
    we show that the pseudoexpectation output~$\wt{\E}[\ov Y]$
    (and thus the estimator~$\wh{\mu}$) cannot deviate far from the mean
    of the clean users.
    The total deviation accumulates additively from the two levels of corruption:
    $\sqrt{\frac{\eps}{n}}$ from user-level corruption
    and $\sqrt{\alpha}$ from the corruption of samples within each uncorrupted user.

Together, these steps show that any feasible pseudoexpectation satisfying
the SoS constraints must yield a mean estimate~$\wh{\mu}$ satisfying
\begin{align*}
    \norm{\wh \mu - \mu}_2 < O\paren{\sqrt{\frac{\eps}{n}}+  \sqrt{\alpha}}
\end{align*}
thereby establishing identifiability of~$\mu$ under the hierarchical corruption model.

\begin{proof}[Proof of Lemma~\ref{lem:mean_close_5}]
    For all $i \in [N]$ and for all $j \in [n]$, we define
    \begin{align}\label{eq:wt_mu_i}
        \wt \mu_i := \frac{1}{n} \sum_{j = 1}^n v_{i, j},
    \end{align}
    where $v_{i, j} \in \R^d$ denotes the clear counterpart of the corrupted sample $x_{i, j} \in \R^d$ (see Definition~\ref{def:B}), and
    \begin{align}\label{eq:wt_mu}
        \wt \mu := \frac{1}{N} \sum_{i = 1}^N \wt \mu_i = \frac{1}{nN} \sum_{i = 1}^N \sum_{j = 1}^n v_{i, j}.
    \end{align}    
We introduce a new indicator variable 
\begin{align}\label{eq:mathcal_I}
    \mathcal{I}_i := \indic{\text{User $i$ has}\geq \paren{1 - \alpha} \text{ fraction } v_{i, j} = x_{i, j}}
\end{align}
which indicates whether user $i$ has at least $(1-\alpha)$-fraction of uncorrupted samples. 
By the definition of the $\ell_2$ norm, we have
\begin{align*}
    \mathsf{B}\sststile{4}{\ov Y}\left\{ \norm{\ov Y - \wt \mu}_2^4
    = \langle \ov Y - \wt \mu, \ov Y - \wt \mu \rangle^2  \right\}.
\end{align*}

Next, by combining Eq.~\eqref{eq:C_10} and Eq.~\eqref{eq:wt_mu}, we can get
\begin{align*}
    \mathsf{B}\sststile{4}{\ov Y}\left\{ 
    \langle \ov Y - \wt \mu, \ov Y - \wt \mu \rangle^2 
    = \left( \frac{1}{N} \sum_{i=1}^N (1 - U_i \mathcal{I}_i) \langle Y_i - \wt \mu_i, \ov Y - \wt \mu \rangle + \frac{1}{N} \sum_{i=1}^N U_i \mathcal{I}_i \langle Y_i - \wt \mu_i, \ov Y - \wt \mu \rangle \right)^2
    \right\}.
\end{align*}

Applying the SoS inequality $(a+b)^2\le 2a^2+2b^2$, which follows from
\begin{align*}
    2a^2+2b^2-(a+b)^2=(a-b)^2\ge 0,
\end{align*}
we obtain
\begin{align}\label{eq:bounding_ov_Y_wt_mu}
    \mathsf{B}\sststile{4}{\ov Y}\left\{ 
    \norm{\ov Y - \wt \mu}_2^4
    \leq 2\left( \frac{1}{N} \sum_{i=1}^N (1 - U_i \mathcal{I}_i) \langle Y_i - \wt \mu_i, \ov Y - \wt \mu \rangle \right)^2
    + 2\paren{\frac{1}{N} \sum_{i=1}^N U_i \mathcal{I}_i \langle Y_i - \wt \mu_i, \ov Y - \wt \mu \rangle}^2
    \right\}.
\end{align}

Considering the first term of Eq.~\eqref{eq:bounding_ov_Y_wt_mu}, we use the SoS Cauchy-Schwarz inequality (Fact~\ref{fac:cauchy_schwarz}) to obtain
\begin{align}\label{eq:sep}
\mathsf{B}\sststile{6}{\ov Y}\Bigg\{
\left( \frac{1}{N} \sum_{i=1}^N (1 - U_i \mathcal{I}_i) \langle Y_i - \wt \mu_i, \ov Y - \wt \mu \rangle \right)^2
\le 
\left( \frac{1}{N} \sum_{i=1}^N (1 - U_i \mathcal{I}_i)^2 \right) \left( \frac{1}{N} \sum_{i=1}^N \langle Y_i - \wt \mu_i, \ov Y - \wt \mu \rangle^2 \right) 
\Bigg\}.
\end{align}

Regarding $ \frac{1}{N} \sum_{i=1}^N (1 - U_i \mathcal{I}_i)^2$, we have
\begin{align}\label{eq:bound_the_first_part}
    \mathsf{B}\sststile{2}{\ov Y}\Bigg\{
    \frac{1}{N} \sum_{i=1}^N (1 - U_i \mathcal{I}_i)^2
    = & ~ \frac{1}{N} \sum_{i=1}^N 1 + U_i^2 \mathcal{I}_i^2 - 2 U_i \mathcal{I}_i \notag\\
    = & ~ \frac{1}{N} \sum_{i=1}^N 1 - U_i \mathcal{I}_i \notag\\
    = & ~ \frac{1}{N} \sum_{i=1}^N 1 - U_i + U_i \paren{1 - \mathcal{I}_i} \notag\\
    \leq & ~ 1 - \paren{1 - \eps} + \frac{1}{N} \sum_{i=1}^N U_i \paren{1 - \mathcal{I}_i} \notag\\
    \leq & ~ \eps + \frac{1}{N} \sum_{i=1}^N \paren{1 - \mathcal{I}_i} \notag\\
    \leq & ~ 2 \eps
    \Bigg\},
\end{align}
where the second step follows from $\mathsf{B}\sststile{2}{} \left\{ U_i^2 = U_i \right\}$ (see Eq.~\eqref{eq:C_4}) and the fact that $\mathcal{I}_i$ is an indicator variable (see Eq.~\eqref{eq:mathcal_I}), the third step follows from the definition of $\mathcal{I}_i$ (see Eq.~\eqref{eq:mathcal_I}), the fourth step follows from $\mathsf{B}\sststile{1}{} \left\{\sum_{i=1}^N U_i \ge (1-\eps)N \right \}$, the fifth and the last step follows from the fact that $\eps$ fraction of the users are corrupted.
Furthermore, defining $b := \ov Y - \wt \mu$, we have
\begin{align}\label{eq:bound_on_22}
\mathsf{B}\sststile{4}{\ov Y}\Bigg\{
 \frac{1}{N} \sum_{i=1}^N \langle Y_i - \wt \mu_i, b \rangle^2  
= & ~  \frac{1}{N} \sum_{i=1}^N \langle Y_i - \wt \mu_i + b - b, b \rangle^2 \notag \\
= & ~  \frac{1}{N} \sum_{i=1}^N \paren{\langle \wt \mu - \wt \mu_i, b \rangle - \langle \ov Y - Y_i, b \rangle + \norm{b}_2^2}^2 \notag \\
\leq & ~ \frac{3}{N} \sum_{i=1}^N \langle \wt \mu - \wt \mu_i, b \rangle^2 + \langle \ov Y - Y_i, b \rangle^2 + \norm{b}_2^4 \notag \\
= & ~ 3 \paren{b^\top \Sigma_{\wt \mu} b + b^\top \mathrm{Cov}\bracket{Y_i} b + \norm{b}_2^4} 
\Bigg\},
\end{align}
where the second step follows from the linearity of inner product, the third step follows from Fact~\ref{fac:xyz_square}, and the last step follows from the definition of the (empirical) covariance. In addition, due to Eq.~\eqref{eq:wt_mu_i} ($\wt \mu_i := \frac{1}{n} \sum_{j = 1}^n v_{i, j}$) and for each $i, j$, $\mathrm{Cov}\bracket{v_{i, j}} \preceq 2 \mathbb{I}_d$, we have that the empirical covariance $\Sigma_{\wt \mu}$ satisfies:
\begin{align}\label{eq:bound_on_221}
\mathsf{B}\sststile{2}{\ov Y}\Bigg\{
    b^\top \Sigma_{\wt \mu} b \leq \frac{2}{n} \norm{b}_2^2
\Bigg\}
\end{align}
hold with high probability.
Similarly, by Eq.~\eqref{eq:C_11} and Fact~\ref{fac:sos_operator_norm}, we have
\begin{align}\label{eq:bound_on_222}
\mathsf{B}\sststile{2}{\ov Y}\Bigg\{
    b^\top \mathrm{Cov}\bracket{Y_i} b \leq \paren{\frac{1}{n} + \tau} \norm{b}_2^2
\Bigg\}.
\end{align}

Combining Eq.~\eqref{eq:bound_on_22}, Eq.~\eqref{eq:bound_on_221}, and Eq.~\eqref{eq:bound_on_222}, we have
\begin{align}\label{eq:bound_on_223}
    \mathsf{B}\sststile{4}{\ov Y}\Bigg\{
 \frac{1}{N} \sum_{i=1}^N \langle Y_i - \wt \mu_i, b \rangle^2  
\leq 3\paren{4\paren{\frac{1}{n} + \tau} \norm{\ov Y - \wt \mu}_2^2 + \norm{\ov Y - \wt \mu}_2^4}
\Bigg\}.
\end{align}

We further combine Eq.~\eqref{eq:sep}, Eq.~\eqref{eq:bound_the_first_part}, and Eq.~\eqref{eq:bound_on_223}:
\begin{align}\label{eq:first_term}
\mathsf{B}\sststile{6}{\ov Y}\Bigg\{
2 \left( \frac{1}{N} \sum_{i=1}^N (1 - U_i \mathcal{I}_i) \langle Y_i - \wt \mu_i, \ov Y - \wt \mu \rangle \right)^2
\le 12 \eps \paren{4 \paren{\frac{1}{n} + \tau} \norm{\overline{Y}-\wt{\mu}}_2^2+\norm{\overline{Y}-\wt{\mu}}_2^4}
\Bigg\},
\end{align}
which completes the bound for the first term of Eq.~\eqref{eq:bounding_ov_Y_wt_mu}.
Now, we consider the second term of Eq.~\eqref{eq:bounding_ov_Y_wt_mu}. We have
\begin{align}\label{eq:sec_term}
\mathsf{B}\sststile{4}{\ov Y}\Bigg\{
    \frac{1}{N} \sum_{i = 1}^N U_i \mathcal{I}_i \left \langle Y_i - \wt \mu_i, \ov Y - \wt \mu \right \rangle
    = & ~ \frac{1}{Nn} \sum_{i = 1}^N U_i \mathcal{I}_i \paren{\sum_{j = 1}^n \left \langle Z_{i, j} - v_{i, j}, \ov Y - \wt \mu \right \rangle}
    \Bigg\},
\end{align}
where the first step follows from Eq.~\eqref{eq:wt_mu_i} and Eq.~\eqref{eq:C_5}.
Now, with any arbitrary $i, j$, we consider $\left \langle Z_{i, j} - v_{i, j}, \ov Y - \wt \mu \right \rangle$. We give the following claim:
\begin{claim}\label{cla:sec_term_sub}
    We define $A \eqdef \ov Y - \wt \mu$. Then, we have
    \begin{align*}
    \mathsf{B}\sststile{2}{\ov Y}\Bigg\{
        \left \langle Z_{i, j} - v_{i, j}, A \right \rangle = \paren{1 - W_{i, j} \indic{v_{i, j} = x_{i, j}}} \cdot \left \langle Z_{i, j} - v_{i, j}, A \right \rangle
    \Bigg\}.
    \end{align*}
\end{claim}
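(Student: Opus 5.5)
The claim is an SoS identity: the product $W_{i,j}\indic{v_{i,j}=x_{i,j}}\langle Z_{i,j}-v_{i,j},A\rangle$ is zero modulo $\mathsf{B}$. The plan is to show this and then subtract it from $\langle Z_{i,j}-v_{i,j},A\rangle$.

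The quantity $\indic{v_{i,j}=x_{i,j}}$ is a fixed scalar in $\{0,1\}$ determined by the (unknown but fixed) data, not an SoS indeterminate, so we split into two cases.

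\textbf{Case $\indic{v_{i,j}=x_{i,j}}=0$.} The right-hand side equals $1\cdot\langle Z_{i,j}-v_{i,j},A\rangle$. The claimed equality is then a syntactic identity and needs no constraint.

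\textbf{Case $\indic{v_{i,j}=x_{i,j}}=1$.} Here $v_{i,j}=x_{i,j}$, and it suffices to derive
\[
W_{i,j}\langle Z_{i,j}-x_{i,j},A\rangle=0 .
\]
Write this as $\langle W_{i,j}(Z_{i,j}-x_{i,j}),A\rangle=\sum_{k=1}^d A_k\cdot W_{i,j}(Z_{i,j}-x_{i,j})_k$. Each coordinate $W_{i,j}(Z_{i,j}-x_{i,j})_k$ is one of the equality constraints in Eq.~\eqref{eq:C_3}. The whole expression is therefore a combination $\sum_k a_k f_k$ of equality constraints $f_k$, with polynomial multipliers $a_k=A_k=(\ov Y-\wt\mu)_k$. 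By Definition~\ref{def:sos_proof}, this is exactly a certificate that the expression equals zero modulo $\mathsf{B}$.

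Subtracting $W_{i,j}\indic{v_{i,j}=x_{i,j}}\langle Z_{i,j}-v_{i,j},A\rangle=0$ from $\langle Z_{i,j}-v_{i,j},A\rangle$ then gives the claim.

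The one place needing care is the degree count. The multiplier $A_k$ is linear in the $Y$ variables, since $\ov Y=\frac1N\sum_i Y_i$ by Eq.~\eqref{eq:C_10} and $\wt\mu$ is a constant vector. The constraint $W_{i,j}(Z_{i,j}-x_{i,j})_k$ has degree $2$. So each product $a_kf_k$ has degree $3$, which slightly exceeds the stated degree $2$. I would handle this either by reading the subscript degree loosely, as the paper does elsewhere, or by stating the claim at degree $3$ (or $4$). Either choice is harmless, because the claim is only used inside the degree-$4$ and degree-$6$ proofs of Eq.~\eqref{eq:sec_term} and the bounds that follow it.
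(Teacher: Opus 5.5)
Your proposal is correct and follows the paper's proof: you case on the fixed scalar $\indic{v_{i,j}=x_{i,j}}$ and, when it equals $1$, multiply the coordinate constraints $W_{i,j}(Z_{i,j}-x_{i,j})=0$ from Eq.~\eqref{eq:C_3} by the entries of $A$ to show that $W_{i,j}\indic{v_{i,j}=x_{i,j}}\langle Z_{i,j}-v_{i,j},A\rangle$ vanishes modulo $\mathsf{B}$. Your degree count is also accurate: the certificate $\sum_k A_k\, W_{i,j}(Z_{i,j}-x_{i,j})_k$ has degree $3$, not the stated $2$, which the paper's proof does not address, and, as you say, this is harmless because the claim is only used inside the degree-$4$ and degree-$6$ derivations that follow.
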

\begin{proof}
It suffices to show
\begin{align}\label{eq:want_to_show}
\mathsf{B}\sststile{2}{\ov Y}\Bigg\{
    0 = - W_{i, j} \indic{v_{i, j} = x_{i, j}} \cdot \left \langle Z_{i, j} - v_{i, j}, A \right \rangle
\Bigg\}.
\end{align}
If $\indic{v_{i, j} = x_{i, j}} = 0$, then Eq.~\eqref{eq:want_to_show} holds. If $\indic{v_{i, j} = x_{i, j}} = 1$, then we have $v_{i, j} = x_{i, j}$. By our polynomial constraint (Eq.~\eqref{eq:C_3}),
we have $W_{i,j}(Z_{i,j}-x_{i,j})=0$. Therefore, we have $W_{i,j}(Z_{i,j}-v_{i,j})=0$. Taking inner product
with $A$ gives us $W_{i,j}\langle Z_{i,j}-v_{i,j},A\rangle = 0$.
\end{proof}

Therefore, combining Eq.~\eqref{eq:sec_term} and Claim~\ref{cla:sec_term_sub}, we have
\begin{align}\label{eq:sec_term_more}
\mathsf{B}\sststile{6}{\ov Y}\Bigg\{
    & ~ \paren{\frac{1}{N} \sum_{i = 1}^N U_i \mathcal{I}_i \left \langle Y_i - \wt \mu_i, \ov Y - \wt \mu \right \rangle}^2 \notag \\
    \leq & ~ \Paren{\frac{1}{Nn} \sum_{i=1}^N\sum_{j = 1}^n U_i\mathcal{I}_i { \paren{1- W_{i, j} \indic{v_{i, j} = x_{i, j}}}\left \langle Z_{i, j} - v_{i, j}, \ov Y - \wt \mu \right \rangle}}^2 \notag\\
    = & ~ \Paren{\frac{1}{Nn} \sum_{i=1}^N\sum_{j = 1}^n \paren{U_i\mathcal{I}_i  \paren{1- W_{i, j} \indic{v_{i, j} = x_{i, j}}}}^2}\Paren{\frac{1}{Nn} \sum_{i=1}^N\sum_{j = 1}^n\left \langle Z_{i, j} - v_{i, j}, \ov Y - \wt \mu \right \rangle^2} \notag\\
    = & ~ \Paren{\frac{1}{Nn} \sum_{i=1}^N\sum_{j = 1}^n U_i\mathcal{I}_i  \paren{1- W_{i, j} \indic{v_{i, j} = x_{i, j}}}^2}\Paren{\frac{1}{Nn} \sum_{i=1}^N\sum_{j = 1}^n\left \langle Z_{i, j} - v_{i, j}, \ov Y - \wt \mu \right \rangle^2} \notag\\
    = & ~ \Paren{\frac{1}{Nn} \sum_{i=1}^N\sum_{j = 1}^n U_i\mathcal{I}_i  \paren{1- W_{i, j} \indic{v_{i, j} = x_{i, j}}}}\Paren{\frac{1}{Nn} \sum_{i=1}^N\sum_{j = 1}^n\left \langle Z_{i, j} - v_{i, j}, \ov Y - \wt \mu \right \rangle^2}
\Bigg\},
\end{align}
where the second step follows from the Cauchy-Schwarz inequality (see Fact~\ref{fac:cauchy_schwarz}), the third step follows from $U_i = U_i^2$ (see Eq.~\eqref{eq:C_4}), the definition of the indicator function $\mathcal{I}_i$, and the last step follows from
\begin{align*}
    \paren{1- W_{i, j} \indic{v_{i, j} = x_{i, j}}}^2
    = & ~ 1 + W_{i, j}^2 \indic{v_{i, j} = x_{i, j}}^2 - 2 W_{i, j} \indic{v_{i, j} = x_{i, j}}\\
    = & ~ 1 + W_{i, j} \indic{v_{i, j} = x_{i, j}} - 2 W_{i, j} \indic{v_{i, j} = x_{i, j}}\\
    = & ~ 1 - W_{i, j} \indic{v_{i, j} = x_{i, j}}.
\end{align*}

Considering the first part $\frac{1}{Nn} \sum_{i=1}^N\sum_{j = 1}^n U_i\mathcal{I}_i  \paren{1- W_{i, j} \indic{v_{i, j} = x_{i, j}}}$ of Eq.~\eqref{eq:sec_term_more}, we have
\begin{align}\label{eq:sec_term_more_1}
    & ~ \mathsf{B}\sststile{2}{\ov Y}\Bigg\{
    \frac{1}{Nn} \sum_{i=1}^N\sum_{j = 1}^n U_i\mathcal{I}_i  \paren{1- W_{i, j} \indic{v_{i, j} = x_{i, j}}}\notag\\
    = & ~ \frac{1}{Nn} \sum_{i=1}^N\sum_{j = 1}^n U_i\mathcal{I}_i  \paren{1- W_{i, j} + \paren{1 - \indic{v_{i, j} = x_{i, j}}} W_{i, j}} \notag\\
    = & ~ \frac{1}{Nn} \sum_{i=1}^N\sum_{j = 1}^n U_i\mathcal{I}_i  \paren{1- W_{i, j}} + \frac{1}{Nn} \sum_{i=1}^N\sum_{j = 1}^n U_i\mathcal{I}_i \paren{1 - \indic{v_{i, j} = x_{i, j}}} W_{i, j} \notag\\
    \leq & ~ \frac{1}{Nn} \sum_{i=1}^N\sum_{j = 1}^n U_i\paren{1- W_{i, j}} + \frac{1}{Nn} \sum_{i=1}^N\sum_{j = 1}^n \mathcal{I}_i \paren{1 - \indic{v_{i, j} = x_{i, j}}} \notag\\
    \leq & ~ 2\paren{1 - \eps} - \frac{1}{Nn} \sum_{i=1}^N\sum_{j = 1}^n  U_i W_{i, j} - \frac{1}{Nn} \sum_{i=1}^N\sum_{j = 1}^n  \mathcal{I}_i \indic{v_{i, j} = x_{i, j}} \notag\\
    \leq & ~ 2\paren{1 - \eps} - \frac{1}{N} \sum_{i=1}^N  U_i (1-\alpha) - \frac{1}{Nn} \sum_{i=1}^N\sum_{j = 1}^n  \mathcal{I}_i \indic{v_{i, j} = x_{i, j}} \notag\\
    \leq & ~ 2\paren{1 - \eps} - \paren{1 - \eps} \paren{1-\alpha} - \frac{1}{Nn} \sum_{i=1}^N\sum_{j = 1}^n  \mathcal{I}_i \indic{v_{i, j} = x_{i, j}} \notag\\
    \leq & ~ 2\paren{1 - \eps} - \paren{1 - \eps} \paren{1-\alpha} - \paren{1 - \eps} \paren{1-\alpha} \notag\\
    = & ~ \paren{1 - \eps} \paren{2- \paren{1-\alpha} - \paren{1-\alpha} }\notag\\
    \leq & ~ 2 \alpha
    \Bigg\},
\end{align}
where the third step follows from $U_i, \mathcal{I}_i, W_{i, j} \in \{0, 1\}$, the fourth step follows from our polynomial constraint $\sum_{i=1}^N U_i = (1-\eps)N$ (see Eq.~\eqref{eq:C_9}) and the definition of $\mathcal{I}_i$ (see Eq.~\eqref{eq:mathcal_I}), the fifth step follows from our polynomial constraints $U_i^2 = U_i$ (see Eq.~\eqref{eq:C_4}) and $\sum_{j=1}^n W_{i,j} = (1-\alpha)n U_i$ (see Eq.~\eqref{eq:C_8}), the sixth step follows from our polynomial constraint $\sum_{i=1}^N U_i = (1-\eps)N$ (see Eq.~\eqref{eq:C_9}), and the seventh step follows from our Problem~\ref{prob:arbitrary_alpha} that at most an $\alpha$-fraction of good user’s samples are corrupted and at most $\eps$ fraction of users are bad.
Considering the second part $\frac{1}{Nn} \sum_{i=1}^N\sum_{j = 1}^n\left \langle Z_{i, j} - v_{i, j}, \ov Y - \wt \mu \right \rangle^2$ of Eq.~\eqref{eq:sec_term_more}, we define $c := \ov Y - \wt \mu$ and have
\begin{align}\label{eq:sec_term_more_2}
\mathsf{B}\sststile{4}{\ov Y}\Bigg\{
    & ~ \frac{1}{Nn} \sum_{i=1}^N\sum_{j = 1}^n\left \langle Z_{i, j} - v_{i, j}, c \right \rangle^2 \notag\\
    = & ~ \frac{1}{Nn} \sum_{i=1}^N\sum_{j = 1}^n\left \langle Z_{i, j} - v_{i, j} - c + c, c \right \rangle^2 \notag\\
    = & ~ \frac{1}{Nn} \sum_{i=1}^N\sum_{j = 1}^n \paren{\left \langle Z_{i, j} - v_{i, j} - c, c \right \rangle + \left \langle c, c \right \rangle}^2 \notag\\
    = & ~ \frac{1}{Nn} \sum_{i=1}^N\sum_{j = 1}^n \paren{\left \langle Z_{i, j} - \ov Y, c \right \rangle - \left \langle v_{i, j} - \wt \mu, c \right \rangle + \norm{c}_2^2}^2 \notag\\
    \leq & ~ \frac{3}{Nn} \sum_{i=1}^N\sum_{j = 1}^n \left \langle Z_{i, j} - \ov Y, c \right \rangle^2 + \frac{3}{Nn} \sum_{i=1}^N\sum_{j = 1}^n \left \langle v_{i, j} - \wt \mu, c \right \rangle^2 + 3 \norm{c}_2^4
\Bigg\},
\end{align}
where the second and the third step follow from the linearity of inner product and the last step follows from Fact~\ref{fac:xyz_square}.
In particular, considering the first term of Eq.~\eqref{eq:sec_term_more_2}, we have
\begin{align}\label{eq:sec_term_more_2_first_term}
\mathsf{B}\sststile{6}{\ov Y}\Bigg\{
    & ~ \frac{3}{Nn} \sum_{i=1}^N\sum_{j = 1}^n \left \langle Z_{i, j} - \ov Y, c \right \rangle^2 \notag\\
    \leq & ~ \frac{6}{Nn} \sum_{i=1}^N\sum_{j = 1}^n \left \langle Z_{i, j} - \ov Z, c \right \rangle^2 + \frac{6}{Nn} \sum_{i=1}^N\sum_{j = 1}^n \left \langle \ov Z - \ov Y, c \right \rangle^2 \notag\\
    = & ~ 6 c^\top \frac{1}{Nn}\sum_{i=1}^N\sum_{j = 1}^n \paren{Z_{i, j} - \ov Z} \paren{Z_{i, j} - \ov Z}^\top c + \frac{6}{Nn} \sum_{i=1}^N\sum_{j = 1}^n \left \langle \ov Z - \ov Y, c \right \rangle^2 \notag\\
    \leq & ~ 12 \norm{c}_2^2 + \frac{6}{Nn} \sum_{i=1}^N\sum_{j = 1}^n \left \langle \ov Z - \ov Y, c \right \rangle^2 \notag\\
    \leq & ~ \norm{c}_2^2 \paren{12 + \frac{6}{Nn} \sum_{i=1}^N\sum_{j = 1}^n \norm{\ov Z - \ov Y}_2^2} \notag\\
    \leq & ~ 12 \norm{c}_2^2 \paren{1 + \norm{\ov Z - \wt \mu}_2^2 + \norm{c}_2^2}
\Bigg\},
\end{align}
where the second step follows from the definition of the inner product, the third step follows from our polynomial constraint (see Eq.~\eqref{eq:C_7}) and Fact~\ref{fac:sos_operator_norm}, the fourth step follows from the Cauchy-Schwarz inequality (see Fact~\ref{fac:cauchy_schwarz}), and the last step follows from the triangle inequality.
Considering the second term of Eq.~\eqref{eq:sec_term_more_2}, we have
\begin{align}\label{eq:sec_term_more_2_second_term}
\mathsf{B}\sststile{2}{\ov Y}\Bigg\{
    \frac{3}{Nn} \sum_{i=1}^N\sum_{j = 1}^n \left \langle v_{i, j} - \wt \mu, c \right \rangle^2
    = & ~ 3 c^\top \Sigma_v c \notag\\
    \leq & ~ 6 \norm{c}_2^2
\Bigg\},
\end{align}
where the first step follows from the definition of $\wt \mu$ (see Eq.~\eqref{eq:wt_mu}).
Combining Eq.~\eqref{eq:sec_term_more_2}, Eq.~\eqref{eq:sec_term_more_2_first_term}, and Eq.~\eqref{eq:sec_term_more_2_second_term}, we can bound the second part of Eq.~\eqref{eq:sec_term_more} as follows: 
\begin{align}\label{eq:sec_term_more_2_final}
    \mathsf{B}\sststile{6}{\ov Y}\Bigg\{
    \frac{1}{Nn} \sum_{i=1}^N\sum_{j = 1}^n\left \langle Z_{i, j} - v_{i, j}, c \right \rangle^2 
    \leq 12 \norm{c}_2^2 \paren{1.5 + \norm{\ov Z - \wt \mu}_2^2 + 1.25 \norm{c}_2^2}
    \Bigg\}.
\end{align}

Therefore, combining Eq.~\eqref{eq:sec_term_more_1} and Eq.~\eqref{eq:sec_term_more_2_final}, we can bound the Eq.~\eqref{eq:sec_term_more} as follows:
\begin{align}\label{eq:sec_term_final}
    \mathsf{B}\sststile{6}{\ov Y}\Bigg\{
    & ~ 2\paren{\frac{1}{N} \sum_{i = 1}^N U_i \mathcal{I}_i \left \langle Y_i - \wt \mu_i, \ov Y - \wt \mu \right \rangle}^2
    \leq 12 \alpha \norm{c}_2^2 \paren{6 + 4\norm{\ov Z - \wt \mu}_2^2 + 5 \norm{c}_2^2}
    \Bigg\}.
\end{align}
Combining the bounds of both terms (Eq.~\eqref{eq:first_term} and Eq.~\eqref{eq:sec_term_final}) of Eq.~\eqref{eq:bounding_ov_Y_wt_mu} together, we have
\begin{align}\label{eq:bound_without_exp}
    \mathsf{B}\sststile{6}{\ov Y}\Bigg\{
    \norm{\ov Y - \wt \mu}_2^4
    \le & ~ 12 \eps \paren{4\paren{\frac{1}{n} + \tau}\norm{\overline{Y}-\wt{\mu}}_2^2+\norm{\overline{Y}-\wt{\mu}}_2^4} \notag \\
    + & ~ 12 \alpha \paren{6\norm{\overline{Y}-\wt{\mu}}_2^2 + 4\norm{\overline{Y}-\wt{\mu}}_2^2 \norm{\ov Z - \wt \mu}_2^2 + 5 \norm{\overline{Y}-\wt{\mu}}_2^4}
    \Bigg\}.
\end{align}

By Lemma~\ref{lem:sos_mean}, by choosing $\chi = \xi = 1$ and taking $\eps + \alpha$ as the given level of corruption, we also have
\begin{align}\label{eq:naive_sos}
\mathsf{B}\sststile{6}{\ov Z}\Bigg\{
    \norm{\overline{Z}-\wt{\mu}}_2^4\le\frac{20(\eps+\alpha)}{3}\paren{4\norm{\overline{Z}-\wt{\mu}}_2^2+\norm{\overline{Z}-\wt{\mu}}_2^4}
\Bigg\}.
\end{align}

Now, we consider the bound on the pseudoexpectation of Eq.~\eqref{eq:bound_without_exp}:
\begin{align}\label{eq:pseudo_ov_y}
    & ~ \wt\E\bracket{\norm{\overline{Y}-\wt{\mu}}_2^4} \notag \\
    \le & ~ \wt\E\bracket{12 \eps \paren{4\paren{\frac{1}{n} + \tau}\norm{\overline{Y}-\wt{\mu}}_2^2+\norm{\overline{Y}-\wt{\mu}}_2^4}+12 \alpha \norm{\overline{Y}-\wt{\mu}}_2^2 \paren{6 + 4\norm{\ov Z - \wt \mu}_2^2 + 5 \norm{\overline{Y}-\wt{\mu}}_2^2}} \notag\\
    \le & ~ 12 \eps \paren{4\paren{\frac{1}{n} + \tau}\wt\E\bracket{\norm{\overline{Y}-\wt{\mu}}_2^2} + \wt\E\bracket{\norm{\overline{Y}-\wt{\mu}}_2^4}} \notag\\
    + & ~ 12 \alpha \paren{6 \wt\E\bracket{ \norm{\overline{Y}-\wt{\mu}}_2^2} + 4\wt\E\bracket{\norm{\overline{Y}-\wt{\mu}}_2^2 \norm{\ov Z - \wt \mu}_2^2} + 5 \wt\E\bracket{\norm{\overline{Y}-\wt{\mu}}_2^4}}
\end{align}
where the second step follows from the linearity of pseudoexpectation (see Fact~\ref{fac:linearity}).
In particular, by Fact~\ref{fact:cs_pseudo}, we have
\begin{align}\label{eq:cs_pseudo}
    \wt\E\bracket{\norm{\overline{Y}-\wt{\mu}}_2^2 \norm{\ov Z - \wt \mu}_2^2} \leq \sqrt{\wt\E\bracket{\norm{\overline{Y}-\wt{\mu}}_2^4}} \sqrt{\wt\E\bracket{\norm{\ov Z - \wt \mu}_2^4}}
\end{align}

Considering $\wt\E\bracket{\norm{\overline{Z}-\wt{\mu}}_2^4}$, by Eq.~\eqref{eq:naive_sos}, we have
\begin{align*}
    \wt\E\bracket{\norm{\overline{Z}-\wt{\mu}}_2^4}
    \leq & ~ \wt\E\bracket{\frac{20\paren{\eps+\alpha}}{3}\paren{4\norm{\overline{Z}-\wt{\mu}}_2^2+\norm{\overline{Z}-\wt{\mu}}_2^4}}\\
    \leq & ~ \frac{20(\eps+\alpha)}{3}\paren{4 \wt\E\bracket{\norm{\overline{Z}-\wt{\mu}}_2^2} + \wt\E\bracket{\norm{\overline{Z}-\wt{\mu}}_2^4}},
\end{align*}
where the first step follows from Eq.~\eqref{eq:naive_sos} and the second step follows from the linearity of pseudoexpectation (see Fact~\ref{fac:linearity}).
This implies that, with $\eps + \alpha < \frac{1}{10}$,
\begin{align*}
    \paren{1 - \frac{20(\eps+\alpha)}{3}} \wt\E\bracket{\norm{\overline{Z}-\wt{\mu}}_2^4} 
    &\leq 4\wt\E\bracket{\norm{\overline{Z}-\wt{\mu}}_2^2}\\
    \wt\E\bracket{\norm{\overline{Z}-\wt{\mu}}_2^4} 
    &\leq \frac{4}{1 - \frac{20(\eps+\alpha)}{3}}\wt\E\bracket{\norm{\overline{Z}-\wt{\mu}}_2^2}.
\end{align*}

Therefore, we can get
\begin{align}\label{eq:pseudo_ov_z4}
    \wt\E\bracket{\norm{\overline{Z}-\wt{\mu}}_2^4} = O\paren{\eps + \alpha} \wt\E\bracket{\norm{\overline{Z}-\wt{\mu}}_2^2}.
\end{align}

By \textbf{Part 1} of Fact~\ref{fac:pseudoexpectation}, we can get
\begin{align*}
    \wt\E\bracket{\norm{\overline{Z}-\wt{\mu}}_2^2}^2 \leq \wt\E\bracket{\norm{\overline{Z}-\wt{\mu}}_2^4}
\end{align*}
so that combining with Eq.~\eqref{eq:pseudo_ov_z4} gives us
\begin{align}\label{eq:sec_degree_Z_mu}
    \wt\E\bracket{\norm{\overline{Z}-\wt{\mu}}_2^2} \leq O\paren{\eps + \alpha}
\end{align}
Furthermore, combining Eq.~\eqref{eq:sec_degree_Z_mu} with Eq.~\eqref{eq:pseudo_ov_z4} again implies 
\begin{align}\label{eq:pseudo_ov_z2}
    \sqrt{\wt\E\bracket{\norm{\overline{Z}-\wt{\mu}}_2^4}} \leq \sqrt{O\paren{\eps + \alpha} \cdot O\paren{\eps + \alpha}} = O\paren{\eps + \alpha}.
\end{align}

Therefore, combining Eq.~\eqref{eq:pseudo_ov_y}, Eq.~\eqref{eq:pseudo_ov_z2}, and Eq.~\eqref{eq:cs_pseudo} together, we have
\begin{align*}
    & ~ \wt\E\bracket{\norm{\overline{Y}-\wt{\mu}}_2^4} \\
    \leq & ~ 12 \eps \paren{4\paren{\frac{1}{n} + \tau}\wt\E\bracket{\norm{\overline{Y}-\wt{\mu}}_2^2} + \wt\E\bracket{\norm{\overline{Y}-\wt{\mu}}_2^4}} \\
    + & ~ 12 \alpha \paren{6\wt\E\bracket{ \norm{\overline{Y}-\wt{\mu}}_2^2} + O\paren{\eps + \alpha}\sqrt{\wt\E\bracket{\norm{\overline{Y}-\wt{\mu}}_2^4}}  + 5 \wt\E\bracket{\norm{\overline{Y}-\wt{\mu}}_2^4}} \\
    \leq & ~ \paren{\frac{48 \eps}{n} + 48 \eps \tau + 72 \alpha}\wt\E\bracket{\norm{\overline{Y}-\wt{\mu}}_2^2} + \paren{12 \eps + 60 \alpha} \wt\E\bracket{\norm{\overline{Y}-\wt{\mu}}_2^4}
   + O\paren{\alpha\paren{\eps + \alpha}}\sqrt{\wt\E\bracket{\norm{\overline{Y}-\wt{\mu}}_2^4}} \\
   \leq & ~ \paren{\frac{48 \eps}{n} + 120 \alpha}\wt\E\bracket{\norm{\overline{Y}-\wt{\mu}}_2^2} + \paren{12 \eps + 60 \alpha} \wt\E\bracket{\norm{\overline{Y}-\wt{\mu}}_2^4}
   + O\paren{\alpha\paren{\eps + \alpha}}\sqrt{\wt\E\bracket{\norm{\overline{Y}-\wt{\mu}}_2^4}},
\end{align*}
where the last step follows from the definition of $\tau = \frac{\alpha}{\eps}$ (Definition~\ref{def:B}),
so with $\eps < \frac{1}{18}$, $\alpha < \frac{1}{90}$, and $\eps + 5\alpha < \frac{1}{18}$, we subtract $\paren{12 \eps + 60 \alpha} \wt\E\bracket{\norm{\overline{Y}-\wt{\mu}}_2^4}$ from both sides of the inequality and make each side divided by $1 - \paren{12 \eps + 60 \alpha}$, so we can get
\begin{align*}
\wt\E\bracket{\norm{\overline{Y}-\wt{\mu}}_2^4} 
\leq & ~ O\paren{\frac{\eps}{n} + \alpha}  \wt\E\bracket{\norm{\overline{Y}-\wt{\mu}}_2^2} + O\paren{\alpha\paren{\eps + \alpha}}\sqrt{\wt\E\bracket{\norm{\overline{Y}-\wt{\mu}}_2^4}}.
\end{align*}

Let $b := O\paren{\alpha\paren{\eps + \alpha}}$, $c := O\paren{\frac{\eps}{n} + \alpha}  \wt\E\bracket{\norm{\overline{Y}-\wt{\mu}}_2^2}$, and $x := \sqrt{\wt\E\bracket{\norm{\overline{Y}-\wt{\mu}}_2^4}}$. To make $x^2 - bx - c \leq 0$, we need
\begin{align*}
    \frac{b - \sqrt{b^2 + 4c}}{2} \leq x \leq \frac{b + \sqrt{b^2 + 4c}}{2}
\end{align*}
so that
\begin{align*}
    x^2 \leq \max \left \{ \paren{\frac{b - \sqrt{b^2 + 4c}}{2}}^2, \paren{\frac{b + \sqrt{b^2 + 4c}}{2}}^2 \right\}.
\end{align*}
By the definition of $b, c$, we know that $b, \sqrt{b^2 + 4c} \geq 0$, so 
\begin{align*}
    \max \left \{ \paren{\frac{b - \sqrt{b^2 + 4c}}{2}}^2, \paren{\frac{b + \sqrt{b^2 + 4c}}{2}}^2 \right\} = \paren{\frac{b + \sqrt{b^2 + 4c}}{2}}^2.
\end{align*}
Therefore, we have
\begin{align*}
    \wt\E\bracket{\norm{\overline{Y}-\wt{\mu}}_2^4} 
    \leq & ~ \paren{\frac{O\paren{\alpha\paren{\eps + \alpha}} + \sqrt{O\paren{\alpha\paren{\eps + \alpha}}^2 + O\paren{\frac{\eps}{n} + \alpha}  \wt\E\bracket{\norm{\overline{Y}-\wt{\mu}}_2^2}}}{2}}^2 \\
    \leq & ~ O\paren{\alpha^2\paren{\eps + \alpha}^2}  + O\paren{\frac{\eps}{n} + \alpha}  \wt\E\bracket{\norm{\overline{Y}-\wt{\mu}}_2^2}.
\end{align*}
Since, by \textbf{Part 1} of Fact~\ref{fac:pseudoexpectation}, we have 
\begin{align*}
    \wt\E\bracket{\norm{\overline{Y}-\wt{\mu}}_2^4} \geq \wt\E\bracket{\norm{\overline{Y}-\wt{\mu}}_2^2}^2,
\end{align*}
we can get
\begin{align*}
    \wt\E\bracket{\norm{\overline{Y}-\wt{\mu}}_2^2}^2 - O\paren{\alpha^2\paren{\eps + \alpha}^2}  - O\paren{\frac{\eps}{n} + \alpha}  \wt\E\bracket{\norm{\overline{Y}-\wt{\mu}}_2^2} \leq 0.
\end{align*}
Similarly, we can get
\begin{align*}
    \wt\E\bracket{\norm{\overline{Y}-\wt{\mu}}_2^2} 
    \leq & ~ \frac{O\paren{\frac{\eps}{n} + \alpha} + \sqrt{O\paren{\frac{\eps}{n} + \alpha}^2 + O\paren{\alpha^2\paren{\eps + \alpha}^2}}}{2} \\
    = & ~ O\paren{\frac{\eps}{n} + \alpha + \eps\alpha + \alpha^2} \\
    = & ~ O\paren{\frac{\eps}{n} + \alpha}.
\end{align*}

In addition, by \textbf{Part 3} of Fact~\ref{fac:pseudoexpectation}, we have
\begin{align}\label{eq:bound_in_pseudo}
    \norm{\wt\E\bracket{\overline{Y}} - \wt{\mu}}_2^2 
    \leq & ~ \wt\E\bracket{\norm{\overline{Y}-\wt{\mu}}_2^2} \notag\\
    \leq & ~ O\paren{\frac{\eps}{n} + \alpha}.
\end{align}

Note that our empirical mean $\wt \mu$ is computed via $Nn$ clean samples, so putting this inside Claim~\ref{cla:covariance-condition_part_2_6}, we can get with probability $1 - \delta$,
\begin{align}\label{eq:bound_in_empirical}
    \norm{\wt{\mu} - \mu}_2^2 \leq O\paren{\alpha},
\end{align}
with $Nn \geq \Omega\paren{\frac{d}{\alpha} \log \paren{d / \delta}}$.

Therefore, we can bound the output of the SoS algorithm with the true mean $\mu$ as follows:
\begin{align*}
    \norm{\wt\E\bracket{\overline{Y}} - \mu}_2^2
    = & ~ \norm{\wt\E\bracket{\overline{Y}} - \wt{\mu} + \wt{\mu} - \mu}_2^2\\
    \leq & ~ 2\norm{\wt\E\bracket{\overline{Y}} - \wt{\mu}}_2^2 + 2\norm{\wt{\mu} - \mu}_2^2\\
    \leq & ~ O\paren{\frac{\eps}{n} + \alpha},
\end{align*}
where the second step follows from the triangle inequality and the third step follows from combining Eq.~\eqref{eq:bound_in_pseudo} and Eq.~\eqref{eq:bound_in_empirical}.
\end{proof}

%% file: unknown_corruption.tex
\section{Adaptive Collaborative Mean Estimation Under Unknown Corruption} \label{sec:unknowns_collaborative}

Fix $N$ users, each contributing $n=o(d)$ samples in $\mathbb{R}^d$.
An $\eps^*$-fraction of users may be arbitrary (user-level adversaries).
For the remaining users (the ``good'' ones), consider two canonical variants:
\begin{itemize}
    \item \textbf{Part A.} Each good user $i$ has i.i.d.\ data with mean $\mu_i$ and covariance $\Sigma_i\preceq \mathbb{I}_d$ and satisfies $\|\mu_i-\mu\|_2^2\le \alpha^*$ for a common target mean $\mu$.
    \item \textbf{Part B.} Each good user contributes $n$ samples of which at most an $\alpha^*$-fraction are adversarial (strong contamination), and the clean samples have covariance $\preceq \mathbb{I}_d$.
\end{itemize}

In both parts, the true parameters $\eps^*, \alpha^*$ are unknown to the algorithm.

\begin{theorem}
\label{thm:adaptive-collab}
Suppose we have two unknown parameters $\eps^*$ and $\alpha^*$.
With additional $Nn = o\paren{d}$ clean samples, 
\begin{itemize}
    \item in equivalent model of Problem~\ref{prob:bounded_mean}, where $\eps^* \in \paren{0, 0.1}$ is the fraction of corrupted user and $\alpha^* \in \paren{0, 0.1}$ is the quality of good batches, there exists a polynomial-time algorithm that returns $\wh\mu\in\R^d$ satisfying, with probability at least $1 -\delta$,
        $\|\wh\mu-\mu\|_2 \leq O\paren{\sqrt{\frac{\eps^*}{n}} + \sqrt{\alpha^*}}$;
    \item in equivalent model of Problem~\ref{prob:arbitrary_alpha}, where $\eps^* \in \paren{0, \frac{1}{18}}$ is the fraction of corrupted user and  $\alpha^* \in \paren{0, \frac{1}{90}}$ is the fraction of corrupted samples within each uncorrupted user, there exists a polynomial-time algorithm that returns $\wh\mu\in\R^d$ satisfying, with probability at least $1-\delta$,
        $\|\wh\mu-\mu\|_2 \leq O\paren{\sqrt{\alpha^*}}$.
    Similarly, if $\alpha^* = 0$, then our $\wh\mu\in\R^d$ satisfies $\|\wh\mu-\mu\|_2 \leq O\paren{\sqrt{\frac{\eps^*}{n}}}$.
\end{itemize}
Both rates are minimax-optimal up to constant factors.
\end{theorem}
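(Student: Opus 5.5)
We would prove Theorem~\ref{thm:adaptive-collab} by a Lepski-type adaptation (intersection of confidence balls, as in~\cite{jor22}), reusing Algorithm~\ref{alg:sos_bounded} and Algorithm~\ref{alg:sos_arbitrary_alpha} as black boxes at a geometric grid of candidate corruption levels.

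\textbf{Grid and candidate estimates.} For Part A, we index candidate pairs $(\eps_k,\alpha_\ell)$ with $\eps_k = 0.1\cdot 2^{-k}$ and $\alpha_\ell = 0.1\cdot 2^{-\ell}$, for $k,\ell \le K = O(\log(Nn))$; levels below $1/(Nn)$ are statistically irrelevant, since the $\sqrt{d/(nN)}$ term dominates there. It is convenient to reparametrize by the target radius $r(\eps,\alpha) := C(\sqrt{\eps/n}+\sqrt{\alpha})$, which is monotone in both arguments. For each grid point we run Algorithm~\ref{alg:sos_bounded} (respectively Algorithm~\ref{alg:sos_arbitrary_alpha}) with inputs $(\eps_k,\alpha_\ell)$ and obtain $\wh\mu_{k,\ell}$, or a certificate of infeasibility. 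The key monotonicity fact is that data which is $(\eps^*,\alpha^*)$-corrupted is also $(\eps,\alpha)$-corrupted for every $\eps\ge\eps^*$ and $\alpha\ge\alpha^*$. Hence, by Lemma~\ref{lem:bounded_covariance}/\ref{lem:mean_close} (respectively Lemma~\ref{lem:bounded_covariance_5}/\ref{lem:mean_close_5}), with failure probability $\delta/K^2$ per run and a union bound, every grid point dominating the truth satisfies $\|\wh\mu_{k,\ell}-\mu\|_2\le r(\eps_k,\alpha_\ell)$ simultaneously. This costs only a $\log(K^2/\delta)$ factor in the sample size. A subtlety here is that the satisfiability lemmas need $nN$ to scale with $d/\alpha$ for the \emph{candidate} $\alpha$. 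We will therefore only certify candidates down to the level that the available $nN$ supports, and note that the smallest such level is exactly the statistical floor $\sqrt{d/(nN)}$.

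\textbf{Selection rule.} Following~\cite{jor22}, we choose the grid point of smallest radius $r$ whose ball $\mathcal{B}(\wh\mu_{k,\ell}, r(\eps_k,\alpha_\ell))$ intersects every ball $\mathcal{B}(\wh\mu_{k',\ell'}, r(\eps_{k'},\alpha_{\ell'}))$ of larger radius. To make this a total order we sort grid points by $r$, which turns the two-parameter search into a one-dimensional Lepski procedure. The true point, rounded up to the grid, passes this test, because all dominating grid points contain $\mu$ in their balls. Comparability is the delicate part. A grid point with larger $r$ need not dominate the truth coordinatewise, so its ball is not guaranteed to contain $\mu$. We resolve this by comparing only against the point $(\max(\eps_k,\eps^\sharp),\max(\alpha_\ell,\alpha^\sharp))$. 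Concretely, we restrict the test to the chain of points that dominate the selected one in both coordinates, and we run a binary search along each coordinate, as the paper suggests. Whichever point is selected, it intersects the ball of the coordinatewise maximum with the truth, and that ball contains $\mu$ with radius $O(r^*)$. The triangle inequality then gives $\|\wh\mu-\mu\|_2 \le 2r + O(r^*) = O(\sqrt{\eps^*/n}+\sqrt{\alpha^*})$.

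\textbf{Role of the extra clean samples, and Part B.} An alternative stopping criterion that avoids comparability issues uses tolerant testing~\cite{DiakonikolasKP23simple,CanonneGWY25truncate}. With the $o(d)$ (in fact $O(\sqrt d)$) additional clean samples, we can test whether $\|\wh\mu_{k,\ell}-\mu\|_2 \lesssim r$ versus $\gg r$ directly. The search stops at the smallest grid point that passes, and a union bound over the $O(\log)$ tests finishes the argument. We intend to use this as the primary route, with the Lepski argument as backup. Part B is handled identically with Algorithm~\ref{alg:sos_arbitrary_alpha}; the remark $\alpha^*\ge 1/n$ collapses the rate to $O(\sqrt{\alpha^*})$. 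When $\alpha^*=0$, the grid point $\alpha=0$ reduces to the $\alpha=0$ case, and the user-level refinement alone gives $O(\sqrt{\eps^*/n})$. Optimality is inherited from Theorems~\ref{thm:bounded_mean_lower} and~\ref{thm:arbitrary_alpha_lower}.

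The main obstacle is ensuring that the guarantees of the previous sections hold \emph{uniformly} for overestimated parameters, in particular Part B. There $\tau=\alpha/\eps$ and the constraint $\sum_i U_i = (1-\eps)N$ are equalities, so feasibility at a larger $\eps$ requires dropping additional users, and the identifiability constants must not blow up. We would first verify that replacing these equalities by the corresponding inequalities ($\ge$) keeps both Lemma~\ref{lem:bounded_covariance_5} and Lemma~\ref{lem:mean_close_5} valid.
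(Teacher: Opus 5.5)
Your primary route matches the paper's argument, which the paper only sketches with separate halving sequences for $\eps$ and $\alpha$: run the known-parameter SoS algorithms on a geometric grid, certify each output with a tolerant mean tester on the extra clean samples, and keep the certified candidate of smallest radius. Your rule of taking the minimum radius over all passing points of the two-dimensional grid is the precise way to combine the two searches.

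Drop the ``binary search along each coordinate''. Below the truth the passing set is not monotone, so a search that fixes one coordinate below its true value can be forced to a large value of the other. The exhaustive rule over the $O(\log^2(nN))$ grid points is what your argument actually uses, and it is cheap.

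Your Lepski backup is a genuinely different route, and in one respect the stronger one. Restrict the intersection test to coordinatewise-dominating points. Then the rounded-up truth $g^\sharp$ passes, and for the selected $\hat g$ the point $\hat g\vee g^\sharp$ is valid with radius at most $r_{\hat g}+r_{g^\sharp}\le 2r_{g^\sharp}$. So the error is $O(r^\ast)$ with \emph{no} clean samples, and the constant-factor loss the paper holds against \cite{jor22} is absorbed by $O(\cdot)$.

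The testing route, by contrast, really does consume clean samples, and your parenthetical ``in fact $O(\sqrt d)$'' is wrong. A tolerant test at tolerance $r$ needs $\tilde{O}(\sqrt d/r^2)$ samples, as the paper says. Certifying the final rate therefore costs $\tilde{O}(\sqrt d/(\eps^\ast/n+\alpha^\ast))$, which is $o(d)$ only when that rate exceeds $d^{-1/4}$.

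Several details need repair.

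\textbf{The obstacle you single out is misidentified.} The equality constraints are harmless: the $Z_i$, $Y_i$, $Z_{i,j}$ of deselected users and samples are constrained only through the covariance constraints, so a witness survives switching off extra $W$'s and $U$'s. The real non-monotonicity in Part B is that $\tau=\alpha/\eps$ \emph{decreases} as the guessed $\eps$ grows, and relaxing to ``$\ge$'' does not touch this. Handle it by invoking Lemma~\ref{lem:C_11} separately at each grid point, truncating at that point's $\eps$, and union bounding. This works because the failure exponent $\eps\tau nN/d=\alpha nN/d$ does not depend on $\eps$.

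\textbf{The case $\alpha^\ast=0$.} These candidates cannot come from system $\mathsf{B}$: with $\tau=0$ the sample bound $d/\alpha$ of Lemma~\ref{lem:C_11} is vacuous. Produce them with Algorithm~\ref{alg:sos_bounded} at $\alpha=0$, which requires $N=\Omega(d\log(d/\delta)/\eps)$. Place them below the $\alpha\ge 1/n$ part of the grid.

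\textbf{The comparison point in Part B.} The point $\hat g\vee g^\sharp$ must lie where Algorithm~\ref{alg:sos_arbitrary_alpha} is guaranteed, but $\{\eps+5\alpha<1/18\}$ is not closed under coordinatewise maxima. Restrict the grid to, and assume the truth lies in, a box inside that region. The theorem's box $(0,1/18)\times(0,1/90)$ is not such a box, so its constants need tightening anyway.
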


Even a small number of clean samples from a user can be leveraged to \emph{verify} the accuracy of the estimated mean. This verification step enables the user to \emph{adaptively adjust} to the effective level of corruption relevant to their own distribution—allowing for personalized robustness that naturally reflects user-specific data quality.

Although \cite{jor22} also removed the need to know the corruption level by combining estimates obtained at multiple guessed parameters, it inherently yields only a constant-factor approximation and incurs repeated estimator runs. Tolerant testing instead offers an absolute certification of accuracy using a small number of clean samples, allowing the algorithm to adaptively stop at the optimal rate for the true corruption level and achieve minimax-optimal error without post-hoc aggregation. It can further returns the unknown $\eps^*$ and $\alpha^*$.

Moreover, the algorithm succeeds via a logarithmic search over candidate
$\eps$ (and, if needed, $\alpha$), where each guess is certified by a tolerant mean tester;
the number of guesses is
$O\left(\log\frac{\eps_0}{\max\cbracket{\eps^*,\sqrt{\frac{d}{Nn}}}}\right)$ in the base routine.

For \textbf{Part A}, we run the user-mean SoS program from Theorem~\ref{thm:bounded_mean} on a geometric sequence of guesses
$\eps_0,\eps_0/2,\ldots$; for each successful SoS solution, validate the candidate mean using the tolerant tester
from \citep{DiakonikolasKP23simple,CanonneGWY25truncate} (which estimates $\|\mu\|_2^2$ from two halves and distinguishes scales with
$nN=\wt O\paren{\sqrt d/\tau^2}$ at tolerance $\tau$). The SoS soundness in Theorem~\ref{thm:bounded_mean} yields the error
\begin{align*}
    \wt O\paren{\sqrt{\eps/n}+\sqrt{\alpha}}
\end{align*}
whenever the guessed $(\eps,\alpha)$ dominate
$(\eps^*,\alpha^*)$; the tester guarantees we stop near the smallest valid guess, thus achieving the same
rate with $(\eps,\alpha)$ replaced by $(\eps^*,\alpha^*)$ and high probability by the strengthened
high-probability stability (see Theorem~\ref{thm:bounded_mean} and its bound, and the user-mean SoS triangle-inequality
conclusion). 

For \textbf{Part B}, we use the two-level SoS system (sample- and user-level constraints) and apply the same unknown-$\eps$
search plus validation. The identifiability analysis of \Cref{sec:arbitrary_alpha} gives
\begin{align*}
    O\paren{\sqrt{\eps/n}+ \sqrt{\alpha}}
\end{align*}
the search-and-test wrapper again adapts to the
unknown $(\eps^*,\alpha^*)$ with high probability. 

With the initial guess $\eps_0 \in \paren{0, \frac{1}{18}}$ and $\eps_{t + 1} = \frac{1}{2} \eps_t$, the iteration bound
\begin{align*}
   O\left(\log\frac{\eps_0}{\max\cbracket{\eps^*,\sqrt{\frac{d}{Nn}}}}\right).
\end{align*}

Also, with the initial guess $\alpha_0 \in \paren{0, \frac{1}{90}}$ and $\alpha_{t + 1} = \frac{1}{2} \alpha_t$, the iteration bound
\begin{align*}
   O\left(\log\frac{\alpha_0}{\max\cbracket{\alpha^*,\sqrt{\frac{d}{Nn}}}}\right).
\end{align*}

%% file: hardness.tex
\section{Hardness}
\label{sec:hard}

In this section, we present the proof of the lower bound of our main results (\Cref{thm:bounded_mean,thm:arbitrary_alpha}). Specifically, in \Cref{sub:hard:bounded_mean}, we present the lower bound of \Cref{thm:bounded_mean}. In \Cref{sub:hard:arbitrary_alpha}, we present the lower bound of \Cref{thm:arbitrary_alpha}. In \Cref{sub:hard:permutation}, we introduce permutation invariant. 

\subsection{Hardness of Problem~\ref{prob:bounded_mean}}
\label{sub:hard:bounded_mean}

\begin{theorem}\label{thm:bounded_mean_lower}
Consider the model in Problem~\ref{prob:bounded_mean}.
Then there exists a choice of uncorrupted user distributions $\{\mathcal{D}_i\}$ and an $\eps$-user adversary such that for any estimator
$\wh{\mu}$,
\begin{align*}
    \Pr\left[ \|\wh{\mu}-\mu\|_2 \ge \Omega\paren{ \sqrt{\frac{\eps}{n}} + \sqrt{\frac{d}{nN}} + \sqrt{\alpha} } \right]\ge\frac{1}{2}.
\end{align*}
\end{theorem}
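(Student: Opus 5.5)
Since $\Omega(a+b+c)=\Omega(\max\{a,b,c\})$, it suffices to exhibit, for each of the three terms separately, a pair of instances of Problem~\ref{prob:bounded_mean} whose means differ by twice that term and which no estimator can tell apart with success probability above $1/2$. The standard two-point (Le Cam) argument then applies. If the observed data laws $Q_0,Q_1$ satisfy $\mathrm{TV}(Q_0,Q_1)\le c<1$, then for any $\wh\mu$ we have $\max_{b}\Pr_{Q_b}[\|\wh\mu-\mu_b\|_2\ge \|\mu_0-\mu_1\|_2/2]\ge (1-c)/2$. When the two observed laws are exactly identical ($c=0$), this gives the claimed $1/2$. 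I would take whichever term is largest and use its construction.

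\textbf{The $\sqrt{\alpha}$ term.} Let every user be good. Under $H_0$ all users sample from $\mathcal N(\mu,\mathbb I_d)$ with $\mu=0$. Under $H_1$ the target mean is $\mu'=\sqrt{\alpha}e_1$, and every user samples from $\mathcal N(0,\mathbb I_d)$. This is legal because $\|0-\mu'\|_2=\sqrt\alpha$. The data laws coincide while the targets differ by $\sqrt\alpha$, so the error is $\Omega(\sqrt\alpha)$ with probability $\ge 1/2$.

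\textbf{The $\sqrt{\eps/n}$ term.} I would use the one-dimensional construction from the overview, embedded along $e_1$ with $\eps_0=\eps/n$. Under $H_0$ each coordinate-1 sample is $1/\sqrt{\eps_0}$ with probability $\eps_0$ and $0$ otherwise, which has variance $\le 1$. Under $H_1$ all samples are $0$. The subtle point is that in Problem~\ref{prob:bounded_mean} the adversary corrupts whole users, not samples. The expected number of users with at least one nonzero sample is $N(1-(1-\eps_0)^n)\le N n\eps_0=\eps N$. To keep this within budget with probability near one (not just in expectation), I would set $\eps_0=\eps/(2n)$ and apply a Chernoff bound (Fact~\ref{fac:chernoff}): with probability $1-e^{-\Omega(\eps N)}$ at most $\eps N$ users contain a nonzero sample. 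The adversary replaces these users' batches with all zeros, so the observed law under $H_0$ is within TV $e^{-\Omega(\eps N)}$ of the law under $H_1$. The means differ by $\sqrt{\eps/(2n)}$. This is the step I expect to be the main obstacle: the failure probability of the coupling must be absorbed, either by accepting a constant slightly below $1/2$, or by letting the adversary act only on a high-probability event and bounding the leftover TV by $o(1)$ in the regime $\eps N\gtrsim 1$. The regime $\eps N<1$ is trivial, since then no user can be bad.

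\textbf{The $\sqrt{d/(nN)}$ term.} Even with no corruption and $\alpha=0$, the learner sees $nN$ i.i.d.\ samples from $\mathcal N(\mu,\mathbb I_d)$. The classical minimax lower bound for Gaussian mean estimation in $\ell_2$ gives error $\Omega(\sqrt{d/(nN)})$ with constant probability, for instance via Fano over a packing of $\{\pm\sqrt{1/(nN)}\}^d$-scaled hypercube means, or via Assouad. I would cite this standard result rather than reprove it. If the stated probability $1/2$ is required literally, I would use the Bayesian version with the prior $\mu\sim\mathcal N(0,\sigma^2\mathbb I_d)$, $\sigma^2=1/(nN)$. The posterior is Gaussian with covariance $\frac{\sigma^2}{2}\mathbb I_d$, so any estimator has error at least $c\sqrt{d/(nN)}$ with probability $\ge 1/2$ by Gaussian anti-concentration of a $d$-dimensional Gaussian around any point, at least for $d$ larger than a constant.

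Combining the three constructions (taking the one matching the largest term) yields the theorem.
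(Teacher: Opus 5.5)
Your proposal is correct and follows essentially the same route as the paper: the indistinguishable mean-shift instance for $\sqrt{\alpha}$, the sparse-spike distribution that a user-level adversary zeroes out for $\sqrt{\eps/n}$, and the classical uncorrupted bound for $\sqrt{d/(nN)}$. Your choice $\eps_0=\eps/(2n)$, together with a Chernoff bound on the number of users containing a nonzero sample, makes rigorous a step the paper leaves implicit (with $\eps_0=\eps/n$ that count is at most $\eps N$ only in expectation), at the price of a success probability $\frac{1}{2}(1-e^{-\Omega(\eps N)})$ instead of exactly $\frac{1}{2}$.
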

\begin{proof}
    It follows from Lemma~\ref{lem:alpha0_lower_bound_sqrteps_over_n}. The $\Omega(\sqrt{\alpha})$ lower bound is straightforward: suppose the true mean is $\mu$, for all users, the adversary could simply change the mean to some $\mu'$ that is $\sqrt{\alpha}$ apart, and no algorithms could tell whether the true mean is $\mu$ or $\mu'$.
\end{proof}

\subsection{Hardness of Problem~\ref{prob:arbitrary_alpha}}
\label{sub:hard:arbitrary_alpha}

\begin{theorem}\label{thm:arbitrary_alpha_lower}
Given $\eps \in \paren{0,\frac{1}{18}}$ and $\alpha \in \paren{0,\frac{1}{90}}$ satisfying $\eps + 5\alpha < \frac{1}{18}$, and $n=o(d)$ samples from each of $N$ users, there exists an instance of Problem~\ref{prob:arbitrary_alpha}—consisting of a choice of uncorrupted data distributions with mean $\mu\in\R^d$ and a choice of adversarial corruptions—such that for any estimator $\wh\mu\in\R^d$,
$\norm{\wh\mu-\mu}_2 = \Omega\paren{\sqrt{\frac{\eps}{n}}+ \sqrt{\frac{d}{nN}}+\sqrt{\alpha}}$.
\end{theorem}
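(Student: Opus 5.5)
We prove the three lower-bound terms separately and then combine them. Since the maximum of three quantities is at least one third of their sum, it is enough to exhibit, for each term, an instance of Problem~\ref{prob:arbitrary_alpha} satisfying the stated parameter constraints on which every estimator incurs that error. Each bound comes from a two-point (Le Cam) argument: we build two instances whose means are separated by $2r$ and whose observed data distributions are identical, or have total variation distance at most $1/3$. Then no estimator can achieve error below $r$ on both instances with probability above $2/3$. We work in one coordinate direction for the $\sqrt{\eps/n}$ and $\sqrt{\alpha}$ terms and in the full space for the $\sqrt{d/(nN)}$ term.

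\textbf{The $\sqrt{\alpha}$ term.} We use the construction $H_2$ versus $H_3$ from the technical overview. Under $H_2$, $P$ places mass $1-\alpha/2$ at $0$ and mass $\alpha/2$ at $\sqrt{2/\alpha}\,e_1$. Its variance is at most $1$ and its mean is $\Theta(\sqrt{\alpha})e_1$. Under $H_3$, $P$ is the point mass at $0$. For each good user, the number of nonzero samples is $\mathrm{Bin}(n,\alpha/2)$. Whenever this count is at most $\alpha n$, the adversary replaces all nonzero samples by $0$, which stays within the per-user budget. We also need to handle users whose count exceeds $\alpha n$. The plan is to declare these users bad, using the $\eps$ budget. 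Their expected fraction is $\exp(-\Omega(\alpha n))$. When $\alpha n$ is small we instead shrink the probability of the nonzero atom to $c/n$ while keeping the mean at $\Theta(\sqrt{\alpha})$. The resulting variance is $\alpha n/c \cdot$ const, so this requires rescaling. A cleaner option is to note that $\alpha\ge 1/n$ by the Remark and choose the atom probability $\alpha/C$ so that the overflow fraction falls below $\eps$. We expect this to be \textbf{the main obstacle}: making a per-user budget of $\alpha n$ absorb Binomial fluctuations for every user simultaneously, particularly when $\eps$ is tiny. If the bad-user budget cannot absorb the overflow, we fall back on the strong contamination definition and accept constant success probability. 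After the replacement, the data under the two hypotheses are identical, so the error is $\Omega(\sqrt\alpha)$.

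\textbf{The $\sqrt{\eps/n}$ term.} We reuse the construction of Lemma~\ref{lem:alpha0_lower_bound_sqrteps_over_n} with $\alpha$-corruption set to zero, which is a special case of the model. Under $H_0$, $P$ places mass $\eps/n$ at $\sqrt{n/\eps}\,e_1$. Under $H_1$, $P$ is the point mass at $0$. The adversary turns every user that holds a nonzero sample into a bad user by zeroing that user's data. About an $\eps$ fraction of users are affected, and after adjusting constants this is at most $\eps N$ with probability at least $2/3$ by Chernoff (Fact~\ref{fac:chernoff}). Alternatively, by the Remark, $\sqrt{\eps/n}\le\sqrt{\alpha}$ whenever $\alpha>0$, so this term is in any case dominated by the previous one.

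\textbf{The $\sqrt{d/(nN)}$ term.} With no corruption at all, the $nN$ samples are drawn i.i.d.\ from $\mathcal N(\mu,\frac12\mathbb I_d)$. The classical minimax lower bound for Gaussian mean estimation in $\ell_2$, obtained via Assouad or Fano over a hypercube of means $\pm\sqrt{1/(nN)}\,e_k$, gives error $\Omega(\sqrt{d/(nN)})$ with constant probability. Finally, summing the three bounds up to constants completes the proof.
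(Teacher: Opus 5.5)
Your $\sqrt{\varepsilon/n}$ and $\sqrt{d/(nN)}$ terms are fine, and the Gaussian Assouad argument actually supplies a term the paper's proof only cites as classical. The $\sqrt{\alpha}$ term, however, has a genuine gap exactly at the step you flag, and none of your remedies closes it.

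Take $\alpha n=1$, which the Remark after Theorem~\ref{thm:arbitrary_alpha} explicitly allows. An atom of mass $q$ that produces mean $c\sqrt{\alpha}$ has variance about $c^2\alpha/q$. So variance at most $1$ forces $q\ge c^2\alpha=c^2/n$, and larger $q$ only makes matters worse. Shrinking the atom mass (your second and third options) is therefore just shrinking $c$. A good user's count of nonzero samples is then about $\mathrm{Poisson}(c^2)$. A user with two or more nonzero samples cannot be zeroed with one edit. This happens with probability about $c^4/2$ for small $c$, and about $0.09$ for your mass $\alpha/2$, independently of $\varepsilon$ and $N$. Declaring such users bad therefore needs $c^4/2\le\varepsilon$, which leaves a separation of only about $\varepsilon^{1/4}\sqrt{\alpha}$.

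Meeting in the middle does not rescue this. Even if the adversary in the point-mass world also inserts up to $\alpha n$ atoms, so that only users with more than $2\alpha n$ atoms are exposed, you need $c^6/6\le\varepsilon$. That gives separation about $\varepsilon^{1/6}\sqrt{\alpha}$.

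The fallback to ``constant success probability'' also fails, since nothing in the model gives a good user more than $\alpha n$ edits. In the point-mass world, only the at most $\varepsilon N$ bad users can show more than $\alpha n$ nonzero entries. In the atom world, every user with more than $2\alpha n$ atoms shows more than $\alpha n$, and overflow is independent across users. So the number of such users concentrates around a constant times $N$, and counting them distinguishes the two worlds with probability exponentially close to $1$ in $N$.

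The paper's proof of Lemma~\ref{lem:eps0_lower_bound_sqrteps_over_n} uses the same construction (mass $\alpha$ at $1/\sqrt{\alpha}$) and passes over the same point. It bounds $\Pr[T\ge 3\alpha n]\le e^{-\alpha n}$, lets the adversary zero up to $3\alpha n$ points ``within the $\alpha n$ budget up to constants,'' and declares every good user typical without a union bound over the $N$ users. Following the paper therefore does not rescue the step.

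What the construction does prove is the $\sqrt{\alpha}$ bound under an extra condition. With mass $\alpha/2$ the per-user overflow probability is at most $e^{-\alpha n/6}$. If $\alpha n\ge C\log(1/\varepsilon)$, the overflow users fit into the bad-user budget with probability at least $2/3$. If $\alpha n\ge C\log N$, there is no overflow at all, by a union bound.

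In the complementary regime ($\alpha n=O(1)$, $\varepsilon$ small, $N\gg d$) the obstruction looks intrinsic rather than an artifact of the point-mass pair. The same counting applies to any pair $P_0,P_1$: for every half-line $A$, $\mathrm{Bin}(n,P_0(A))$ must be stochastically dominated by $\mathrm{Bin}(n,P_1(A))+2\alpha n$, up to $O(\varepsilon)$ plus sampling error, and vice versa. This appears to force the separation to be $o(\sqrt{\alpha})$ as $\varepsilon,1/N\to 0$ with $\alpha n$ fixed. You must either add such a regime assumption or accept a constant depending on $\varepsilon$ and $\alpha n$. As written, your proposal, like the paper's proof, does not establish the $\Omega(\sqrt{\alpha})$ term.
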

\begin{proof}
    It follows from combining Lemma~\ref{lem:alpha0_lower_bound_sqrteps_over_n} and Lemma~\ref{lem:eps0_lower_bound_sqrteps_over_n}.
\end{proof}

\subsubsection{The Case When \texorpdfstring{$\alpha = 0$}{}}

\begin{lemma}
\label{lem:alpha0_lower_bound_sqrteps_over_n}
Consider the setting of Problem~\ref{prob:arbitrary_alpha} with $\alpha = 0$ and
$\eps \in (0, \frac{1}{18})$. Then there exist two hypotheses $H_0,H_1$ over
$\mathbb{R}$ with $\Var(H_0),\Var(H_1)\le 1$ and
$\left|\mu_{H_0} - \mu_{H_1}\right| =\Omega\paren{\sqrt{\frac{\eps}{n}}}$,
such that under the $\eps$-user corruption model the induced distributions of the
\emph{observed} samples are identical.
Consequently, for all estimator $\widetilde{\mu}$,
\begin{align*}
    \Pr\left[\|\widetilde{\mu}-\mu\|_2 \ge \Omega\left(\sqrt{\frac{\eps}{n}}\right)\right]\ge \frac{1}{2},
\end{align*}
for at least one of the two hypotheses.

\end{lemma}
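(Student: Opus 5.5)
I would prove Lemma~\ref{lem:alpha0_lower_bound_sqrteps_over_n} by the two-point construction sketched in the technical overview. Set $\eps_0 := c\,\eps/n$ for a small absolute constant $c$. Take $H_1$ to be the point mass at $0$. Take $H_0$ to be the two-point law with $\Pr[X=0]=1-\eps_0$ and $\Pr[X=1/\sqrt{\eps_0}]=\eps_0$. Then $\Var(H_1)=0$ and $\Var(H_0)\le \E[X^2]=1$, so both satisfy the covariance assumption of Problem~\ref{prob:arbitrary_alpha} in $d=1$; this embeds into $\R^d$ by using the first coordinate. The means are $\mu_{H_0}=\sqrt{\eps_0}=\Omega(\sqrt{\eps/n})$ and $\mu_{H_1}=0$.

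The coupling is as follows. Under $H_0$, call a user \emph{contaminated} if its batch contains at least one nonzero sample. The adversary corrupts every contaminated user by replacing its whole batch with $n$ zeros. This is a legal user-level corruption with $\alpha=0$ as long as the number of contaminated users is at most $\eps N$. After the replacement, every observed sample is $0$, which is exactly the observed data under $H_1$ with an adversary that does nothing. The induced observed distributions therefore coincide on the event $E$ that at most $\eps N$ users are contaminated.

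The key probabilistic step is to lower bound $\Pr[E]$. Each user is contaminated independently with probability $p=1-(1-\eps_0)^n\le n\eps_0=c\eps$. Taking $c=1/2$, the expected number of contaminated users is at most $\eps N/2$. The multiplicative Chernoff bound (Fact~\ref{fac:chernoff}) then gives $\Pr[\bar E]\le \exp(-\eps N/6)$, which is small once $\eps N$ exceeds a constant. In the regime where $\eps N$ is below a constant, I would instead use Markov's inequality (Fact~\ref{fac:markov}) with a smaller $c$, so that $\Pr[\text{some user contaminated}]\le cN\eps$. This case needs care: if $\eps N<1$ the adversary cannot corrupt anyone, and then $\sqrt{\eps/n}$ is dominated by the $\sqrt{d/(nN)}$ term, so I would note that the lemma is meant for $\eps N\ge 1$. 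I expect this handling of the small-$\eps N$ regime to be the main obstacle. Either way, I would arrange $\Pr[E]\ge 1-\eta$ for a small constant $\eta$, for instance $\eta\le 1/10$.

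Finally, I conclude by Le Cam's two-point argument. On $E$ the observations are identical under both hypotheses, so the total variation distance between the observed-data laws is at most $\eta$. For any estimator $\widetilde\mu$, the events $\{|\widetilde\mu-\mu_{H_0}|<\sqrt{\eps_0}/2\}$ and $\{|\widetilde\mu-\mu_{H_1}|<\sqrt{\eps_0}/2\}$ are disjoint. Hence the two success probabilities sum to at most $1+\eta$, and under at least one hypothesis the error is at least $\sqrt{\eps_0}/2=\Omega(\sqrt{\eps/n})$ with probability at least $(1-\eta)/2$. To reach exactly $1/2$, I would either take a randomized mixture over the two hypotheses or state the lemma with constant $1/2-\eta$; since the paper's target is the constant $1/2$, I would absorb this by letting $\eta\to 0$ through a smaller $c$.
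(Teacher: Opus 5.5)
Your proposal is correct and follows the paper's proof: it uses the same two-point law with an atom at $1/\sqrt{\eps_0}$ against the point mass at $0$, and the same adversary that replaces every batch containing a nonzero sample by zeros. Your slack $\eps_0=c\eps/n$ plus a tail bound on the number $K$ of contaminated users supplies the budget check that the paper skips (with $\eps_0=\eps/n$ exactly, $\E[K]$ is essentially $\eps N$, so the budget can be exceeded with constant probability), and the small-$\eps N$ regime you flag is not actually an obstacle, since Markov alone gives $\Pr[K>\eps N]\le c\eps N/(\lfloor\eps N\rfloor+1)<c$ for every $N$, including $\eps N<1$, where no corruption is needed at all.
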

\begin{proof}
We consider the case $\alpha = 0$, so there is no within-user corruption.
Fix $n$ and let $\eps_0 := \eps/n$.  Consider a one-dimensional random variable
$X$ with
\begin{align*}
    \Pr\bracket{X = 0} = 1 - \eps_0
  \quad\text{and}\quad
  \Pr\bracket{X = \frac{1}{\sqrt{\eps_0}}} = \eps_0.
\end{align*}
We use $H_0$ as a hypothesis or distribution that satisfy this. We use $H_1$ as a hypothesis or distribution that satisfy $Y \equiv 0$.

Under $H_0$ we have
\begin{align*}
    \mu_{H_0} 
    = & ~ \mathbb{E}\bracket{X} \\
    = & ~ \eps_0 \cdot \frac{1}{\sqrt{\eps_0}} \\
    = & ~ \sqrt{\eps_0} \\
    = & ~ \sqrt{\frac{\eps}{n}}.
\end{align*}

Additionally, we have 
\begin{align*}
    \mathbb{E}[X^2] 
    = & ~ \eps_0 \cdot \frac{1}{\eps_0} \\
    = & ~ 1.
\end{align*}

Therefore, we have
\begin{align*}
    \mathrm{Var}\bracket{X}
    = & ~ \mathbb{E}[X^2] - \mu_{H_0}^2 \\
    = & ~ 1 - \eps_0 \\
    \le & ~ 1.
\end{align*}

Under $H_1$ we have $Y \equiv 0$, so $\mathrm{Var}\bracket{Y} = 0 \le 1$.
Thus both $H_0$ and $H_1$ satisfy the bounded-variance assumption.

In our model with user-level corruption~$\eps$, the adversary may arbitrarily
modify all the samples coming from an $\eps$-fraction of the users.
Under $H_0$, each user draws $n$ i.i.d.\ copies of $X$.  In expectation there
are $\eps_0 n = \eps$ non-zero samples per user, and these are exactly the
points that distinguish $H_0$ from $H_1$, where all samples are identically
zero.  By corrupting entire users whose batches contain any non-zero sample and replacing them by~$0$,
the adversary can transform the $H_0$ instance into one that is
\emph{identically distributed} to the $H_1$ instance (all observed samples are
zero in either case).  Therefore, no algorithm can distinguish $H_0$ from $H_1$
with probability better than $1/2$.

However, since 
\begin{align*}
    \norm{\mu_{H_0} - \mu_{H_1}}_2 = \sqrt{\frac{\eps}{n}},
\end{align*}
we have that every estimator suffers worst-case error
\begin{align*}
    \norm{\widetilde{\mu} - \mu}_2 \geq \Omega\left(\sqrt{\frac{\eps}{n}}\right).
\end{align*}
\end{proof}

\subsubsection{The Case When \texorpdfstring{$\eps = 0$}{}}

\begin{lemma}
\label{lem:eps0_lower_bound_sqrteps_over_n}
Consider the setting of Problem~\ref{prob:arbitrary_alpha} with $\eps=0$ and
$\alpha \in \paren{0, \frac{1}{90}}$.
Then there exist two hypotheses $H_2,H_3$ over $\mathbb{R}$ with $\Var(H_2),\Var(H_3)\le 1$ and
$\left|\mu_{H_2}-\mu_{H_3}\right|=\Omega\left(\sqrt{\alpha}\right)$,
such that the induced distributions of the observed samples are identical under the $\alpha$-fraction within user corruption. Consequently, for any estimator $\widetilde{\mu}$,
\begin{align*}
    \Pr\left[\|\widetilde{\mu}-\mu\|_2 \ge \Omega\left(\sqrt{\alpha}\right)\right]
\ge \frac12,
\end{align*}
for at least one of the two hypotheses.
\end{lemma}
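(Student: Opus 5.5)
The plan mirrors the construction in Lemma~\ref{lem:alpha0_lower_bound_sqrteps_over_n}, with the corruption moved from the user level to the sample level. Take $H_3$ to be the point mass at $0$. Take $H_2$ to be the two-point law
\[
\Pr[X=0]=1-\alpha/2, \qquad \Pr\bigl[X=\sqrt{2/\alpha}\bigr]=\alpha/2 .
\]
The factor $1/2$ leaves slack for a concentration argument. The first step is the moment computation: $\mu_{H_2}=\sqrt{\alpha/2}$ and $\E[X^2]=1$, so $\Var(H_2)=1-\alpha/2\le 1$. Also $\Var(H_3)=0$, and the mean gap is $\sqrt{\alpha/2}=\Omega(\sqrt\alpha)$. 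Both hypotheses therefore satisfy the covariance assumption of Problem~\ref{prob:arbitrary_alpha} (with $d=1$, or embedded in the first coordinate of $\R^d$).

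The second step is the coupling. Under $H_2$, each user draws $n$ i.i.d.\ samples. The within-user adversary inspects the batch and replaces every nonzero sample by $0$; this is allowed as long as the number $K_i$ of nonzero samples in batch $i$ satisfies $K_i\le \alpha n$. Whenever all batches satisfy this, the observed dataset is all zeros, which is exactly the dataset generated under $H_3$. On the bad event that some $K_i>\alpha n$, the adversary does nothing useful, and we charge that event to the failure probability.

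The main obstacle is showing that $\Pr[\exists i: K_i>\alpha n]$ is small. Each $K_i\sim\mathrm{Bin}(n,\alpha/2)$, so the multiplicative Chernoff bound (Fact~\ref{fac:chernoff}) with $\delta=1$ gives $\Pr[K_i>\alpha n]\le \exp(-\alpha n/6)$. A union bound over the $N$ users needs $\alpha n\gtrsim \log N$, which may fail when $\alpha n$ is near $1$. I would try two remedies, in this order:
\begin{enumerate}
\item Use the strong contamination model (Definition~\ref{def:strong_contamination}), in which the adversary sees all the data. A user with $K_i>\alpha n$ can then be given a nonzero value only in an arbitrary symmetric way, so one instead bounds the total variation distance between the observed distributions.
\item More simply, shrink the construction to probability $c\alpha$ for a small constant $c$, so that $\Pr[\text{some }K_i>\alpha n]\le N\binom{n}{\alpha n}(c\alpha)^{\alpha n}\le N(ec)^{\alpha n}$. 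This is at most $1/4$ when $\alpha n\ge \log(4N)/\log(1/(ec))$, which again needs $\alpha n\gtrsim\log N$.
\end{enumerate}
If neither closes the gap, I would state the lemma in the natural regime $\alpha n=\Omega(\log N)$ (recall $\alpha\ge 1/n$ is already implicit, per the remark after Theorem~\ref{thm:arbitrary_alpha}). Alternatively, I would note that in the regime $\alpha n=O(\log N)$ the bound $\Omega(\sqrt\alpha)$ is still attained up to a $\sqrt{\log N}$ factor by using probability $\alpha/\log N$.

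Finally, I would conclude with Le Cam's two-point argument. On the good event, which has probability at least $3/4$, the observed data under $H_2$ and $H_3$ coincide, so the total variation distance between the two observed distributions is at most $1/4$. Any estimator $\widetilde\mu$ must then, under one of the two hypotheses and with probability at least $1/2$ (after adjusting constants), output a point at distance at least $|\mu_{H_2}-\mu_{H_3}|/2=\Omega(\sqrt\alpha)$ from the true mean. This yields the stated bound.
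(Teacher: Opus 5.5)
\textbf{Verdict.} Your construction, adversary and per-user concentration step are essentially the paper's, but the union-bound gap you flag is real and is not closed by your proposal, or by the paper. Within this construction it cannot be closed.

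\textbf{What matches the paper, and what you improve.} You compare a point mass at $0$ with a two-point law of unit second moment and mean gap $\Theta(\sqrt\alpha)$, let the adversary zero every nonzero sample, and apply Chernoff to one user. Choosing success probability $\alpha/2$ fixes a defect in the paper's version. The paper bounds $\Pr[T\ge 3\alpha n]\le e^{-\alpha n}$ and then lets the adversary change $3\alpha n$ samples ``up to constants'', which exceeds the hard per-user budget. The paper also never union-bounds: it asserts that every one of the $N$ users is typical.

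\textbf{Why the gap cannot be closed for this construction.} This holds once $N$ is large compared with $e^{\Theta(\alpha n)}$, and it holds even if an adversary also acts under $H_3$.
\begin{itemize}
\item Under $H_3$, every observed batch has at most $\alpha n$ nonzero entries.
\item Under $H_2$, a batch with more than $2\alpha n$ nonzero clean samples still has more than $\alpha n$ nonzero entries after any allowed corruption.
\item So the test ``some batch has more than $\alpha n$ nonzero entries'' never fires under $H_3$. Under $H_2$ it fires with probability $1-(1-\pi)^N$, where $\pi:=\Pr[\mathrm{Bin}(n,p)>2\alpha n]$.
\item When $\alpha n=O(1)$, $\pi$ is a positive constant, so the total variation distance between the observed laws tends to $1$ as $N$ grows.
\end{itemize}
Your first remedy, which bounds total variation instead of forcing exact equality, therefore has nothing to work with. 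Your second remedy, as you note yourself, needs $\alpha n$ of order $\log N$.

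\textbf{The fallback loses more than $\sqrt{\log N}$.} Take success probability $\alpha/L$ and $\alpha n=1$. A batch then has at least two nonzero samples with probability about $1/(2L^2)$. With $L=\log N$ that leaves about $N/(2\log^2 N)$ batches over budget. You need $L$ of order at least $\sqrt N$, which is a polynomial loss. A $\sqrt{\log N}$ loss is available only when $\alpha n$ is at least of order $\log N/\log\log N$.

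\textbf{What your argument (and the paper's) actually establishes.} It proves the lemma under an extra hypothesis such as $\alpha n\ge C\log N$. Even then the failure probability is $(1-\mathrm{TV})/2$, for example $3/8$ when $\mathrm{TV}\le 1/4$, not $1/2$. Le Cam's two-point bound gives exactly $1/2$ only when the observed laws coincide, and shrinking the error radius does not change this.

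For small $\alpha n$ and large $N$, a genuinely different construction would be needed. It is not even clear that $\Omega(\sqrt\alpha)$ holds there: the hard per-user cap lets an estimator use per-batch counts of large values, which is exactly what breaks the point-mass construction.
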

\begin{proof}
On the other hand, we consider the case where $\eps = 0$. 

There are no fully corrupted users,
but each (good) user may have an $\alpha$-fraction of corrupted samples.
Again we work in one dimension and define
\begin{align*}
    \Pr[X = 0] = 1 - \alpha
  \qquad\text{and}\qquad
  \Pr\bracket{X = \frac{1}{\sqrt{\alpha}}} = \alpha .
\end{align*}

Similarly, we use $H_2$ as a hypothesis or distribution that satisfy this. We use $H_3$ as a hypothesis or distribution
that satisfy $Y \equiv 0$.

We can bound their variance as follows:
\begin{align*}
    \mu_{H_2} 
    = & ~ \mathbb{E}[X] \\
    = & ~ \alpha \cdot \frac{1}{\sqrt{\alpha}} \\
    = & ~ \sqrt{\alpha}.
\end{align*}

Also, we have
\begin{align*}
    \mathbb{E}[X^2] 
    = & ~ \alpha \cdot \frac{1}{\alpha} \\
    = & ~ 1,
\end{align*}
so
\begin{align*}
    \mathrm{Var}\bracket{X} = 1 - \alpha \le 1.
\end{align*}
Under $H_3$ we again have $Y \equiv 0$, so $\mathrm{Var}\bracket{Y} = 0 \le 1$.
Thus $H_2$, $H_3$ also satisfies the variance bound.

Let a (good) user draw $n$ i.i.d.\ samples $X_1,\ldots,X_n$ from the
distribution in $H_2$, and let
\begin{align*}
    T := \sum_{j=1}^n \indic{X_j \neq 0}
\end{align*}
be the number of non-zero samples. Therefore, we can see that $T \sim \mathrm{Bin}(n,\alpha)$, which implies
$\mathbb{E}[T] = \alpha n$.

Using the Chernoff bound, we can get that for all $\delta > 0$, 
\begin{align*}
    \Pr\bracket{T \ge (1+\delta)\alpha n} \leq \exp\paren{- \frac{\delta^2}{2+\delta}\alpha n}.
\end{align*}

Choosing $\delta = 2$, we have
\begin{align*}
    \Pr\bracket{T \ge 3\alpha n} \leq \exp\paren{-\alpha n}.
\end{align*}

Therefore, we can get that with high probability, every good user has at most $O(\alpha n)$ non-zero samples.

Consider an adversary that is allowed to corrupt an $\alpha$-fraction
of each user's samples.  Under $H_2$, each user draws $n$ samples from the
distribution above; the non-zero samples are exactly those that distinguish
$H_2$ from $H_3$, where all samples are zero.  On the typical event that a user
has at most $3\alpha n$ non-zero samples, the adversary can change all of these
to~$0$, staying within the $\alpha n$ corruption budget up to constants.
Consequently, after corruption the distribution of the observed samples under
$H_2$ is identical to that under $H_3$ (all zeros), and no algorithm can
distinguish the two hypotheses.

Therefore, we have 
\begin{align*}
    \norm{\widetilde{\mu} - \mu}_2 \geq \Omega\left(\sqrt{\alpha}\right).
\end{align*}
\end{proof}

\subsection{Permutation Invariant}
\label{sub:hard:permutation}

Under our two-level corruption model, an adversary first selects a subset of
$\varepsilon N$ users to corrupt and, for each uncorrupted user, a subset of
$\alpha n$ samples to corrupt.  A priori, the adversary may choose these
subsets in an arbitrary way, leading to a
complicated pattern of corrupted entries in the $(i, j)$-th entry $X_{i,j}$.

However, if the estimator is permutation invariant, then only the counts of corrupted users and corrupted samples matter, not their specific indices.
Consequently, we may equivalently consider a symmetrized case in which
the adversary first fixes an arbitrary corruption pattern and we then apply a
uniform random permutation $\pi$ to the users and independent uniform random
permutations $\{\sigma_i\}$ to the samples within each user.  From the point of
view of a permutation-invariant estimator, this symmetrized case is
identical to the original one.

In particular, after this random symmetrization, the distribution of each entry
$X_{i,j}$ is the same across all $(i,j)$:
\[
  X_{i,j} \sim
  \begin{cases}
    P_0, & \text{with probability } 1 - \rho,\\[2mm]
    Q,   & \text{with probability } \rho,
  \end{cases}
\]
for some contamination rate $\rho$ that depends only on $(\varepsilon,\alpha)$
and on whether we are in the user-level or sample-level corruption regime.
Under the hypothesis $H_0$ there is no corruption, so
$X_{i,j} \sim P_0$ for all $(i,j)$; under the alternative $H_1$, each
$X_{i,j}$ follows the above $(1-\rho)P_0 + \rho Q$ mixture.  Thus, the
complicated adversarial pattern of corruptions in the original matrix model is,
after symmetrization, equivalent to an i.i.d.\ product model over entries.